\newtheorem{theorem}{Theorem}[section]
\newtheorem{definition}[theorem]{Definition}
\newtheorem{lemma}[theorem]{Lemma}
\newtheorem{corollary}[theorem]{Corollary}
\newtheorem{remark}[theorem]{Remark}
\newcommand{\comb}{\text{comb}}
\DeclareMathOperator*{\E}{\mathbb{E}}
\let\Pr\relax\DeclareMathOperator*{\Pr}{\mathbf{Pr}}
\DeclareMathOperator*{\argmin}{arg\,min}
\DeclareMathOperator{\poly}{poly}
\newcommand{\norm}[1]{\left\|#1\right\|}
\newcommand{\wh}{\widehat}
\newcommand{\R}{\mathbb{R}}
\newcommand{\Z}{\mathbb{Z}}
\newcommand{\D}{\mathcal{D}}
\newcommand{\eps}{\varepsilon}
\newcommand{\grad}{\nabla}
\newcommand{\wt}{\widetilde}
\newcommand{\abs}[1]{\left|#1\right|}
\newcommand{\Prb}[2][]{ \ifthenelse{\isempty{#1}}
  {\Pr\left[#2\right]}
  {\Pr_{#1}\left[#2\right]} }
\newcommand{\Ex}[2][]{ \ifthenelse{\isempty{#1}}
  {\E\left[#2\right]}
  {\E_{#1}\left[#2\right]} }
\newcommand{\var}[2][]{ \ifthenelse{\isempty{#1}}
  {\mathbf{Var}\left[#2\right]}
  {\mathop{\mathbf{Var}}_{#1}\left[#2\right]} }
\renewcommand{\d}[1]{\ensuremath{\operatorname{d}\!{#1}}}
\newcommand{\TV}{\textup{\textsf{TV}}}
\newcommand{\NC}{\textup{\textsf{NC}}}
\newcommand{\zo}{\{0, 1\}}
\newcommand{\ReLU}{\text{ReLU}}
\newcommand{\switch}{\text{switch}}
\newcommand\blfootnote[1]{%
  \begingroup
  \renewcommand\thefootnote{}\footnote{#1}%
  \addtocounter{footnote}{-1}%
  \endgroup
}
\title{Diffusion Posterior Sampling is Computationally Intractable}
\author{Shivam Gupta\blfootnote{Authors listed in alphabetical order}\\UT Austin\\\texttt{shivamgupta@utexas.edu} \and Ajil Jalal\\ UC Berkeley\\\texttt{ajiljalal@berkeley.edu} \and Aditya Parulekar\\UT Austin\\\texttt{adityaup@cs.utexas.edu} \and Eric Price\\UT Austin\\\texttt{ecprice@cs.utexas.edu} \and Zhiyang Xun\\UT Austin\\\texttt{zxun@cs.utexas.edu}}
\begin{document}
\maketitle
\thispagestyle{empty}

\begin{abstract}


Diffusion models are a remarkably effective way of learning and sampling from a distribution $p(x)$.  In posterior sampling, one is also given a measurement model $p(y \mid x)$ and a measurement $y$, and would like to sample from $p(x \mid y)$.  Posterior sampling is useful for tasks such as inpainting, super-resolution, and MRI reconstruction, so a number of recent works have given algorithms to heuristically approximate it; but none are known to converge to the correct distribution in polynomial time.

In this paper we show that posterior sampling is \emph{computationally intractable}: under the most basic assumption in cryptography---that one-way functions exist---there are instances for which \emph{every} algorithm takes superpolynomial time, even though \emph{unconditional} sampling is provably fast. We also show that the exponential-time rejection sampling algorithm is essentially optimal under the stronger plausible assumption that there are one-way functions that take exponential time to invert.

\end{abstract}

\newpage

\thispagestyle{empty}

\tableofcontents

\newpage

\setcounter{page}{1}

\section{Introduction}

Over the past few years, diffusion models have emerged as a powerful
way for representing distributions of images.  Such models, such as
Dall-E~\cite{ramesh2022hierarchical} and Stable Diffusion~\cite{rombach2022highresolution}, are very effective at learning
and sampling from distributions.  These models can then be used as
priors for a wide variety of downstream tasks, including inpainting,
superresolution, and MRI reconstruction.

Diffusion models are based on representing the \emph{smoothed scores}
of the desired distribution.  For a distribution $p(x)$, we define the
smoothed distribution $p_{\sigma}(x)$ to be $p$ convolved with
$\mathcal N(0, \sigma^2 I)$.  These have corresponding smoothed scores
$s_\sigma(x) := \grad \log p_{\sigma}(x)$.  Given the smoothed scores,
the distribution $p$ can be sampled using an SDE~\cite{ho2020denoising} or an
ODE~\cite{song2021denoising}.  Moreover, the smoothed score is the minimizer of
what is known as the \emph{score-matching} objective, which can be estimated
from samples.

Sampling via diffusion models is fairly well understood from a
theoretical perspective.  The sampling SDE and ODE are both fast
(polynomial time) and robust (tolerating $L_2$ error in the estimation
of the smoothed score).  Moreover, with polynomial training samples of
the distribution, the empirical risk minimizer (ERM) of the score
matching objective will have bounded $L_2$ error, leading to accurate
samples~\cite{Block2020GenerativeMW, gupta2023sampleefficient}.  So diffusion models give fast and robust unconditional
samples.

But sampling from the original distribution is not the main utility of
diffusion models: that comes from using the models to solve
downstream tasks.  A natural goal is to sample from the
\emph{posterior}: the distribution gives a prior $p(x)$ over images,
so given a noisy measurement $y$ of $x$ with known measurement model
$p(y \mid x)$, we can in principle use Bayes' rule to compute and
sample from $p(x \mid y)$. Often (such as for inpainting,
superresolution, MRI reconstruction) the measurement process is the
noisy linear measurement model, with measurement $y = Ax + \eta$ for a known measurement
matrix $A \in \R^{m \times d}$ with $m < d$, and Gaussian noise $\eta = \beta \mathcal N(0, I_m)$; we will focus on such
linear measurements in this paper.

Posterior sampling has many appealing properties for image
reconstruction tasks.  For example, if you want to identify $x$
precisely, posterior sampling is within a factor 2 of the minimum
error possible for \emph{every} measurement model and \emph{every}
error metric~\cite{mri_jalal}.  When ambiguities do arise, posterior
sampling has appealing fairness guarantees with respect to sensitive
attributes~\cite{Jalal2021FairnessFI}.

Given the appeal of posterior sampling, the natural question is: is
efficient posterior sampling possible given approximate smoothed
scores?  A large number of recent
papers~\cite{mri_jalal,chung2023diffusion,kawar2021snips,
  trippe2023practical, song2023pseudoinverseguided,
  kawar2022denoising, particle2024diffusion} have studied algorithms
for posterior sampling, with promising empirical results.  But all
these fail on some inputs; can we find a better posterior sampling
algorithm that is fast and robust in all cases?

There are several reasons for optimism.  First, there's the fact that
\emph{unconditional} sampling is possible from approximate smoothed
scores; why not posterior sampling?  Second, we know that
\emph{information-theoretically, it is possible}: rejection sampling of
the unconditional samples (as produced with high fidelity by the diffusion process) is
very accurate with fairly minimal assumptions.  The only problem is that rejection sampling is slow: you
need to sample until you get lucky enough to match on every
measurement, which takes time exponential in $m$.

And third, we know that the \emph{unsmoothed} score of the posterior
$p(x \mid y)$ is computable efficiently from the unsmoothed score of
$p(x)$ and the measurement model: $\nabla_x \log p(x\mid y) = \nabla\log p(x) + \nabla \log p(y\mid x)$.  This is sufficient to run Langevin
dynamics to sample from $p(x \mid y)$.  Of course, this has the same
issues that Langevin dynamics has for unconditional sampling: it can
take exponential time to mix, and is not robust to errors in the
score.  Diffusion models fix this by using the \emph{smoothed} score
to get robust and fast (unconditional) sampling.  It seems quite
plausible that a sufficiently clever algorithm could also get robust
and fast posterior sampling.

Despite these reasons for optimism, in this paper we show that \textbf{no fast posterior
  sampling algorithm exists}, even given good approximations to the
smoothed scores, under the most basic cryptographic assumption that one-way functions exist.  In fact, under the further assumption that some one-way function is exponentially hard to
invert, there exists a distribution---one for which the smoothed
scores are well approximated by a neural network so
that \emph{unconditional} sampling is fast---that takes exponential in $m$
time for posterior sampling.  Rejection sampling takes time 
exponential in $m$, and so, one can no longer hope for much general
improvement over rejection sampling.

\paragraph{Precise statements.}  To more formally state our results,
we make a few definitions.  We say a distribution is ``well-modeled''
if its smoothed scores can be represented by a polynomial size neural
network to polynomial precision:

\begin{definition}[$C$-Well-Modeled Distribution]
\label{def:well_modeled_distribution}
    For any constant $C > 0$, we say a distribution $p$ over $\R^d$ with covariance $\Sigma$ is ``$C$-well-modeled'' by score networks if $\norm{\Sigma} \lesssim 1$ and there are approximate scores $\wh{s}_\sigma$ that satisfy
    \[
    \E_{x\sim p_\sigma}[ \norm{\wh{s}_\sigma(x) - s_\sigma(x)}^2] < \frac{1}{d^C\sigma^2}
    \]
    and can be computed by a $\poly(d)$-parameter neural network with $\poly(d)$-bounded weights for every $\frac{1}{d^C} \le \sigma \le d^C$.

\end{definition}

Throughout our paper we will be comparing similar distributions.  We say distributions are $(\tau, \delta)$ close if they are close up to some shift $\tau$ and failure probability
$\delta$:

\begin{definition}[$(\tau, \delta)$-Close Distribution]
We say the distribution of $x$ and $\wh{x}$ are $(\tau, \delta)$ close if they can be coupled such that
\[
\Pr[\norm{x - \wh{x}} > \tau] < \delta.
\]
\end{definition}

An \emph{unconditional} sampler is one that is $(\tau, \delta)$ close to the true
distribution.

\begin{definition}[$(\tau, \delta)$-Unconditional Sampler]
    \label{def:unconditional_sampler}
    A $(\tau, \delta)$ unconditional sampler of a distribution $\D$ is one where its samples $\wh{x}$ are $(\tau, \delta)$ close to the true $x \sim \D$.
\end{definition}

The theory of diffusion models~\cite{chen2023sampling} says that the diffusion process gives
an unconditional sampler for well-modeled distributions that takes
polynomial time (with the precise polynomial improved by subsequent
work~\cite{benton2024nearly}).

\begin{restatable}[Unconditional Sampling for Well-Modeled Distributions]{theorem}{unconditionalsampler}
    \label{thm:unconditional_sampler}
    For an $O(C)$-well-modeled distribution $p$, the discretized reverse diffusion process with approximate scores gives a $\left(\frac{1}{d^{C}}, \frac{1}{d^{C}} \right)$-unconditional sampler (as defined in Definition~\ref{def:unconditional_sampler}) for any constant $C > 0$ in $\poly(d)$ time.
\end{restatable}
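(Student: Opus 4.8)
The plan is to treat this as essentially a black-box consequence of the existing convergence theory for score-based diffusion (e.g.\ \cite{chen2023sampling,benton2024nearly}), so that the only real content is (i) checking that the hypotheses of that theory are implied by $O(C)$-well-modeledness, and (ii) converting the guarantee those theorems give --- closeness in total variation to the \emph{noised} distribution $p_{\sigma_{\min}}$ --- into the $(\tau,\delta)$-closeness to $p$ itself that Definition~\ref{def:unconditional_sampler} demands.

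First I would fix the forward process: run the Ornstein--Uhlenbeck (variance-preserving) SDE started from $p$, equivalently parametrized so that the law at time $t$ is (a rescaling of) $p_{\sigma(t)}$, with $\sigma(t)$ running from $0$ up to a constant as $t\to\infty$. Choosing a terminal time $T=\Theta(\log d)$ and an early-stopping level $\sigma_{\min}=d^{-C'}$ for a sufficiently large $C' = O(C)$, every noise level touched by the discretized reverse process lies in the window $[d^{-C'},d^{C'}]$ on which Definition~\ref{def:well_modeled_distribution} supplies approximate scores $\wh s_\sigma$ with $\E_{x\sim p_\sigma}\|\wh s_\sigma(x)-s_\sigma(x)\|^2 < d^{-C'}\sigma^{-2}$; here I use that $\norm{\Sigma}\lesssim 1$ forces the second moment of $p$ to be $O(d)$, which is what bounds how long the forward process must run to come within $d^{-C'}$ of its Gaussian stationary law. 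The $\sigma^{-2}$ normalization in the definition is exactly what makes $\sigma^2\cdot(\text{error}) < d^{-C'}$ scale-free across the whole window.

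Next I would invoke the convergence theorem in the form that needs only a finite second moment and an $L^2(p_\sigma)$ bound on the score error at each time, with no Lipschitzness assumed on $\wh s$ (e.g.\ \cite{benton2024nearly}). Running the exponential-integrator discretization with $N=\poly(d)$ steps, initialized at $\mathcal N(0,I)$, it outputs $\wh x$ with
\[ \TV(\mathrm{law}(\wh x),\, p_{\sigma_{\min}}) \;\lesssim\; \underbrace{d^{-C'/2}}_{\text{initialization}} \;+\; \underbrace{\poly(d)/\sqrt{N}}_{\text{discretization}} \;+\; \underbrace{\poly(d)\cdot d^{-C'/2}}_{\text{accumulated score error}}, \]
and taking $C'$ and the polynomial number of steps large enough makes this at most $\tfrac12 d^{-C}$. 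Each step evaluates $\wh s_\sigma$, which by hypothesis is a $\poly(d)$-parameter, $\poly(d)$-weight network and hence computable in $\poly(d)$ time, so the sampler runs in $\poly(d)$ time overall.

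Finally I would convert this to the statement. By the TV bound we may couple $\wh x$ with $z\sim p_{\sigma_{\min}}$ so that $\Pr[\wh x\ne z]\le \tfrac12 d^{-C}$, and we may write $z = x + \sigma_{\min}g$ with $x\sim p$ and $g\sim\mathcal N(0,I_d)$ independent, so $\|z-x\| = \sigma_{\min}\|g\| \le 2\sqrt{d}\,d^{-C'} \le \tfrac12 d^{-C}$ except with probability $e^{-\Omega(d)}$ by Gaussian norm concentration. Combining, with probability at least $1-d^{-C}$ we get $\|\wh x - x\|\le d^{-C}$, which is precisely $(d^{-C},d^{-C})$-closeness. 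The closest thing to an obstacle here is purely bookkeeping the constants: one must choose the internal constant $C'$ (the $O(C)$ in the hypothesis) large enough that the initialization error, the accumulated score error, and the smoothing displacement $\sigma_{\min}\sqrt{d}$ all fall below $d^{-C}$ simultaneously, and large enough that the entire range of discretization noise levels stays inside $[d^{-C'},d^{C'}]$ where the approximate scores are guaranteed.
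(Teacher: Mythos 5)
Your proposal is correct and follows essentially the same route as the paper's: both invoke the off-the-shelf diffusion convergence theorem of \cite{benton2024nearly} (stated in the paper as Theorem~\ref{thm:unconditional_sampling_generic}, using the variance-exploding rather than the OU parametrization, but this is cosmetic), both use the $\sigma^{-2}$-normalized score error in Definition~\ref{def:well_modeled_distribution} to verify that theorem's hypothesis uniformly over the noise window, and both convert the resulting TV-plus-Wasserstein guarantee (equivalently, TV to $p_{\sigma_{\min}}$ plus the $O(\sigma_{\min}\sqrt d)$ displacement from $p$) into the $(\tau,\delta)$-coupling demanded by Definition~\ref{def:unconditional_sampler} by absorbing constants into $C' = O(C)$.
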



But what about \emph{posterior} samplers?  We want that, for most measurements $y$, the conditional distribution is $(\tau, \delta)$ close to the truth:


    

\begin{definition}[$(\tau, \delta)$-Posterior Sampler]
    \label{def:conditional_sampler}
Let $\mathcal{D}$ be a distribution over $X \times Y$ with density $p(x, y)$.    
    Let $\mathsf C$ be an algorithm that takes in $y \in Y$ and outputs samples from some distribution $\wh p_{\mid y}$ over $X$.  We say $\mathsf{C}$ is a $(\tau, \delta)$-Posterior Sampler for $\mathcal{D}$ if, with $1-\delta$ probability over $y \sim \mathcal{D}_Y$,  $\wh p_{\mid y}$ and $p(x \mid y)$ are $(\tau, \delta)$ close.
\end{definition}

  
  As described above, we consider the linear measurement model:

\begin{definition}[Linear Measurement Model]
    In the linear measurement model with $m$ measurements and noise parameter $\beta$, we have for $x \in \R^d$, the measurement $y = Ax + \eta$ for $A \in \R^{m \times d}$ normalized such that $\norm{A} \leq 1$, and $\eta = \beta \mathcal N(0, I_m)$.
\end{definition}

One way to implement posterior sampling is by rejection sampling.
As long as the measurement noise $\beta$ is much bigger than the error
$\tau= \frac{1}{\poly(d)}$ from the diffusion process, this is
accurate.  However, the running time is exponentially large in $m$:
\begin{restatable}[Upper Bound]{theorem}{upperbound}
\label{thm:upperbound}
    Let $C > 1$ be a constant. Consider an $O(C)$-well-modeled distribution and a linear measurement model with $\beta > \frac{1}{d^C}$. When $\delta > \frac{1}{d^C}$, rejection sampling of the diffusion process gives a $(\frac{1}{d^C}, \delta)$-posterior sampler that takes
    $\poly(d)(\frac{O(1)}{\beta\sqrt{\delta}})^m$ time.
\end{restatable}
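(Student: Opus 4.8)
The plan is to sample unconditionally with the diffusion process and then reject each sample according to the measurement likelihood, exactly as in textbook rejection sampling, and to argue that the diffusion process's polynomially small unconditional error does not blow up under this conditioning.

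\textbf{Algorithm and running time.} Fix the observation $y$ and set $M := (2\pi\beta^2)^{-m/2} = \sup_x p(y\mid x)$. Repeatedly draw $\widehat x$ from the discretized reverse diffusion of Theorem~\ref{thm:unconditional_sampler} and accept it with probability $p(y\mid\widehat x)/M = \exp(-\norm{y - A\widehat x}^2/2\beta^2)$; output the first accepted sample. Taking the implied constant in ``$O(C)$-well-modeled'' large enough, the diffusion process is a $(\tau',\delta')$-unconditional sampler with $\tau' = \delta' = d^{-C'}$ for a constant $C'$ that we take to be a large multiple of $C$; write $\widehat p$ for the (fixed) law of its output, which is $(\tau',\delta')$-close to $p$. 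The algorithm outputs exactly $\widehat p_{\mid y}(x) \propto \widehat p(x)\,p(y\mid x)$, and the number of trials is geometric with mean $M/\widehat p(y)$ where $\widehat p(y) := \int\widehat p(x)\,p(y\mid x)\,\d x$ is the proposal's $y$-marginal density. To bound this for most $y\sim\mathcal D_Y$: the proposal's $y$-marginal is within $\TV$ $\delta'$ of $\mathcal D_Y$ by data processing; $\E_{y\sim\mathcal D_Y}\norm y^2 = O(m(1+\beta^2))$ by $\norm A\le 1$ and $\norm\Sigma\lesssim 1$, so by Markov all but a $\delta/4$ fraction of $y$ lie in the ball $B$ of radius $R = O(\sqrt{m(1+\beta^2)/\delta})$; and inside $B$ the set where the density is below $t$ has mass $< t\cdot\mathrm{vol}(B)$. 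Choosing $t = \delta/(4\,\mathrm{vol}(B))$ keeps the total exceptional $\mathcal D_Y$-mass below $\delta$, and $\mathrm{vol}(B) = \pi^{m/2}R^m/\Gamma(\tfrac m2 + 1)$ together with $\Gamma(\tfrac m2 + 1)\ge(m/2e)^{m/2}$ gives $M/t \le \tfrac4\delta\,\big(O(1)(1+\beta^2)/(\beta^2\delta)\big)^{m/2} \le \poly(d)\,(O(1)/(\beta\sqrt\delta))^m$ (we treat $\beta\lesssim 1$; larger $\beta$ only makes the problem easier). Each trial costs $\poly(d)$, so the expected running time — and, truncating at $O(\log\tfrac1\delta)$ times the mean, the worst-case running time up to the stated failure probability — is $\poly(d)\,(O(1)/(\beta\sqrt\delta))^m$.

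\textbf{Correctness.} We must show $\widehat p_{\mid y} \propto \widehat p\cdot p(y\mid\cdot)$ is $(d^{-C},\delta)$-close to $p(x\mid y) \propto p\cdot p(y\mid\cdot)$ for a $1-\delta$ fraction of $y$. Using the $(\tau',\delta')$-coupling $\gamma$ of $(\widehat x, x)$, write $\widehat p = \widehat p_{\mathrm g} + \widehat p_{\mathrm b}$ and $p = p_{\mathrm g} + p_{\mathrm b}$ according to whether $\norm{\widehat x - x}\le\tau'$; the ``bad'' pieces have mass $\le\delta'$, and $\gamma$ restricted to the good event couples $\widehat p_{\mathrm g}$ and $p_{\mathrm g}$ within $\tau'$ surely. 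This splits each of $\widehat p_{\mid y}$ and $p(x\mid y)$ into a good and a bad part. The bad part of $\widehat p_{\mid y}$ has mass $\rho(y)/\widehat p(y)$, where $\rho$ is the density obtained by pushing $\widehat p_{\mathrm b}$ through $x\mapsto Ax + \beta\mathcal N(0,I_m)$, so $\int\rho\le\delta'$ and the elementary bound $\int_{\{\rho > \delta\,\widehat p\}}\widehat p\,\d y \le \tfrac1\delta\int\rho$ (plus the $\delta'$ total-variation gap between $\widehat p(y)$ and $\mathcal D_Y(y)$) shows this mass is $\le\delta$ for all but an $O(\delta'/\delta)$ fraction of $y\sim\mathcal D_Y$; likewise for $p(x\mid y)$. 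The good parts come from $\gamma$ restricted to the good event, reweighted by $p(y\mid\widehat x)$ and $p(y\mid x)$ respectively; since $\norm{\widehat x - x}\le\tau'$ these weights agree up to a factor $1\pm O(\tau'\norm{y - Ax}/\beta^2)$ (the $x$ with $\norm{y - Ax}\gg\beta^2/\tau'$ contributing negligibly because $p(y\mid x)\le M e^{-\Omega(\beta^2/(\tau')^2)}$ there), so — since for good $y$ both normalizing constants are $\asymp\mathcal D_Y(y)$ — the normalized good parts are within $\TV$ $O\!\big(\tfrac{\tau'}{\beta^2}\,\E_{x\sim p(\cdot\mid y)}\norm{y - Ax}\big)$ and can be coupled to agree within $\tau'$ up to that probability. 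The identity that keeps this polynomial rather than exponential is the measurement-localization bound $\E_{y\sim\mathcal D_Y}\E_{x\sim p(\cdot\mid y)}\norm{y - Ax}^2 = \E\norm\eta^2 = \beta^2 m$: for all but a $\delta$ fraction of $y$, $\E_{x\sim p(\cdot\mid y)}\norm{y - Ax}\le O(\beta\sqrt{m/\delta})$, so the $\TV$ above is $O(\tau'\sqrt m/(\beta\sqrt\delta))$. Choosing $C'$ a large enough multiple of $C$ makes each of $\tau'$, $\delta'/\delta$, and $\tau'\sqrt m/(\beta\sqrt\delta)$ at most a small constant times $\delta$; combining the $\tau'$-coupling of the good parts with the $\le\delta$-mass bad parts yields $(\tau', O(\delta))$-closeness for all but $O(\delta + \delta'/\delta) = O(\delta)$ of $y$, and rescaling $\delta$ by a constant (and using $\tau'\le d^{-C}$) gives the claimed $(d^{-C},\delta)$-posterior sampler.

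\textbf{Main obstacle.} The real content, beyond bookkeeping, is that one cannot simply invoke ``rejection sampling preserves closeness'': conditioning on $y$ can in principle amplify the unconditional error by the acceptance factor $M/\widehat p(y)$, which is exponential in $m$, so naively one would need $\tau'$ and $\delta'$ exponentially small — more than well-modeledness provides. The two facts that defuse this are (i) the aggregate bound $\int\rho\le\delta'$ on the measurement-density of the $\delta'$-bad mass, which confines where that error can hurt, and (ii) the identity $\E\norm{y - Ax}^2 = \beta^2 m$, which forces the posterior to keep $Ax$ within $O(\beta\sqrt{m/\delta})$ of $y$, so that a $\tau'$-perturbation of $x$ perturbs the likelihood only by $1\pm\tau'\poly(d)/\beta$ rather than by an exponential factor. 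Making these two estimates hold for the same $1-\delta$ set of $y$, alongside the running-time bound, is the step that requires care.
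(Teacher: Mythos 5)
Your proof is correct and follows the paper's overall strategy (rejection-sample the diffusion proposal with acceptance weight $\exp(-\|y-A\widehat x\|^2/(2\beta^2))$, condition on the good coupling event, bound the likelihood ratio under a $\tau'$-perturbation, and bound the acceptance probability for most $y$), but it takes a genuinely different route at the running-time step and a lighter route in the correctness step. For the running time, the paper's Lemma~\ref{lem:rejection_sampling_success_probability} is a covering argument: it tiles the ball $\{\|w\|\le\sqrt{Cm/\delta}\}$ with $N=O(1/(\sqrt\delta\beta))^m$ balls of radius $\beta\sqrt{m+\log(1/\delta)}$, shows most of the $Ax$-mass lies in high-mass cells, then shows $y=Ax+\beta\eta$ lands near one of them. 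You instead lower-bound the proposal's $y$-marginal density directly: inside a ball $B$ of radius $O(\sqrt{m/\delta})$ the set $\{\widehat p_Y<t\}$ carries $\widehat p_Y$-mass $<t\,\mathrm{vol}(B)$, so picking $t=\delta/(4\,\mathrm{vol}(B))$ and computing $\mathrm{vol}(B)$ gives the same $(O(1)/(\beta\sqrt\delta))^m$ bound without any covering or union bound over cells --- this is a cleaner derivation of the same estimate. For correctness, the paper uses chi-squared concentration (Lemma~\ref{lem:chi_squared_concentration}) to get $\|Ax-y\|\lesssim\beta\sqrt{m+\log(1/\delta)}$, whereas you use Markov on $\E\|y-Ax\|^2=\beta^2m$ to get $\beta\sqrt{m/\delta}$; the extra $1/\sqrt\delta$ costs nothing once $C'$ is taken large. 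One imprecision you should fix: the claim that the proposal's $y$-marginal is within TV $\delta'$ of $\mathcal D_Y$ ``by data processing'' is not literally right --- the $(\tau',\delta')$-coupling of $x$ and $\widehat x$ is a shift-plus-failure-probability bound, not a TV bound, and data processing does not convert the former into the latter. What is true is that after smoothing by $\beta\,\mathcal N(0,I_m)$ one gets $\TV\lesssim\tau'/\beta+\delta'$, which is still $\poly(1/d)$ under your choice of $C'$, so the conclusion stands; similarly, the ``$x$ far from the posterior bulk contributes negligibly'' step and the bookkeeping that makes all the exceptional $y$-sets share a common $1-O(\delta)$ event are stated a bit loosely, but the skeleton is sound and matches the paper's Lemma~\ref{lem:unconditonal_implies_conditional}.
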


Our main result is that
this is nearly tight: 
\begin{restatable}[Lower Bound]{theorem}{lowerbound}
\label{thm:lower_bound}
   Suppose that one-way functions exist.  Then for any $m > d^{0.01}$, there exists a $10$-well-modeled distribution over $\R^d$, and linear measurement model with $m$ measurements and noise parameter $\beta = \Theta(\frac{1}{\log^2d})$, such that   $(\frac{1}{10}, \frac{1}{10})$-posterior sampling requires superpolynomial time in $d$.
\end{restatable}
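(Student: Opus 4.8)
The plan is to encode a cryptographic hardness into the posterior distribution. First I would set up a distribution where $x$ encodes both a random seed $r$ and the output $f(r)$ of a one-way function $f$, together with enough redundancy that the smoothed scores remain simple. Concretely, take $r \in \zo^k$ uniform, let $z = f(r) \in \zo^k$, and let $x$ be (a suitably scaled and noised embedding of) the pair $(r, z)$, padded into $\R^d$. The measurement matrix $A$ is chosen so that $y$ reveals (a noisy version of) the $z$-coordinates but \emph{not} the $r$-coordinates. Then $p(x \mid y)$ is concentrated on the (essentially unique, by injectivity/one-wayness of $f$ on a random instance) preimage $r$ with $f(r) = z$. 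A successful $(\frac{1}{10}, \frac{1}{10})$-posterior sampler, run on input $y$ derived from a random challenge $z = f(r)$, would therefore output a point from which $r$ can be decoded with constant probability --- i.e., it inverts $f$. Since $m$ can be taken $\Theta(k)$ up to polynomial factors and $f$ is only assumed one-way (not exponentially hard), this rules out \emph{polynomial-time} posterior sampling, giving the superpolynomial lower bound.

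The key steps, in order, are: (1) Fix the construction of $\D$ over $X \times Y$: choose how to embed $(r,z) \in \zo^{2k}$ into $\R^d$ robustly --- e.g.\ as a rounded/quantized vector with $O(1)$-scale entries and a small amount of added Gaussian noise at scale comparable to $\beta = \Theta(1/\log^2 d)$ --- so that decoding $r$ from any point within distance $\frac{1}{10}$ of the support is possible, and so that the marginal over $x$ has $\norm{\Sigma}\lesssim 1$. (2) Verify the distribution is $10$-well-modeled: this is the crux. I must exhibit a $\poly(d)$-size, $\poly(d)$-weight network computing $\wh{s}_\sigma$ with $\E_{p_\sigma}\norm{\wh s_\sigma - s_\sigma}^2 < 1/(d^{10}\sigma^2)$ for all $d^{-10}\le \sigma\le d^{10}$. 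The trick is that the \emph{unconditional} smoothed score need not "know" anything about $f$: the marginal distribution of $x$ is just a mixture of $2^k$ nearly-isotropic Gaussian-ish bumps arranged on a product grid, and for a product-structured (or otherwise highly symmetric) arrangement the smoothed score decomposes coordinatewise into a simple closed form --- e.g.\ if each coordinate of the embedding is an independent uniform bit plus Gaussian noise, $s_\sigma$ factors as a product of one-dimensional $\tanh$-type scores, each computable exactly by a tiny network. So I would design the embedding of $(r,z)$ to be coordinatewise independent given nothing (the dependence $z=f(r)$ is "invisible" to the marginal because the marginal of $(r,z)$ over uniform $r$ is \emph{not} product --- so I instead make the \emph{marginal} of $x$ product by a different device). (3) Set up $A$ and $\eta$: let $A$ project onto the $z$-block (scaled so $\norm A\le 1$), so $y$ is a noisy readout of the $z$-bits; argue $m\asymp k$ measurements suffice to pin down $z$ and hence $r$. (4) The reduction: given $f$ and a challenge $z^\* = f(r^\*)$ for uniform $r^\*$, construct the corresponding $y^\*$, feed it to the hypothetical posterior sampler, decode an $\wh r$ from the output; show that with probability $\ge 1 - O(1/10) - o(1)$ we get $f(\wh r) = z^\*$, contradicting one-wayness. (5) Conclude by picking $d,k,m$ with $d^{0.01} < m < d/2$.

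The main obstacle is step (2): reconciling "the posterior $p(x\mid y)$ hides a hard inversion problem" with "the \emph{unconditional} smoothed scores are trivially computable by a small network." The resolution is that hardness lives in the \emph{correlation} between the two blocks of $x$, which is exactly what $A$ does \emph{not} average out but what the marginal \emph{does}: to make the marginal of $x$ product-structured despite $z=f(r)$, one lets $x$ carry $r$ in one block and an \emph{independent fresh} uniform string in the $z$-block, and hides the constraint "$z = f(r)$'' not in the distribution of $x$ at all but in the \emph{measurement} $y$ --- e.g.\ the measurement is $y = A\,(\text{bits of } r) + \eta$ together with a deterministic side value, OR one uses the construction where $x$'s support is $\{(r, f(r))\}$ but proves the smoothed score is nonetheless easy because for $\sigma$ outside a tiny window the Gaussian blur washes out the combinatorial structure, and for the tiny window one argues the score-matching $L_2$ error incurred is still below $1/(d^{10}\sigma^2)$ by a volume/packing bound. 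Making one of these two strategies actually satisfy \emph{both} Definition~\ref{def:well_modeled_distribution} \emph{and} the decodability needed in step (4) --- i.e.\ choosing the scales $\beta$, the quantization, and the noise levels so the windows line up --- is where the real work is, and I expect it to require the $\beta = \Theta(1/\log^2 d)$ choice precisely to balance the blur against the bit-separation.
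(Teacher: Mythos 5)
Your high-level plan correctly matches the paper's: embed $(s, f(s))$ into $\R^d$, measure the $f(s)$ block, show a posterior sampler would invert $f$, and separately show the \emph{unconditional} smoothed scores are polynomially computable so the distribution is well-modeled. You also correctly identify step (2)---the well-modeledness---as the crux. But you stop short of the construction that actually makes (2) work, and both candidate resolutions you sketch have genuine problems.

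Your strategy (a)---make the $z$-block of $x$ fresh independent uniform so the marginal is product, and ``hide the constraint $z = f(r)$ in the measurement''---cannot be carried out. If $z$ is independent of $r$ in the support of $x$, then observing $y \approx z$ gives zero information about $r$, so $p(r \mid y)$ is uniform and posterior sampling is trivial and useless for inverting $f$. The only way to make $y$ constrain $r$ is to make the constraint $z = f(r)$ live in the distribution of $x$. And the measurement model is fixed to be \emph{linear}, $y = Ax + \eta$, so it cannot itself compute $f$.

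Your strategy (b)---support $\{(r, f(r))\}$ embedded in $\{\pm 1\}^{2d}$ and hope that ``for $\sigma$ outside a tiny window the blur washes out the combinatorial structure''---is precisely the ``initial attempt'' the paper raises and then rejects: for $1 \ll \sigma \ll \sqrt{d}$ the smoothed density has nontrivial contributions from exponentially many $(s, f(s))$ points, the distribution is not close to any product distribution, and there is no volume/packing argument that keeps the score-matching $L^2$ error below $1/(d^{10}\sigma^2)$ over that whole polynomially-wide band of $\sigma$. The window is not tiny; it is the whole intermediate regime.

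The missing idea is the specific encoding: the paper does \emph{not} encode $f(s)_j \in \{\pm 1\}$ by shifting a Gaussian bump to $\pm 1$ (as for the $s$ block). Instead it encodes $f(s)_j$ in the \emph{phase} of a standard Gaussian discretized to a grid of width $\eps = \Theta(1/\sqrt{\log d})$: one value of the bit gives grid points at integer multiples of $\eps$, the other gives a half-$\eps$ shift, and both have the \emph{same} envelope $w_1$. A Fourier argument then shows that once the smoothing $\sigma$ exceeds $\eps\sqrt{\log d}$, the two phases are $1/\poly(d)$-close to each other (and to a plain Gaussian), so the whole mixture collapses to a product distribution with easy score. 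For $\sigma$ below that, the first-$d$ block (scaled by $R = \Theta(\log d)$) is already well-separated into orthants and the score is locally the score of a single product component $h_r$. This gives a clean handoff with no problematic middle window, which your proposal does not achieve. You also still need the $\beta \ll \eps$ relationship ($\beta = \Theta(1/\log^2 d)$ versus $\eps = \Theta(1/\sqrt{\log d})$) so the measurement noise cannot flip which phase you observe; you correctly anticipated that such a scale alignment would be needed, but without the phase-encoding device you cannot write it down.

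In short: the reduction (steps 3--5) and the overall architecture are right, but the proposal has no working construction for step (2), and the two you gesture at both fail. The theorem is not proved without the discretized-Gaussian phase encoding or an equivalent device.
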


To be a one-way function, inversion must take superpolynomial time on average. It is widely believed, including for problems based on lattices~\cite{Lattice} and elliptic curves~\cite{zhandry2019magic}, that many one-way function candidates need exponential time to invert.
Under the stronger assumption that there exist some one-way functions that require exponential time to invert with non-negligible probability, we can show that posterior sampling takes $2^{\Omega(m)}$ time:



\begin{restatable}[Lower Bound: Exponential Hardness]{theorem}{lowerboundfinegrained}
\label{thm:lower_bound_fine_grained}
    Suppose that there exist one-way functions $f: \{\pm 1\}^m \to \{\pm 1\}^{m}$ that require $2^{\Omega(m)}$ time to invert.  Then for any $m \leq O(d)$ and $C > 1$, there exists a $C$-well-modeled distribution over $\R^d$ and linear measurement model with $m$ measurements and noise level $\beta = \frac{1}{C^2\log^2d}$, such that $(\frac{1}{10}, \frac{1}{10})$-posterior sampling takes at least $2^{\Omega(m)}$ time.
\end{restatable}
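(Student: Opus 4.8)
The plan is to show that a fast posterior sampler for a carefully built instance would let us invert $f$ in comparable time, contradicting its hardness, while Theorem~\ref{thm:unconditional_sampler} handles the ``unconditional sampling is easy'' side for free once we verify the instance is well-modeled. \textbf{The instance.} Let $x\in\R^d$ consist of three groups of coordinates: (i) an error-correcting encoding $E_1(z)$ of a uniformly random $z\sim\mathrm{Unif}(\pmo^{\Theta(m)})$, stored ``in the clear''; (ii) a \emph{symmetrized}, error-corrected encoding of $f(z)$ --- concretely $g\cdot E_2(f(z))$ for an independent global sign $g\sim\mathrm{Unif}(\pmo)$, using a balanced constant-rate code $E_2$ of constant relative distance and length $\Theta(m)$; and (iii) zero padding to fill out $\R^d$. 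The measurement reads off group (ii): $y=Ax+\eta$ where $A$ is a scaling of the coordinate projection onto group (ii) and $\eta=\beta\mathcal N(0,I_m)$ with $\beta=\tfrac1{C^2\log^2 d}$. The design separates two roles: because $z$ sits in the clear, the support of the distribution is easy to ``decode'', which is what will make it well-modeled; but the measurement only exposes $f(z)$, so sampling $x\mid y$ amounts to producing a preimage of $f(z)$.

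\textbf{The reduction.} To invert $f$ on $w=f(z)$ for uniformly random $z$: draw $g,\eta$ ourselves, set $y^\star=g\cdot(\text{scaled }E_2(w))+\eta$, so that $y^\star$ is distributed exactly as a genuine measurement; run the posterior sampler on $y^\star$; from its output $\wh x$, read group (i), decode $E_1$, and output the resulting $\wh z$. For correctness we show the true posterior $p(x\mid y^\star)$ is concentrated on ``good'' points, i.e.\ those of the form $(E_1(z'),\,g\cdot E_2(f(z')),\,0)$ with $f(z')=w$: by the choice of $\beta$, $y^\star$ pins down $w$ (up to the global sign) with probability $1-d^{-\omega(1)}$; and any $z'$ with $f(z')\neq w$ has $\norm{E_2(f(z'))-E_2(w)}=\Omega(\sqrt m)$ by the distance of $E_2$, so its posterior weight is at most $\exp(-\Omega(m/\beta^2))=\exp(-\Omega(mC^4\log^4 d))$, which beats the union bound over the $\le 2^{\Theta(m)}$ bad $z'$. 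Hence $p(x\mid y^\star)$ places mass $1-2^{-\Omega(m)}$ on good points; since a good point has group (i) equal to a codeword of $E_1$, a $(\tfrac1{10},\tfrac1{10})$-posterior sample lands within $\tfrac1{10}$ of such a point with constant probability, and rounding then decoding group (i) recovers some $z'$ with $f(z')=w$. Thus a time-$T$ posterior sampler yields an $f$-inverter running in time $T+\poly(d)$ with constant success probability; since $f$ requires $2^{\Omega(m)}$ time to invert with non-negligible probability, $T=2^{\Omega(m)}$ (the additive $\poly(d)$ is absorbed, the statement being vacuous once $m=O(\log d)$).

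\textbf{Well-modeledness.} It remains to certify the distribution is $C$-well-modeled. The covariance bound $\norm{\Sigma}\lesssim1$ uses the symmetrization: group (i) has covariance $I$ up to the dual distance of $E_1$, the global sign $g$ kills both the group-(i)/group-(ii) cross-covariance and the mean of group (ii), and balancedness of $E_2$ keeps the group-(ii) covariance $O(1)$ in operator norm regardless of how lopsided $f$ is. For the scores, recall $s_\sigma(x)=\sigma^{-2}(D_\sigma(x)-x)$ with $D_\sigma(x)=\E[X\mid X+\sigma\xi=x]$, so it suffices to approximate the denoiser $D_\sigma$ to RMS error $\sigma/d^{C/2}$ for every $\sigma\in[d^{-C},d^C]$. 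For small $\sigma$, $D_\sigma(x)$ is within $e^{-\poly(d)}$ of the nearest support point, which is computable by a $\poly(d)$-parameter, $\poly(d)$-weight network: decode group (i) to get $\wh z$ (the code has large margin at these noise levels, so the rounding is robust), then apply $f$ and re-encode --- all polynomial-time, hence realizable by such a network. For large $\sigma$, $D_\sigma(x)$ is close to the linear map $\Sigma(\Sigma+\sigma^2 I)^{-1}x$, and $\Sigma$ is (essentially) a known projection by the covariance analysis, so this too is a trivial network, with approximation error decaying polynomially in $\sigma$ and hence beating the tolerance $\sigma/d^{C/2}$ once $\sigma$ is a large enough polynomial in $d$. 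Combining with Theorem~\ref{thm:unconditional_sampler} applied to this distribution then gives the fast unconditional sampler, completing the dichotomy.

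\textbf{Main obstacle.} The crux is the \emph{intermediate}-$\sigma$ regime of the well-modeledness argument: there, neither ``snap to the nearest support point'' nor the linear/Gaussian approximation is accurate enough, and $D_\sigma(x)$ is genuinely a sum over exponentially many support points weighted by $f$-dependent terms. Handling this requires showing the induced posterior over $z$ given a noisy $x$ is close --- in the sense measured by the $\tfrac1{d^C\sigma^2}$ tolerance, which grows with $\sigma$ --- to a simple (e.g.\ product-like) measure whose relevant expectations are efficiently computable, and it is here that the precise choice of codes, the form of the symmetrization, and the relation between $m$ and $d$ must be engineered. Reconciling this with the $\norm{\Sigma}\lesssim1$ requirement, which rules out the most easily denoisable (highly redundant) codes, is the part demanding the most care.
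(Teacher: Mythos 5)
Your reduction half (synthesize a fake measurement $y^\star$ from $w=f(z)$, feed it to the posterior sampler, decode a preimage from the output) follows the same pattern as the paper's Theorem~\ref{thm:inverting_with_conditional_sampling} and is sound in spirit. But the heart of this theorem is certifying that the instance is $C$-well-modeled, and there you have correctly diagnosed a gap that you do not close: the intermediate-$\sigma$ regime. This gap is fatal to your construction rather than a technicality. With $f(z)$ encoded as $g\cdot E_2(f(z))$, the $\sigma$-smoothed score at moderate $\sigma$ is a ratio of sums over exponentially many $z'$, each weighted by a Gaussian factor in group~(i) and an $f$-dependent factor in group~(ii). Once $\sigma$ is large enough that group~(i) no longer determines $z'$ to within the decoding radius of $E_1$, this mixture genuinely entangles the combinatorial structure of $f$, and there is no evident small network computing it. The global sign $g$ kills the mean and cross-covariance (which helps the $\norm{\Sigma}\lesssim 1$ requirement), but does nothing to make the full conditional law of group~(ii) given group~(i) computable.

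The paper's construction is engineered so that the intermediate regime simply does not occur, and this is the idea you are missing. The seed $s$ is encoded by Gaussians at radius $R = C\log d$, so orthant identification works up to $\sigma \lesssim R/\sqrt{\log d}$, and in that range the full smoothed score is provably close to the score of a single product piece $h_s$ (Lemma~\ref{lem:score_of_mixture_close_to_score_of_closest_conditionally}). Crucially, the bit $f(s)_i$ is encoded not in a mean, sign, or codeword, but in the \emph{phase} of a Dirac-comb discretization with spacing $\eps = 1/(C\sqrt{\log d})$: $\psi_b \propto w_1(x)\,\comb_\eps(x - \tfrac{\eps}{2}\cdot\tfrac{1-b}{2})$. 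A Fourier computation (Lemma~\ref{lem:smoothed_discrete_univariate}) shows $\psi_{+1}*w_\sigma$ and $\psi_{-1}*w_\sigma$ differ by $e^{-\Omega(\sigma^2/(\eps^2(1+\sigma^2)))}$, so for $\sigma \gtrsim \eps\sqrt{\log d}$ the dependence on $f(s)$ washes out entirely and the smoothed distribution collapses to an $f$-independent product of Gaussians whose score is trivially representable. Since $R/\sqrt{\log d} = C\sqrt{\log d} \gg \eps\sqrt{\log d} = 1/C$, the small- and large-$\sigma$ ranges overlap, leaving no middle ground. To repair your proposal you would need an analogous mechanism: an encoding of $f(z)$ whose smoothed law becomes $f$-independent at a noise level \emph{below} the level at which $z$ stops being recoverable from group~(i). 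A generic constant-rate code does not have this property, and no choice of $E_1, E_2$ rates or the symmetrization seems to provide it; the phase-of-discretization trick is doing essential work here.
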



Assuming such strong one-way functions exist, then for the lower bound instance, $2^{\Omega(m)}$ time is
necessary and rejection sampling takes $2^{O(m \log \log d)} \poly(d)$
time.  Up to the $\log \log d$ factor, this shows that rejection
sampling is the best one can hope for in general.


\begin{remark}
   The lower bound produces a ``well-modeled" distribution, meaning that the scores are representable by a polynomial-size neural network, but there is no requirement that the network be shallow.  One could instead consider only shallow networks; the same theorem holds, except that $f$ must also be computable by a shallow depth network.  Many candidate one-way functions can be computed in $\NC^0$ (i.e., by a constant-depth circuit)~\cite{NC0}, so the cryptographic assumption is still mild. 
\end{remark}
\section{Related Work}
Diffusion models~\cite{pmlr-v37-sohl-dickstein15, Dhariwal2021DiffusionMB, Song2019GenerativeMB} have emerged as the most popular approach to deep generative modeling of images, serving as the backbone for the recent impressive results in text-to-image generation~\cite{ramesh2022hierarchical,rombach2022highresolution}, along with state-of-the-art results in video~\cite{blattmann2023stable, ho2022video} and audio~\cite{kong2021diffwave,chen2021wavegrad} generation.

 Noisy linear inverse problems capture a broad class of applications such as image inpainting, super-resolution, MRI reconstruction, deblurring, and denoising. The empirical success of diffusion models has motivated their use as a data prior for linear inverse problems, \emph{without} task-specific training. There have been several recent theoretical and empirical works~\cite{mri_jalal,chung2023diffusion,kawar2021snips, trippe2023practical, song2023pseudoinverseguided, kawar2022denoising, particle2024diffusion} proposing algorithms to sample from the posterior of a noisy linear measurement. We highlight some of these approaches below.

\paragraph{Posterior Score Approximation.} One class of approaches~\cite{chung2023diffusion, kawar2021snips,  song2023pseudoinverseguided} \emph{approximates} the intractable posterior score $\grad \log p_t(x_t | y) = \grad \log p_t(x_t) + \grad \log p_t(y | x_t)$ at time $t$ of the reverse diffusion process, and uses this approximation to sample. Here, $y = Ax_0 + \eta$ is the noisy measurement of $x_0 \sim p_0$, where $p_t$ is the density at time $t$.  For instance, \citet{chung2023diffusion} proposes the approximation $\grad \log p_t(y | x_t) \approx \grad \log p(x | \E\left[x_0 | x_t \right])$, thereby incurring error quantified by the so-called \emph{Jensen gap}. \cite{song2023pseudoinverseguided} proposes an approximation based on the pseudoinverse of $A$, while \cite{kawar2021snips} proposes to use the score of the posterior wrt measurement $y_t$ of $x_t$.

\paragraph{Replacement Method.} Another approach, first introduced in the context of inpainting~\cite{Lugmayr2022RePaintIU}, replaces the observed coordinates of the sample with a noisy version of the observation during the reverse diffusion process. An extension was proposed for general noisy linear measurements~\cite{kawar2022denoising}. This approach essentially also attempts to sample from an approximation to the posterior.

\paragraph{Particle Filtering.} A recent set of works~\cite{trippe2023practical, trippe2023diffusion, particle2024diffusion} makes use of Sequential Monte Carlo (SMC) methods to sample from the posterior. These methods are guaranteed to sample from the correct distribution as the number of particles goes to $\infty$. Our paper implies a lower bound on the number of particles necessary for good convergence.  Assuming one-way functions exist, polynomially many particles are insufficient in general, so that these algorithms takes superpolynomial time; assuming some one-way function requires exponential time to invert, particle filtering requires exponentially many particles for convergence.

To summarize, our lower bound implies that these approaches are either approximations that fail to sample from the posterior, and/or suffer from prohibitively large runtimes in general.

\section{Proof Overview -- Lower Bound}
In this section, we give an overview of the proof of our main Theorem~\ref{thm:lower_bound}, which states that there is some well-modeled distribution for which posterior sampling is hard. The full proof can be found in the Appendix. 

The core idea of our proof is that any general posterior sampler would imply an algorithm that can invert a one-way function. A one-way function is formally defined as follows:
\begin{definition}
\label{def:one_way_functions}
    A polynomial-time computable function $f : \{-1, 1\}^* \to \{-1, 1\}^{*}$ is one-way if, for any polynomial-time randomized algorithm $\mathcal{A}$, any constant $c > 0$, and all sufficiently large $n$, \[
        \Pr_{x \sim \mathcal{U}_n}\left[f(\mathcal{A}(f(x))) = f(x)\right] < n^{-c}
    \]
    where $\mathcal{U}_n$ is the uniform distribution over $\{-1, 1\}^n$.
\end{definition}

The function $f$ is defined on inputs of arbitrary length; for inputs of length $n$ it can be assumed to have some fixed polynomial output length $m(n)$.  

\paragraph{An initial attempt.}
Suppose we have a one-way
function $f: \{-1, 1\}^d \to \{-1, 1\}^d$, and consider the distribution
that is uniform over $(s, f(s)) \in \{-1, 1\}^{2d}$ for all
$s \in \{-1, 1\}^d$.  This distribution is easy to sample from
unconditionally: sample $s$ uniformly, then compute $f(s)$.  At the
same time, posterior sampling is hard: if you observe the last $d$
bits, i.e. $f(s)$, a posterior sample should be from $f^{-1}(f(s))$;
and if $f$ is a one-way function, finding any point in this support is
computationally intractable on average.

However, it is not at all clear that this distribution is well-modeled
as per Definition~\ref{def:well_modeled_distribution}; we would need to be able to
accurately represent the smoothed scores by a polynomial size neural
network.  The problem is that for smoothing levels
$1 \ll \sigma \ll \sqrt{d}$, the smoothed score can have nontrivial
contribution from many different $(s, f(s))$; so it's not clear one can
compute the smoothed scores efficiently.  Thus, while posterior
sampling is intractable in this instance, it's possible the hardness
lies in representing and computing the smoothed scores using a diffusion model, rather than in \emph{using}
the smoothed scores for posterior sampling.

However, for smoothing levels $\sigma \ll \frac{1}{\sqrt{\log d}}$, the smoothed scores \emph{are}
efficiently computable with high accuracy.  The smoothed distribution is a mixture of Gaussians with very little overlap, so rounding to a nearby Gaussian and taking its score gives very high accuracy.

To design a better lower bound, we modify the distribution to encode $f(s)$ differently: into the phase of the discretization of a Gaussian.  At large smoothing levels, a discretized Gaussian looks essentially like an undiscretized Gaussian, and the phase information disappears.  Thus at large smoothing levels, the distribution is essentially like a product distribution, for which the scores are easy to compute.  At the same time, conditioning on the observations still implies inverting $f$, so this is still hard to conditionally sample; and it's still the case that small smoothing levels are efficiently computable.

Based on the above, we define our lower bound instance formally in Section~\ref{sec:lower_bound_instance}. Then, in Section~\ref{sec:conditional_sampling_implies_inversion} we sketch a proof of Lemma~\ref{lem:lower_bound_plugging_params_generic}, which shows that it is impossible to perform accurate posterior sampling for our instance, under standard cryptographic assumptions. Section~\ref{sec:relu_approx} shows that our lower bound distribution is well-modeled by a small ReLU network, which means that the hardness is not coming merely from inability to represent the scores, and that \emph{unconditional} sampling is provably efficient. Finally, we put these observations together to show the theorem.


\subsection{Lower Bound Instance}
\label{sec:lower_bound_instance}
We define our lower bound instance here formally.
    Let $w_\sigma(x)$ denote the density of a Gaussian with mean zero and standard deviation $\sigma$, and let $\comb_\eps$ denote the Dirac Comb distribution with period $\eps$, given by
    \begin{align*}
        \comb_\eps(x) = \sum_{k=-\infty}^\infty \delta(x - k \eps)
    \end{align*}
For any $b \in \{-1, 1\}$, let $\psi_b$ be the density of a standard Gaussian discretized to multiples of $\eps$, with phase either $0$ or $\frac{\eps}{2}$ depending on $b$:
\[
\psi_b(x) \propto w_1(x) \cdot \comb_{\eps}\left(x - \eps/2 \cdot \frac{1-b}{2}\right).
\]

\begin{restatable}[Unscaled Lower Bound Distribution]{definition}{unscaledlowerbounddist}
\label{def:unconditional_distribution}
    Let $f : \{\pm 1\}^d \to \{\pm 1\}^{d'}$ be a given function. For $R > 0$ and for any $s \in \{\pm 1\}^d$, define the product distribution $g_s$ over $x \in \R^{d+d'}$ such that
    \begin{align*}
        x_i &\sim w_1(x_i - R \cdot s_i) &\text{for } i \leq d\\
        x_i &\sim \psi_{f(s)_{i-d}} &\text{for } i > d.
    \end{align*}
    The unconditional distribution $g$ we consider is the uniform mixture of $g_s$ over $s \in \{\pm 1\}^d$.
    
\end{restatable}

We will have $d' = O(d)$ throughout. Figure \ref{fig:distribution} gives a visualization of $g_s$; the final distribution is the mixture of $g_s$ over uniformly random $s$.

For ease of exposition, we will also define a scaled version of our distribution $g$ such that its covariance $\Sigma$ has $\|\Sigma\| \lesssim 1$.
\begin{restatable}[Scaled Lower Bound Distribution]{definition}{scaledlowerbounddist}
    \label{def:unconditional_distribution_scaled}
    Let $\wt g(x) = R^{d+d'} g(R \cdot x)$ be the scaled version of the distribution with density $g$ defined in Definition~\ref{def:unconditional_distribution}. Similarly, let $\wt g_s = R^{d+d'} g_s(R \cdot x)$. 
\end{restatable}
The measurement process then takes sample $x \sim \wt g$ and computes $Ax + \eta$, where $\eta = \mathcal{N}(0, \beta^2 I_{d'} )$ and $A = \begin{pmatrix}
    0^{d'\times d} & I_{d'}
\end{pmatrix}.$
That is, we observe the last $d'$ bits of $x$, with variance $\beta^2$ Gaussian noise added to each coordinate. 

\begin{figure*}
    \centering
    \includegraphics[width=\textwidth]{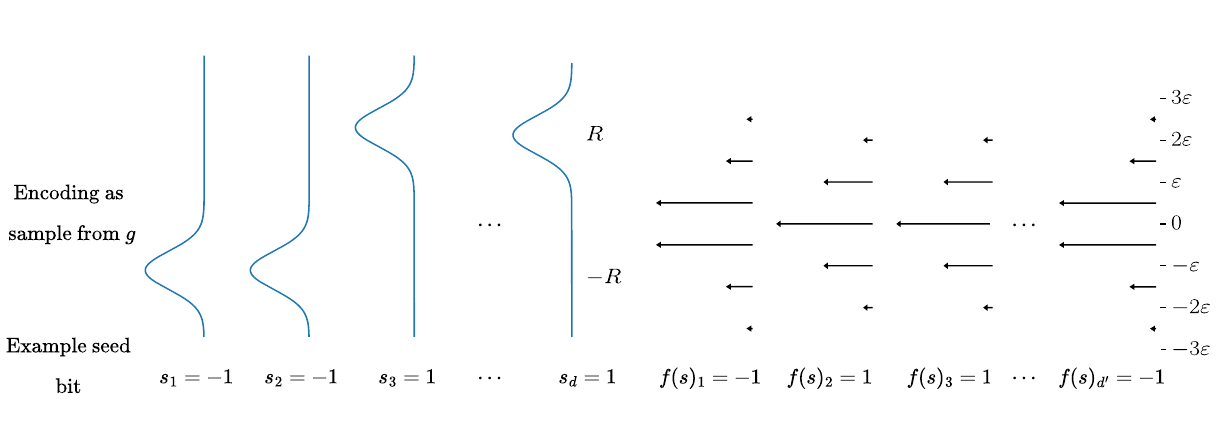}
    \caption{The distribution of each coordinate in $g_s$, has independent coordinates. For any seed $s \in \{\pm 1\}^d$, the first $d$ bits are normal distributions whose \textit{mean} is specified by $s_i$, and the last $d'$ bits are a discretized standard normal where the \textit{discretization} is specified by $f(s)_j$. The full distribution $g$ is a mixture over all seeds $s$ of $g_s$. }
    \label{fig:distribution}
\end{figure*}

\subsection{Posterior Sampling Implies Inversion} 
\label{sec:conditional_sampling_implies_inversion}
Below, we state the main result of this section, and give a sketch of the proof. We show that given any function $f : \R^d \to \R^m$, if we can conditionally sample the above measurement process, then we can invert $f$. For the sake of exposition, we assume here that $f$ has unique inverses; a similar argument applies in general. The full proof of this Lemma is given in the Appendix.

\begin{restatable}{lemma}{conditionalsampleimpliesinversion}
    For any function $f$, suppose $\mathsf C$ is an $(1/10, 1/10)$-posterior sampler in the linear measurement model with noise parameter $\beta$ for distribution with density $\wt g$ as defined in Definition~\ref{def:unconditional_distribution_scaled}, with $\eps \ge \beta \sqrt{32\log d}$ and $R \ge 32 \sqrt{\log d}$. If $\mathsf C$ takes time $T$ to run, then there exists an algorithm $\mathsf{A}$ that runs in time $T + O(d)$ such that 
    \[\Pr_{s, \mathsf{A}}[f(\mathsf{A}(f(s))) \neq f(s)] \le \frac{3}{4} \]
\end{restatable}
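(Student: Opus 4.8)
The plan is to build, from any posterior sampler $\mathsf C$, an inverter $\mathsf A$ that works by fabricating a plausible measurement. On input $z \in \{\pm 1\}^{d'}$ (which will be $f(s)$ for a uniform seed $s$), $\mathsf A$ does the following: for each $j \in [d']$ it draws the $(d+j)$-th coordinate of a would-be sample of $\wt g$ from the marginal of that coordinate conditioned on the $j$-th phase bit being $z_j$ --- i.e.\ a point of the discretized Gaussian with the phase dictated by $z_j$, truncating the grid at $O(\sqrt{\log d})$ standard deviations so the dropped mass is negligible --- and sets $y_j$ to that value plus an independent $N(0,\beta^2)$ draw; it then runs $\wh x \leftarrow \mathsf C(y)$ and outputs $\hat s := \operatorname{sign}(\wh x_{1:d}) \in \{\pm 1\}^d$. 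Sampling the $d' = O(d)$ discrete coordinates costs $O(d)$ time, so $\mathsf A$ runs in time $T + O(d)$.

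For correctness, the first point is that the measurement $y$ so constructed is distributed, jointly with $s$, exactly as the true (seed, measurement) pair: the last $d'$ coordinates of a $\wt g$-sample depend on the seed only through $f(\cdot)$, and $s$ is uniform, so drawing $y$ from the $z=f(s)$-conditional reproduces the joint law. In particular $y \sim \mathcal D_Y$, so except with probability $\delta = 1/10$ the sampler's output law $\wh p_{\mid y}$ is $(1/10,1/10)$-close to $p(x \mid y)$. Next I would unpack $p(x\mid y)$: since each $\wt g_{s'}$ is a product whose first $d$ coordinates are independent of the last $d'$, the posterior is a mixture over seeds $s'$ with weights $q_{s'}(y) \propto p(y \mid f(s'))$, and conditioned on $s'$ the first block is $\prod_i N(s'_i, 1/R^2)$. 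So a sample $x^* \sim p(x\mid y)$ carries an underlying seed $s^* \sim q(y)$, and because $R \ge 32\sqrt{\log d}$ every coordinate of $x^*_{1:d}$ is within $\tfrac14$ of $s^*_i$ except with probability $d^{-\Omega(1)}$; composing with the $(1/10,1/10)$-coupling of $\wh x$ to $x^*$ gives $|\wh x_i - s^*_i| < 1$ for all $i \le d$ on the good event, hence $\hat s = s^*$.

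It then remains to show $f(s^*) = f(s)$ with good probability. By Bayes' rule $s\mid y$ and $s^*\mid y$ are conditionally i.i.d.\ from the posterior over seeds, so (using $\sum_v p_v^2 \ge \max_v p_v^2$ and then Jensen)
$\Pr[f(s^*)=f(s)] = \mathbb{E}_y\big[\sum_v \Pr[f(\mathrm{seed})=v \mid y]^2\big] \ge \big(\mathbb{E}_y[\max_v \Pr[f(\mathrm{seed})=v\mid y]]\big)^2$,
and the inner expectation is exactly the success probability of MAP decoding of $f(s)$ from $y$, which is Bayes-optimal and therefore at least the success probability of the naive coordinatewise decoder that rounds each $y_j$ to the nearest half-period and reads off its parity. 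The hypothesis $\eps \ge \beta\sqrt{32\log d}$ makes the measurement noise small relative to the phase discretization, so on coordinate $j$ this rounding returns the true bit $f(s)_j$ unless $|\eta_j|$ exceeds roughly a quarter-period, an event of probability $O(1/(d\sqrt{\log d}))$ by a Gaussian tail bound; a union bound over the $d' = O(d)$ coordinates makes coordinatewise decoding recover all of $f(s)$ except with probability $o(1)$. Hence $\Pr[f(s^*)\ne f(s)] = o(1)$, and summing the failure events --- $\le 1/10$ for $\mathsf C$ failing on $y$, $\le 1/10$ for the posterior coupling, and $o(1)$ for the rounding and for the Gaussian concentration in the first block --- yields $\Pr_{s,\mathsf A}[f(\hat s)\ne f(s)] \le 3/4$.

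The step I expect to be the crux is controlling the posterior over seeds for an \emph{arbitrary} $f$: bounding preimage sizes by $2^d$ is hopelessly lossy, so the argument must route through the Bayes-optimality of MAP versus coordinatewise decoding, which sidesteps having to understand the combinatorics of $f$ at all. (When $f$ is injective, as in the exposition, one can argue directly instead: any seed whose image differs from $f(s)$ in $k \ge 1$ of the unambiguous coordinates is down-weighted by a factor $\ge d^{2k}$ while there are at most $(2d')^k$ such seeds, so $1 - O(1/d)$ of the posterior mass sits on $f^{-1}(f(s))$.) Everything else is bookkeeping; the only two quantitative levers are the stated parameter conditions --- $R \ge 32\sqrt{\log d}$ to read the seed off the first block, and $\eps \ge \beta\sqrt{32\log d}$ to read the phases, and hence $f(s)$, off the noisy second block.
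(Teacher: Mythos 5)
Your proposal is correct and the overall skeleton (fabricate $y$ from the phase-encoded $z=f(s)$, run $\mathsf C$, round the first block of $\wh x$) matches the paper exactly, but you diverge at what you identify as the crux: how to argue that the posterior sample's underlying seed $s^*$ satisfies $f(s^*)=f(s)$ for an \emph{arbitrary} $f$. You go through MAP decoding and Jensen: since $s$ and $s^*$ are conditionally i.i.d.\ from the posterior given $y$, the collision probability $\Pr[f(s^*)=f(s)] = \E_y\bigl[\sum_v p_v(y)^2\bigr]$ is at least $(\E_y[\max_v p_v(y)])^2$, and the inner expectation is the Bayes-optimal (MAP) success rate, which dominates coordinatewise decoding.

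The paper avoids MAP entirely and is a bit lighter. It introduces the coordinatewise decoder $\mathrm{Bits}_\eps(y)$ as a \emph{pivot}, then establishes two facts: (i) over the joint $(x,y)$, one has $f(\mathrm{Round}_R(x_{[:d]})) = \mathrm{Bits}_\eps(y)$ with high probability --- this falls right out of the construction, since w.h.p.\ $\mathrm{Round}_R(x_{[:d]})=s$ (Gaussian concentration with $R\gtrsim\sqrt{\log d}$) and $\mathrm{Bits}_\eps(y)=f(s)$ (noise $\ll\eps$); and (ii) the posterior sample $x^*\sim p(\cdot\mid y)$ has, jointly with $y$, the \emph{same} law as $(x,y)$, so (i) transfers verbatim to $x^*$. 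Composing with the $(\tau,\delta)$-coupling of $\wh x$ to $x^*$, and with $\mathrm{Bits}_\eps(y)=f(s)$ again, gives $f(\mathrm{Round}_R(\wh x_{[:d]}))=f(s)$ w.h.p.\ by a union bound over a handful of events. The point is that your perceived obstacle --- ``the argument must route through Bayes-optimality of MAP'' --- is not actually necessary: you do not need the \emph{best} decoder, you need \emph{one} good decoder to serve as a pivot, plus the observation that $(x^*,y)$ and $(x,y)$ share a joint law. Your MAP route is valid and does cleanly ``sidestep the combinatorics of $f$,'' but it pays an extra square (via Jensen) that weakens the constant and adds a layer of machinery the paper shows is dispensable. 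One small bookkeeping point you wave at but the paper makes explicit: the claim $\mathrm{Bits}_\eps(y)=\mathrm{Bits}_\eps(x_{[-d':]})$ requires the per-coordinate Gaussian noise to be $\le\eps/4$, which fails with positive probability; the paper handles this by passing to a clipped-noise model that is TV-close (an alternative, equivalent in effect, to the union-bound-over-coordinates tail argument you use).
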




Take some $z \in \{\pm 1\}^{d'}$. Our goal is to compute $f^{-1}(z)$, using the posterior sampler for $\wt g$. To do this, we take a sample $\overline{z}_i \sim \psi_{z_{i}} * \mathcal{N}(0, \beta^2)$ for $i \in \{1, \dots, d'\}$, and feed in $\overline z$ into our posterior sampler, to output $\wh x$. We then take the first $d$ bits of $\wh x$, round each entry to the nearest $\pm 1$, and output the result. 



To see why this works, let's analyze what the resulting conditional distribution looks like. First, note that any sample $x \sim \wt g$ encodes some $(s, f(s))$ coordinate-wise so that the encoding of $f(s)$ is one of two discretizations of a normal distribution, with width $\eps$, offset by $\eps/2$ from each other (see Figure \ref{fig:distribution}). Furthermore, since $\beta \ll \eps$, these two encodings are distinguishable with high probability even after adding noise with variance $\beta^2$. Therefore, with high probability, our sample $\overline z$, which is a noised and discretized encoding of the input $z$ we want to invert, will be such that each coordinate is within $\eps/4$ of the correct discretization. Consequently, a posterior sample with this observation will correspond to an encoding of $(s, f(s))$ where $s = f^{-1}(z)$, with high probability. The first $d$ bits of this encoding are just the bits of $f^{-1}(z)$ smoothed by a gaussian with variance $1/R^2$, and since $R \gg 1$, rounding these coordinates to the nearest $\pm 1$ returns $f^{-1}(z)$, with high probability. 

So, we showed how to invert an arbitrary $f$ using a posterior sampler. The runtime of this procedure was just the runtime of the posterior sampler, along with some small overhead. In particular, if $f$ were a one-way function that takes superpolynomial time to invert, posterior sampling \emph{must} take superpolynomial time. Formally, we show the following:

\begin{restatable}{lemma}{lowerboundpluggingparamsgeneric}
\label{lem:lower_bound_plugging_params_generic}
    Suppose $m \ge d^{0.01}$ and one-way functions exist. Then, for $\wt g$ as defined in Definition~\ref{def:unconditional_distribution_scaled} with $\eps = \frac{1}{C \sqrt{\log d}}$ and $R = C \log d$, and linear measurement model with noise parameter $\beta = \frac{1}{C^2 \log^2 d}$ and measurement matrix $A \in \R^{m \times d}$, $(\frac{1}{10}, \frac{1}{10})$-posterior sampling takes superpolynomial time.
\end{restatable}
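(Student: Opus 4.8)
The plan is to derive this from the preceding reduction lemma, which already turns any good posterior sampler for $\wt g$ into an algorithm that inverts $f$; what remains is to instantiate $f$ with a genuine one-way function of the right input/output lengths, to verify that the stated $\eps, R, \beta$ satisfy that lemma's hypotheses, and to translate ``polynomial in $d$'' into ``polynomial in $m$'' so as to contradict one-wayness. I would argue by contradiction, assuming a $\poly(d)$-time $(\tfrac{1}{10},\tfrac{1}{10})$-posterior sampler exists.

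First I would fix the function $f$ appearing in Definition~\ref{def:unconditional_distribution_scaled}. Since one-way functions exist, there is a length-preserving one-way function family $\{f_\ell : \{\pm 1\}^\ell \to \{\pm 1\}^\ell\}$. Given $d$ and $m$ with $d^{0.01} \le m \le d/2$, the construction needs $f : \{\pm 1\}^d \to \{\pm 1\}^{d'}$ with $d' = m$, so I take $f(s) := f_m(s_{1:m})$: apply the one-way function to the first $m$ bits of the seed and ignore the remaining $d - m$ bits (so $d' = m = O(d)$, as required). This keeps inversion hard on average: if $s$ is uniform on $\{\pm 1\}^d$ then $s_{1:m}$ is uniform on $\{\pm 1\}^m$ and $f(s) = f_m(s_{1:m})$, so from any $s'$ with $f(s') = f(s)$ the prefix $s'_{1:m}$ is a preimage of $f_m(s_{1:m})$; conversely a uniform challenge $z = f_m(u)$ is realized by the uniform seed whose first $m$ coordinates are $u$.

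Next I would check the reduction lemma's hypotheses and conclude. With $\eps = \tfrac{1}{C\sqrt{\log d}}$, $R = C\log d$, $\beta = \tfrac{1}{C^2\log^2 d}$ for a fixed constant $C \ge 1$, we have $R = C\log d \ge 32\sqrt{\log d}$ and $\eps = \tfrac{1}{C\sqrt{\log d}} \ge \tfrac{\sqrt{32}}{C^2 (\log d)^{3/2}} = \beta\sqrt{32\log d}$ once $d$ exceeds an absolute constant, so the lemma applies to $\wt g$ built from our $f$. Hence a $\poly(d)$-time $(\tfrac{1}{10},\tfrac{1}{10})$-posterior sampler $\mathsf C$ would yield $\mathsf A$ running in time $\poly(d) + O(d)$ with $\Pr_{s,\mathsf A}[f(\mathsf A(f(s))) = f(s)] \ge \tfrac{1}{4}$. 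Given a random challenge $z = f_m(u)$, embedding $u$ into the first $m$ coordinates of a uniform seed and returning the first $m$ coordinates of $\mathsf A(z)$ inverts $f_m$ with probability $\ge \tfrac{1}{4}$; and since $m \ge d^{0.01}$ we have $d \le m^{100}$, so this runs in time $\poly(m)$. A $\poly(m)$-time inverter of $f_m$ succeeding with constant probability $\tfrac{1}{4}$ (which is non-negligible) contradicts the one-wayness of $\{f_\ell\}$, so no $\poly(d)$-time $(\tfrac{1}{10},\tfrac{1}{10})$-posterior sampler for $\wt g$ exists.

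The real content is in the preceding reduction lemma --- telling the two $\eps$-offset discretizations apart under $\beta$-noise, and rounding the smoothed seed coordinates back to $\pm 1$. Within this lemma the delicate points are purely bookkeeping: that the ``ignore the surplus seed bits'' padding leaves $f$ average-case hard (it does, since extra seed bits only enlarge preimage sets), that $d' = m$ is simultaneously consistent with $d' = O(d)$ in the construction and with $d^{0.01} \le m \le d/2$, and --- the step one must not skip --- that a constant inversion probability already breaks a one-way function, so no amplification is needed. I expect the first of these to be the most error-prone to state cleanly.
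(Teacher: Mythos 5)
Your overall argument matches the paper's: apply the reduction lemma (posterior sampling for $\wt g$ $\Rightarrow$ inverting $f$), instantiate $f$ via a padded one-way function, check the $\eps, R, \beta$ hypotheses, and use $m \ge d^{0.01}$ to translate $\poly(d)$ into $\poly(m)$ so that a polynomial-time sampler would break one-wayness with constant (hence non-negligible) probability. The paper proceeds via Theorem~\ref{thm:inverting_with_conditional_sampling} combined with its stretching Lemma~\ref{lem:one_way_function_stretching}, but your hand-rolled ``ignore the extra seed bits'' padding argument is just that lemma's proof inlined, so there is no substantive divergence.

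One bookkeeping slip worth fixing: you set $f : \{\pm 1\}^d \to \{\pm 1\}^m$, which by Definition~\ref{def:unconditional_distribution} produces a distribution over $\R^{d + d'} = \R^{d+m}$, so the measurement matrix would be $m \times (d+m)$. The lemma (and Theorem~\ref{thm:lower_bound}) want a distribution over $\R^d$ with $A \in \R^{m\times d}$. The paper's proof therefore constructs $f : \{\pm 1\}^{d-m} \to \{\pm 1\}^m$ (seed length $d-m$, output length $d' = m$, ambient dimension $(d-m)+m = d$); the constraint $m \le d/2$ guarantees $m \le d-m$, which is what makes the prefix-restriction construction available. Your argument goes through verbatim after replacing ``$s \in \{\pm 1\}^d$'' by ``$s \in \{\pm 1\}^{d-m}$'' and ``$f(s) := f_m(s_{1:m})$ where $m \le d-m$.'' Otherwise the hypothesis check and the translation to a $\poly(m)$-time inverter with constant success probability are correct.
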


One minor detail is that a one-way function is defined to map $\{0, 1\}^n \to \{0, 1\}^{n'}$ for an unconstrained $n'$, while we want one that maps $\{0, 1\}^{d-m} \to \{0, 1\}^m$. Standard arguments imply that we can get such a function from the assumption; see Section~\ref{sec:crypto} for details. 








\subsection{ReLU Approximation of Lower Bound Score}
\begin{figure*}[h]
\begin{subfigure}{.5\linewidth}
    \centering
    \includegraphics[width=\textwidth]{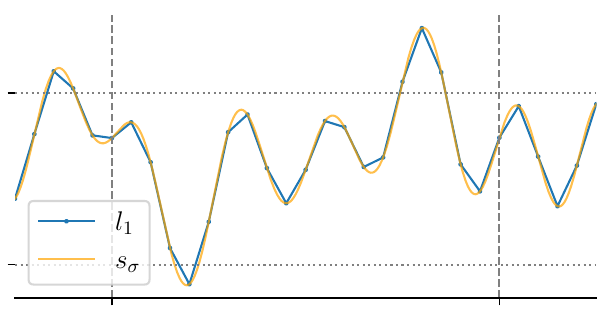}
    \caption{$l_1$ is unbounded and has an unbounded number of pieces}
\end{subfigure}%
\hfill
\begin{subfigure}{.5\linewidth}
    \centering
    \includegraphics[width=\textwidth]{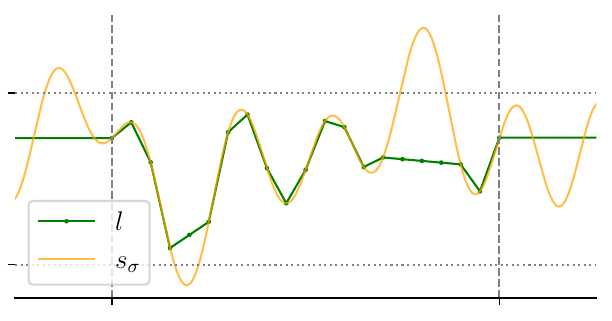}
    \caption{$l$ is bounded, has a small number of pieces.}
\end{subfigure}
\caption{Piecewise-Linear Approximations of Score $s_\sigma$}
\label{fig:piecewise_linear_approximations}
\end{figure*}

\label{sec:relu_approx}

We have shown that our (scaled) lower bound distribution $\wt g$ (as defined in Definition~\ref{def:unconditional_distribution_scaled}) is computationally intractable to sample from.
Now, we sketch our proof showing that $\wt g$ is well-modeled: the $\sigma$-smoothed scores are well approximated by a polynomially bounded ReLU network. The main result of this section is the following.
\begin{restatable}[Lower Bound Distribution is Well-Modeled]{corollary}{lowerboundwellmodeled}
\label{cor:g_is_well_modeled}
    Let $C$ be a sufficiently large constant. Given a ReLU network $f : \{\pm 1\}^d\to \{\pm 1\}^{d'}$ with $\poly(d)$ parameters bounded by $\poly(d)$ in absolute value, the distribution $\wt g$ defined in Definition~\ref{def:unconditional_distribution_scaled} for $R = C \log d$ and $ \frac{1}{\poly(d)} < \eps < \frac{1}{C \sqrt{\log d}}$, is $O(C)$-well-modeled.
\end{restatable}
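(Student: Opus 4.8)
The plan is to establish the two requirements of Definition~\ref{def:well_modeled_distribution} for $\wt g$: that its covariance $\Sigma$ satisfies $\norm{\Sigma}\lesssim 1$, and that for every $\sigma$ in the range $[d^{-O(C)},d^{O(C)}]$ there is a $\poly(d)$-size, $\poly(d)$-weight ReLU network $\wh s_\sigma$ with $\E_{x\sim\wt g_\sigma}\norm{\wh s_\sigma(x)-\grad\log\wt g_\sigma(x)}^2 \le d^{-O(C)}/\sigma^2$. The covariance bound is immediate: under $\wt g$ the coordinates are pairwise uncorrelated (the seed bits are i.i.d.\ and each rescaled $\wt\psi_b(y):=R\,\psi_b(Ry)$ is symmetric about $0$), the first $d$ coordinates each have variance $1+1/R^2\lesssim 1$, and the last $d'$ each have variance $\lesssim 1/R^2$. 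The work is the score approximation, which I would handle by splitting $\sigma$ at a threshold $\sigma^\star$ (one can take $\sigma^\star = \Theta(1/(C^2\log d))$), chosen so that the Dirac-comb structure of $\wt\psi_b\ast\mathcal N(0,\sigma^2)$ --- a comb of spacing $\eps/R$ under an $\mathcal N(0,1/R^2)$ envelope --- is attenuated by $e^{-\Theta(\sigma^2R^2/\eps^2)}\le d^{-\Omega(C)}$ exactly when $\sigma\ge\sigma^\star$. The point of the rescaling in Definition~\ref{def:unconditional_distribution_scaled} is that the comb spacing $\eps/R$ is much smaller than $1/R$, so $\sigma^\star\ll 1/R$, which is far below the scale $\Theta(1/\sqrt{\log d})$ at which the two Gaussians $\mathcal N(\pm1,1/R^2+\sigma^2)$ in each of the first $d$ coordinates stop being well-separated; this is precisely what lets the two regimes below cover the whole range with no gap.

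\emph{Regime A, $\sigma\le\sigma^\star$ (comb intact).} Here $\sigma\ll 1/R$, so with probability $1-d^{-\omega(1)}$ over $x\sim\wt g_\sigma$ each of the first $d$ coordinates lies within $1/2$ of its center $\pm1$. The network first applies a clipped-ReLU layer to produce $\wh s=\clip(2x_{1:d},-1,1)$, which equals the true seed $s$ off a $d^{-\omega(1)}$-probability event; it then computes $f(\wh s)$ with the given ReLU network; and it finally outputs, coordinatewise, the score of the mixture component $\wt g_{\wh s,\sigma}$ --- the clipped linear function $-(x_i-\wh s_i)/(1/R^2+\sigma^2)$ on coordinate $i\le d$, and a clipped piecewise-linear approximation of the one-dimensional function $\bigl(\log(\wt\psi_{f(\wh s)_j}\ast\mathcal N(0,\sigma^2))\bigr)'$ on coordinate $d+j$. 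Two facts make this accurate in $L^2(\wt g_\sigma)$. First, $\wt g_\sigma$ is a mixture of components whose centers are pairwise $\Omega(1/R)$-separated while $\sigma\ll 1/R$, so a short Bayes computation --- using that on the good event both the first-$d$-coordinate likelihoods and, when the comb is resolved, the last-$d'$-coordinate likelihoods overwhelmingly favor $s$ --- shows the component posterior puts mass $1-d^{-\omega(1)}$ on $s$, hence $\grad\log\wt g_\sigma(x)$ is $d^{-\omega(1)}$-close to the nearest-component score. Second, the one-dimensional component scores are unbounded and have infinitely many linear pieces, but $\wt\psi_b\ast\mathcal N(0,\sigma^2)$ places all but $d^{-\omega(1)}$ mass in an interval of radius $O(\sqrt{\log d}/R)$ and has features only at scale $\min(\sigma,\eps/R)\ge d^{-O(C)}$, so after clipping to magnitude $\poly(d)$ one is approximating a bounded, $\poly(d)$-Lipschitz function with $\poly\log d$ features by $\poly(d)$ linear pieces to accuracy $d^{-\omega(1)}$ (this is the content of Figure~\ref{fig:piecewise_linear_approximations}). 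The clipped magnitude squared times the rounding-failure probability is negligible against the (here lenient) target.

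\emph{Regime B, $\sigma\ge\sigma^\star$ (comb washed out).} Here I would not use $f$ at all. By Poisson summation, $\wt\psi_b\ast\mathcal N(0,\sigma^2)=\mathcal N(0,v)\,(1+b\,\varphi_\sigma)$ with $v=1/R^2+\sigma^2$, $\norm{\varphi_\sigma}_\infty\le A:=e^{-\Theta(\sigma^2R^2/\eps^2)}\le d^{-\Omega(C)}$, and $\norm{\varphi_\sigma'}_\infty\lesssim AR/\eps$. Writing $\bar g_\sigma=\prod_{i\le d}\bigl[\tfrac12\mathcal N(1,v)+\tfrac12\mathcal N(-1,v)\bigr]\times\prod_j\mathcal N(0,v)$, one factors $\wt g_\sigma(x)=\bar g_\sigma(x)\,(1+\delta(x))$ with $1+\delta(x)=\E_{\wt S}\bigl[\prod_j(1+f(\wt S)_j\varphi_\sigma(x_{d+j}))\bigr]$, where $\wt S$ is the product measure with $\E[\wt S_i]=\tanh(x_i/v)$. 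Since $d'A\ll 1$, every such product is $1\pm O(d'A)$, so $\abs{\delta}\le O(d'A)$, $\abs{\partial_{x_{d+j}}\delta}\lesssim AR/\eps$, and $\abs{\partial_{x_i}\delta}\lesssim v^{-1}\operatorname{sech}^2(x_i/v)\,d'A$; hence $\grad\log\wt g_\sigma-\grad\log\bar g_\sigma=\grad\delta/(1+\delta)$ has $L^2(\wt g_\sigma)$-norm squared $\lesssim d'(AR/\eps)^2+d(d'A)^2 v^{-3/2}e^{-1/(2v)}$, which the choice of $\sigma^\star$ makes $\le d^{-O(C)}/\sigma^2$ for all $\sigma\ge\sigma^\star$. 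The network then just approximates the separable score of $\bar g_\sigma$: a clipped $\poly(d)$-piece piecewise-linear approximation of the bounded, smooth one-dimensional mixture score on each of the first $d$ coordinates, and the linear map $-x_{d+j}/v$ on each of the last $d'$.

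The step I expect to be the main obstacle is the Regime-B control of $\grad\delta$: a priori, $\E_{\wt S}[\prod_j(1+f(\wt S)_j\varphi_\sigma(x_{d+j}))]$ and its gradient involve input--output correlations of a worst-case ReLU network $f$ under a product measure, which look as hard to compute as approximate counting, and the first-$d$-coordinate components of the true score genuinely depend on them through the chain $x_{1:d}\!\to\! s\!\to\! f(s)\!\to\! x_{d+1:}$. What rescues this is purely quantitative: past $\sigma^\star$ the attenuation $A$ is so small that $d'A\ll 1$, so $\delta$ and all its partials are $d^{-\omega(1)}$ and one never has to evaluate anything about $f$ --- and this in turn rests on the rescaling having pushed $\sigma^\star$ below $1/R$ so that Regimes A and B genuinely tile $[d^{-O(C)},d^{O(C)}]$. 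A secondary bookkeeping point is keeping the piece counts and weights of the one-dimensional approximators $\poly(d)$ uniformly in $\sigma$, which holds because every relevant length scale ($\eps/R$, $1/R$, $\sigma$, the $O(\sqrt{\log d})$ bulk radius) and every inverse variance is $\poly(d)$-bounded.
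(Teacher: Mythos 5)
Your proposal follows the same two-regime strategy as the paper's proof (via Theorem~\ref{thm:unconditional_relu}, which combines Lemma~\ref{lem:relu_small_smoothing} for small $\sigma$ and Lemma~\ref{lem:relu_large_smoothing} for large $\sigma$): identify the component $s$ from $x_{1:d}$ and route through $f$ when the comb is resolved, and show the comb attenuates to a product Gaussian mixture independent of $f$ when $\sigma$ is past the crossover, using the same Poisson/Fourier estimate (the paper's Lemma~\ref{lem:smoothed_discrete_univariate}). Your Regime-B presentation via the explicit factorization $\wt g_\sigma = \bar g_\sigma(1+\delta)$ with a posterior-expectation formula for $\delta$ and a direct bound on $\grad\delta$ is a somewhat more transparent rendering of the same multiplicative comparison the paper carries out coordinatewise, and your identification of the obstacle (controlling $\grad\delta$ through the $x_{1:d}\to s\to f(s)\to x_{d+1:}$ chain, rescued by the attenuation being super-small past $\sigma^\star$) is exactly right. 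One small bookkeeping slip: with $R=C\log d$ and $\eps=1/(C\sqrt{\log d})$, the attenuation exponent at your stated $\sigma^\star=\Theta(1/(C^2\log d))$ is $\sigma^{\star 2}R^2/\eps^2=\Theta(\log d)$, giving $A=d^{-\Theta(1)}$ rather than the needed $d^{-\Omega(C)}$; take $\sigma^\star=\Theta(1/(C\log d))$ (so the exponent is $\Theta(C^2\log d)$), which still sits comfortably below the $\Theta(1/\sqrt{\log d})$ scale at which Regime A degrades, so the tiling argument is unaffected.
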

To show this, we will first show that the unscaled distribution $g$ has a score approximation representable by a and polynomially bounded ReLU net. Rescaling by a factor of $R = C \log d$ then shows the above.

%

\paragraph{Notation.} We will let $h$ be the $\sigma$-smoothed version of $g$, and $h_r$ be the $\sigma$-smoothed version of $g_r$.
\paragraph{Strategy.} We will first show how to approximate the score of any $\sigma$-smoothed \emph{product} distribution using a polynomial-size ReLU network with polynomially bounded weights in our dimension $d$, $\frac{1}{\sigma}$ and $\frac{1}{\gamma}$ for $L^2$ error $\gamma^2$. 

Then, we will observe that when $\sigma$ is \emph{large}, so that $\poly(d) \ge \sigma \gg \eps \sqrt{\log d}$, $h$ becomes very close to a mixture of $(1 + \sigma^2) I_{d+d'}$-covariance Gaussians placed at the vertices of a scaled hypercube (in the first $d$ coordinates). Since this is a product distribution, we can represent its score using our ReLU construction.

On the other hand, when $\sigma$ is \emph{small}, for $R \gg \log d$ and $\frac{1}{\poly(d)} \le \sigma \ll \frac{R}{\sqrt{\log d}}$, the score of $h$ at any point $x$ is well approximated by the distribution $h_r$, where $r \in \{\pm 1\}^d$ represents the orthant containing the first $d$ coordinates of $x$. Since $h_r$ is a product distribution, our ReLU construction applies. 

Finally, we set $R \gg \log d$ so that for any $\frac{1}{\poly(d)} \le \sigma \le \poly(d)$, there is a polynomially bounded ReLU net that approximates the score of $h$. We now describe each of these steps in more detail.
\subsubsection{ReLU Approximation for Score of Product Distribution}
We will show first how to construct a ReLU network approximating the score of a \emph{one-dimensional} distribution -- the construction generalizes to product distributions in a straightforward way.

Consider any one-dimensional distribution $p$ with $\sigma$-smoothed version $p_\sigma$, and corresponding score $s_\sigma$. Suppose $p_\sigma$ has standard deviation $m_2$. We will first construct a \emph{piecewise-linear} function $l$ that approximates $s_\sigma$ in $L^2$.  

Since $s_\sigma$ is $\sigma$-smoothed, its value does not change much in \emph{most} $\sigma$-sized regions. More precisely, Lemma~\ref{lem:score_derivative_second_moment} shows that 
\begin{align*}
    \E_{x \sim p_\sigma}\left[\sup_{|c| \le \sigma} s_\sigma'(x+c)^2 \right] \lesssim \frac{1}{\sigma^4}
\end{align*}
This immediately gives a piecewise linear-approximation $l_1$ with $O(\gamma \sigma^2)$-width pieces: By Taylor expansion, we can write any $s_\sigma(x) = s_\sigma(\alpha_x) + (x - \alpha_x) s_\sigma'(\xi)$ for some $\xi$ between $\alpha_x$ and $x$. Then, if $\alpha_x$ is the largest discretization point smaller than $x$ (so that $|x - \alpha_x| \lesssim \gamma \sigma^2$), this gives that 
\begin{align*}
    \E\left[ (s_\sigma(x) - s_\sigma(\alpha_x))^2 \right] &\lesssim \gamma^2 \sigma^4 \E[\sup_{c}s_\sigma'(x+c)^2 ] \lesssim \gamma^2
\end{align*}
So, we can approximate every $s_\sigma(x)$ with $s_\sigma(\alpha_x)$, yielding a piecewise-\emph{constant} approximation. Then, we can similarly obtain another piecewise-constant approximation by replacing $s_\sigma(x)$ with $s_\sigma(\beta_x)$ for $\beta_x$ the smallest discretization point larger than $x$. By convexity, we can linearly interpolate between $s_\sigma(\alpha_x)$ and $s_\sigma(\beta_x)$ to obtain our piecewise-\emph{linear} approximation $l_1$ (see Fig.~\ref{fig:piecewise_linear_approximations}).

Unfortunately, $l_1$ suffers from two issues: 1) It is potentially unbounded, and 2) It has an unbounded number of pieces.

For 1), since $s_\sigma$ is $\sigma$-smoothed, it is bounded by with high probability, so that we can ensure that our approximation is also bounded without increasing its error much. For 2), since $p_\sigma$ has standard deviation $m_2$, Chebyshev's inequality gives that the total probability outside a radius $\frac{m_2}{\gamma \sigma^2}$ region is small, so that we can use a constant approximation outside this region. This allows us to bound the number of pieces by $\poly\left(\frac{m_2}{\gamma \sigma} \right)$, yielding our final approximation $l$.

As is well-known, such a piecewise linear function can be represented using a ReLU network with $\poly\left(\frac{m_2}{\gamma \sigma} \right)$ parameters, and each parameter bounded by $\poly\left(\frac{m_2}{\gamma \sigma} \right)$ in absolute value. For product distributions, we simply construct ReLU networks for each coordinate individually, and then append them, for bounds polynomial in $d$ and $1/\sigma$, $1/\gamma$ and $m_2$. In the remaining proof, whenever this construction is used, all these parameters are set to polynomial in $d$, for final bounds $\poly(d)$.
\subsubsection{ReLU Approximation for Large $\sigma$}

\begin{figure}[H]
\centering
\hspace{-0.2cm}
\begin{subfigure}{.5\linewidth}
    \centering
    \includegraphics[width=1.0\textwidth]{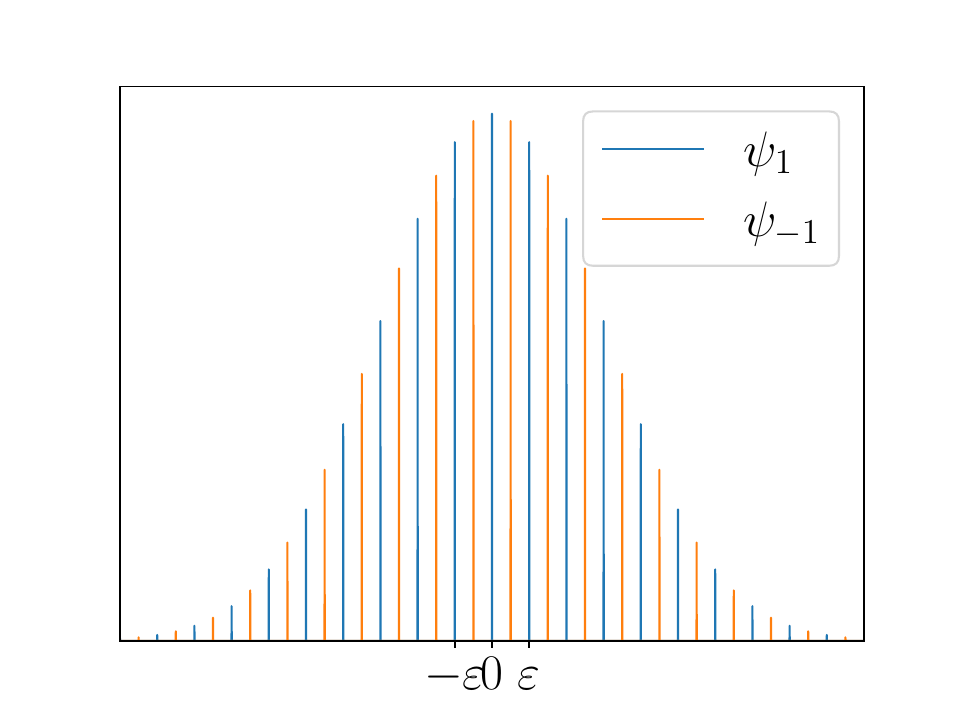}
\end{subfigure}%
\begin{subfigure}{.5\linewidth}
    \centering
    \includegraphics[width=1.0\textwidth]{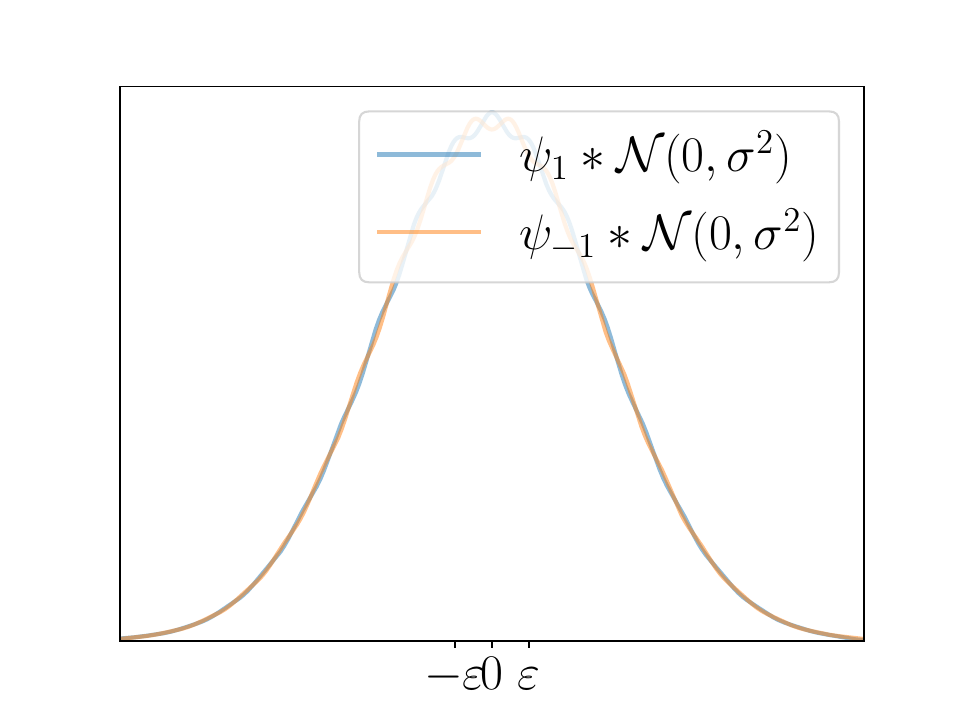}
\end{subfigure}
\caption{$\psi_1$ and $\psi_{-1}$ are discretized Gaussians with discretization width $\eps$ and phase $0$ and $\eps/2$ respectively. If we convolve with $\mathcal N(0, \sigma^2)$, we get a distribution close to Gaussian when $\sigma \ge \eps$ for each of $\psi_1, \psi_{-1}$.}
\label{fig:psismoothed}
\end{figure}
\vspace{-0.3cm}
Note that our lower bound distribution $g$ is such that the first $d$ coordinates are simply a mixture of Gaussians placed on the vertices of a (scaled) hypercube, while the last $d'$ coordinates are discretized Gaussians $\psi_1$ or $\psi_{-1}$, with the choice of discretization depending on the first $d$ coordinates. 

The only reason $g$ is not already a product distribution is that $\psi_1$ and $\psi_{-1}$ are different.  But for smoothing $\sigma \gg \eps\sqrt{\log d}$, a Fourier argument shows that the smoothed versions of $\psi_1$ and $\psi_{-1}$ are polynomially close to each other. See Figure~\ref{fig:psismoothed} for an illustration.



\subsubsection{ReLU Approximation for Small $\sigma$}
When $\sigma\ll \frac{R}{\sqrt{\log d}}$ and $R \gg \log d$, consider the density $h(x)$ for $x_{1, \dots,d}$ lying in the orthant identified by $r \in \{\pm 1\}^d$. Recall that
\begin{align*}
    h(x) = \frac{1}{2^d} \sum_{s \in \{\pm 1\}^d} h_s(x)
\end{align*}
where $h_s$ is the product distribution that is Gaussian with mean $R \cdot s_i$ in the first $d$ coordinates and is a smoothed discretized Gaussian with mean $0$ in the remaining $d'$ coordinates.

We first show that $h(x)$ is approximated by $\frac{h_r(x)}{2^d}$ up to small additive error. This is because every $h_s$ has radius at most $\sqrt{1 + \sigma^2} \lesssim \frac{R}{\sqrt{\log d}}$ and there are $\approx \binom{d}{k}$ points $s \neq r$ with the mean of $h_s$ at least $\Omega(\sqrt{k} R)$ away from $x$. So, the total contribution of all the terms involving $h_s(x)$ to $h(x)$ for $s \neq r$ is at most $\approx \frac{1}{2^d} \cdot \frac{1}{\poly(d)}$. We can show that $\grad h(x)$ is approximated by $\frac{\grad h_r(x)}{2^d}$ in $L^2$ up to similar additive error in an analogous way.

We then show that the score of $h_r$ serves as a good approximation to the score of $h$ for all such points $x$ such that $x_{1, \dots, d}$ lies in the orthant identified by $r$. For $x$ close to the mean of $h_r$ (to within $R/10$, say), the above gives that $h(x)$ is approximated up to \emph{multiplicative} error by $\frac{h_r(x)}{2^d}$, and $\grad h(x)$ is approximated up to multiplicative error by $\frac{\grad h_r(x)}{2^d}$. Together, this gives that the \emph{score} of $h$ at $x$, $\frac{\grad h(x)}{h(x)}$ is approximated by the score of $h_r$ at $x$ up to $\frac{1}{\poly(d)}$ error. On the other hand, for $x$ far from the mean of $h_r$, since the density itself is small, the total contribution of such points to the score error is negligible.

Since the score of $h$ is well-approximated by the score of $h_r$, and $h_r$ is a \emph{product} distribution, we can essentially use our ReLU construction for product distributions to represent its score, after using a small gadget to identify the orthant that $x_{1, \dots, d}$ lies in.

\subsection{Putting it all Together}
Lemma~\ref{lem:lower_bound_plugging_params_generic} shows that it is computationally hard to sample from $\wt g$ from the posterior of a noisy linear measurement when $f$ is a one-way funciton, while Corollary~\ref{cor:g_is_well_modeled} shows that $\wt g$ has score that is well-modeled by a ReLU network when $f$ can be represented by a polynomial-sized ReLU network. In Section~\ref{sec:crypto}, we show that any one-way function can be represented using a polynomial-sized ReLU network. Thus, together, these imply our lower bound, Theorem~\ref{thm:lower_bound}.

Essentially the same argument holds under the stronger guarantee that there exists a one-way function that takes exponential time to invert, for a lower bound \emph{exponential} in the number of measurements $m$.

  \section{Proof Overview - Upper Bound}
  \begin{algorithm}
\caption{Rejection Sampling Algorithm }\label{alg:rej_sampling}
\begin{algorithmic}[1] 
\REQUIRE $y \in Y$
\WHILE{True}
  \STATE Sample $x \sim {\mathcal{D}}_x$
  \STATE Compute $q := e^\frac{-\|Ax - y\|^2}{2\beta^2}$  (proportional to $p(y \mid x)$)
  \STATE Generate a random number $r \sim U(0, 1)$
  \IF{$r < q$}
      \STATE \textbf{return} $x$
  \ENDIF
\ENDWHILE
\end{algorithmic}
\end{algorithm}
  
  In this section, we sketch the proof of Theorem~\ref{thm:upperbound} in Section~\ref{sec:upper_bound_proof}: the time complexity of posterior sampling by rejection sampling (Algorithm~\ref{alg:rej_sampling}). For ease of discussion, we only consider the case when $\delta = \Theta(1)$.
  The proof overview below will repeatedly refer to events as occurring with ``arbitrarily high probability"; this means the statement is true for every constant probability $p < 1$.  (Usually there will be a setting of constants in big-O notation nearby that depends on $p$.)
  
  

  
  \paragraph{Sampling Correctness With Ideal Sampler.} 
  To illustrate the idea of the proof, we first focus on the scenario where we can sample from the distribution of $x$ perfectly. We aim to show that rejection sampling perfectly samples $x \mid y$. 
   To prove the correctness of Algorithm 1, noting that each round is independent, it suffices to verify that each round outputs $x$ with probability density proportional to $p(x \mid y)$.  We have \[
      p(x \mid y) = \frac{p(y \mid x) p(x)}{p(y)} \propto p(y \mid Ax) p(x) 
      \propto e^{-\frac{\norm{Ax - y}}{2\beta^2}} p(x).
  \]
  Therefore, with a perfect unconditional sampler for ${\mathcal{D}_x}$ (sampling $x$ according to density $p(x)$), rejection sampling perfectly samples $x \mid y$.

  \paragraph{Running time.}
  Now we show that for linear measurements $y = Ax + \beta \mathcal{N}(0, I_d)$, with arbitrarily high probability over $x\sim \D$, the acceptance probability per round is at least $\Theta(\beta)^m$; this implies the algorithm terminates in $(O(1) / \beta)^m$ rounds with arbitrarily high probability.
  For a given $y$, the acceptance probability per round is equal to 
  \begin{align*}
      \Ex[x]{e^{-\frac{\norm{Ax - y}^2}{2 \beta^2}}} 
      \ge \Prb[x]{\norm{Ax - y} \le O(\beta\sqrt{m})} \cdot e^{-O(m)}.
  \end{align*}
  
  We first focus on the case when $m = 1$. We aim to show that with arbitrarily high probability over $y$, 
  \[
    \Prb[x]{\norm{Ax - y} \le O (\beta)} \ge \beta.
  \] 
  
  For well-modeled distributions, the covariance matrix of $x$ has constant singular values. Then with arbitrarily high probability, $x$ is $O(1)$ in each direction. Since every singular value of $A$ is at most 1, the projection $Ax$ onto $\R$ will lie in $[-C, +C]$ for some constant $C$ with arbitrarily high probability. 
  
  We divide $[-C, +C]$ into $N = \frac{2C}{\beta}$ segments of length $\beta$, forming set $S$. 
  Now we only need to prove that with arbitrarily high probability over $y$, there exists a segment $\theta \in S$ satisfying for all $x \in \theta$, $|x - y| \le O(\beta)$ , and $
    \Pr_{x \sim {\mathcal{D}_x}}[Ax \in \theta] \gtrsim \beta.
  $
  For any constant $c > 0$, define
  \[
        S' := \{\theta \in S \mid \Pr_{x \sim {\mathcal{D}_x}}[Ax \in \theta] > \frac{c}{N}\}.
    \]
    Each segment in $S'$ has probability at least $\Omega(1 / N) \gtrsim \beta$ to be hit. Therefore, we only need to prove that, with arbitrarily high probability, $y = Ax + \eta$ satisfies these two independent events simultaneously: (1) $Ax$ lands in some segment $\theta \in S'$; (2) $\eta \lesssim \beta$.

    By a union bound, the probability that $Ax$ lies in a segment in $S \setminus S'$ is at most $N \cdot \frac{c}{N} \le c$. 
    For sufficiently small $c$, combining with the fact that $Ax \in S$ with arbitrarily high probability, we have (1) with arbitrarily high probability.
    Since that $\eta \sim \mathcal{N}(0, \beta^2)$. By the concentration of Gaussian distribution, (2) is satisfied with arbitrarily high probability.   

  For the general case when $m > 1$, with arbitrarily high probability, $Ax$ will lie in $\{x \in \R^m \mid \norm{x} \le C\sqrt{m}\}$ for some $C > 0$. Instead of segments, we use $N = (\frac{O(1)}{\beta})^m$ balls with radius $\beta$ to cover $\{x \in \R^m \mid \norm{x} \le C\sqrt{m}\}$. 
  Following a similar argument, we can prove that with arbitrarily high probability over $y$, \[
    \Prb[x]{\norm{Ax - y} \le O(\beta\sqrt{m})} \ge \Theta(\beta)^m.
  \]


  \paragraph{Diffusion as unconditional sampler.}
  In practice, we do not have a perfect sampler for ${\mathcal{D}_x}$. Theorem~\ref{thm:unconditional_sampler} states that for $O(C)$-well-modeled distributions, diffusion model gives an unconditional sampler that samples from approximation distribution $\wh{{\mathcal{D}}}_x$ satisfying that there exists 
   a coupling between $x \sim {\mathcal{D}_x}$ and $\wh x \sim \wh{{\mathcal{D}}}_x$ such that with arbitrarily high probability, $\|x - \wh{x}\| \le 1 / d^{2C}$. 


 For $(x, \wh{x})$ drawn from this coupling, we know from our previous analysis that rejection sampling based on $x$ is correct.  But the algorithm only knows $\wh{x}$, which changes its behavior in two ways: (1) it chooses to accept based on $p(y \mid \wh{x})$ rather than $p(y \mid x)$, and (2) it returns $\wh{x}$ rather than $x$ on acceptance.  The perturbation from (2) is easily within our tolerance, since it is $\frac{1}{d^{2C}}$ close to $x$ with arbitrarily high probability.  

  For (1), we can show when $x$ and $\wh{x}$ are close, these two probabilities are nearly the same. 
  When $\norm{x - \wh x} \le \frac{1}{d^{2C}} \le o(\beta / \sqrt{m})$, we have\[
    \abs{\log \frac{p(y \mid \wh{x})}{p(y \mid x)}} = \abs{\frac{\norm{Ax - y}^2}{2\beta^2} -   \frac{\norm{A\wh{x} - y}^2}{2\beta^2}} \le o(1).
  \]
  This implies that $p(y \mid \wh{x}) = (1 \pm o(1)) p(y \mid x)$ and proves Theorem~\ref{thm:upperbound}.
  

\section{Conclusion and Future Work}  We have shown that one cannot hope
for a fast general algorithm for posterior sampling from diffusion
models, in the way that diffusion gives general guarantees for
unconditional sampling.  Rejection sampling, slow as it may be, is
about the fastest one can hope for on some distributions.  However,
people run algorithms that attempt to approximate the posterior
sampling every day; they might not be perfectly accurate, but they
seem to do a decent job.  What might explain this?

Given our lower bound, a positive theory for posterior sampling of
diffusion models must invoke distributional assumptions on the data.
Our lower bound distribution is derived from a one-way function, and
not very ``nice''.  It would be interesting to identify distributional
properties under which posterior sampling is possible, as well as
new algorithms that work under plausible assumptions.

\section*{Acknowledgements}
We thank Xinyu Mao for discussions about the cryptographic assumptions.
SG, AP, EP and ZX are supported by NSF award CCF-1751040 (CAREER) and the NSF AI Institute for Foundations of Machine Learning (IFML). AJ is supported by ARO 051242-002.

\bibliographystyle{alpha}
\bibliography{references}

\appendix
\section{Lower Bound instance}
We first define our Lower Bound Distribution $g$ (up to scaling).
Let $w_\sigma(x)$ denote the density of a Gaussian with mean zero and standard deviation $\sigma$, and let $\comb_\eps$ denote the Dirac Comb distribution with period $\eps$, given by
    \begin{align*}
        \comb_\eps(x) = \sum_{k=-\infty}^\infty \delta(x - k \eps)
    \end{align*}
For any $b \in \{-1, 1\}$, let $\psi_b$ be the density of a standard Gaussian discretized to multiples of $\eps$, with phase either $0$ or $\frac{\eps}{2}$ depending on $b$:
\[
\psi_b(x) \propto w_1(x) \cdot \comb_{\eps}\left(x - \eps/2 \cdot \frac{1-b}{2}\right).
\]
\unscaledlowerbounddist*
We define our final Lower Bound distribution below, which is a scaled version of $g$.
\scaledlowerbounddist*
\section{Lower Bound -- Posterior Sampling implies Inversion of One-Way Function}
\subsection{Notation}

Let $l \coloneqq [d] = \{1, 2, 3, \dots, d\}$, and let $r \coloneqq \{d+1, d+2, \dots, d+d'\}$, so that for any $x \in \R^{d+d'}$, $x_{[:d]}\in \R^{d}$ is a vector containing the first $d$ entries of $x$, and $x_{[-d':]}\in \R^{d'}$ is a vector containing the last $d'$ entries of $x$.

Let $\text{Round}_R : \R^k \to \{\pm R\}^k$ be such that for all $i \in [k]$,
    \[\text{Round}_R(x)_i = \argmin_{v \in \{\pm R\}} \abs{x_i - v}.\]
Let $\text{parity} : \Z \to \{-1, +1\}$ be such that $\text{parity}(2i) = -1, \text{parity}(2i+1) = 1$ for all $i \in Z$. Let $\text{Bits}_\eps : \R^{k} \to \{\pm 1\}^{k}$ be such that for all $i \in [k]$, 
\[(\text{Bits}_\eps(y))_i = \text{parity}\left(\argmin_{i \in \Z} \abs{i\cdot \frac{\eps}{2} - y_i}\right)\]
This function takes a value $y$ and returns a guess for whether $y$ comes from a smoothed distribution discretized to even multiples of $\eps/2$ or odd multiples of $\eps/2$, based on which is closer.

\begin{definition}[Conditional Distribution]
    Let $g$ be the distribution defined in \ref{def:unconditional_distribution}, parameterized by a function $f$, and real values $R, \eps > 0$. For some noise pdf $h$, we define $\mathcal{X}^h_{f, R, \eps}$ to be the distribution over $(x, y)$ where $x \sim g$ and $y \sim x_{[-d':]} + h$. 
\end{definition}

We also explicitly define the two noise models we will be using for the lower bound: we take \begin{equation}
\label{def:unconditional_distribution_x_notation}
    \mathcal{X}^\beta_{f, R, \eps} \coloneqq \mathcal{X}^{w_\beta}_{f, R, \eps}, \qquad w_\beta = N(0, \beta^2).
\end{equation} Let $(\mathcal{X}^\beta_{f, R, \eps})_y$ denote the marginal over $y$. Further, $\mathcal{X}^{\beta, \beta_\text{max}}_{f, R, \eps} \coloneqq \mathcal{X}^{b}_{f, R, \eps} $ where $b$ is a clipped normal distribution: $b \coloneqq \text{clip}(\beta_\text{max}, N(0, \beta^2))$. 

\subsection{Inverting $f$ via Posterior Sampling}
\begin{lemma}
\label{lem:conditional_distribution_mixture_of_bitstrings}
Let $\beta_\text{max} \le \eps/4$ and $\sqrt{32\log\frac{d}{\delta}} \le R$. Then, 
    \[\Pr_{x^b, y^b \sim \mathcal{X}^{\beta, \beta_\text{max}}_{f, R, \eps}}\left[f(\text{Round}_R(x^b_{[:d]})) = \text{Bits}_\eps(y^b)\right] \ge 1 - \delta\]
\end{lemma}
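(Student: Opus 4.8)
The plan is to fix the seed $s\in\{\pm1\}^d$ (the bound I derive will be uniform over $s$, so averaging over $s$ at the end costs nothing) and then decompose the target event into two pieces: (i) $\text{Round}_R(x^b_{[:d]})=s$, i.e. the first $d$ coordinates round to the seed; and (ii) $\text{Bits}_\eps(y^b)=f(s)$, i.e. the noised discretized coordinates decode to $f(s)$. On the intersection of (i) and (ii) we get $f(\text{Round}_R(x^b_{[:d]}))=f(s)=\text{Bits}_\eps(y^b)$, so it suffices to show that (i) fails with probability at most $\delta$ and that (ii) holds with probability $1$; a union bound then finishes the proof.

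For (i), recall that under $\mathcal X^{\beta,\beta_{\text{max}}}_{f,R,\eps}$ the coordinates $x^b_i$ for $i\le d$ are independent with $x^b_i = R s_i + z_i$, $z_i\sim N(0,1)$. Since $\text{Round}_R$ sends $x^b_i$ to $R\cdot\operatorname{sign}(x^b_i)$, it returns $R s_i$ exactly when $|z_i|<R$. I would then apply the sub-Gaussian tail bound $\Pr[|z_i|\ge R]\le 2e^{-R^2/2}$ and a union bound over $i\in[d]$, bounding the failure probability of (i) by $2d\,e^{-R^2/2}$. Plugging in $R\ge\sqrt{32\log(d/\delta)}$ gives $e^{-R^2/2}\le(\delta/d)^{16}$, so this is at most $2\delta^{16}/d^{15}\le\delta$ with room to spare.

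For (ii), I claim it holds deterministically. For $i>d$, $x^b_i\sim\psi_{f(s)_{i-d}}$ is supported exactly on the arithmetic progression of multiples of $\eps/2$ whose parity is dictated by the bit $f(s)_{i-d}$ (even multiples for one value of the bit, odd for the other, by the definition of $\psi$ and of $\text{parity}$). The observation is $y^b_i = x^b_i + b_i$ with $b_i$ the clipped noise, so $|b_i|\le\beta_{\text{max}}\le\eps/4$, strictly less than half the lattice spacing $\eps/2$; hence the integer $j$ minimizing $|j\cdot\eps/2 - y^b_i|$ is precisely the one with $j\cdot\eps/2 = x^b_i$, and $\text{Bits}_\eps(y^b)_i=\text{parity}(j)$ recovers $f(s)_{i-d}$ exactly. (Only $\beta_{\text{max}}<\eps/2$ is actually needed here; the extra factor of $2$ in the hypothesis is slack.)

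I do not expect a real obstacle: the statement is the composition of a Gaussian-tail union bound over the first block of coordinates (which is what forces $R=\Omega(\sqrt{\log(d/\delta)})$) with a deterministic phase-decoding argument over the second block (which forces $\beta_{\text{max}}<\eps/2$). The only thing to keep straight is that the two scales play opposite roles — $R$ must be large enough that none of the $d$ independent unit-variance Gaussian signs flip, while the measurement noise must be small relative to the discretization period $\eps$ so the encoded bits survive — and both requirements are exactly what the hypotheses $R\ge\sqrt{32\log(d/\delta)}$ and $\beta_{\text{max}}\le\eps/4$ supply.
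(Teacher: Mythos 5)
Your proof is correct and follows essentially the same decomposition as the paper's: condition on the seed $s$, show that the Gaussian block rounds to $s$ with probability $\ge 1-\delta$ via a tail bound and union bound over $d$ coordinates, and show the phase decoding of the last $d'$ coordinates is deterministic under the clipping assumption. One small slip: your parenthetical remark that ``Only $\beta_{\text{max}}<\eps/2$ is actually needed here; the extra factor of $2$ in the hypothesis is slack'' is wrong. The decoder $\text{Bits}_\eps$ rounds to the nearest point of the lattice $\frac{\eps}{2}\Z$ (the union of the two phase lattices), so the nearest wrong lattice point is only $\eps/2$ away and you need $|b_i|<\eps/4$ for the argmin to pick $x^b_i$ --- which is exactly what $\beta_{\text{max}}\le\eps/4$ gives, with no slack. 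Your main argument in the body already says this correctly (``strictly less than half the lattice spacing $\eps/2$''), so only the aside is in error.
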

\begin{proof}
    Let $x^b, y^b \sim \mathcal{X}^{\beta, \beta_\text{max}}_{f, R, \eps}$. By definition, we know that $y_b \sim x^b_{[-d':]} + \text{clip}(\beta_\text{max}, N(0, \beta^2)$. Further, for all indices $i$, $(x^b_{[-d':]})_i = j\eps/2$ for some integer $j$. So, if $\beta_\text{max} \le \eps/4$, then \begin{equation}\label{eq:bits1}\text{Bits}_\eps(y^b) = \text{Bits}_\eps(x^b_{[-d':]}).\end{equation} We know that $x^b$ is drawn from a uniform mixture over $g_s(x)$, as defined in \ref{def:unconditional_distribution}. So, fixing an $s \in \{\pm 1\}^d$. We have that \begin{equation}\label{eq:bits2}\text{Bits}_\eps(x^b_{[-d':]}) = s.\end{equation} On the other hand, $x_{[:d]}$ is a product of gaussians centered at $Rs_i$ in the $i$th coordinate. Therefore, for all $i < d$,
    \[\Pr_{x^b}\left[\abs{(x^b_{[:d]})_i - Rs_i} \le \sqrt{2\log\frac{d}{\delta}}\right] \ge 1-\frac{\delta}{d}\]
    Since $\sqrt{2\log\frac{d}{\delta}} \le R/4$, we get that 
    \begin{equation}\label{eq:bits3}\Pr_{x^b} \left[\text{Round}_R(x^b_{[:d]}) =s\right] \ge 1-\delta.\end{equation}
    Putting together \cref{eq:bits1}, \cref{eq:bits2}, and \cref{eq:bits3} we get 
    \[\Pr_{x^b} \left[\text{Round}_R(x^b_{[:d]}) = \text{Bits}_\eps(y^b)\right] \ge 1-\delta\]
    
\end{proof}
 \begin{lemma}
 \label{lem:bits_close_to_round}
 Let $\mathsf{C}$ be a $(\tau, \delta)$-conditional sampling algorithm for $\mathcal{X}^\beta_{f, R, \eps}$. If $\eps \ge \beta \sqrt{32\log \frac{d}{\delta}}$, $\tau \le R/4$, and $32 \log \frac{d}{\delta} \le R^2$, then for $y \sim (\mathcal{X}^\beta_{f, R, \eps})_y$ and $\wh{x} \sim \mathsf{C}(y)$,
 \begin{align*}
    \Pr[f(\text{Round}_R(\wh{x}_{[:d]})) \neq \text{Bits}_\eps(y)] \le 5\delta.
 \end{align*}
 \end{lemma}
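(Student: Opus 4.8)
The plan is to reduce to the clipped-noise guarantee of \cref{lem:conditional_distribution_mixture_of_bitstrings} and then thread everything through the definition of a posterior sampler, carefully bookkeeping the failure probabilities. \textbf{Step 1 (unclipped $\to$ clipped noise).} First I would compare the measurement distribution $\mathcal{X}^\beta_{f,R,\eps}$ with the clipped version $\mathcal{X}^{\beta,\,\eps/4}_{f,R,\eps}$: couple them using the same $x\sim g$ and the same Gaussian noise $\eta$, so that they disagree only if some coordinate of $\eta$ exceeds $\eps/4$ in absolute value. Since $\eps/4 \ge \beta\sqrt{2\log(d/\delta)}$ by hypothesis, a Gaussian tail bound and a union bound over the (at most $d$) noise coordinates bound this disagreement probability, and hence the total variation distance, by $\delta$. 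Applying \cref{lem:conditional_distribution_mixture_of_bitstrings} with $\beta_\text{max}=\eps/4$ — whose hypotheses $\beta_\text{max}\le\eps/4$ and $\sqrt{32\log(d/\delta)}\le R$ are precisely our assumptions — and transferring the conclusion across this total variation bound, I get that for $(x,y)\sim\mathcal{X}^\beta_{f,R,\eps}$,
\[
\Pr\!\left[f(\text{Round}_R(x_{[:d]}))=\text{Bits}_\eps(y)\right]\ge 1-2\delta .
\]
Moreover, the good event behind \cref{eq:bits3} is contained in $\{\,|x_i-Rs_i|\le R/4 \text{ for all } i\le d\,\}$, which forces $|x_i|\ge 3R/4>\tau$ for each $i\le d$ (using $\tau\le R/4$); as this only involves $x_{[:d]}$, whose law is the same under both measurement models, it holds with probability $\ge 1-\delta$ under $\mathcal{X}^\beta_{f,R,\eps}$ as well.

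\textbf{Step 2 (posterior-sampler coupling).} Because $\mathsf{C}$ is a $(\tau,\delta)$-posterior sampler for $\mathcal{X}^\beta_{f,R,\eps}$, for a $(1-\delta)$-fraction of $y$ under $(\mathcal{X}^\beta_{f,R,\eps})_y$ the output law $\mathsf{C}(y)$ is $(\tau,\delta)$-close to the true posterior $p(\cdot\mid y)$, hence couples with it so that $\Pr[\norm{x-\wh x}>\tau]<\delta$. Gluing these per-$y$ couplings together (coupling arbitrarily for the bad $y$) gives a joint law of $(x,y,\wh x)$ with $(x,y)\sim\mathcal{X}^\beta_{f,R,\eps}$, $\wh x\mid y\sim\mathsf{C}(y)$, and $\Pr[\norm{x-\wh x}>\tau]\le 2\delta$.

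\textbf{Step 3 (combine).} Work under the joint law of Step 2, and let $E_1:=\{\norm{x-\wh x}\le\tau\}$, $E_2:=\{f(\text{Round}_R(x_{[:d]}))=\text{Bits}_\eps(y)\}$, $E_3:=\{\min_{i\le d}|x_i|>\tau\}$, which fail with probability at most $2\delta$, $2\delta$, and $\delta$ respectively by Steps 1 and 2. On $E_1\cap E_2\cap E_3$, for every $i\le d$ we have $|\wh x_i-x_i|\le\tau<|x_i|$, so $\wh x_i$ and $x_i$ have the same sign and $\text{Round}_R(\wh x_{[:d]})=\text{Round}_R(x_{[:d]})$; hence $f(\text{Round}_R(\wh x_{[:d]}))=f(\text{Round}_R(x_{[:d]}))=\text{Bits}_\eps(y)$. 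A union bound gives failure probability at most $5\delta$, which is the claim.

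The steps are individually routine, so there is no single deep obstacle; the part that needs care is the probability accounting — making sure the handful of $\delta$-losses (the clipping approximation, the internal failure of \cref{lem:conditional_distribution_mixture_of_bitstrings}, the fraction of bad $y$, the coupling failure on a good $y$, and the rounding-robustness slack) sum to exactly $5\delta$ — together with checking that the hypotheses $\eps\ge\beta\sqrt{32\log(d/\delta)}$, $32\log(d/\delta)\le R^2$, and $\tau\le R/4$ are exactly what make both the total variation bound in Step 1 and the sign-stability of $\text{Round}_R$ in Step 3 go through.
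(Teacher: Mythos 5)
Your proposal is correct and follows the paper's essential strategy — reduce to the clipped-noise distribution of \cref{lem:conditional_distribution_mixture_of_bitstrings}, invoke the posterior-sampler coupling, and argue that rounding $\wh x_{[:d]}$ is stable — arriving at the same $5\delta$ budget. The only cosmetic difference is that you transfer the Lemma's conclusion across a TV bound once and then work in the unclipped world with a sign-preservation event $E_3$, whereas the paper carries an explicit five-way coupling $(x,\wh x,x^b,y,y^b)$ through the argument and concludes rounding stability from an $\ell_\infty$ bound of $R/2$; both routes are equivalent and your accounting is, if anything, a bit tidier.
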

 \begin{proof}
     Let $\mathcal{X}^\beta_{f, R, \eps}$ have pdf $p^\beta$. Assume we have a $(\tau, \delta)$-posterior sampler over $\mathcal{X}^\beta_{f, R, \eps}$ that outputs sample from distribution $\wh{\mathcal{X}}$ with distribution $\wh p$. 
     This means that with probability $1-\delta$ over $y$, there exists a coupling $\mathcal{P}$ over $(x, \wh x)$ such that $(x, \wh x)$ are $(\tau, \delta)$-close. Therefore, there exists a distribution $\mathcal{P}$ over $(x, \wh x, y) \in \R^{d+d'} \times \R^{d+d'} \times \R^{d'}$ with density $p^\mathcal{P}$ such that $p^\mathcal{P}(x , y) = p^\beta(x , y)$, $p^\mathcal{P}(\wh x \mid y) = \wh p(\wh x\mid y)$, and 
    \[\Pr_{x, \wh x \sim \mathcal{P}} \left[\Vert x - \wh x\Vert_2 \le \tau\right] \ge 1-2\delta.\]
Now, let $\mathcal{X}^{\beta, \beta_\text{max}}_{f, R, \eps}$ have pdf $p^{\beta, \beta_\text{max}}$, with $\beta_\text{max} = \beta \sqrt{2 \log \frac{1}{\delta}}.$ We have \[TV(\mathcal{X}^\beta_{f, R, \eps}, \mathcal{X}^{\beta, \beta_\text{max}}_{f, R, \eps}) \lesssim e^{-\beta_\text{max}^2/2\beta^2} \le \delta\]
Therefore, building on $\mathcal{P}$, we can construct a new distribution $\mathcal{P}'$ over $(x, \wh x, x^b, y, y^b) \in \R^{d+d'} \times \R^{d+d'} \times \R^{d+d'} \times \R^{d'} \times \R^{d'}$ with density $p^\mathcal{P'}$ such that $p^\mathcal{P'}(x , y) = p^\beta(x , y)$, $p^\mathcal{P'}(\wh x \mid y) = \wh p(\wh x\mid y)$, $p^\mathcal{P'}(x^b , y^b) = p^{\beta, \beta_\text{max}}(x^b ,y^b)$, $(x, y) = (x^b, y^b)$ with probability $1-\delta$, and 
    \[\Pr_{x, \wh x \sim \mathcal{P'}} \left[\Vert x - \wh x\Vert_2 \le \tau\right] \ge 1-2\delta\]
Therefore, under this distribution, 
\[\Pr_{\wh x, x^b \sim \mathcal{P'}} \left[\Vert \wh x - x^b\Vert_2 \le \tau\right] \ge 1-3\delta\]
In particular, we apply the fact that $\Vert \wh x_{[:d]} - x^b_{[:d]} \Vert_\infty \le \Vert \wh x - x^b \Vert_\infty \le \Vert \wh x - x^b \Vert_2$ to get 
\begin{equation}\label{eq:first_d_infty}\Pr_{\wh x, x^b \sim \mathcal{P'}} \left[\Vert \wh x_{[:d]} - x^b_{[:d]} \Vert_\infty \le \tau\right] \ge 1-3\delta.\end{equation}

Now, by the definition of $\mathcal{X}^{\beta, \beta_\text{max}}_{f, R, \eps}$, for all $i<d$, $x^b_i$ is a mixture of variance 1 normal distributions centered at $\pm R$. So, for any $i < d$, 
\[\Pr_{x^b} \left[\abs {x^b_i - \text{Round}_R(x^b_i)} \ge \sqrt{2\log\frac{d}{\delta}}\right] \le \frac{\delta}{d}\]
Applying a union bound over $i \in [d]$ and putting this together with \cref{eq:first_d_infty},
\[\Pr_{\wh x, x^b \sim P'} \left[\Vert \wh x_{[:d]}- \text{Round}_R(x^b_{[:d]})\Vert_\infty \le \sqrt{2\log\frac{1}{\delta}} + \tau\right] \ge 1-4\delta\]
So, since $\sqrt{2\log \frac{d}{\delta}} + \tau \le \frac{R}{4} + \frac{R}{4} = \frac{R}{2}$, and $\text{Round}_R((x^b_{[:d]})_i) \in \pm R$, we have 
\begin{align*}\Pr_{\wh x, x^b \sim P'} \left[\Vert \text{Round}_R(\wh x_{[:d]}) - \text{Round}_R(x^b_{[:d]})\Vert_\infty \le \sqrt{2\log\frac{d}{\delta}} + \tau\right] \le 1-3\delta\end{align*}
Again, the output of $\text{Round}_R$ is always $\pm R$, so this means
\begin{align*}
 \Pr_{\wh x, x^b \sim P'} \left[\text{Round}_R(\wh x_{[:d]}) = \text{Round}_R(x^b_{[:d]})\right] \ge 1-3\delta\end{align*}
Now, by Lemma \ref{lem:conditional_distribution_mixture_of_bitstrings}, since $\beta_\text{max} < \eps/4$ and $R \ge \sqrt{32 \log \frac{d}{\delta}}$, we have
\[\Pr_{x^b, y^b \sim P'}\left[f(\text{Round}_R(x^b_{[:d]})) = \text{Bits}_\eps(y^b)\right] \ge 1 - \delta\]
Therefore, 
\[\Pr_{\wh x, y^b \sim P'}\left[f(\text{Round}_R(\wh x_{[:d]})) = \text{Bits}_\eps(y^b)\right] \ge 1 - 4\delta\]
Finally, we know that $y = y^b$ with probability $1-\delta$. Therefore, we get
\[\Pr_{\wh x, y \sim P'}\left[f(\text{Round}_R(\wh x_{[:d]})) = \text{Bits}_\eps(y)\right] \ge 1 - 5\delta\]
 \end{proof}
\begin{theorem}
\label{thm:inverting_with_conditional_sampling}
    For any function $f$, let $\mathsf{C}$ be a $\left(R/4, \delta\right)$-posterior sampler (\ref{def:conditional_sampler}) for $\mathcal{X}_{f, R, \eps}^{\beta}$, as defined in \eqref{def:unconditional_distribution_x_notation}, with $\eps \ge \beta \sqrt{32\log \frac{d}{\delta}}$, and $R \ge  \sqrt{32\log \frac{d}{\delta}}$, that takes time $T$ to run. Then, there exists an algorithm $\mathsf{A}$ that runs in time $T + O(d)$ such that 
    \[\Pr_{s, \mathsf{A}}[f(\mathsf{A}(f(s))) \neq f(s)] \le 6\delta \]
\end{theorem}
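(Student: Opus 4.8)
The plan is to use the posterior sampler $\mathsf C$ as a black box to invert $f$, following the sketch in Section~\ref{sec:conditional_sampling_implies_inversion}. Define the inverter $\mathsf A$ as follows: on input $z \in \{\pm 1\}^{d'}$, for each $j \in [d']$ independently sample $\bar y_j \sim \psi_{z_j} * N(0, \beta^2)$ (one such draw costs $O(1)$ time: sample the discretized Gaussian $\psi_{z_j}$, then add $N(0,\beta^2)$ noise); run $\wh x \sim \mathsf C(\bar y)$; and output $\text{Round}_R(\wh x_{[:d]})$, which we identify with an element of $\{\pm 1\}^d$ by rescaling, as the candidate preimage. The total running time is $T + O(d)$, as required.

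The crux is a distributional identity: when $s$ is uniform on $\{\pm 1\}^d$ and $z = f(s)$, the pair $(s, \bar y)$ produced by $\mathsf A$ has exactly the law of $(s, y)$ under the process defining $\mathcal X^\beta_{f,R,\eps}$. Indeed, in that process $y$ depends on $x$ only through $x_{[-d':]}$, whose coordinates are independent with $x_{d+j} \sim \psi_{f(s)_j}$, so $y_j \sim \psi_{f(s)_j} * N(0,\beta^2)$ --- precisely how $\mathsf A$ forms $\bar y_j$. In particular the marginal law of $\bar y$ is $(\mathcal X^\beta_{f,R,\eps})_y$, and the event $\{ f(\text{Round}_R(\wh x_{[:d]})) \neq \text{Bits}_\eps(\bar y)\}$ depends only on $(\bar y, \wh x)$, so Lemma~\ref{lem:bits_close_to_round} applies verbatim. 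Using the hypotheses $\eps \ge \beta\sqrt{32\log(d/\delta)}$, $\tau = R/4$, and $R^2 \ge 32\log(d/\delta)$, it gives
\[
\Pr\left[ f(\text{Round}_R(\wh x_{[:d]})) \neq \text{Bits}_\eps(\bar y) \right] \le 5\delta .
\]

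It remains to relate $\text{Bits}_\eps(\bar y)$ to $z$. Conditioned on $s$ (hence on $z$), write $\bar y_j = v_j + h_j$ with $v_j$ in the support of $\psi_{z_j}$ --- so $v_j$ is a multiple of $\eps/2$ whose index parity encodes $z_j$ --- and $h_j \sim N(0,\beta^2)$; by definition of $\text{Bits}_\eps$, its $j$-th output equals $z_j$ whenever $|h_j| < \eps/4$. Since $\eps/4 \ge \beta\sqrt{2\log(d/\delta)}$, a Gaussian tail bound controls this failure per coordinate, and a union bound over the $d' = O(d)$ coordinates (the constant $32$ in the hypothesis providing the needed slack) yields $\Pr[\text{Bits}_\eps(\bar y) \neq z] \le \delta$. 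Combining with the previous display by a union bound, $\Pr[ f(\text{Round}_R(\wh x_{[:d]})) \neq z] \le 6\delta$; since $f(\mathsf A(f(s))) = f(\text{Round}_R(\wh x_{[:d]}))$ and $z = f(s)$, this is exactly the claim.

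Given Lemmas~\ref{lem:conditional_distribution_mixture_of_bitstrings} and~\ref{lem:bits_close_to_round}, the only genuinely delicate steps are (i) verifying the distributional identity carefully enough to invoke Lemma~\ref{lem:bits_close_to_round} as a black box on the synthesized $\bar y$ --- in particular that the relevant event is unaffected by the coupling with the hidden seed $s$ --- and (ii) tracking the constants in the Gaussian tail and union bounds so that the decoding step contributes only $\delta$ and the total remains $6\delta$. Everything else is straightforward bookkeeping around the two lemmas already established.
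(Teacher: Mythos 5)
Your proposal is correct and takes essentially the same approach as the paper: synthesize a measurement $\bar y$ from $z=f(s)$ with the marginal law $(\mathcal X^\beta_{f,R,\eps})_y$, invoke Lemma~\ref{lem:bits_close_to_round} as a black box to get the $5\delta$ bound relating $\text{Round}_R(\wh x_{[:d]})$ to $\text{Bits}_\eps(\bar y)$, and add a per-coordinate Gaussian tail plus union bound to show $\text{Bits}_\eps(\bar y) = z$ with probability $\ge 1-\delta$, yielding $6\delta$ total. The only difference is that you spell out the distributional-identity step (that $(s,\bar y)$ matches the law of $(s,y)$ under $\mathcal X^\beta_{f,R,\eps}$ when $s$ is uniform) slightly more carefully than the paper's terse ``$h_s$ is the same as the density of $(\mathcal X^\beta_{f,R,\eps})_y$.''
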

\begin{proof}
Sample $y \sim h_{f(r)}$, where 
\[h_{s}(y) = \begin{cases}
    \left(w_1(y) \cdot \comb_\eps(y))\right) * N(0, \beta^2), & \text{$s_{i} = 1$}\\
            \left(w_1(y) \cdot \comb_\eps\left(y - \frac{\eps}{2}\right)\right) * N(0, \beta^2) & \text{$s_{i} = -1$} 
\end{cases}\]
Now, since $\beta \le \frac{\eps}{\sqrt{32 \log \frac{d}{\delta}}}$, each coordinate of the noise, drawn from $N(0, \beta^2)$, is less than $\eps/4$ with probability $1-\delta/d$. Therefore, 
\[\Pr\left[\text{Bits}_\eps(y) = f(r)\right] \ge 1 - \delta\]
By definition, $h_{s}$ is the same as the density of $(\mathcal{X}^\beta_{f, R, \eps})_y$. So, by Lemma \ref{lem:bits_close_to_round}, since $R \ge \tau/4, R \ge\sqrt{32\log\frac{d}{\delta}}$, and we take $\hat{x} \sim \mathsf{C}(y)$, we have
\[\Pr_{\wh x, y}\left[f(\text{Round}_R(\wh x_{[:d]})) \neq \text{Bits}_\eps(y)\right] \le 5\delta\]
Therefore, 
\[\Pr_{\wh x, y}\left[f(\text{Round}_R(\wh x_{[:d]})) \neq f(r)\right] \le 6\delta\]
So, our algorithm $\mathsf{A}$ can output $\text{Round}_R(\wh x_l)$. All we had to do to run this algorithm was to sample $d$ normal random variables, and then run our posterior sampler. This takes $T + O(d)$ time. 
\end{proof}

\conditionalsampleimpliesinversion*
\begin{proof}
    This follows from Theorem \ref{thm:inverting_with_conditional_sampling}, using the fact that after rescaling down by $R$, $\mathcal{X}^\beta_{f, R, \eps}$ as a distribution over $(x, y)$ is the same distribution as $x \sim \wt g$, with $y = Ax + N(0, \beta^2)$.
\end{proof}


\subsection{Inverting a One-Way function via Posterior Sampling}

\lowerboundpluggingparamsgeneric*
\begin{proof}
    When $m > d/2$, we can add an arbitrary number of dummy observations which always observes 0.   Posterior sampling in this instance is identical to only observing the first $d/2$ coordinates. Therefore, we only need to consider the case when $m \le d / 2$.
    
    When $d^{0.01} < m < d/2$, $d$ and $m$ are only polynomially separated. So, by \ref{lem:one_way_function_stretching}, we can construct a one-way function $f : \{\pm 1\}^{d-m} \to\{\pm 1\}^m$. By definition, we can see that $\wt g$, with measurement noise $\beta$ is the same distribution as $\mathcal{X}^{\beta R}_{f, R, \eps}$, scaled down by $R$. Therefore, by Theorem \ref{thm:inverting_with_conditional_sampling}, since $R \ge 32 \sqrt{\log \frac{d}{\delta}}$, $\eps \ge \beta R \sqrt{\log \frac{d}{\delta}}$, if we can run a posterior sampler in time $T$, we can invert $f$ with probability $1-6\delta$ in time $T+O(m)$. So, if $f$ takes time superpolynomial in $m$ to invert, then $T + O(m)$ is superpolynomial. Since $m > d^{0.01}$, this means that $T$ itself is superpolynomial in $d$. 
\end{proof}
\begin{lemma}
\label{lem:lower_bound_plugging_params}
    Suppose that there exist one-way functions $f: \{\pm 1\}^m \to \{\pm 1\}^{m}$ that require $2^{\Omega(m)}$ time to invert. Then, for any $m = O(d)$, for $\wt g$ as defined in Definition~\ref{def:unconditional_distribution_scaled} with $\eps = \frac{1}{C \sqrt{\log d}}$ and $R = C \log d$, and linear measurement model with noise parameter $\beta = \frac{1}{C^2 \log^2 d}$ and measurement matrix $A \in \R^{m \times d}$, $(\frac{1}{10}, \frac{1}{10})$-conditional sampling takes at least $2^{\Omega(m)}$ time.
\end{lemma}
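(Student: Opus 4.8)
The plan is to run essentially the same reduction as in the proof of Lemma~\ref{lem:lower_bound_plugging_params_generic}, the only change being that I keep the running time of the reduction explicit, so that ``superpolynomial'' is replaced by ``$2^{\Omega(m)}$''. Concretely, I would suppose that some algorithm $\mathsf C$ is a $(\tfrac1{10},\tfrac1{10})$-posterior sampler for $\wt g$ instantiated with $\eps=\tfrac1{C\sqrt{\log d}}$, $R=C\log d$, noise level $\beta=\tfrac1{C^2\log^2 d}$ and measurement matrix $A\in\R^{m\times d}$, running in time $T$, and aim to conclude $T\ge 2^{\Omega(m)}$.

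First I would resolve a bookkeeping mismatch in input lengths. The distribution of Definition~\ref{def:unconditional_distribution_scaled} over $\R^d$ with $m=d'$ observed coordinates is built from a function on $d-m\ge m$ input bits, whereas the hypothesis supplies $f:\{\pm1\}^m\to\{\pm1\}^m$. I would therefore define $f'(s)\coloneqq f(s_{[:m]})$ for $s\in\{\pm1\}^{d-m}$ and use $f'$ as the function defining $\wt g$. A preimage of $z$ under $f'$ is exactly a preimage of $z$ under $f$ padded with arbitrary bits, and conversely, and $f'$ evaluated on a uniform seed has the same law as $f$ evaluated on a uniform seed; hence $f'$ is one-way and still requires $2^{\Omega(m)}$ time to invert with non-negligible probability. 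This is the analogue of the use of Lemma~\ref{lem:one_way_function_stretching} in the generic argument, but simpler, since only the input length needs adjusting.

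Next I would invoke Theorem~\ref{thm:inverting_with_conditional_sampling} with $\delta=\tfrac1{10}$, exactly as in the proof of Lemma~\ref{lem:lower_bound_plugging_params_generic}. Since $\wt g$ under a noise-$\beta$ linear measurement is $\mathcal X^{\beta R}_{f',R,\eps}$ scaled down by $R$, the sampler $\mathsf C$ is a $(\tfrac R{10},\tfrac1{10})$- and hence (as $\tfrac R{10}\le\tfrac R4$) a $(\tfrac R4,\tfrac1{10})$-posterior sampler for $\mathcal X^{\beta R}_{f',R,\eps}$; and the parameter conditions of that theorem, which here amount to $R\gtrsim\sqrt{\log(d/\delta)}$ and $\eps\gtrsim\beta R\sqrt{\log(d/\delta)}$ (note $\beta R=\tfrac1{C\log d}$, so $\eps/(\beta R)=\sqrt{\log d}$), hold for $C$ a sufficiently large constant, just as there. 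Theorem~\ref{thm:inverting_with_conditional_sampling} then hands us an algorithm that, given $f'(s)$ for a uniformly random $s$, outputs a preimage with probability at least $1-6\delta=\tfrac25$ and runs in time $T+O(d)$.

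Finally I would extract the lower bound on $T$. If $d\ge 2^{cm}$, where $c>0$ is the constant implicit in the $2^{\Omega(m)}$ inversion bound for $f$, then $\mathsf C$ already runs in time $\ge d\ge 2^{\Omega(m)}$ just to write its $\R^d$-valued output, and we are done; otherwise $d<2^{cm}$, and an inverter for $f'$ (hence $f$) running in time $T+O(d)$ with constant success probability $\tfrac25$ forces $T+O(d)\ge 2^{\Omega(m)}$, i.e.\ $T\ge 2^{\Omega(m)}$ after shrinking the constant slightly. I do not expect a substantive new obstacle here beyond what Theorem~\ref{thm:inverting_with_conditional_sampling} already supplies; the only points needing care are the two minor ones above, namely reshaping the input length of $f$ while preserving both one-wayness and the distribution of inversion challenges, and checking that the additive $O(d)$ overhead of the reduction cannot swamp $2^{\Omega(m)}$, which is handled by the trivial output-size bound.
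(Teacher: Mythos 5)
Your proposal is correct and follows essentially the same approach as the paper's one-paragraph proof, which identifies $\wt g$ under noise-$\beta$ measurements with $\mathcal X^{\beta R}_{f,R,\eps}$ rescaled and then directly invokes Theorem~\ref{thm:inverting_with_conditional_sampling} with $\delta=\tfrac1{10}$. The two points you flag are genuine bits of care the paper elides: the paper silently assumes a function on $d-m$ input bits while the hypothesis supplies one on $m$ bits (your padding $f'(s)=f(s_{[:m]})$ fixes this, and is the direct analogue of the paper's appeal to Lemma~\ref{lem:one_way_function_stretching} in the generic lemma), and the paper writes the reduction overhead as $O(m)$ rather than the $O(d)$ that Theorem~\ref{thm:inverting_with_conditional_sampling} actually gives, which your trivial output-size argument correctly absorbs.
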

\begin{proof}
    Similar to the proof of \cref{lem:lower_bound_plugging_params_generic}, we only need to consider the case when $m \le d / 2$.
    By definition, we can see that $\wt g$, with measurement noise $\beta$ is the same distribution as $\mathcal{X}^{\beta R}_{f, R, \eps}$, scaled down by $R$. Therefore, by Theorem \ref{thm:inverting_with_conditional_sampling}, since $R \ge 32 \sqrt{\log d}$, $\eps \ge \beta R \sqrt{\log d}$, if we can run a posterior sampler in time $T$, we can invert $f$ with probability $0.4$ in time $T+O(m)$. So, if $f$ takes at least time $2^{\Omega(m)}$ to run, then we must have $T + O(m) \ge 2^{\Omega(m)}$, which means $T \ge 2^{\Omega(m)}$. 
\end{proof}

\section{Lower Bound -- ReLU Approximation of Score}
\subsection{Piecewise Linear Approximation of $\sigma$-smoothed score in One Dimension}
In this section, we analyze the error of a piecewise linear approximation to a smoothed score. We first show that for one dimensional distributions, we can get good approximations, and later extend it to product distributions in higher dimensions.

First, we show that a piecewise linear approximation that discretizes the space into intervals of width $\gamma$ has low error.

\begin{lemma}
\label{lem:linear_approximation}
    Let $p$ be a distribution over $\R$, and let $p_\sigma = p * N(0, \sigma^2)$ have score $s_\sigma$. Let $\gamma \le \sigma$, and let $S_i = [i\gamma, (i+1)\gamma)$ for all $i \in \Z$. Define a piecewise linear function $f:\R\to\R$ so that: for all $x$, if $i$ is such that $S_i \ni x$, then 
    \[f(x) = \frac{((i+1)\gamma-x)\cdot s(i\gamma) + (x-i\gamma) \cdot s((i+1)\gamma)}{\gamma}.\] Then $f$ is continuous and satisfies  
    \begin{align*}
        \E\left[(s(x) - f(x))^2 \right] \lesssim \frac{\gamma^2}{\sigma^4}
    \end{align*}
\end{lemma}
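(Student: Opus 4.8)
The plan is to reduce the statement to a single pointwise Taylor estimate and then integrate it against $p_\sigma$, invoking Lemma~\ref{lem:score_derivative_second_moment} in the form $\E_{x\sim p_\sigma}[\sup_{|c|\le\sigma} s_\sigma'(x+c)^2] \lesssim 1/\sigma^4$. Since $p_\sigma = p * N(0,\sigma^2)$ is a Gaussian convolution it is $C^\infty$ and strictly positive, so $s_\sigma = (\log p_\sigma)'$ is differentiable and the manipulations below are justified.

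First I would fix $x$ and let $i$ be the unique index with $x \in S_i = [i\gamma,(i+1)\gamma)$, so that with $\lambda := ((i+1)\gamma - x)/\gamma \in [0,1]$ we have $f(x) = \lambda\, s_\sigma(i\gamma) + (1-\lambda)\, s_\sigma((i+1)\gamma)$, and hence
\[
s_\sigma(x) - f(x) = \lambda\bigl(s_\sigma(x) - s_\sigma(i\gamma)\bigr) + (1-\lambda)\bigl(s_\sigma(x) - s_\sigma((i+1)\gamma)\bigr).
\]
Applying the mean value theorem to each difference produces points $\xi_-\in[i\gamma,x]$ and $\xi_+\in[x,(i+1)\gamma]$ with $s_\sigma(x)-s_\sigma(i\gamma)=(x-i\gamma)\,s_\sigma'(\xi_-)$ and $s_\sigma(x)-s_\sigma((i+1)\gamma)=(x-(i+1)\gamma)\,s_\sigma'(\xi_+)$. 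Both $\xi_\pm$ lie within distance $\gamma\le\sigma$ of $x$, and $|x-i\gamma|\le\gamma$, $|(i+1)\gamma-x|\le\gamma$, so using $\lambda+(1-\lambda)=1$ I get the bound
\[
|s_\sigma(x)-f(x)| \;\le\; \gamma\sup_{|c|\le\sigma}|s_\sigma'(x+c)|.
\]

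To finish I would square this bound and take the expectation over $x\sim p_\sigma$, which yields $\E[(s_\sigma(x)-f(x))^2]\le \gamma^2\,\E[\sup_{|c|\le\sigma}s_\sigma'(x+c)^2]\lesssim \gamma^2/\sigma^4$ by Lemma~\ref{lem:score_derivative_second_moment}; continuity of $f$ is immediate since $f$ is linear on each $S_i$ and the value at each shared endpoint $i\gamma$ equals $s_\sigma(i\gamma)$ computed from either adjacent interval. There is essentially no obstacle here: the only point requiring care is that the mean value theorem points stay within $\sigma$ of $x$, which is exactly where the hypothesis $\gamma\le\sigma$ is used, so that the derivative bound falls in the regime covered by Lemma~\ref{lem:score_derivative_second_moment} — all of the analytic content lives in that lemma (an averaged local Lipschitz property of smoothed scores), which we take as given.
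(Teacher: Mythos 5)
Your proof is correct and takes essentially the same approach as the paper: decompose $s_\sigma(x)-f(x)$ as a convex combination of differences from the endpoints, apply the mean value theorem to each, and conclude via Lemma~\ref{lem:score_derivative_second_moment}. The only cosmetic difference is that you take the supremum over $|c|\le\sigma$ rather than $|c|\le\gamma$ and establish the pointwise bound before squaring, whereas the paper bounds the two piecewise-constant errors separately and combines them by convexity of $t\mapsto t^2$; both yield the same $\gamma^2/\sigma^4$ rate.
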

\begin{proof}
Define the left and right piecewise constant approximations $l(x) = s(i\gamma), r(x) = s((i+1)\gamma)$ for all $x \in S_i$.

We know that for any $y \in S_i$, there is some $y' \in [i\gamma, y]$ such that $s(y) = s(i\gamma) + (y-i\gamma)s'(y')$. So, we get \[\forall y \in S_i, \,s(y)\le s(i\gamma) + \gamma \sup_{z \in S_i} s'(z) \le s(i\gamma) + \gamma \sup_{\abs{c} \le \gamma} s'(y + c).\] Therefore,
\begin{align*}
    \E_{x \sim p}\left[(s_\sigma(x) - l(x))^2\right] \le \gamma^2 \E_{x\sim p} [\sup_{|c| \le \gamma} s'(y+c)^2] \lesssim \frac{\gamma^2}{\sigma^4}
\end{align*}
By Lemma \ref{lem:score_derivative_second_moment}. The same holds for $r(x)$. Now, recall that $f$ satisfies 
\[\forall i\in \Z,\,\forall x \in S_i,\,  f(x) = \frac{(i+1)\gamma-x}{\gamma}\cdot s(i\gamma) + \frac{x-i\gamma}{\gamma} \cdot s((i+1)\gamma).\]

The coefficients $\frac{(i+1)\gamma-x}{\gamma}$ and $\frac{x-i\gamma}{\gamma}$ sum to $1$ and are within the interval $[0, 1]$. So, at each point, $f$ is just a convex combination of the two approximations $l$ and $r$. Therefore, by convexity, for any $S_i$, if $x \in S_i$,
\begin{align}\E_{x \in S_i}[(s_\sigma(x) - f(x))^2] \le \E_{x \in S_i}[(s_\sigma(x) - l(x))^2] + \E_{x \in S_i}[(s_\sigma(x) - r(x))^2]\end{align}
This immediately gives us that 
\begin{align*}\E[(s_\sigma(x) - f(x))^2] \le \E[(s_\sigma(x) - l(x))^2] + \E[(s_\sigma(x) - r(x))^2]\lesssim \frac{\varepsilon^2}{\sigma^4}\end{align*}
Within each interval, the function is linear and so it is continous. We just need to check continuity at the endpoints. However, we can see that for any $i\in \Z$, $\lim_{x \to i\gamma^-} = \lim_{x \to i\gamma^+} = s(i\gamma)$, and so we also have continuity. 
\end{proof}
Unfortunately, the above approximation has an infinite number of pieces. To handle this, we show that in regions far away from the mean, a zero-approximation is good enough, given that the distribution has bounded second moment $m_2$.
\begin{lemma}
\label{lem:large_x_score}
Let $p$ be some distribution over $\R$ with mean $\mu$, and let $p_\sigma = p*N(0, \sigma^2)$ have score $s_\sigma$. Let $m_2^2 \coloneqq \E_{x\sim p}\left[(x-\mu)^2\right]$ be the second moment of $p_\sigma$. Further, let $\abs{\varphi} \le \frac{1}{\sigma}\log\frac{1}{\delta}$ be some constant. Then,
    \begin{align*}
        \E\left[(s_\sigma(x)-\varphi)^2 \cdot \mathbbm{1}_{|x - \mu| > \frac{m_2}{\sqrt{\delta}}} \right] \lesssim \frac{\sqrt{\delta}}{\sigma^2}
    \end{align*}
\end{lemma}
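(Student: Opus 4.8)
The plan is to dominate $(s_\sigma(x)-\varphi)^2$ by $2s_\sigma(x)^2 + 2\varphi^2$ and then control the two pieces separately on the tail event $A \coloneqq \{\,|x-\mu| > m_2/\sqrt\delta\,\}$: a Cauchy--Schwarz argument together with a fourth-moment bound on the score handles $\E_{x\sim p_\sigma}[s_\sigma(x)^2\mathbbm{1}_A]$, and Chebyshev's inequality handles $\varphi^2\Pr[A]$. (This lemma is meant to let us replace $s_\sigma$ by a bounded constant outside a bounded region, which is why the comparison is to an arbitrary small $\varphi$.)

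First I would record the fourth-moment bound $\E_{x\sim p_\sigma}[s_\sigma(x)^4] \le 3/\sigma^4$. Writing $x = x_0 + \sigma z$ with $x_0\sim p$ and $z\sim N(0,1)$, Tweedie's formula gives $s_\sigma(x) = -\E[z\mid x]/\sigma$; by conditional Jensen, $\E[z\mid x]^4 \le \E[z^4\mid x]$, so $\E_x[s_\sigma(x)^4] = \sigma^{-4}\,\E_x\big[\E[z\mid x]^4\big] \le \sigma^{-4}\,\E[z^4] = 3\sigma^{-4}$. Next, since $p_\sigma$ has mean $\mu$ and (central) second moment $O(m_2^2)$, Chebyshev's inequality gives $\Pr_{x\sim p_\sigma}[A] \lesssim \delta$.

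Combining, using $(s_\sigma(x)-\varphi)^2 \le 2s_\sigma(x)^2 + 2\varphi^2$,
\[
\E\big[(s_\sigma(x)-\varphi)^2\,\mathbbm{1}_A\big] \le 2\,\E[s_\sigma(x)^2\mathbbm{1}_A] + 2\varphi^2\Pr[A].
\]
For the first term, Cauchy--Schwarz and the two bounds above give $\E[s_\sigma^2\mathbbm{1}_A] \le \sqrt{\E[s_\sigma^4]}\,\sqrt{\Pr[A]} \lesssim \sqrt\delta/\sigma^2$. For the second term, the hypothesis $\varphi^2 \le \sigma^{-2}\log^2(1/\delta)$ and $\Pr[A]\lesssim\delta$ give $\varphi^2\Pr[A] \lesssim \sigma^{-2}\,\delta\log^2(1/\delta) \lesssim \sigma^{-2}\sqrt\delta$, using that $\sqrt\delta\,\log^2(1/\delta)$ is bounded on $(0,1]$. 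Adding the two bounds yields the claim.

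The main obstacle is avoiding a spurious $\log(1/\delta)$ factor. Applying Cauchy--Schwarz directly to the cross term $\E[s_\sigma\varphi\,\mathbbm{1}_A]$ would produce $|\varphi|\sqrt{\E[s_\sigma^2]}\sqrt{\Pr[A]}\asymp\sigma^{-2}\sqrt\delta\log(1/\delta)$, which exceeds the target; the fix is to split via $2ab\le a^2+b^2$ so that $\varphi$ enters only through $\varphi^2\Pr[A]$, where the tail probability contributes a full $\delta$ (rather than $\sqrt\delta$) and thereby absorbs the $\log^2(1/\delta)$. The other point not to overlook is that one genuinely needs the \emph{fourth} moment of the score here, not just the familiar $\E[s_\sigma^2]\lesssim1/\sigma^2$ bound, since the Cauchy--Schwarz step must pay a $\sqrt{\Pr[A]}$ factor.
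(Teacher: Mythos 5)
Your proof is correct and follows essentially the same route as the paper's: split $(s_\sigma-\varphi)^2$ into $s_\sigma^2$ and $\varphi^2$ terms, bound the first by Cauchy--Schwarz using a fourth-moment bound on $s_\sigma$ together with Chebyshev on the tail event, and absorb the $\delta\log^2(1/\delta)$ from the second term into $\sqrt\delta$. The only cosmetic difference is that you re-derive $\E[s_\sigma^4]\lesssim\sigma^{-4}$ directly via Tweedie's formula and conditional Jensen, whereas the paper invokes its Corollary~\ref{cor:score_interval_moment_bound_simplified} for the same fact.
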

\begin{proof}
    We have
    \begin{align*}
        \E\left[(s_\sigma(x)-\varphi)^2 \cdot \mathbbm{1}_{|x - \mu| > \frac{m_2}{\sqrt{\delta}}} \right] 
        &\lesssim \E\left[s_\sigma(x)^2\cdot \mathbbm{1}_{|x - \mu| > \frac{m_2}{\sqrt{\delta}}}\right] + \E\left[\varphi^2 \cdot \mathbbm{1}_{|x - \mu| > \frac{m_2}{\sqrt{\delta}}} \right] 
    \end{align*}
    First, by Chebyshev's inequality, we know that 
    \[\Pr\left[\abs{x-\mu} \ge \frac{m_2}{\delta}\right] \le \delta\]
    Now, we use Cauchy Schwarz to bound the first term:
    \begin{align*}
        \E\left[s_\sigma(x)^2 \cdot 1_{|x - \mu| > \frac{m_2}{\sqrt{\delta}}} \right] &\le \sqrt{\E\left[s_\sigma(x)^4\right]\E\left[1_{|x - \mu| > \frac{m_2}{\sqrt{\delta}}} \right] }
        \\&= \sqrt{\E\left[s_\sigma(x)^4\right]\Pr\left[\abs{x - \mu} \ge \frac{m_2}{\sqrt{\delta}}\right]}
        \\&= \sqrt{\E\left[s_\sigma(x)^4\right]\cdot\delta}\lesssim \sqrt{\delta/\sigma^4} = \sqrt{\delta}/\sigma^2
    \end{align*}
    where the last line is by Corollary \ref{cor:score_interval_moment_bound_simplified}.
    Finally, for the second term, we know that 
    \begin{align*}
        \E\left[\varphi^2 \cdot \mathbbm{1}_{|x - \mu| > \frac{m_2}{\sqrt{\delta}}} \right] 
        &\le \E\left[\frac{1}{\sigma^2}\log^2\frac{1}{\delta} \cdot \mathbbm{1}_{|x - \mu| > \frac{m_2}{\sqrt{\delta}}} \right]
        \\&= \frac{1}{\sigma^2}\log^2\frac{1}{\delta}\Pr\left[|x - \mu| > \frac{m_2}{\sqrt{\delta}} \right] 
        \\&= \frac{\delta}{\sigma^2}\log^2\frac{1}{\delta} \lesssim \frac{\sqrt{\delta}}{\sigma^2}
    \end{align*}
    The last line here uses the fact that for all $x$, $x\log^2 (1/x) \le 3\sqrt{x}$. Summing the two terms gives the desired result.
\end{proof}

Then, we show that neighborhoods where the magnitude of the score can be large are rare and can also be approximated by the zero function. This allows us to control the slope of the piecewise linear approximation in each piece.
\begin{lemma}
\label{lem:large_score_interval}
    Let $p$ be a distribution over $\R$. Let $p_\sigma = p * N(0, \sigma^2)$ have score $s_\sigma$. Let $\gamma \le \frac{\sigma}{2}$, and let $m(x) = \sup_{y \in [x-\gamma, x+\gamma]} s(x)$. Then,
    \begin{align*}
        \E\left[s(x)^2 \cdot \mathbbm{1}_{m(x) > \frac{\log\frac{1}{\delta}}{\sigma}} \right] \lesssim \frac{\sqrt{\delta}}{\sigma^2}
    \end{align*}
\end{lemma}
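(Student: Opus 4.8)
The plan is to bound the contribution to $\E_{x\sim p_\sigma}[s_\sigma(x)^2]$ coming from the rare event that $|s_\sigma|$ is large somewhere in a $\gamma$-neighborhood of $x$, by combining a sub-Gaussian tail bound for the smoothed score with a layer-cake integration. Reading $m(x)$ as $\sup_{y\in[x-\gamma,x+\gamma]}|s_\sigma(y)|$ and writing $T := \tfrac{\log(1/\delta)}{\sigma}$, note that since $x$ itself lies in $[x-\gamma,x+\gamma]$ we have $s_\sigma(x)^2 \le m(x)^2$, so it suffices to prove $\E[m(x)^2\,\mathbbm{1}_{m(x)>T}] \lesssim \sqrt\delta/\sigma^2$. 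When $\delta$ is bounded away from $0$ this is immediate from $\E[s_\sigma(x)^2]\lesssim 1/\sigma^2$ (e.g.\ via Corollary~\ref{cor:score_interval_moment_bound_simplified}), so I would focus on $\delta\le 1/e$, where $T\ge 1/\sigma$.

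First I would establish a sub-Gaussian tail for $m(x)$ at scale $1/\sigma$: $\Pr_{x\sim p_\sigma}[m(x) > t/\sigma] \lesssim e^{-c t^2}$ for all $t\ge 1$ and an absolute $c>0$. Two ingredients: (i) a pointwise comparison $m(x) \le |s_\sigma(x)| + \gamma\sup_{|c|\le\gamma}|s_\sigma'(x+c)| \le |s_\sigma(x)| + \tfrac{\sigma}{2}\sup_{|c|\le\sigma}|s_\sigma'(x+c)|$ from the mean value theorem together with $\gamma\le\sigma/2$; and (ii) sub-Gaussian moment control of each piece. For $|s_\sigma(x)|$ this follows from Tweedie's identity $\sigma^2 s_\sigma(x) = \E[x_0 - x \mid x]$: by Jensen, $\E_{x\sim p_\sigma}[|\sigma^2 s_\sigma(x)|^k] \le \E[|x_0-x|^k] = \sigma^k\,\E_{g\sim N(0,1)}[|g|^k] \lesssim (\sqrt k\,\sigma)^k$, hence $\E[|s_\sigma(x)|^k]\lesssim (\sqrt k/\sigma)^k$ and Markov optimized over $k$ gives the tail. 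For $\sup_{|c|\le\sigma}|s_\sigma'(x+c)|$ the analogous sub-Gaussian moment bound follows either by the same Tweedie/Jensen route applied to $s_\sigma'$ or from the higher-moment strengthening of Lemma~\ref{lem:score_derivative_second_moment} (whose stated second moment is $\lesssim 1/\sigma^4$); a union bound over the two pieces then yields the claimed tail for $m(x)$.

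Then I would integrate: $\E[m(x)^2\,\mathbbm{1}_{m(x)>T}] = T^2\Pr[m(x)>T] + \int_T^\infty 2v\,\Pr[m(x)>v]\,dv \lesssim T^2 e^{-c\sigma^2T^2} + \tfrac{1}{\sigma^2}e^{-c\sigma^2T^2}$, and since $\sigma^2 T^2 = \log^2(1/\delta)$ this equals $\tfrac{1}{\sigma^2}\bigl(\log^2(1/\delta)+1\bigr)e^{-c\log^2(1/\delta)}$ up to constants. Finally $\bigl(\log^2(1/\delta)+1\bigr)e^{-c\log^2(1/\delta)} \lesssim e^{-\frac12\log(1/\delta)} = \sqrt\delta$ for every $\delta\in(0,1)$, because the quadratic $c\log^2(1/\delta)$ eventually dominates $\tfrac12\log(1/\delta)+2\log\log(1/\delta)$, and the bounded range of $\delta$ where it does not is absorbed into the implied constant. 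This gives $\E[s_\sigma(x)^2\,\mathbbm{1}_{m(x)>T}] \lesssim \sqrt\delta/\sigma^2$.

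The main obstacle is step (ii), i.e.\ the tail bound: a mere fourth-moment bound $\E[s_\sigma^4]\lesssim 1/\sigma^4$ only yields $\Pr[|s_\sigma|>T]\lesssim 1/(\sigma T)^4$ and hence a final bound of order $1/(\sigma^2\log^2(1/\delta))$, which is much larger than $\sqrt\delta/\sigma^2$ for small $\delta$. So the argument genuinely needs the super-polynomial (sub-Gaussian) decay of the smoothed score and its derivative over a $\gamma$-window; securing that decay is the crux, and the remaining steps are routine.
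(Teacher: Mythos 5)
Your layer-cake strategy is a genuinely different route from the paper's. The paper bounds $\E[m(x)^2\,\1_{m(x)>T}]$ by a single Cauchy--Schwarz: $\E[m^2\1_{m>T}]\le\sqrt{\E[m^4]\cdot\Pr[m>T]}$, then invokes a fourth-moment bound $\E[m^4]\lesssim 1/\sigma^4$ (Corollary~\ref{cor:score_interval_moment_bound_simplified}) and a one-sided exponential tail $\Pr[m>T]\le\delta$ (Lemma~\ref{lem:max_score_interval_bounded}) to get $\sqrt{\delta/\sigma^4}=\sqrt\delta/\sigma^2$. No decomposition of $m$, no integration by shells, no sub-Gaussian concentration.

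The gap in your proposal is the sub-Gaussian tail claim $\Pr[m(x)>t/\sigma]\lesssim e^{-ct^2}$, and specifically the assertion that ``the analogous sub-Gaussian moment bound follows by the same Tweedie/Jensen route applied to $s_\sigma'$.'' That route does not give sub-Gaussian behavior for $s_\sigma'$. Tweedie for the first derivative reads $\sigma^4 s_\sigma'(x) = \Var(z\mid x) - \sigma^2$, and Jensen bounds $\E_x[\Var(z\mid x)^k]\le\E_x\left[\E[z^2\mid x]^k\right]\le\E[z^{2k}]\asymp(2k)^k\sigma^{2k}$. That is $k^k$-type growth, i.e.\ sub-exponential in $s_\sigma'$ at scale $1/\sigma^2$, not the $k^{k/2}$ sub-Gaussian growth you invoked. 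The same is true of any ``higher-moment strengthening'' of Lemma~\ref{lem:score_derivative_second_moment}: its proof bounds $s_\sigma'$ by conditional expectations of $z^2$, so the $k$-th moments inherit chi-squared-like (sub-exponential) growth. Consequently the derivative term $\gamma\sup_{|c|\le\gamma}|s_\sigma'(x+c)|$ in your decomposition of $m(x)$ is only sub-exponential at scale $1/\sigma$, and your claimed $e^{-ct^2}$ tail for $m$ is not available.

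Your final integration does not actually need sub-Gaussian, but it does need care. With only a sub-exponential tail $\Pr[m>t/\sigma]\lesssim e^{-ct}$, the layer-cake gives $\E[m^2\1_{m>T}]\lesssim \sigma^{-2}\log^2(1/\delta)\,\delta^c$, and you need $c>1/2$ for this to be $\lesssim\sqrt\delta/\sigma^2$; the rate constant is therefore load-bearing, and the crude ``optimize $k$ in Markov'' bound applied to your moment estimates does not obviously produce a rate above $1/2$. The cleanest repair is to drop the decomposition entirely and cite Lemma~\ref{lem:max_score_interval_bounded} directly for $\Pr[m(x)\ge t]\le e^{-\sigma t}$ (rate $1$), after which $\log^2(1/\delta)\,\delta\lesssim\sqrt\delta$ closes the layer-cake. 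Alternatively, the paper's Cauchy--Schwarz split is more robust: it only needs $\E[m^4]\lesssim 1/\sigma^4$ and $\Pr[m>T]\le\delta$, and the exponent $1/2$ on $\delta$ comes for free from the square root rather than from a delicate comparison of $\log^2(1/\delta)$ against $\delta^{c-1/2}$.
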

\begin{proof}
    \begin{align*}
        \E\left[m(x)^2 \cdot \mathbbm{1}_{m(x) > \frac{\log\frac{1}{\delta}}{\sigma}} \right] 
        &\le \sqrt{\E\left[m(x)^4 \right]\cdot \E\left[\mathbbm{1}_{m(x) > \frac{\log\frac{1}{\delta}}{\sigma}} \right]} && \text{ by Cauchy-Schwarz}
        \\&\le \sqrt{\E\left[\left(\sup_{y \in [x-\gamma, x+\gamma]}s(x)\right)^4 \right]\cdot \Pr\left[m(x) > \frac{\log\frac{1}{\delta}}{\sigma} \right]}
        \\&\lesssim \sqrt{\frac{1}{\sigma^4}\cdot \Pr\left[m(x) > \frac{\log\frac{1}{\delta}}{\sigma} \right]} && \text{ by Lemma \ref{cor:score_interval_moment_bound_simplified}}
        \\&\le \frac{\sqrt{\delta}}{\sigma^2}&&\text{ by Lemma \ref{lem:max_score_interval_bounded}}
    \end{align*}
\end{proof}
We put these lemmas together to show that a piecewise linear function with a bounded number of pieces and bounded slope in each piece is a good approximation to the smoothed score.

\begin{lemma}
\label{lem:score_linear_approximation}
    Let $p$ be a distribution over $\R$ with mean $\mu$, and let $p_\sigma = p* N(0, \sigma^2)$ have score $s_\sigma$ and second moment $m_2^2$. Then, for any $\alpha \le 1/4$ there exists a function $l : \R \to \R$ that satisfies
    \begin{enumerate}
        \item $l$ is piecewise linear with at most $\Theta(\frac{m_2}{\sigma\kappa^{3/2}})$ pieces,
        \item if $x$ is a transition point between two pieces, then $\abs{x-\mu} \le \frac{m_2}{\kappa}$
        \item the slope of each piece is bounded by $ \Theta\left(\frac{\log \frac{1}{\kappa}}{\sigma^2\sqrt{\kappa}}\right)$,
        \item $\abs{l} \lesssim \frac{1}{\sigma}\log \frac{1}{\kappa}$
        \item \[\E_{x \sim p}[(l(x) - s(x))^2] \lesssim \frac{\kappa}{\sigma^2}\]
    \end{enumerate}
\end{lemma}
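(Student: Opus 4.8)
The plan is to obtain $l$ by modifying the global piecewise-linear interpolant of Lemma~\ref{lem:linear_approximation}: choose the discretization fine enough that its $L^2$ error is already $O(\kappa/\sigma^2)$, truncate to a window around $\mu$ so that only finitely many pieces survive, and zero out the rare pieces on which the underlying score values are large (this is what will simultaneously control the magnitude and the slope of the surviving pieces). Concretely I would set $\gamma := \sigma\sqrt{\kappa}$ (so $\gamma \le \sigma/2$ since $\kappa \le 1/4$, as needed by Lemmas~\ref{lem:linear_approximation} and \ref{lem:large_score_interval}), $T := \tfrac1\sigma\log\tfrac1{\kappa^2}=\tfrac2\sigma\log\tfrac1\kappa$, and $I := [\mu - m_2/\kappa,\ \mu + m_2/\kappa]$. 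With discretization points $\{i\gamma : i\in\Z\}$ and $S_i := [i\gamma,(i+1)\gamma)$, I would call $S_i$ \emph{good} if $S_i\subseteq I$ and $\max(|s_\sigma(i\gamma)|,|s_\sigma((i+1)\gamma)|)\le T$; on a good $S_i$ set $l$ equal to the linear interpolant of $s_\sigma$ through $i\gamma$ and $(i+1)\gamma$ (i.e.\ the function $f$ of Lemma~\ref{lem:linear_approximation} restricted to $S_i$), and on every other interval set $l\equiv 0$, merging consecutive zero intervals into single pieces.

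Properties 1--4 should then follow directly. The good intervals all lie in $I$, which has width $2m_2/\kappa$ and is covered by $O(m_2/(\kappa\gamma)) = O\!\big(\tfrac{m_2}{\sigma\kappa^{3/2}}\big)$ intervals of length $\gamma$; adding the at most two infinite zero pieces and using $m_2\ge\sigma$ (so this quantity is $\gtrsim 1$) gives Property~1. Every transition point is an endpoint of some good interval, hence lies in $I$, giving Property~2. On a good $S_i$ the slope is $\tfrac{|s_\sigma((i+1)\gamma)-s_\sigma(i\gamma)|}{\gamma}\le \tfrac{2T}{\gamma}=\tfrac{4\log(1/\kappa)}{\sigma^2\sqrt\kappa}$ and zero pieces have slope $0$, giving Property~3; similarly $|l(x)|\le\max(|s_\sigma(i\gamma)|,|s_\sigma((i+1)\gamma)|)\le T = O(\tfrac1\sigma\log\tfrac1\kappa)$ on good pieces and $0$ elsewhere, giving Property~4.

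For Property~5, I would let $G$ be the union of the good intervals and split $\E_{x\sim p}[(l(x)-s_\sigma(x))^2]=\E_{x\sim p}[(l-s_\sigma)^2\mathbbm{1}_G]+\E_{x\sim p}[s_\sigma^2\mathbbm{1}_{G^c}]$. On $G$, $l$ agrees with the global interpolant $f$ of Lemma~\ref{lem:linear_approximation} taken at width $\gamma$, so $\E_{x\sim p}[(l-s_\sigma)^2\mathbbm{1}_G]\le \E_{x\sim p}[(f-s_\sigma)^2]\lesssim \gamma^2/\sigma^4 = \kappa/\sigma^2$. I would then split $G^c$ into the part outside $I$ and the bad intervals inside $I$. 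Outside $I$, i.e.\ $|x-\mu|>m_2/\kappa = m_2/\sqrt{\kappa^2}$, Lemma~\ref{lem:large_x_score} with $\delta=\kappa^2$ and $\varphi=0$ (note $0 \le \tfrac1\sigma\log\tfrac1{\kappa^2}$) gives $\E_{x\sim p}[s_\sigma^2\mathbbm{1}_{|x-\mu|>m_2/\kappa}]\lesssim \sqrt{\kappa^2}/\sigma^2=\kappa/\sigma^2$. For a bad $S_i\subseteq I$ the defining failure is $\max(|s_\sigma(i\gamma)|,|s_\sigma((i+1)\gamma)|)>T$; since $i\gamma$ and $(i+1)\gamma$ both lie in $[x-\gamma,x+\gamma]$ for every $x\in S_i$, this forces the quantity $m(x)=\sup_{y\in[x-\gamma,x+\gamma]}|s_\sigma(y)|$ of Lemma~\ref{lem:large_score_interval} to exceed $T=\tfrac1\sigma\log\tfrac1{\kappa^2}$, so the union of bad intervals is contained in $\{m(x)>T\}$ and Lemma~\ref{lem:large_score_interval} with $\delta=\kappa^2$ bounds their total contribution by $\E_{x\sim p}[s_\sigma^2\mathbbm{1}_{m(x)>T}]\lesssim\kappa/\sigma^2$. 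Summing the three $O(\kappa/\sigma^2)$ estimates yields Property~5.

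The main obstacle I anticipate is exactly the tension in the zeroing step: Property~3 forces us to discard every piece whose interpolant is steep---equivalently every piece with a large score value at an endpoint---and we must be sure these discarded pieces carry negligible $L^2$ mass. The point that makes everything fit is that ``steep piece at $x$'' is subsumed by ``$m(x)$ large'', so Lemma~\ref{lem:large_score_interval} controls the boundedness of $l$, the slope of its pieces, and the error incurred by discarding all at once; the remaining work is just choosing $\gamma$, $T$, and the window radius $m_2/\kappa$ consistently (with $\delta=\kappa^2$ in Lemmas~\ref{lem:large_x_score} and \ref{lem:large_score_interval}) so that all five conclusions hold with the same constants. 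One minor point is that zeroing bad pieces can create jump discontinuities at good/bad interfaces; each jump has magnitude at most $T=O(\tfrac1\sigma\log\tfrac1\kappa)$ and there are at most $O\!\big(\tfrac{m_2}{\sigma\kappa^{3/2}}\big)$ of them, which is harmless for the subsequent ReLU construction (and can be smoothed away by short linear connectors if a strictly continuous $l$ is wanted).
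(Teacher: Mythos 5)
Your construction follows essentially the same skeleton as the paper's: the same discretization width $\gamma = \sigma\sqrt\kappa$, the same truncation to $|x - \mu| \le m_2/\kappa$ (i.e.\ $\delta = \kappa^2$), the same use of Lemma~\ref{lem:large_x_score} and Lemma~\ref{lem:large_score_interval} to kill the tail and the bad-interval contribution, and the same reduction to Lemma~\ref{lem:linear_approximation} on the good intervals. Your observation that ``bad endpoint'' is subsumed by ``$m(x) > T$'' is exactly the mechanism the paper uses. The one place you diverge from the paper is what $l$ does on bad intervals, and that is where there is a genuine (though repairable) gap.

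Zeroing the bad intervals makes $l$ discontinuous at every good/bad interface, and the lemma is used downstream by feeding $l$ into Lemma~\ref{lem:ReLU_linpiecewise}, whose statement and whose ReLU-sum construction both require a \emph{continuous} piecewise-linear function (the identity $a_k\gamma_k + b_k = a_{k+1}\gamma_k + b_{k+1}$ at each transition is what makes $g = f$). So your remark that the jumps are ``harmless for the subsequent ReLU construction'' is not correct as stated. The suggested repair by ``short linear connectors'' is also not quite right: a connector of width $w$ across a jump of height $\Theta(T) = \Theta\bigl(\tfrac1\sigma\log\tfrac1\kappa\bigr)$ has slope $\Theta(T/w)$, which violates Property~3 unless $w \gtrsim \gamma$. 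The clean fix, which is what the paper's $l_2$ does, is to \emph{interpolate} $l$ linearly across each maximal run of bad intervals between the flanking good-endpoint values: both of those values are $\le T$, every run of bad intervals has width $\ge \gamma$, so the slope is automatically $\le 2T/\gamma$, continuity is free, and $|l| \le T$ still holds. With that change you must add a term for $\E[l^2\,\mathbbm{1}_{\mathrm{bad}}]$ in your Property~5 accounting, which the zeroing approach silently avoided; it is controlled by $|l| \le T$, $\Pr[\text{bad}] \le \Pr[m(x) > T] \lesssim \kappa^2$ (Lemma~\ref{lem:max_score_interval_bounded}), and $\kappa^2\log^2\tfrac1\kappa \lesssim \kappa$, which is exactly the extra piece of bookkeeping in the paper's bound for $l_2$ on $S_i \notin G$.
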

\begin{proof}
    First, we partition the real line into $S_i = [i\gamma, (i+1)\gamma)$ for all $i \in \Z$, where $\gamma < \sigma/2$. Define the function $l_1:\R\to\R$ so that if $S_i \ni x$, then \begin{equation}
    \label{eq:l1}
        l_1(x) = \frac{((i+1)\gamma-x)s(i\gamma) + (x-i\gamma) s((i+1)\gamma)}{\gamma}.
    \end{equation} As in Lemma \ref{lem:linear_approximation}, this is the linear interpolation between $s(i\gamma)$ and $s((i+1)\gamma)$ on the interval $[i\gamma, (i+1)\gamma)$. By Lemma \ref{lem:linear_approximation}, when $\gamma < \sigma/2$, we have
    \[\E\left[(s(x) - l_1(x))^2 \right] \lesssim \frac{\gamma^2}{\sigma^4}\]
    Now, we define $l_2:\R\to\R$. This function uses the piecewise linear $l_1$ to create a linear approximation that has small slopes on all of the pieces. Define first a set of ``good" sets 
    \[G = \left\{S_i : \sup_{y \in S_i} s(x) \le \frac{1}{\sigma}\log\frac{1}{\delta}\right\}.\] These are the intervals on which the score is always bounded. Further, define two helper maps $L(x)$ and $U(x)$:
    \begin{align*}
        L(x) &= \text{the largest } i \text{ such that } i\gamma < x, S_{i-1} \in G
        \\R(x) &= \text{the smallest } i \text{ such that } i\gamma \ge x, S_i \in G
    \end{align*}
    These represent the nearest endpoint of a ``good" interval to the left and right, respectively. 
    We then interpolate linearly between $s(\gamma L(x))$ and $s(\gamma R(x))$ to evaluate $l_2(x)$. That is, 
    \begin{equation}
    \label{eq:l2}
        l_2(x) = \frac{(\gamma R(x)-x) s(\gamma L(x)) + (x - \gamma L(x)) s(\gamma R(x))}{\gamma (R(x) - L(x))}
    \end{equation}
    Note that by assumption, we have that $\abs{s(\gamma R(x))}, \abs{s(\gamma L(x))} \le \frac{1}{\sigma} \log\frac{1}{\delta}$, and so $\abs{l_2(x)} \le \frac{1}{\sigma} \log\frac{1}{\delta}$. 
    We now analyze the error of $l_2$ against $s$. First, we note that on the sets outside $G$, the error is bounded, using Lemma  \ref{lem:large_score_interval}:
    \begin{align*}
        \sum_{S_i \not\in G}\E\left[(s(x) - l_2(x))^2\mathbbm{1}_{x \in S_i}\right] &\le 2\sum_{S_i \not\in G}\left(\E\left[s(x)^2\mathbbm{1}_{x \in S_i}\right]+\E\left[l_2(x)^2\mathbbm{1}_{x \in S_i}\right]\right)
        \\&\lesssim \frac{\sqrt{\delta}}{\sigma^2} + \sum_{S_i \not\in G}\E\left[\frac{1}{\sigma^2}\log^2\frac{1}{\delta}\mathbbm{1}_{x \in S_i}\right] && \text{ by Lemma \ref{lem:large_score_interval}}
        \\&= \frac{\sqrt{\delta}}{\sigma^2} + \frac{1}{\sigma^2}\log^2\frac{1}{\delta}\Pr\left[x \not \in G\right]
        \\&\le\frac{\sqrt{\delta}}{\sigma^2} + \frac{1}{\sigma^2}\log^2\frac{1}{\delta}\Pr\left[\sup_{y \in [x-\gamma, x + \gamma]} s(x) \ge \frac{1}{\sigma}\log\frac{1}{\delta}\right]
        \\&\le\frac{\sqrt{\delta}}{\sigma^2} + \frac{\delta}{\sigma^2}\log^2\frac{1}{\delta} && \text { by Lemma \ref{lem:max_score_interval_bounded}}
    \end{align*}
    Further, if $x$ is in a ``good" interval, then $L(x), R(x)$ are simply the left and right endpoints of the interval that $x$ is in. This means that $l_2(x) = l_1(x)$. So,
    \begin{align*}
        \sum_{S_i \in G}\E\left[(s(x) - l_2(x))^2\mathbbm{1}_{x \in S_i}\right] &= \sum_{S_i \in G}\E\left[(s(x) - l_1(x))^2\mathbbm{1}_{x \in S_i}\right]
        \\&\le \sum_i\E\left[(s(x) - l_1(x))^2\mathbbm{1}_{x \in S_i}\right] \lesssim \frac{\sqrt{\delta}}{\sigma^2}
    \end{align*}
    Putting these two together, we get that 
    \begin{align*}
        \E\left[(l_2(x) - s(x))^2\right] \lesssim \frac{\sqrt{\delta}}{\sigma^2} + \frac{\gamma^2}{\sigma^4} + \frac{\delta}{\sigma^2}\log^2\frac{1}{\delta} 
    \end{align*}
    Now, define $l_3:\R\to\R$ as follows:
    \begin{equation}
    \label{eq:l3}
        l_3(x) = 
        \begin{cases}
            l_2(x) & \abs{x-\mu} \le \frac{m_2}{\sqrt{\delta}}
            \\ l_2\left(\mu-\frac{m_2}{\sqrt{\delta}}\right) & x < \mu - \frac{m_2}
            {\sqrt{\delta}}
            \\ l_2\left(\mu+\frac{m_2}{\sqrt{\delta}}\right) & x > \mu - \frac{m_2}{\sqrt{\delta}}
    \end{cases}
    \end{equation}
    This takes our previous approximation $l_2$ and holds it constant on values of $x$ far away from the mean.
    
    Let $B$ be the integers $i$ such that $x \in S_i \implies \abs{x-\mu} \ge m_2/\sqrt{\delta}$. In other words, the set $B$ enumerates the intervals on which $l_2 \neq l_1$, and equivalently, $l_2 = 0$. Note that since $\abs{l_3(x)} \le \frac{1}{\sigma} \log \frac{1}{\delta}$, we have in particular that $\abs{l_3\left(\mu \pm \frac{m_2}{\sqrt{\delta}}\right)} \le \frac{1}{\sigma} \log \frac{1}{\delta}.$ Therefore, for some $\abs{\varphi} \le \frac{1}{\sigma} \log \frac{1}{\delta}$, we have 
    \begin{align*}
        \E\left[(s(x) - l_3(x))^2 \right] &= \sum_i \E\left[(s(x) - l_3(x))^2 \mathbbm{1}_{x \in S_i}\right]
        \\&= \sum_{i \in B}\E\left[(s(x) - l_3(x))^2 \mathbbm{1}_{x \in S_i}\right] + \sum_{i \not\in B}\E\left[(s(x) - l_3(x))^2 \mathbbm{1}_{x \in S_i}\right]
        \\&=\sum_{i \in B}\E\left[(s(x)-\varphi)^2 \mathbbm{1}_{|x - \mu|\ge m_2/\sqrt{\delta} }\right] + \sum_{i}\E\left[(s(x) - l_2(x))^2 \mathbbm{1}_{x \in S_i}\right]
        \\&\lesssim \frac{\sqrt{\delta}}{\sigma^2} + \frac{\gamma^2}{\sigma^4}
    \end{align*}
    where this last line uses Lemma \ref{lem:large_x_score}.
    
    Finally, each piece of $l_3$ has slope at most $\Theta\left(\frac{\log \frac{1}{\delta}}{\gamma\sigma}\right)$ since the endpoints of each interval are bounded in magnitude by $\frac{1}{\sigma}\log\frac{1}{\delta}$ and each interval is at least $\gamma$ in width. Also, we can see that $l_3$ has at most as many pieces as $l_2$, which has $\Theta\left(\frac{m_2}{\gamma\sqrt{\delta}}\right)$ pieces, with each endpoint being within $m_2/\sqrt{\delta}$ of the mean. 

    So, we take $l$ to be $l_3$ with $\delta = \kappa^2$, and $\gamma = \sigma\sqrt{\kappa}$. Note that when $\kappa < 1/4$, we have $\gamma < \sigma/2$. Plugging these in, and using the fact that $x\log^2(1/x) \le 3\sqrt{x}$, we get that the number of pieces is $\Theta\left(\frac{m_2}{\sigma\kappa^{3/2}}\right)$, the slope of each piece is bounded by $\Theta\left(\frac{\log \frac{1}{\kappa^2}}{\sigma^2\sqrt{\kappa}}\right)$, the function itself is always bounded by $\frac{1}{\sigma}\log \frac{1}{\kappa^2}$, and $\E_{x \sim p}[\Vert l(x) - s(x)\Vert_2^2] \le \frac{\kappa}{\sigma^2}$. 
\end{proof}
Finally, we show that if we have a product distribution over $d$ dimensions, we can simply use the product of the one dimensional linear approximations along each coordinate to give a good approximation for the full score.
\begin{lemma}
    \label{lem:prod_approx}
    Let $p$ be a product distribution over $\R^d$, such that $p(x) = \prod_{i = 1}^d p_i(x_i).$ Let $s:\R^d \to \R^d$ be the score of $p$ and let $s_i: \R\to\R$ be the score of $p_i$. 
    If $l_i : \R \to \R$ is an approximation to $s_i$ such that 
    \[\E_{x_i \sim p_i}\left[( l_i(x_i) - s_i(x_i))^2\right]\le \varepsilon/d,\] then the function $l: \R^d \to \R^d$ defined as $l(x) = (l_i(x_i))$ satisfies
    \[\E_{x \sim p}\left[\Vert l(x) - s(x)\Vert_2^2 \right] \le \varepsilon\]
\end{lemma}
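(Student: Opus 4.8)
The plan is to exploit the fact that the score of a product distribution decomposes coordinate by coordinate, after which the statement is just linearity of expectation together with the fact that marginalizing a product distribution returns its factors. First I would observe that since $p(x) = \prod_{i=1}^d p_i(x_i)$, we have $\log p(x) = \sum_{i=1}^d \log p_i(x_i)$, and hence the $i$-th component of the gradient satisfies $s(x)_i = \partial_{x_i} \log p(x) = s_i(x_i)$; that is, the full score, evaluated at $x$, is exactly the vector $(s_1(x_1), \dots, s_d(x_d))$. Since $l(x) = (l_i(x_i))$ by definition, this gives the pointwise identity
\[
\norm{l(x) - s(x)}_2^2 = \sum_{i=1}^d \bigl(l_i(x_i) - s_i(x_i)\bigr)^2
\]
for every $x \in \R^d$.

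Next I would take the expectation over $x \sim p$ and interchange it with the finite sum, obtaining
\[
\E_{x \sim p}\left[\norm{l(x) - s(x)}_2^2\right] = \sum_{i=1}^d \E_{x \sim p}\left[\bigl(l_i(x_i) - s_i(x_i)\bigr)^2\right].
\]
Because $p$ is a product distribution, the marginal law of the coordinate $x_i$ is precisely $p_i$, so the $i$-th summand equals $\E_{x_i \sim p_i}[(l_i(x_i) - s_i(x_i))^2]$, which is at most $\varepsilon/d$ by hypothesis. Summing the $d$ identical bounds yields $\E_{x \sim p}[\norm{l(x) - s(x)}_2^2] \le \varepsilon$, as claimed.

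There is no real obstacle here; the only points requiring (trivial) care are the additivity $\log p = \sum_i \log p_i$ that gives the coordinate-wise score decomposition, and the elementary fact that a coordinate of a product distribution has the corresponding factor as its marginal. (In the intended application each $p_i$ is a one-dimensional $\sigma$-smoothed distribution and $l_i$ is the piecewise-linear approximation produced by Lemma~\ref{lem:score_linear_approximation}; since convolution with an isotropic Gaussian preserves the product structure, this instantiation is legitimate and combining the per-coordinate ReLU networks gives the desired bounds.)
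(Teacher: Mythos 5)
Your proof is correct and follows exactly the same two steps as the paper's own argument: the coordinate-wise decomposition $s(x)_i = s_i(x_i)$ coming from $\log p = \sum_i \log p_i$, followed by linearity of expectation and marginalization of the product measure. Nothing to add.
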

\begin{proof}
    We have 
    \[s(x)_i = \left(\grad\log p(x)\right)_i = \frac{\partial}{\partial x_i} \log \prod_{i=1}^d p_i(x_i) = \frac{\partial}{\partial x_i}\sum_{i=1}^d \log p_i(x_i) = \frac{\partial}{\partial x_i}\log p_i(x_i) = s_i(x_i).\]

    Therefore, 
    \begin{align*}
        \E_{x\sim p}\left[\Vert l(x) - s(x)\Vert_2^2 \right] 
            &= \E_{x\sim p}\left[\sum_{i = 1}^d \Vert l_i(x_i) - s_i(x_i)\Vert_2^2 \right]\\
            &= \sum_{i = 1}^d \E_{x_i\sim p_i}\left[\Vert l_i(x_i) - s_i(x_i)\Vert_2^2 \right]\le d\cdot \varepsilon/d = \varepsilon
    \end{align*}
\end{proof}
%
%

\subsection{Small noise level -- Score of vertex distribution close to full score in vertex orthant}

\begin{lemma}[Density $g_s(x)$ is close to $g(x)$ for $s \in \{\pm 1\}^{d}$ closest to $x$]
\label{lem:density_of_g_x(y)_close_to_g(y)_for_appropriate_x}
Let $d' = O(d)$. Consider $g_s$ and $g$ as in Definition~\ref{def:unconditional_distribution}. We have that for $x \in \{\pm 1\}^{d}$ such that $s$ is closest to $x_{1, \dots, d}$ among points in $\{\pm 1\}^d$, for the $\sigma$-smoothed versions $h_s = g_s * \mathcal N(0, \sigma^2 I_{d+d'})$ of $g_s$ and $h = g * \mathcal N(0, \sigma^2 I_{d+d'})$ of $g$, for $\frac{R^2}{1+\sigma^2} > C \log d$ for sufficiently large constant $C$,
\begin{align*}
    \left|\frac{1}{2^d} h_s\left(x \right) - h(x) \right| \lesssim \frac{1}{2^d} \cdot e^{-\frac{R^2}{4(1+\sigma^2)}} 
\end{align*}
\end{lemma}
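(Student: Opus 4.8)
The plan is to write $h=\frac{1}{2^d}\sum_{t\in\{\pm1\}^d}h_t$ (the smoothed mixture is the mixture of the smoothed components), so that
\[
\left|\tfrac{1}{2^d}h_s(x)-h(x)\right|=\tfrac{1}{2^d}\sum_{t\neq s}h_t(x),
\]
and it suffices to prove $\sum_{t\neq s}h_t(x)\lesssim e^{-R^2/(4(1+\sigma^2))}$. Writing $v$ for the first $d$ coordinates of $x$ and $u$ for the last $d'$, each $h_t$ factors as $h_t(x)=\phi_t(v)\,\chi_t(u)$, where $\phi_t$ is the $\mathcal N(Rt,(1+\sigma^2)I_d)$ density and $\chi_t(u)=\prod_{j=1}^{d'}(\psi_{f(t)_j}\ast\mathcal N(0,\sigma^2))(u_j)$ depends on $t$ only through $f(t)$.

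For the Gaussian block I would use that $s$ is the hypercube vertex closest to $v$, so $\operatorname{sgn}(v_i)=s_i$ and hence $v_i s_i=|v_i|\ge0$ for all $i$ (this holds at ties as well). Then for any $t$ differing from $s$ on a nonempty set $S\subseteq[d]$, a coordinatewise computation gives $\norm{v-Rt}^2=\sum_{i\notin S}(|v_i|-R)^2+\sum_{i\in S}(|v_i|+R)^2$, so each flipped coordinate costs at least an extra $R^2$ in the exponent. Expanding $\sum_{t\neq s}\phi_t(v)$ over nonempty $S$ and bounding coordinatewise yields
\[
\sum_{t\neq s}\phi_t(v)\le(2\pi(1+\sigma^2))^{-d/2}\Bigl[\bigl(1+e^{-R^2/(2(1+\sigma^2))}\bigr)^d-1\Bigr].
\]
Since $R^2/(1+\sigma^2)>C\log d$ with $C$ large, $y:=e^{-R^2/(2(1+\sigma^2))}$ satisfies $dy\le\tfrac12$, so $(1+y)^d-1\le 2dy$, and the right-hand side is at most $(2\pi(1+\sigma^2))^{-d/2}\cdot 2d\cdot e^{-R^2/(2(1+\sigma^2))}$.

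For the discretized block, each factor of $\chi_t(u)$ is a probability density convolved with $\mathcal N(0,\sigma^2)$, hence at most $(2\pi\sigma^2)^{-1/2}$, so $\chi_t(u)\le M(u):=\prod_j\max_{b\in\{\pm1\}}(\psi_b\ast\mathcal N(0,\sigma^2))(u_j)$ uniformly in $t$; combining the two blocks,
\[
\sum_{t\neq s}h_t(x)\le M(u)\cdot(2\pi(1+\sigma^2))^{-d/2}\cdot 2d\cdot e^{-R^2/(2(1+\sigma^2))}.
\]
The remaining step would be to check that $M(u)\cdot(2\pi(1+\sigma^2))^{-d/2}\cdot 2d\le e^{R^2/(4(1+\sigma^2))}$, which then leaves exactly the claimed bound. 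I expect \emph{this} to be the main obstacle: when $\sigma$ is small relative to $\eps$ the smoothed discretized Gaussians become peaked, so $M(u)$ can be large, and one must balance it against the exponentially small Gaussian normalization $(2\pi(1+\sigma^2))^{-d/2}$ (which is at most $2^{-d}$ since $2\pi>4$) and the slack $\tfrac12-\tfrac14=\tfrac14$ in the exponent, carefully tracking the quantitative relations among $\sigma$, $\eps$, $R=\Theta(\log d)$, and $d'=O(d)$. The Gaussian-tail estimate and the boundedness of $\chi_t$ are, by contrast, routine.
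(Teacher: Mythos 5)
Your decomposition, coordinatewise Gaussian-tail bound, and geometric-series step coincide with the paper's proof: the paper counts $\binom{d}{k}\le d^k$ vertices at squared distance $\ge kR^2$ from $x_{1,\dots,d}$ and sums $\sum_{k\ge1}d^ke^{-kR^2/(2(1+\sigma^2))}$, which is the same calculation as your $(1+y)^d-1\le 2dy$. You are right that the only non-routine step is the one you flag, but note that it is precisely what the paper's one-line assertion ``$h_z(x)\lesssim e^{-kR^2/(2(1+\sigma^2))}$'' silently takes for granted: that line is a \emph{pointwise density} bound and already absorbs both the $(2\pi(1+\sigma^2))^{-d/2}$ normalization over the first $d$ coordinates and the smoothed discretized-Gaussian factor over the last $d'$. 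So you have not taken a wrong turn; you have only made the pre-exponential factor visible rather than hiding it in a $\lesssim$.

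To actually close your remaining step in the regime where the implied constant works out: Lemma~\ref{lem:smoothed_discrete_univariate} gives $(\psi_b*\mathcal N(0,\sigma^2))(u_j)\le(1+\Sigma)\,w_{\sqrt{1+\sigma^2}}(u_j)$ with $\Sigma:=\sum_{j\neq 0}e^{-j^2\sigma^2/(2\eps^2(1+\sigma^2))}$. When $\sigma\gtrsim\eps$ this $\Sigma$ is a constant smaller than $\sqrt{2\pi}-1$, so each one-dimensional factor is at most $(1+\sigma^2)^{-1/2}\le 1$, hence $M(u)\le 1$ and $M(u)\,(2\pi(1+\sigma^2))^{-d/2}\le(2\pi)^{-d/2}<1$; combined with $2d\le d^{C/4}\le e^{R^2/(4(1+\sigma^2))}$ (from $R^2/(1+\sigma^2)>C\log d$, $C$ large) this yields exactly the claimed bound. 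Your worry about $\sigma\ll\eps$ is legitimate: there the smoothed discretized Gaussian has peaks of height $\Theta(\eps/\sigma)\gg1$ and the pointwise bound does not follow from this argument. The paper's own derivation does not treat that case any more carefully than you do, so your flagged obstacle is a fair reading of where the real content of the lemma lies, not a defect specific to your write-up.
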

\begin{proof}
    We have that there are $\binom{d}{k}$ vectors $z \in \{\pm 1\}^d$ such that $\|R \cdot z - x_{1, \dots, d}\|^2 \ge k R^2$. For such a $z$,
    \begin{align*}
        h_z(y) \lesssim e^{-\frac{k R^2}{2(1+\sigma^2)}}
    \end{align*}
    So,
    \begin{align*}
        \left|\frac{1}{2^d} h_s\left(x \right) - h(x) \right| = \left|\frac{1}{2^d} \sum_{r \neq s}h_r(x) \right| \lesssim \left|\frac{1}{2^d}\sum_{k = 1}^d d^k e^{\frac{-k R^2}{2(1+\sigma^2)}} \right| \lesssim  \frac{1}{2^d} \cdot e^{-\frac{R^2}{4(1+\sigma^2)}}
    \end{align*}
    since $\frac{R^2}{1+\sigma^2} > C \log d$.
\end{proof}

\begin{lemma}[Gradient of density $g_x(y)$ is close to $g(y)$ for $x \in \{\pm 1\}^d$ closest to $y$]
\label{lem:grad_density_g_x(y)_close_to_g(y)_for_appropriate_x}
    Let $d' = O(d)$ and consider $g_s$ and $g$ as in Definition~\ref{def:unconditional_distribution}, and $x \in \R^{d+d'}$. We have that for $s \in \{\pm 1\}^d$ such that $s$ is closest to $x_{1, \dots, d}$ among points in $\{\pm 1\}^d$, for $\sigma \ge  \tau$, $\tau = \frac{1}{d^C}$ and $\eps > \frac{1}{\poly(d)}$, for the $\sigma$-smoothed versions $h_s = g_s * \mathcal N(0, \sigma^2 I_{d+d'})$ of $g_s$ and $h = g * \mathcal N(0, \sigma^2 I_{d+d'})$ of $g$, for $\frac{R^2}{1+\sigma^2} > C\log d$ for sufficiently large constant $C$,
    \begin{align*}
        \left\|\frac{1}{2^d} \grad h_s(x) - \grad h(x)\right\|^2 \lesssim \frac{1}{2^d} \cdot e^{-\frac{R^2}{16(1+\sigma^2)}}
    \end{align*}
\end{lemma}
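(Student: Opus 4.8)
The plan is to mirror the proof of Lemma~\ref{lem:density_of_g_x(y)_close_to_g(y)_for_appropriate_x}, replacing the pointwise control of $h_z(x)$ by pointwise control of $\norm{\grad h_z(x)}$. Since $h=\frac{1}{2^d}\sum_{z\in\{\pm1\}^d}h_z$ and differentiation is linear, $\frac{1}{2^d}\grad h_s(x)-\grad h(x)=-\frac{1}{2^d}\sum_{z\ne s}\grad h_z(x)$, so by the triangle inequality it suffices to show $\frac{1}{2^d}\sum_{z\ne s}\norm{\grad h_z(x)}\lesssim\frac{1}{2^d}e^{-R^2/(4(1+\sigma^2))}$ and then square (using $4^{-d}\le 2^{-d}$ and $e^{-R^2/(2(1+\sigma^2))}\le e^{-R^2/(16(1+\sigma^2))}$).

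For a single $z$, I would exploit the product structure. Write $h_z=h_z^{L}\otimes h_z^{R}$, where $h_z^{L}$ is the product of the Gaussians $\mathcal N(Rz_i,1+\sigma^2)$ on the first $d$ coordinates and $h_z^{R}$ is the product of the $\sigma$-smoothed discretized Gaussians $\psi_{f(z)_j}*\mathcal N(0,\sigma^2)$ on the last $d'$ coordinates, so that $\norm{\grad h_z(x)}\le\norm{\grad h_z^{L}(x_{[:d]})}\,h_z^{R}(x_{[-d':]})+h_z^{L}(x_{[:d]})\,\norm{\grad h_z^{R}(x_{[-d':]})}$. For the Gaussian factor, each one–dimensional Gaussian density is bounded by $1$, so $h_z^{L}(x_{[:d]})\le e^{-\norm{x_{[:d]}-Rz}^2/(2(1+\sigma^2))}$ and $\norm{\grad h_z^{L}(x_{[:d]})}=h_z^{L}(x_{[:d]})\cdot\frac{\norm{x_{[:d]}-Rz}}{1+\sigma^2}$; for the last $d'$ coordinates the hypotheses $\sigma\ge\tau=1/d^C$ and $\eps>1/\poly(d)$ keep $h_z^{R}(x_{[-d':]})$ and $\norm{\grad h_z^{R}(x_{[-d':]})}$ bounded by $\poly(d)$. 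Hence
\[
\norm{\grad h_z(x)}\;\lesssim\;\poly(d)\cdot\Bigl(1+\tfrac{\norm{x_{[:d]}-Rz}}{1+\sigma^2}\Bigr)\,e^{-\norm{x_{[:d]}-Rz}^2/(2(1+\sigma^2))}.
\]
Now I stratify $\{\pm1\}^d\setminus\{s\}$ by Hamming distance to $s$. Since $s$ is the hypercube vertex closest to $x_{[:d]}$, it agrees coordinatewise in sign with $x_{[:d]}$ (ties are a measure-zero event), so every $z$ at Hamming distance $k$ has $\norm{x_{[:d]}-Rz}^2\ge kR^2$, and there are $\binom{d}{k}\le d^k$ such $z$. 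On the range $u\ge R\sqrt k$ (note $R\ge1$) the map $u\mapsto(1+\tfrac{u}{1+\sigma^2})e^{-u^2/(2(1+\sigma^2))}$ is decreasing, so each such term is at most $O(R\sqrt k)\,e^{-kR^2/(2(1+\sigma^2))}$, and
\[
\frac{1}{2^d}\sum_{z\ne s}\norm{\grad h_z(x)}\;\lesssim\;\frac{1}{2^d}\sum_{k\ge1}d^k\cdot\poly(d)\cdot R\sqrt k\cdot e^{-kR^2/(2(1+\sigma^2))}.
\]
The one place where care is needed is the final summation: because $\frac{R^2}{1+\sigma^2}>C\log d$ for a sufficiently large constant $C$, the polynomial weight $d^k\cdot\poly(d)\cdot R\sqrt k=e^{O(k\log d)}$ is absorbed into a quarter of the exponent, leaving a geometric series $\sum_{k\ge1}e^{-kR^2/(4(1+\sigma^2))}\lesssim e^{-R^2/(4(1+\sigma^2))}$, which is the bound promised in the first paragraph.

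I expect the main obstacle to be the honest uniform-in-$x$ control of the last-$d'$-coordinate factors $h_z^{R}$ and $\grad h_z^{R}$: a $\sigma$-smoothed discretized Gaussian has $\poly(d)$-bounded density and score in each coordinate under the stated assumptions, but a product of $d'=O(d)$ such factors could a priori be exponentially large in $d$ when $\sigma\ll\eps$. This is harmless because $h_z^{R}(x_{[-d':]})$ is large only when $f(z)$ matches the discretization pattern of $x_{[-d':]}$ in every coordinate, and for such $z$ the Gaussian factor $h_z^{L}(x_{[:d]})$ is correspondingly tiny (those $z$ are far from $s$); alternatively, in the regime $\sigma\gtrsim\eps$ in which the lemma is applied the smoothed discretized Gaussian is polynomially close to $\mathcal N(0,1+\sigma^2)$, so $h_z^{R}\le1$ and $\norm{\grad h_z^{R}}\le\sqrt{d'}$ pointwise and the subtlety disappears. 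Everything else is routine bookkeeping mirroring Lemma~\ref{lem:density_of_g_x(y)_close_to_g(y)_for_appropriate_x}.
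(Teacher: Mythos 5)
Your proof mirrors the paper's quite closely: the paper also writes $\frac{1}{2^d}\grad h_s - \grad h = -\frac{1}{2^d}\sum_{z\neq s}\grad h_z$, bounds $\|\grad h_z(x)\|$ using the product structure of $h_z$ (coordinate-by-coordinate rather than by left/right block, but this is cosmetic), stratifies by the Hamming distance $k$ to $s$ using $\binom{d}{k}\le d^k$ and $\|x_{[:d]}-Rz\|^2\ge kR^2$, and absorbs all polynomial prefactors into a fraction of $e^{-kR^2/(2(1+\sigma^2))}$ via $\frac{R^2}{1+\sigma^2}>C\log d$ for large $C$. The bookkeeping and final geometric sum are the same.

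The gap is the step you yourself flagged: the $\poly(d)$ control of $h_z^R(x_{[-d':]})$ and $\norm{\grad h_z^R(x_{[-d':]})}$. Neither of your two proposed resolutions holds up. The second one is factually wrong about the regime: this lemma feeds into Lemma~\ref{lem:score_of_mixture_close_to_score_of_closest_conditionally} and hence into Lemma~\ref{lem:relu_small_smoothing}, which is invoked for all $\tau\le\sigma<\frac{R}{C\sqrt{\log d}}$; with $\tau=1/d^C$, $R=C\log d$, and $\eps\approx 1/(C\sqrt{\log d})$, the range $\sigma\in[1/d^C,\,1/(C\sqrt{\log d}))$ has $\sigma\ll\eps$, so the small-$\sigma$ regime absolutely occurs. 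The first one is also not right as stated: if $f$ is not injective, there can be $z$ at Hamming distance $1$ from $s$ with $f(z)=f(s)$, so ``$h_z^R$ large'' does not force $z$ far from $s$. In the paper's proof the $i>d$ coordinates are handled not by a pointwise $\poly(d)$ assertion on a product of $d'$ factors, but by invoking Lemma~\ref{lem:smoothed_discrete_univariate} (the Fourier comparison of the smoothed discretized Gaussian to $w_{\sqrt{\sigma^2+1}}$) to get an explicit per-coordinate bound $\abs{\wt h'_{z,i}(x_i)-w'_{\sqrt{\sigma^2+1}}(x_i)}\lesssim\eps\sqrt{1+1/\tau^2}$, and then assembling $\abs{(\grad h_z(x))_i}\lesssim\poly(d)\,e^{-kR^2/(2(1+\sigma^2))}$. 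To match the paper, you should route the $i>d$ factors through Lemma~\ref{lem:smoothed_discrete_univariate} rather than asserting a $\poly(d)$ bound; otherwise your argument is structurally identical.
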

\begin{proof}
    We will let $\wt h_{s, i} = \wt g_{s, i} * \mathcal N(0, \sigma^2)$, where $\wt g_{s, i}$ is defined in Definition~\ref{def:unconditional_distribution}. So, $h_s(x) = \prod_{i=1}^{d+d'} \wt h_{s, i}(x_i)$.
    We have that there are $\binom{d}{k}$ vectors $z \in \{\pm 1\}^d$ such that $\|R \cdot z - x_{1, \dots, d}\|^2 \ge k R^2$. So, for $i \in [d]$, for such a $z$,
    \begin{align*}
        |\left(\grad h_z(x) \right)_i| \lesssim e^{-\frac{k R^2}{4(1+\sigma^2)}}
    \end{align*}
    On the other hand, for $i > d$, by Lemma~\ref{lem:smoothed_discrete_univariate}, since $\sigma > \eps^2$ and $\eps > \frac{1}{\poly(d)}$,
    \begin{align*}
        \left|\wt h'_{z,i}(x_i) - w_{\sqrt{\sigma^2+1}}'(x_i)\right| &\lesssim e^{-\frac{\sigma^2}{2 \eps^2(1+\sigma^2)}} + \sum_{j > 0} e^{-\frac{j^2 \sigma^2}{2 \eps^2(1+\sigma^2)} + \log \frac{j}{\eps(1+\sigma^2)}}\\
        &\le e^{-\frac{\tau^2}{2 \eps^2 (1 + \tau^2)}} + \sum_{j > 0} e^{-\frac{j^2\tau^2}{2\eps^2(1+\tau^2)} + \log \frac{j}{\eps(1+\tau^2)}}\\
        &\lesssim \eps \sqrt{1 + \frac{1}{\tau^2}}\\
    \end{align*}
    So, we have that for $z \in \{\pm 1\}^{d}$ such that $\|R \cdot z - x_{1, \dots, d}\|^2 > k R^2$, since $\eps > \frac{1}{\poly(d)}$, $\tau = \frac{1}{\poly(d)}$ and $\frac{R^2}{1+\sigma^2} > C \log d$,
    \begin{align*}
        \left|\left(\grad h_z(x) \right)_i \right| \lesssim \eps \sqrt{1 + \frac{1}{\tau^2}} \cdot e^{-\frac{k R^2}{2(1+\sigma^2)}} \lesssim e^{-\frac{k R^2}{4(1+\sigma^2)}}
    \end{align*}
    So, finally, for such $z$,
    \begin{align*}
        \|\grad h_z(x)\|^2 \lesssim e^{-\frac{k R^2}{8(1+\sigma^2)}}
    \end{align*}
    Thus,
    \begin{align*}
        \left\| \frac{1}{2^d}\grad h_s(x) - \grad h(x)\right\|^2 = \left\|\frac{1}{2^d} \sum_{r \neq s} \grad h_r(x) \right\|^2\lesssim \frac{1}{2^d} \sum_{k=1}^d d^k e^{-\frac{k R^2}{8(1+\sigma^2)}} \lesssim \frac{1}{2^d} \cdot e^{-\frac{R^2}{16(1+\sigma^2)}}
    \end{align*}
\end{proof}
\begin{lemma}[Score of mixture close to score of closest (discretized) Gaussian]
    \label{lem:score_of_mixture_close_to_score_of_closest_conditionally}
    Let $d = O(d')$, and consider $g_s, g$ as in Definition~\ref{def:unconditional_distribution} for any $s \in \{\pm 1\}^d$, with $\frac{R^2}{1+\sigma^2} > C \log d$ for sufficiently large constant $C$. Let $S \subset \R^d$ be the orthant containing $s$. Let $\sigma \ge \tau$ for $\tau = \frac{1}{\poly(d)}$, and let $\eps > \frac{1}{\poly(d)}$. We have that, for the $\sigma$-smoothed scores $s_{\sigma, s}$ of $g_s$ and $s_\sigma$ of $g$,
    \begin{align*}
        \E_{x \sim h} \left[ \|s_{\sigma, s}(x) - s_\sigma(x)\|^2 \Big | 1_{x_{1, \dots, d} \in S} \right] \lesssim e^{-\Omega\left(\frac{R^2}{1+\sigma^2} \right)}
    \end{align*}
    where $h$ is the $\sigma$-smoothed version of $g$, given by $h = g * \mathcal N(0, \sigma^2 I_{d+d'})$.
\end{lemma}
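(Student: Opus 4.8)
The plan is to use the identity that the score of a mixture is a convex combination of its component scores. Write $h = \frac{1}{2^d}\sum_{r \in \{\pm 1\}^d} h_r$ and let $s_{\sigma,r} = \grad\log h_r$; since $\grad h_r = h_r\, s_{\sigma,r}$, we get $s_\sigma(x) = \grad h(x)/h(x) = \sum_r \lambda_r(x)\, s_{\sigma,r}(x)$, where $\lambda_r(x) = \frac{1}{2^d}h_r(x)/h(x)$ are nonnegative and sum to $1$. Thus $s_{\sigma,s}(x) - s_\sigma(x) = \sum_{r \neq s}\lambda_r(x)\bigl(s_{\sigma,s}(x) - s_{\sigma,r}(x)\bigr)$, so $\norm{s_{\sigma,s}(x) - s_\sigma(x)} \le \bigl(\max_{r}\norm{s_{\sigma,s}(x) - s_{\sigma,r}(x)}\bigr)\bigl(1 - \lambda_s(x)\bigr)$. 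I would bound the two factors separately: the first by a \emph{deterministic} $\poly(d)$ quantity, and the conditional mean of the square of the second by $e^{-\Omega(R^2/(1+\sigma^2))}$.

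For the first factor, each $h_r$ is a product distribution, so the score difference splits across coordinates. In coordinates $1,\dots,d$, $h_s$ and $h_r$ are Gaussians with common covariance $(1+\sigma^2)I$ and means $Rs_i$ versus $Rr_i$, so the $i$-th entry of $s_{\sigma,s}(x)-s_{\sigma,r}(x)$ is exactly $R(s_i-r_i)/(1+\sigma^2)$, of magnitude at most $2R/(1+\sigma^2)$ — crucially the $x$-dependent term $-x_i/(1+\sigma^2)$ cancels. In coordinates $d+1,\dots,d+d'$ the components are $\sigma$-smoothed discretized Gaussians whose phase depends on $f(s)_j$ and $f(r)_j$; these agree where $f(s)_j = f(r)_j$, and otherwise both scores equal $-x_{d+j}/(1+\sigma^2)$ plus a correction of size $O(\eps/\sigma^2)$, so their difference is $O(\eps/\sigma^2)$ uniformly in $x_{d+j}$. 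This uniform bound on the correction — equivalently, that the posterior mean of a discretized standard Gaussian observed through $\mathcal N(0,\sigma^2)$ noise stays within $O(\eps)$ of the undiscretized posterior mean $x_{d+j}/(1+\sigma^2)$ — is the main technical obstacle, and it follows from the smoothed–discretized Gaussian estimates already developed in the paper (Lemma~\ref{lem:smoothed_discrete_univariate}). Summing over the at most $d+d'$ coordinates and using that $R$, $1/\sigma$, and $\eps$ are at most $\poly(d)$ in our regime, $\norm{s_{\sigma,s}(x) - s_{\sigma,r}(x)}^2 \le \poly(d)$ for every $x$ and $r$, no matter how large $\norm{x}$ is.

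For the second factor, since $0 \le 1-\lambda_s \le 1$ we have $(1-\lambda_s)^2 \le 1-\lambda_s$, so it suffices to bound $\E_{x\sim h}[\,1-\lambda_s(x)\mid x_{1,\dots,d}\in S\,]$. The trick is a ``reverse integration'': $\E_{x\sim h}\bigl[\lambda_s(x)\,\mathbbm{1}_{\{x_{1,\dots,d}\in S\}}\bigr] = \int_S \frac{1}{2^d}h_s(x)\,dx = \frac{1}{2^d}\Pr_{h_s}[x_{1,\dots,d}\in S]$, since the $h(x)$ in $\lambda_s$ cancels the $h(x)$ from the density. Hence $\E_{x\sim h}\bigl[(1-\lambda_s(x))\,\mathbbm{1}_{\{x_{1,\dots,d}\in S\}}\bigr] = \Pr_h[x_{1,\dots,d}\in S] - \frac{1}{2^d}\Pr_{h_s}[x_{1,\dots,d}\in S] = \frac{1}{2^d}\sum_{r\neq s}\Pr_{h_r}[x_{1,\dots,d}\in S]$. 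For $r$ differing from $s$ in $k\ge 1$ of the first $d$ coordinates, a one-line Gaussian tail bound gives $\Pr_{h_r}[x_{1,\dots,d}\in S]\le e^{-kR^2/(2(1+\sigma^2))}$, so summing the binomial series with $R^2/(1+\sigma^2)>C\log d$ ($C$ large) gives $\frac{1}{2^d}\sum_{r\neq s}\Pr_{h_r}[x_{1,\dots,d}\in S]\le \frac{1}{2^d}e^{-\Omega(R^2/(1+\sigma^2))}$; the same tail bound gives $\Pr_h[x_{1,\dots,d}\in S]\ge \frac{1}{2^d}\Pr_{h_s}[x_{1,\dots,d}\in S]\ge \frac{1}{2^{d+1}}$. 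Dividing, $\E_{x\sim h}[\,1-\lambda_s(x)\mid x_{1,\dots,d}\in S\,]\le e^{-\Omega(R^2/(1+\sigma^2))}$.

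Putting the two factors together, $\E_{x\sim h}\bigl[\norm{s_{\sigma,s}(x)-s_\sigma(x)}^2 \mid x_{1,\dots,d}\in S\bigr]\le \poly(d)\cdot\E_{x\sim h}[\,1-\lambda_s(x)\mid x_{1,\dots,d}\in S\,]\le \poly(d)\cdot e^{-\Omega(R^2/(1+\sigma^2))}=e^{-\Omega(R^2/(1+\sigma^2))}$, where the last step absorbs the $\poly(d)$ into the exponent since $R^2/(1+\sigma^2)\ge C\log d$ for $C$ a sufficiently large constant. (An alternative route splits the orthant into a small $\ell_2$-ball around $Rs$ — where Lemmas~\ref{lem:density_of_g_x(y)_close_to_g(y)_for_appropriate_x} and~\ref{lem:grad_density_g_x(y)_close_to_g(y)_for_appropriate_x} give multiplicative control of $h$ and $\grad h$ by $\frac{1}{2^d}h_s$ and $\frac{1}{2^d}\grad h_s$ — together with a tail region controlled by the at-most-linear growth of smoothed scores and a fourth-moment bound; but the mixture-score argument above avoids any density comparison.)
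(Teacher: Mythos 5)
Your proof takes a genuinely different route. The paper splits the orthant into a ball $\{\|x-\wt s\|\le R/10\}$ around the component mean — where Lemmas~\ref{lem:density_of_g_x(y)_close_to_g(y)_for_appropriate_x} and~\ref{lem:grad_density_g_x(y)_close_to_g(y)_for_appropriate_x} give multiplicative control of $h$ and $\grad h$ by $\tfrac{1}{2^d}h_s$ and $\tfrac{1}{2^d}\grad h_s$ — and a far tail handled via Cauchy--Schwarz and a fourth-moment bound on the score. Your mixture-score decomposition $s_\sigma=\sum_r\lambda_r s_{\sigma,r}$ with the exact ``reverse integration'' identity $\E_h[(1-\lambda_s)\mathbbm{1}_S]=\tfrac{1}{2^d}\sum_{r\neq s}\Pr_{h_r}[S]$ is cleaner: no density comparison, no near/far split, and a one-line Gaussian tail computation does all the work. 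The tradeoff is that it concentrates all the difficulty in a single deterministic claim, $\|s_{\sigma,s}(x)-s_{\sigma,r}(x)\|\le\poly(d)$ for every $x$, in particular in the last $d'$ coordinates.

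That is where the gap is. You assert that for the $\sigma$-smoothed discretized Gaussian the score equals $-x/(1+\sigma^2)$ plus an $O(\eps/\sigma^2)$ correction uniformly in $x$, and attribute this to Lemma~\ref{lem:smoothed_discrete_univariate}. But that lemma's bounds carry a factor $e^{-\sigma^2/(2\eps^2(1+\sigma^2))}$ that is only useful when $\sigma\gtrsim\eps$; the hypotheses here allow $\sigma$ as small as $1/\poly(d)$ while $\eps=\Theta(1/\sqrt{\log d})$, so $\sigma\ll\eps$ is the generic case in this lemma's regime, and there the smoothed density is a train of near-delta spikes and Lemma~\ref{lem:smoothed_discrete_univariate} is vacuous. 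The claim you need is nevertheless true and provable: write the one-dimensional score as $-(x-\E[\mu\mid x])/\sigma^2$ with $\mu$ the latent grid point, complete the square to see that the posterior on $\mu$ given $x$ is a discretized Gaussian on the phase-shifted grid centered at $\mu^*=x/(1+\sigma^2)$ with width $\sigma/\sqrt{1+\sigma^2}$, and observe that the mean of any such discretized Gaussian is within $O(\eps)$ of its center (nearest-grid-point when the width is $\lesssim\eps$; Riemann-sum / Fourier when it is $\gg\eps$). This gives $|s_{\psi_1*w_\sigma}(x)-s_{\psi_{-1}*w_\sigma}(x)|\lesssim\eps/\sigma^2$ uniformly — but it is a new lemma, not an application of the one you cite. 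With that filled in, the rest of your argument — the convexity bound, the reverse-integration identity, the binomial tail sum, and the absorption of the $\poly(d)$ prefactor into $e^{-\Omega(R^2/(1+\sigma^2))}$ via $R^2/(1+\sigma^2)>C\log d$ — is sound.
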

\begin{proof}
    Let $h_s$ be the $\sigma$-smoothed version of $g_s$, given by $h_s = g_s * \mathcal N(0, \sigma^2 I_{d+d'})$. Let $\wt s \in \R^{d+d'}$ be such that the first $d$ coordinates are given by $s$, and the remaining $d'$ coordinates are $0$. We have
    \begin{align*}
        \E_{x \sim h}\left[ \|s_{\sigma, s}(x) - s_\sigma(x)\|^2 \Big | 1_{x_{1, \dots, d} \in S}\right] &= \E_{x \sim h}\left[ \|s_{\sigma, s}(x) - s_\sigma(x)\|^2 \cdot 1_{\|x - \wt s\| \le R/10} \Big | 1_{x_{1, \dots, d} \in S}\right]\\
        &+ \E_{x \sim h}\left[ \|s_{\sigma, s}(x) - s_\sigma(x)\|^2 \cdot 1_{\|x - \wt s\| > R/10} \Big | 1_{x_{1, \dots, d} \in S }\right]
    \end{align*}
    Note that when $\|x - \wt s\| \le R/10$, by Lemma~\ref{lem:smoothed_discrete_univariate}, $h_s(x) \gtrsim e^{-\frac{R^2}{64(1+\sigma^2)}}$ since $\sigma \ge \tau$ for $\tau = \frac{1}{\poly(d)}$, $\eps > \frac{1}{\poly(d)}$ and $\frac{R^2}{1+\sigma^2} > C \log d$. So, by Lemmas~\ref{lem:density_of_g_x(y)_close_to_g(y)_for_appropriate_x}~and~\ref{lem:grad_density_g_x(y)_close_to_g(y)_for_appropriate_x}, $h(x) = \frac{1}{2^d} h_s(x)\left(1 + O\left(e^{-\frac{R^2}{8(1+\sigma^2)}} \right) \right)$, and $\|\grad h(x) - \frac{1}{2^d} \grad h_s(x)\|^2 \lesssim \frac{1}{2^d} e^{-\frac{R^2}{16(1+\sigma^2)}}$. Also note that by Lemma~\ref{lem:density_of_g_x(y)_close_to_g(y)_for_appropriate_x}, $h(x | x_{1, \dots, d} \in S) \le h_s(x) + O(e^{-\frac{R^2}{8(1+\sigma^2)}}) \le h_s(x) \cdot \left(1 + O\left(e^{-\frac{R^2}{32(1+\sigma^2)}} \right) \right)$. So, for the first term,
    \begin{align*}
        &\E_{x \sim h}\left[ \|s_{\sigma, s}(x) - s_\sigma(x)\|^2 \cdot 1_{\|x - \wt s\| \le \frac{R}{10}}\Big | 1_{x_{1, \dots, d} \in S }\right]\\
        &= \E_{x \sim h}\left[\left\|\frac{\frac{1}{2^d} \grad h_s(x)}{\frac{1}{2^d} h_s(x)} - \frac{\grad h(x)}{h(x)} \right\|^2 \cdot 1_{\|x - \wt s\| \le R/10} \Big | 1_{x_{1, \dots, d} \in S } \right]\\
        &\lesssim \E_{x \sim h}\left[\frac{\frac{1}{2^d} e^{-\frac{R^2}{16(1+\sigma^2)}} + e^{-\frac{R^2}{8(1+\sigma^2)}} \cdot \frac{1}{2^d} \cdot \|\grad h_s(x)\|^2}{\frac{1}{2^d} h_s(x)^2} \cdot 1_{\|x - \wt s\| \le R/10} \Big | 1_{x_{1,\dots, d} \in S } \right]\\
        &\lesssim e^{-\frac{R^2}{32(1+\sigma^2)}} + e^{-\frac{R^2}{8(1+\sigma^2)}} \cdot \E_{x \sim h_s}\left[ \frac{\|\grad h_s(x)\|^2}{h_s(x)^2} \right]\\
        &\lesssim e^{-\frac{R^2}{32(1+\sigma^2)}} + \frac{d e^{-\frac{R^2}{8 (1+\sigma^2)}}}{\sigma^2}\\
        &\lesssim e^{-\frac{R^2}{64(1+\sigma^2)}}
    \end{align*}
    since $\sigma \ge \frac{1}{\poly(d)}$, $\eps > \frac{1}{\poly(d)}$ and $\frac{R^2}{1+\sigma^2} > C \log d$.
    
    For the second term, by Cauchy-Schwarz,
    \begin{align*}
        &\E_{x \sim h}\left[\|s_{\sigma, s}(x) - s_\sigma(x)\|^2 \cdot 1_{\|x - \wt s\| > \frac{R}{10}} \Big | 1_{x_{1, \dots, d} \in S}\right] \\
        &\lesssim \sqrt{\left(\E_{x \sim h}\left[\|s_{\sigma, s}(x)\|^4 + \|s_\sigma(x)\|^4 \Big | 1_{x_{1, \dots, d} \in S} \right]\right)\cdot \E\left[1_{ \cap \|x - \wt s\| > R/10} \Big | 1_{x_{1, \dots, d} \in S} \right]}\\
        &\lesssim \sqrt{\frac{R^4}{\sigma^4} + \frac{1}{\sigma^4} \E\left[\|x\|^4 \Big | 1_{x_{1, \dots, d} \in S} \right]}\cdot e^{-\Omega\left(\frac{R^2}{1 + \sigma^2}\right)}\\
        &= \frac{1}{\sigma^2} \sqrt{R^4 + \E_{s \sim \{\pm 1\}^d}\left[\E\left[\|x\|^4 \Big | 1_{x_{1, \dots, d} \in S}, x \sim g_s\right]\right]} e^{-\Omega\left(\frac{R^2}{1+\sigma^2} \right)}\\
        &\lesssim \frac{R^2}{\sigma^2} \cdot e^{-\Omega\left(\frac{R^2}{1+\sigma^2} \right)}\\
        &\lesssim e^{\Omega\left(\frac{R^2}{1+\sigma^2} \right)}
    \end{align*}
    So, we have the claim.
\end{proof}
\subsection{ReLU Network approximation of $\sigma$-smoothed Scores of Product Distributions}
%
%
%

Once we have this, we also need to go from being close to mixture of Gaussians to being close to mixture of discretized Gaussians.

 \begin{lemma}
    \label{lem:ReLU_linpiecewise}
        Let $f : \R \to \R$ be a continuous piecewise linear function with $D$ segments. Then, $f$ can be represented by a ReLU network with $O(D)$ parameters. If each segment's slope, each transition point, and the values of the transition points are at most $\beta$ in absolute value, each parameter of the network is bounded by $O(\beta)$ in absolute value.
    \end{lemma}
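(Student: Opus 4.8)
The statement is the standard fact that any continuous piecewise‑linear (PWL) function on $\R$ is exactly a one‑hidden‑layer ReLU network; the only thing requiring care is choosing the decomposition so that the \emph{output bias} is controlled by the hypotheses. A continuous PWL $f:\R\to\R$ with $D$ segments is determined by its $D-1$ ordered breakpoints $t_1<t_2<\dots<t_{D-1}$ together with the slopes $a_1,\dots,a_D$ of the successive pieces, where $a_1$ is the slope on $(-\infty,t_1]$ and $a_D$ the slope on $[t_{D-1},\infty)$. I would first write $f$ explicitly, anchored at the \emph{leftmost} breakpoint $t_1$:
\[
f(x) \;=\; f(t_1)\;-\;a_1\,\ReLU(t_1-x)\;+\;a_2\,\ReLU(x-t_1)\;+\;\sum_{i=2}^{D-1}(a_{i+1}-a_i)\,\ReLU(x-t_i).
\]
(For $D=2$ the sum is empty; $D=1$ is degenerate — there is no transition point to constrain the intercept — so one assumes $D\ge 2$.)

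\textbf{Verification of the identity.} I would check this by the usual argument that two continuous PWL functions which agree at one point and have equal slope on every linear piece must coincide. Both sides equal $f(t_1)$ at $x=t_1$. On $(-\infty,t_1]$ the right‑hand side has slope $a_1$; on $[t_1,t_2]$ it has slope $a_2$; and on $[t_j,t_{j+1}]$ for $j\ge 2$ its slope telescopes to $a_2+\sum_{i=2}^{j}(a_{i+1}-a_i)=a_{j+1}$, matching $f$ on each piece. Continuity then forces equality on all of $\R$.

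\textbf{Reading off architecture and bounds.} From the formula, $f$ is a ReLU network with a single hidden layer of width $D$ (the units $\ReLU(t_1-x)$, $\ReLU(x-t_1)$, and $\ReLU(x-t_i)$ for $i=2,\dots,D-1$), hence $3D+1=O(D)$ parameters. The input‑to‑hidden weights are all $\pm 1$; the hidden biases are $\pm t_i$, bounded by $\beta$ since every transition point has $|t_i|\le\beta$; the hidden‑to‑output weights are $-a_1$, $a_2$, and the differences $a_{i+1}-a_i$, all bounded by $2\beta$ since each slope satisfies $|a_j|\le\beta$; and the output bias is exactly $f(t_1)$, which is bounded by $\beta$ because the value of $f$ at each transition point is assumed at most $\beta$ in absolute value. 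So every parameter is $O(\beta)$. The one point that deserves emphasis (rather than a real obstacle) is precisely this last step: the naive decomposition $f(x)=a_1x+c+\sum_i(a_{i+1}-a_i)\ReLU(x-t_i)$ leaves a constant $c=f(t_1)-a_1t_1$ that is only $O(\beta^2)$; folding the leftmost linear ray into the single one‑sided unit $-a_1\ReLU(t_1-x)$ replaces it with $f(t_1)$, which the hypothesis bounds directly. Everything else is bookkeeping.
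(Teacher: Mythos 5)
Your proof is correct, and it follows the same underlying idea as the paper's: express the continuous piecewise-linear $f$ as an affine term plus a sum of ReLU ``kinks'' whose output coefficients are the successive slope differences, then verify by matching slopes piece by piece. The differences are in the details of the decomposition, and your version is actually the tighter one.

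The paper writes $g(x) = a_1 x + b_1 + \sum_{i=2}^D \ReLU\bigl((a_i-a_{i-1})(x-\gamma_{i-1})\bigr)$. Taken literally, this has two issues that your decomposition sidesteps. First, with the slope difference multiplied \emph{inside} the ReLU, the unit activates for $x<\gamma_{i-1}$ whenever $a_i-a_{i-1}<0$, so the displayed induction only goes through when the slopes are nondecreasing (i.e.\ $f$ convex); the intended form is clearly $(a_i-a_{i-1})\ReLU(x-\gamma_{i-1})$, with the weight in the output layer, which is exactly what you wrote. Second, the parameter bound: the paper's output bias $b_1 = f(\gamma_1)-a_1\gamma_1$ and (in the as-written form) the hidden biases $(a_i-a_{i-1})\gamma_{i-1}$ are only $O(\beta^2)$, not $O(\beta)$. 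You anchor at the leftmost breakpoint so the output bias is $f(t_1)$, which the hypothesis bounds directly by $\beta$, and you keep all input weights $\pm 1$ so the hidden biases are $\pm t_i = O(\beta)$, with the $O(\beta)$-bounded slope differences confined to the output layer. This is exactly the adjustment needed to make the stated $O(\beta)$ bound hold as claimed, and you correctly flag it as the one step that deserves care. (In the paper's downstream applications the loss from $O(\beta^2)$ would still be polynomial, so nothing breaks there, but your version proves the lemma as stated.)

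One trivial remark: you could also absorb the degenerate $D=1$ case by allowing a single affine unit with bias $f(0)$, but since the lemma is only ever invoked with $D\ge 2$ this is immaterial.
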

    \begin{proof}
        Since $f$ is piecewise linear, we can define $f$ as follows: there exists $-\infty = \gamma_0 < \gamma_1 < \gamma_2 < \dots < \gamma_{D - 1} = \gamma_D = +\infty$ such that \[
            f(x) = \left \{
            \begin{array}{cl}
                a_1 x + b_1, & x \le \gamma_1 \\
               a_2 x + b_2, & \gamma_1 < x \le \gamma_2 \\
               \vdots & \\
               a_D x + b_D, & \gamma_{D - 1} < x,
            \end{array} \right. 
        \]
        where $a_k \gamma_k + b_k = a_{k+1} \gamma_k + b_{k+1}$ for each $k \in [D-1]$. Now we will show that $f(x)$ equals $g(x)$ defined below: 
        \begin{equation*}
         g(x) := a_1 x + b_1 + \sum_{i=2}^D \ReLU ((a_i - a_{i-1}) (x - \gamma_{i-1})).           
        \end{equation*}
        We observe that for $\gamma_{k-1} < x \le \gamma_{k}$, \[
            g(x) = a_1 x + b_1 + \sum_{i=2}^{k}  (a_i - a_{i-1}) (x - \gamma_{i-1}) 
            = a_k x - \sum_{i=2}^{k}  (a_i - a_{i-1}) \gamma_{i-1}.
        \]
        Then, when $k > 1$, for $\gamma_{k-1} < x \le \gamma_{k}$, we have \begin{align*}
            g(x) &= a_kx - \sum_{i=2}^k  (a_i - a_{i-1}) \gamma_{i-1} \\ 
            &= \left({a_{k-1}\gamma_{k-1} - \sum_{i=2}^{k-1}  (a_i - a_{i-1}) \gamma_{i-1}}\right) + a_k x - a_{k-1}\gamma_{k-1} - (a_k - a_{k-1}) \gamma_{k-1} \\
            &= g(\gamma_{k-1}) + a_k x - a_k\gamma_{k-1}.
        \end{align*}
       Using these observations, we can inductively show that for each $k \in [D]$, $g(x) = f(x)$ holds for $\gamma_{k-1} < x \le \gamma_k$. 
       For $x \le \gamma_1$, \[
        g(x) = a_1 x + b_1 = f(x).
       \]
       Assuming for $\gamma_{k-2} < x \le \gamma_{k-1}$, $g(x) = f(x)$. Then $g(\gamma_{k-1}) = f(\gamma_{k-1}) = a_k \gamma_{k-1} + b_k$. Therefore, for $\gamma_{k-1} < x \le \gamma_{k}$, we have \[
            g(x) = g(\gamma_{k-1}) + a_k x - a_k\gamma_{k-1} = a_k x + b_k = f(x).
       \]
       This proves that $g(x) = f(x)$ for $x \in \R$ and we only need to design neural network to represent $g$.
       By employing one neuron for $a_1 x + b_1$ and $D - 1$ neurons for $\ReLU ((a_i - a_{i-1}) (x - \gamma_{i-1}))$, and aggregating their outputs, we obtain the function $g$. There are $O(D)$ parameters in total, and each parameter is bounded by $O(\beta)$ in absolute value.
    \end{proof}

    \begin{lemma}
    \label{lem:prod_relu_param}
    Let $f_1, \dots, f_k$ be functions mapping $\R$ to $\R$. Suppose each $f_i$ can be represented by a neural network with $p$ parameters bounded by $\beta$ in absolute value. Then, function $g : \R^k \to \R^k$ defined by \[
        g(x_1, \dots, x_k) := (f_1(x_1), \dots, f_k(x_k))
    \] can be represented by a neural network with $O(pk)$ parameters bounded by $\beta$ in absolute value.
    \end{lemma}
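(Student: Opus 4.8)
The plan is to construct the network for $g$ by running the $k$ given networks in parallel, with no wires crossing between them. Write $f_i$ as a ReLU network $N_i$ with weight matrices $W_i^{(1)},\dots,W_i^{(T_i)}$ and biases $b_i^{(1)},\dots,b_i^{(T_i)}$; since $N_i$ has $p$ parameters it has depth $T_i \le p$. First I would equalize the depths: letting $T := \max_j T_j \le p$, append $T - T_i$ ``identity layers'' to the end of $N_i$, using the standard representation $\mathrm{id}(u) = \ReLU(u) - \ReLU(-u)$, which on the one-dimensional output of $N_i$ costs $O(1)$ parameters per layer and uses only weights in $\{-1,0,1\}$. This adds $O(T) = O(p)$ parameters to each $N_i$, so each still has $O(p)$ parameters, all bounded in absolute value by $\max(\beta,1) = \beta$ (we may assume $\beta \ge 1$, which holds in every application of this lemma).

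Next I would define the combined network $N$ whose $t$-th layer is the disjoint union of the $t$-th layers of $N_1,\dots,N_k$, with weight matrix the block-diagonal $W^{(t)} = \operatorname{diag}(W_1^{(t)},\dots,W_k^{(t)})$ and bias the stacked vector $b^{(t)} = (b_1^{(t)},\dots,b_k^{(t)})$; the only point of care is the first layer, which reads the $k$-dimensional input and routes coordinate $i$ into block $i$ (its weight matrix is block-structured with block $i$ equal to $W_i^{(1)}$ acting on coordinate $i$ alone, so it introduces no new parameters). Because an affine map with a block-diagonal matrix followed by the coordinatewise $\ReLU$ preserves the block decomposition of the activation vector, on input $x = (x_1,\dots,x_k)$ block $i$ of $N$ evolves exactly as $N_i$ does on input $x_i$; hence $N(x) = (f_1(x_1),\dots,f_k(x_k)) = g(x)$.

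Finally I would tally the cost: the parameters of $N$ are precisely the parameters of $N_1,\dots,N_k$ (each contributing $O(p)$ after padding), for a total of $O(pk)$, and each such parameter is either an original parameter (bounded by $\beta$) or an identity-padding weight in $\{-1,0,1\}$ (bounded by $1 \le \beta$), so all parameters are bounded by $\beta$ in absolute value, as claimed. There is essentially no obstacle here — the lemma is pure bookkeeping — and the only step that deserves a sentence of attention is the depth-equalization together with the harmless convention $\beta \ge 1$; if one works with DAG-structured ReLU networks rather than strictly layered ones, even that step disappears and $N$ is literally the disjoint union of the $N_i$ with exactly $pk$ parameters.
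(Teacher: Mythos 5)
Your proof is correct and takes the same approach as the paper's, which simply says to run the $k$ networks in parallel (coordinate by coordinate) and concatenate the outputs. You fill in the details the paper glosses over — depth-equalization via identity padding, the block-diagonal weight structure, and the parameter tally — and note the harmless $\beta \geq 1$ convention needed for the padding weights; this is the same construction, just written out carefully.
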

    \begin{proof}
        We just need to deal with each coordinate separately and use the neural network representation for each $f_i$. We just need to concatenate each result of $f_i$ together as the final output.
    \end{proof}

\begin{lemma}[ReLU network implementing the score of a one-dimensional $\sigma$-smoothed distribution]
\label{lem:one_dimensional_score_relu}
    Let $p$ be a distribution over $\R$ with mean $\mu$, and let $p_\sigma = p * \mathcal N(0, \sigma^2)$ have variance $m_2^2$ and score $s_\sigma$. There exists a constant-depth ReLU network $f :  \R \to \R$ with $O(\frac{m_2}{\gamma^3 \sigma^4})$ parameters with absolute values bounded by $O(\frac{m_2}{\sigma^2 \gamma^2} + \frac{\log \frac{1}{\gamma}}{\sigma^3\gamma} + \abs{\mu})$ such that
    \begin{align*}
        \E_{x \sim p_\sigma}\left[ \| s_\sigma(x) - f(x) \|^2\right] \lesssim \gamma^2
    \end{align*}
    and
    \begin{align*}
        |f(x)| \lesssim \frac{1}{\sigma} \log \frac{1}{\sigma \gamma}
    \end{align*}
\end{lemma}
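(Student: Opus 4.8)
The plan is to take the bounded, finitely-piecewise-linear approximation of the smoothed score guaranteed by Lemma~\ref{lem:score_linear_approximation} and realize it \emph{exactly} as a constant-depth ReLU network using Lemma~\ref{lem:ReLU_linpiecewise}. So the proof is essentially a calibration-and-composition argument.

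First I would apply Lemma~\ref{lem:score_linear_approximation} to $p$ with its free parameter set to $\kappa = \tfrac{1}{4}\gamma^2\sigma^2$. In the regime $\gamma \le 1/\sigma$ where the target bounds are meaningful, this satisfies the hypothesis $\kappa \le 1/4$ and keeps $\log(1/\kappa) = \Theta(\log(1/(\sigma\gamma)))$. The lemma then produces a continuous piecewise-linear $l : \R \to \R$ with $\E_{x\sim p_\sigma}\big[(s_\sigma(x)-l(x))^2\big] \lesssim \kappa/\sigma^2 = \tfrac14\gamma^2 \lesssim \gamma^2$ and with $|l(x)| \lesssim \tfrac1\sigma\log(1/\kappa) \lesssim \tfrac1\sigma\log(1/(\sigma\gamma))$, which is already the pointwise bound claimed for $f$.

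Next I would substitute $\kappa = \tfrac14\gamma^2\sigma^2$ into the three structural estimates of Lemma~\ref{lem:score_linear_approximation} to extract exactly the data needed for Lemma~\ref{lem:ReLU_linpiecewise}: the number of pieces is $\Theta(m_2/(\sigma\kappa^{3/2})) = \Theta(m_2/(\gamma^3\sigma^4))$; every transition point lies within $m_2/\kappa = \Theta(m_2/(\gamma^2\sigma^2))$ of $\mu$ and hence has magnitude $O(m_2/(\gamma^2\sigma^2) + |\mu|)$; every piece has slope $O(\log(1/\kappa)/(\sigma^2\sqrt\kappa)) = O(\log(1/\gamma)/(\sigma^3\gamma))$; and the values of $l$ at transition points are bounded by $|l| \lesssim \tfrac1\sigma\log(1/(\sigma\gamma))$. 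Taking $\beta$ to be the maximum of these magnitudes gives $\beta = O(m_2/(\sigma^2\gamma^2) + \log(1/\gamma)/(\sigma^3\gamma) + |\mu|)$. Lemma~\ref{lem:ReLU_linpiecewise} applied to $l$ then yields a depth-two --- hence constant-depth --- ReLU network $f$ computing $l$ exactly, with $O(m_2/(\gamma^3\sigma^4))$ parameters each bounded by $O(\beta)$; since $f \equiv l$, the $L^2$ error bound and the pointwise bound transfer verbatim, and all the asserted parameter counts and magnitudes match.

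I do not expect a substantive obstacle: the argument is a bookkeeping composition of two already-established lemmas. The points that require care are (i) calibrating $\kappa \asymp \gamma^2\sigma^2$ so that the error $\kappa/\sigma^2$ comes out at $\Theta(\gamma^2)$ while the piece count, slope bound, and transition-point/value bounds simultaneously collapse to the advertised orders, and absorbing the stray $\log(1/\sigma)$ that formally accompanies $\log(1/\gamma)$ (immaterial in the $\poly(d)$ smoothing-scale regime where the construction is applied); and (ii) verifying the hypotheses $\kappa \le 1/4$ of Lemma~\ref{lem:score_linear_approximation} and the finitely-many-pieces-with-bounded-data conditions of Lemma~\ref{lem:ReLU_linpiecewise}, both of which hold throughout the relevant regime.
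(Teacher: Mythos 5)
Your proposal is correct and follows essentially the same route as the paper's own proof: set the free parameter in Lemma~\ref{lem:score_linear_approximation} to $\kappa \asymp \gamma^2\sigma^2$, read off the resulting piece count, slope bound, transition-point range, and value bound, and feed the resulting piecewise-linear function into Lemma~\ref{lem:ReLU_linpiecewise}. Your calibration is in fact slightly more careful than the paper's one-line proof (you correctly get $\Theta(m_2/(\gamma^3\sigma^4))$ pieces, whereas the paper's proof text contains a likely typo writing $O(m_2/(\sigma^3\gamma^4))$, and you explicitly flag the stray $\log(1/\sigma)$ factor that the paper's lemma statement quietly drops relative to its own derivation).
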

\begin{proof}
    By Lemma~\ref{lem:score_linear_approximation}, there exists a continuous piecewise approximation of $p$ with $O(\frac{m_2}{\sigma^3 \gamma^4})$ pieces with each segment's slope, each transition point, and function value all bounded in $O(\frac{m_2}{\sigma^2 \gamma^2} + \frac{1}{\sigma^3 \gamma} \log\frac{1}{\sigma\gamma} + \frac{1}{\sigma}\log \frac{1}{\sigma \gamma} + \abs{\mu})$. Taking this into    \ref{lem:ReLU_linpiecewise} and we have the bound.    
\end{proof}
\begin{lemma}
    \label{lem:product_score_relu}
    Let $p$ be a product distribution over $\R^d$ such that $p(x) = \prod_{i = 1}^d p_i(x_i)$, and let $p_\sigma = p * \mathcal N(0, \sigma^2 I_d)$ have score $s_\sigma$. Assume $p_\sigma$ has mean $\mu$ and variance $m_2^2 = \E_p[\|x - \mu\|_2^2]$. Then, there exists a constant-depth ReLU network $f :  \R^d \to \R^d$ with $O(\frac{dm_2}{\gamma^3 \sigma^4})$ parameters with absolute values bounded by $O(\frac{dm_2}{\sigma^2 \gamma^2} + \frac{\sqrt{d}}{\sigma^3 \gamma} \log\frac{d}{\sigma\gamma} + \norm{\mu}_1)$ such that
    \begin{align*}
        \E_{x \sim p_\sigma}\left[ \| s_\sigma(x) - f(x) \|^2\right] \lesssim \gamma^2.
    \end{align*} 
    and
    \begin{align*}
        |f(x)_i| \lesssim \frac{1}{\sigma} \log \frac{1}{\sigma \gamma}
    \end{align*}
\end{lemma}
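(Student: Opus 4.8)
The plan is to exploit the product structure. Since $p(x)=\prod_{i=1}^d p_i(x_i)$, the smoothed density factors as $p_\sigma(x)=\prod_{i=1}^d q_i(x_i)$ with $q_i:=p_i*\mathcal N(0,\sigma^2)$, so, exactly as in the computation inside Lemma~\ref{lem:prod_approx}, the smoothed score decomposes coordinatewise: $s_\sigma(x)=(s_{\sigma,1}(x_1),\dots,s_{\sigma,d}(x_d))$, where $s_{\sigma,i}$ is the one-dimensional score of $q_i$. It therefore suffices to build a one-dimensional ReLU net for each $s_{\sigma,i}$ and run the $d$ of them in parallel.

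Concretely, I would apply Lemma~\ref{lem:one_dimensional_score_relu} to each marginal $p_i$ with error target $\gamma^2/d$ in place of $\gamma^2$ (that is, with the parameter $\gamma$ there replaced by $\gamma/\sqrt d$). For each $i$ this yields a constant-depth ReLU net $f_i:\R\to\R$ with $\E_{x_i\sim q_i}[(s_{\sigma,i}(x_i)-f_i(x_i))^2]\lesssim\gamma^2/d$, the corresponding per-coordinate parameter count and weight magnitudes (governed by $m_{2,i}$, the standard deviation of $q_i$, and $|\mu_i|$), and $|f_i|\lesssim\tfrac1\sigma\log\tfrac1{\sigma\gamma}$ up to the harmless replacement $\log\tfrac1\gamma\mapsto\log\tfrac{\sqrt d}\gamma$. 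I would then concatenate the $f_i$ via Lemma~\ref{lem:prod_relu_param} into $f(x)=(f_1(x_1),\dots,f_d(x_d))$; since the subnetworks sit side by side rather than in series, the total depth stays a constant. The desired bound $\E_{x\sim p_\sigma}[\|s_\sigma(x)-f(x)\|^2]\lesssim\gamma^2$ then follows directly from Lemma~\ref{lem:prod_approx}, and the coordinatewise bound $|f(x)_i|\lesssim\tfrac1\sigma\log\tfrac1{\sigma\gamma}$ is inherited from the per-coordinate bound.

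It remains to aggregate the per-coordinate size bounds. By Cauchy--Schwarz, $\sum_i m_{2,i}\le\sqrt d\,\big(\sum_i m_{2,i}^2\big)^{1/2}\lesssim\sqrt d\,m_2$, where $\sum_i m_{2,i}^2$ is the second moment of $p_\sigma$ (equivalently $m_2^2$ up to an additive $d\sigma^2$, which is absorbed since $\sigma\le\poly(d)$); feeding this, together with $\sum_i|\mu_i|=\|\mu\|_1$, into the sums of the per-coordinate parameter counts and weight magnitudes gives the stated polynomial bounds. \textbf{The one point needing care} is exactly this bookkeeping of the $d$-factors that arise from splitting the error budget across coordinates, together with reconciling the two conventions for $m_2$ (second moment of $p$ versus of $p_\sigma$); beyond that, nothing is required except the one-dimensional construction and the two concatenation lemmas, and in particular no new analytic estimate is needed.
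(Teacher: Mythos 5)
Your proposal follows exactly the same route as the paper: decompose the score coordinatewise using the product structure, invoke Lemma~\ref{lem:one_dimensional_score_relu} per coordinate with the error budget split as $\gamma^2/d$, concatenate via Lemma~\ref{lem:prod_relu_param}, and conclude with Lemma~\ref{lem:prod_approx}. One small remark on the bookkeeping you flagged: rescaling $\gamma \mapsto \gamma/\sqrt{d}$ in the one-dimensional lemma inflates the per-coordinate parameter count by $d^{3/2}$, so summing gives roughly $O(d^2 m_2/(\gamma^3\sigma^4))$ rather than the $O(d m_2/(\gamma^3\sigma^4))$ stated; the paper's own proof applies the one-dimensional lemma without rescaling $\gamma$ at all, so it has the same (or a worse) inconsistency. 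Either way the bound is still $\poly(d, 1/\gamma, 1/\sigma, m_2)$, which is all that is used downstream, so this does not affect the paper's conclusions.
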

\begin{proof}
    Consider distribution $p_i : \R \to \R$ and its $\sigma$-smoothed version $p_{i\sigma} = p_i * \mathcal N(0, \sigma^2)$. Let $\mu_i$ and $m_{2i}$ be the mean and the variance of $p_i$ respectively. Let $s_{\sigma i}$ be the $i$-th component of $s_\sigma$. Then,  Lemma~\ref{lem:one_dimensional_score_relu} shows that for each $i\in [d]$, there exists a constant-depth ReLU network $f_i :  \R \to \R$ with $O(\frac{m_{2i}}{\gamma^3 \sigma^4})$ parameters with absolute values bounded by $O(\frac{dm_2}{\sigma^2 \gamma^2} + \frac{\sqrt{d}}{\sigma^3 \gamma} \log\frac{d}{\sigma\gamma} + |\mu_i|)$ such that
    \begin{align*}
        \E_{x \sim p_{\sigma i}}\left[ \| s_{\sigma i}(x) - f_i(x) \|^2\right] \lesssim \frac{\gamma^2}{d}.
    \end{align*} 
    Then, we can use the product function $f = (f_1, \dots, f_d)$ as the approximation for $s_\sigma$. By Lemma~$\ref{lem:prod_approx}$,    
    \begin{align*}
        \E_{x \sim p_{\sigma}}\left[ \| s_\sigma(x) - f(x) \|^2\right] \lesssim \gamma^2.
    \end{align*} 
    Taking the fact that $\sum_{i\in [d]} |\mu_i| = \norm{\mu_1}$ and $\sum_{i \in [d]} m_{2i} \le dm_2$ into
    Lemma~\ref{lem:prod_relu_param}, and we prove the statement.
    
\end{proof}

\subsection{ReLU network for Score at Small smoothing level}

%

\begin{lemma}[Vertex Identifier Network]
    \label{lem:hypercube_identifier_relu}
    For any $0< \alpha < 1$, there exists a ReLU network $h: \R^d \to \R^d$ with $O(d/\alpha)$ parameters, constant depth, and weights bounded by $O(1/\alpha)$ such that
    \begin{itemize}
        \item If $|x_i| > \alpha$, for all $ i \in [d]$, then $h(x)_i = \frac{x_i}{|x_i|}$ for all $i \in [d]$.
    \end{itemize}
\end{lemma}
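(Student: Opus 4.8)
The plan is to build $h$ one coordinate at a time: it suffices to construct a single constant-depth ReLU network $\phi:\R\to\R$ with $O(1)$ parameters, all bounded by $O(1/\alpha)$, such that $\phi(t)=t/\abs{t}$ whenever $\abs{t}>\alpha$, and then stack $d$ independent copies of $\phi$ using Lemma~\ref{lem:prod_relu_param}. The natural choice is the clipped linear map
\[
\phi(t) = \max\bigl(-1,\min(1, t/\alpha)\bigr),
\]
which is continuous and piecewise linear with exactly three pieces: the constant $-1$ on $(-\infty,-\alpha]$, the line of slope $1/\alpha$ on $[-\alpha,\alpha]$, and the constant $+1$ on $[\alpha,\infty)$. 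For $t>\alpha$ we have $\phi(t)=1=t/\abs{t}$ and for $t<-\alpha$ we have $\phi(t)=-1=t/\abs{t}$, so $\phi$ agrees with the sign function on exactly the region we need, with no claim made on $\abs{t}\le\alpha$ (where it simply interpolates linearly).

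Next I would invoke Lemma~\ref{lem:ReLU_linpiecewise} with $D=3$. The transition points of $\phi$ are $\pm\alpha$, its slopes lie in $\{0,1/\alpha\}$, and the values at the transition points are $\pm1$; since $\alpha<1$ forces $1/\alpha\ge1$, all of these quantities are bounded by $O(1/\alpha)$, so Lemma~\ref{lem:ReLU_linpiecewise} produces a ReLU network (a single hidden layer followed by a linear combination, hence constant depth) computing $\phi$ exactly, with $O(1)$ parameters each bounded by $O(1/\alpha)$. Then applying Lemma~\ref{lem:prod_relu_param} with $k=d$, each $f_i=\phi$, $p=O(1)$, and weight bound $\beta=O(1/\alpha)$ yields the network $h(x)=(\phi(x_1),\dots,\phi(x_d))$ with $O(d)$ parameters, weights bounded by $O(1/\alpha)$, and constant depth. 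Since $O(d)\le O(d/\alpha)$, this meets the stated bounds, and for any $x$ with $\abs{x_i}>\alpha$ for all $i$ we get $h(x)_i=\phi(x_i)=x_i/\abs{x_i}$, as required.

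There is no substantive obstacle here; the only point requiring care is verifying that the "slopes, transition points, and transition values" feeding into Lemma~\ref{lem:ReLU_linpiecewise} are all $O(1/\alpha)$ (which, as noted, is exactly where the hypothesis $\alpha<1$ is used so that $1/\alpha$ dominates the $O(1)$ terms), and checking that the construction's composition keeps the depth constant. Everything else is a direct assembly of the two earlier ReLU gadgets.
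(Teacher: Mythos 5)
Your proposal is correct and is essentially the same argument as the paper's: both construct the coordinate-wise clipped ramp $\max(-1,\min(1,t/\alpha))$, invoke Lemma~\ref{lem:ReLU_linpiecewise} to realize it as a small constant-depth ReLU gadget with weights $O(1/\alpha)$, and stack $d$ independent copies (which you justify cleanly via Lemma~\ref{lem:prod_relu_param}, while the paper asserts the stacking directly). Your observation that $O(d)$ parameters already suffice, improving on the stated $O(d/\alpha)$, is also accurate.
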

\begin{proof}
    Consider the one-dimensional function
    \begin{align*}
        g(y) = \begin{cases}
            -1, \quad y \le -\alpha\\
            \frac{y}{\alpha}, \quad -\alpha < y < \alpha\\
            1, \quad y \ge \alpha
        \end{cases}
    \end{align*}
    This is a piecewise linear function, where the derivative of each piece is bounded by $\frac{1}{\alpha}$, the value of the transition points are at most $\alpha$ in absolute value, and $|h|$ itself is bounded by $1$. Thus, by Lemma~\ref{lem:ReLU_linpiecewise}, we can represent the function $h(x) = \left(g(x_1), \dots, g(x_d) \right)$ using $O(d/\alpha)$ parameters, with each parameter's absolute value bounded by $O(1/\alpha)$. Moreover, clearly $h(x)_i = \frac{x_i}{|x_i|}$ for all $i \in [d]$ whenever $|x_i| \ge \frac{1}{C}$.
\end{proof}


\begin{lemma}[Switch Network]
    \label{lem:switch_relu}
    Consider any function $\switch: \R^{d+1} \to \R^d$ such that for $x \in \R^d$, $y \in \R$, with $|x_i| \le T$ for all $i \in [d]$,
    \begin{align*}
        \switch(x, y) =
        \begin{cases}
            x &\text{if $y = 1$}\\
            0 &\text{if $y = -1$} 
        \end{cases}
    \end{align*}
    $\switch$ can be implemented using a constant depth ReLU network with $O(d T)$ parameters, with each parameter's absolute value bounded by $O(T)$.
\end{lemma}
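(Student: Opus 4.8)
The plan is to implement $\switch$ coordinate-by-coordinate with a single hidden layer of ReLUs, exploiting the two facts the lemma hands us: $y$ is promised to equal $+1$ or $-1$, and $|x_i| \le T$ a priori. The one tool needed is the identity $a = \ReLU(a) - \ReLU(-a)$, valid for every real $a$. The idea is to add to the argument of each ReLU a term that is linear in $y$, chosen so that it vanishes when $y = 1$ (leaving the identity $x_i = \ReLU(x_i) - \ReLU(-x_i)$), but is large and negative when $y = -1$, driving both ReLU arguments below zero so that the output is forced to $0$.

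Concretely, for each $i \in [d]$ I would set
\[
\switch(x, y)_i \;:=\; \ReLU\bigl(x_i + T y - T\bigr) \;-\; \ReLU\bigl(-x_i + T y - T\bigr).
\]
When $y = 1$ the offsets cancel and this is $\ReLU(x_i) - \ReLU(-x_i) = x_i$. When $y = -1$ the two arguments become $x_i - 2T$ and $-x_i - 2T$; since $|x_i| \le T$ each of these is at most $-T \le 0$, so both ReLUs output $0$ and hence $\switch(x, y)_i = 0$. This matches the required behavior at both admissible values of $y$, and the hypothesis $|x_i| \le T$ is used exactly once, to guarantee the $y = -1$ arguments are nonpositive.

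For the resource accounting: each output coordinate uses two hidden ReLU units, each reading only the two inputs $x_i$ and $y$, with input weights in $\{\pm 1, T\}$ and bias $-T$, followed by an output node combining them with coefficients $\pm 1$. Hence the whole network has constant depth (one hidden layer), $O(d)$ parameters --- in particular $O(dT)$ in the regime $T \gtrsim 1$ of interest --- and every weight and bias is bounded in absolute value by $\max(1, T) = O(T)$, as claimed.

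There is essentially no obstacle here beyond choosing the right affine shift: the whole content is the observation that a single term linear in $y$ can simultaneously act as the identity (at $y = 1$) and as a ``kill switch'' (at $y = -1$), and that the promised bound $T$ on $|x_i|$ is precisely what makes the kill switch engage. One should note only that the lemma constrains the output solely at $y = \pm 1$, so no clipping or robustness argument is required; if a downstream use feeds this gadget a value that is merely \emph{close} to $\pm 1$, one would first round it with a vertex-identifier/clip gadget as in Lemma~\ref{lem:hypercube_identifier_relu}, but that is outside the scope of this statement.
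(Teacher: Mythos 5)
Your construction is essentially identical to the paper's, which uses $\switch(x, y)_i = \ReLU((x_i - 2T) + 2T\cdot y) - \ReLU((-x_i - 2T) + 2T \cdot y)$; you use the offset $T$ where they use $2T$, but the idea and verification are the same. Correct.
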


\begin{proof}
    Consider the ReLU network given by
    \begin{align*}
        \switch(x, y)_i = \ReLU((x_i - 2 T) + 2T \cdot y) - \ReLU((-x_i - 2 T) + 2 T \cdot y)
    \end{align*}
    It computes our claimed function. Moreover, it is constant-depth, the number of parameters is $O(d T)$, and each parameter is bounded by $O(T)$ in absolute value, as claimed.
\end{proof}

\begin{lemma}
    \label{lem:relu_small_smoothing}
    Let $d' = O(d)$. Given a constant-depth ReLU network representing a one-way function $f : \{-1, 1\}^{d} \to \{-1, 1\}^{d'}$ with $\poly(d)$ parameters, there is a constant-depth ReLU network $h : \R^{d+d'} \to \R^{d+d'}$ with $\poly\left(\frac{d}{\sigma \gamma}\right)$ parameters with each parameter bounded in absolute value by $\poly\left(\frac{d}{\sigma \gamma} \right)$ such that for the unconditional distribution $g$ defined in Definition~\ref{def:unconditional_distribution} with $\sigma$-smoothed version $g_\sigma$ and corresponding score $s_\sigma$, for $\tau = \frac{1}{d^C}$ and $\tau \le \sigma < \frac{R}{C \sqrt{\log d}}$ for sufficiently large constant $C$, and $R > C \log d$, $\eps > \frac{1}{\poly(d)}$, $\gamma > \frac{1}{d^{C/100}}$
    \begin{align*}
        \E_{x \sim g_\sigma} \left[\|s_\sigma(x) - h(x)\|^2 \right] \lesssim \gamma^2
    \end{align*}
\end{lemma}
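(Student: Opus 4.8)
The plan is to assemble $h$ as a constant-depth composition of four gadgets and then to control the error with Lemma~\ref{lem:score_of_mixture_close_to_score_of_closest_conditionally}. First I would feed $x_{[:d]}$ into the vertex identifier network of Lemma~\ref{lem:hypercube_identifier_relu} with parameter $\alpha = \tfrac12$, producing $r \in \R^d$; on the ``good'' event $\mathcal E := \{|x_i| > \tfrac12 \text{ for all } i \le d\}$ this returns exactly $r = \mathrm{sign}(x_{[:d]}) \in \{\pm1\}^d$, the hypercube vertex closest to $x_{[:d]}$, i.e.\ its orthant. Second, I would feed $r$ into the given constant-depth $\poly(d)$-parameter network for $f$, obtaining $f(r)$, which on $\mathcal E$ equals the true $f(r) \in \{\pm1\}^{d'}$. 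Third, for the first $d$ output coordinates I would emit the affine map $x_i \mapsto -(x_i - R r_i)/(1+\sigma^2)$ with a single linear layer; this is \emph{exactly} the $i$-th marginal score of the product distribution $h_r$. Fourth, for the last $d'$ coordinates I would use Lemma~\ref{lem:one_dimensional_score_relu} to build two one-dimensional score networks $G^{+}, G^{-}$ approximating the scores of $\psi_1 * \mathcal N(0,\sigma^2)$ and $\psi_{-1} * \mathcal N(0,\sigma^2)$ to per-coordinate $L^2$ error $\ll \gamma^2/d'$, each pointwise bounded by some $T = \poly(d/(\sigma\gamma))$, and then, for each $j \in [d']$, output
\[
  \switch\!\big(G^{+}(x_{d+j}),\, f(r)_j\big) \;+\; \switch\!\big(G^{-}(x_{d+j}),\, -f(r)_j\big)
\]
using the switch network of Lemma~\ref{lem:switch_relu} with bound $T$; on $\mathcal E$ this equals $G^{f(r)_j}(x_{d+j})$, an approximation of the $(d{+}j)$-th marginal score of $h_r$ (whose $(d{+}j)$-th marginal is $\psi_{f(r)_j}*\mathcal N(0,\sigma^2)$). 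Carrying the raw inputs forward through identity gadgets, all stages are constant depth with $\poly(d/(\sigma\gamma))$ parameters of magnitude $\poly(d/(\sigma\gamma))$ --- here using that $\psi_{\pm1}$ smoothed have mean $O(1)$ and variance $O(1+\sigma^2) = \poly(d)$.

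Next I would bound the error, splitting on $\mathcal E$. Write $r(x) := \mathrm{sign}(x_{[:d]})$ and let $s_{\sigma,r}$ denote the score of $h_r$. On $\mathcal E$ the network computes the product-score construction for $h_{r(x)}$ exactly on the first $d$ coordinates and up to the one-dimensional errors on the last $d'$; since conditioning $x\sim g_\sigma$ on the orthant of $r$ makes coordinate $d+j$ distributed as $\psi_{f(r)_j}*\mathcal N(0,\sigma^2)$, summing the $d'$ errors from Lemma~\ref{lem:one_dimensional_score_relu} and averaging over orthants gives $\E_{x\sim g_\sigma}[\|h(x) - s_{\sigma,r(x)}(x)\|^2 \,\mathbbm{1}_{\mathcal E}] \lesssim \gamma^2$. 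I would then invoke Lemma~\ref{lem:score_of_mixture_close_to_score_of_closest_conditionally}, which applies since $R^2/(1+\sigma^2) \gtrsim C\log d$ throughout the window $\tau \le \sigma < R/(C\sqrt{\log d})$, $R > C\log d$, with $\tau = 1/d^C$ and $\eps > 1/\poly(d)$: averaging its conditional bound over the orthants --- which partition the first-$d$-coordinate space and carry total probability $1$ by symmetry of the mixture --- yields $\E_{x\sim g_\sigma}[\|s_{\sigma,r(x)}(x) - s_\sigma(x)\|^2] \lesssim e^{-\Omega(R^2/(1+\sigma^2))} \le d^{-\Omega(C)} \ll \gamma^2$, where the last step uses $\gamma > d^{-C/100}$ with $C$ large. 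A triangle inequality then bounds the $\mathcal E$-part of the $L^2$ error by $O(\gamma^2)$.

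Finally I would dispatch the boundary event $\mathcal E^c$. Under $g_\sigma$ each of the first $d$ coordinates is a mixture of $\mathcal N(\pm R, 1+\sigma^2)$'s, so $\Pr[\mathcal E^c] \le d\cdot e^{-\Omega(R^2/(1+\sigma^2))} \le d^{1-\Omega(C)}$; meanwhile $\E\|h(x)\|^4$ and $\E\|s_\sigma(x)\|^4$ are both $\poly(d/(\sigma\gamma))$ --- the former because every gadget output is a bounded-slope piecewise-linear (or affine) function of inputs that have bounded moments, the latter by the standard fourth-moment bound on smoothed scores (Corollary~\ref{cor:score_interval_moment_bound_simplified}). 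Cauchy--Schwarz then gives $\E[\|h(x)-s_\sigma(x)\|^2 \,\mathbbm{1}_{\mathcal E^c}] \le \sqrt{\E\|h-s_\sigma\|^4 \cdot \Pr[\mathcal E^c]} \ll \gamma^2$ for $C$ large enough, and adding the two contributions completes the proof.

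I expect the main obstacle to be the error accounting rather than the construction: one must carefully reconcile the \emph{conditional} guarantee of Lemma~\ref{lem:score_of_mixture_close_to_score_of_closest_conditionally} (which compares $s_\sigma$ to $s_{\sigma,r}$ only on $r$'s orthant) with the actual behavior of the network --- which off $\mathcal E$ may feed a non-$\pm1$ vector into the $f$-network and the switches --- and one must verify that all the exponentially small terms $e^{-\Omega(R^2/(1+\sigma^2))}$ genuinely dominate $\gamma^2$ across the whole parameter range, which is precisely why $C$ must be taken sufficiently large relative to the constants hidden inside the $\Omega(\cdot)$'s.
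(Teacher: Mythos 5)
Your construction is essentially the paper's: the same four gadgets (vertex identifier with a fixed $\alpha$, the $f$-network on $r$, per-coordinate one-dimensional score networks, and a $\switch$), and the same error split into the good event, the transfer to $s_{\sigma,r(x)}$ via Lemma~\ref{lem:score_of_mixture_close_to_score_of_closest_conditionally}, and a Cauchy--Schwarz cleanup off the good event. Your one genuine simplification --- emitting the exact affine score $-(x_i - Rr_i)/(1+\sigma^2)$ on the first $d$ coordinates rather than invoking Lemma~\ref{lem:one_dimensional_score_relu} there --- is sound and zeroes out that part of the error, whereas the paper also runs those coordinates through the generic one-dimensional ReLU approximator.

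One step is stated too strongly and does need the care you anticipate: conditioning $x \sim g_\sigma$ on the orthant of $r$ does \emph{not} make $x_{d+j}$ exactly $\psi_{f(r)_j} * \mathcal N(0,\sigma^2)$-distributed --- it is a reweighted mixture over all $s$, dominated by $s=r$ but with exponentially small contamination from $s\ne r$ (including the opposite-parity $\psi_{-f(r)_j}$, on which $G^{f(r)_j}$ carries no guarantee). To bound $\E_{x\sim g_\sigma}[\|h - s_{\sigma,r(x)}\|^2 \mathbbm 1_{\mathcal E}]$ you must therefore split that contamination off explicitly; since each $G^\pm$ is pointwise bounded by $\poly(d/(\sigma\gamma))$ and the contaminating weight is $e^{-\Omega(R^2/(1+\sigma^2))} \le d^{-\Omega(C^2)}$, it is absorbed, but it is exactly the density-comparison step the paper handles via Lemma~\ref{lem:density_of_g_x(y)_close_to_g(y)_for_appropriate_x} rather than something you get for free. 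Your closing caveat correctly identifies this; with it filled in, the argument matches the paper's.
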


\begin{proof}
    We will let our ReLU network $h$ be as follows. Let $r$ be the ReLU network from Lemma~\ref{lem:hypercube_identifier_relu} that identifies the closest hypercube vertex with any constant parameter $\alpha < 1$.
    
    For each $i \in [d]$, we will let $\wt h_i$ be the ReLU network that implements the approximation to the score of the one-dimensional distribution $w_{1}(x) * \mathcal N(0, \sigma^2)$ from Lemma~\ref{lem:one_dimensional_score_relu}. By the lemma, it satisfies
    \begin{align}
        \label{eq:for_i<d_wt_h_approximates_gaussian}
        \E_{x \sim w_{\sqrt{\sigma^2+1}}}\left[(\wt h_i(x) - \grad \log w_{\sqrt{\sigma^2+1}}(x))^2 \right] \lesssim \gamma^2
    \end{align}  
    For $i \in d + [d']$, we will let $\wt h_{i, 1}$ be the ReLU network that implements the approximation to the score $\wt s_{\sigma, i, -1}$ of $\wt g_{\sigma, i, -1} = (\frac{w_1 \cdot \comb_\eps}{\int w_1(x) \cdot \comb_\eps(x) dx}) * \mathcal N(0, \sigma^2)$, and we will let $\wt h_{i, -1}$ implement the approximation to the score of $\left(\frac{w_1(x) \cdot \comb_\eps \left(x - \eps/2 \right)}{\int w_1(x) \cdot \comb_\eps(x-\eps/2) dx} \right) * \mathcal N(0, \sigma^2)$, as given by Lemma~\ref{lem:one_dimensional_score_relu}. By the Lemma, for every $i \in d + [d']$ and $j \in \{\pm 1\}$, we have
    \begin{align}
    \label{eq:for_i>d_wt_h_approximates_comb_score}
        \E_{x \sim \wt g_{\sigma, i, j}}\left[(\wt h_{i, j}(x) - s_{\sigma, i, j}(x))^2 \right] \lesssim \gamma^2
    \end{align}
    
    Note that each $|\wt h_i| \le \frac{C}{\sigma} \log \frac{1}{\sigma \gamma}$ for $i \le d$, and $|\wt h_{i, \pm 1}| \le \frac{C}{\sigma} \log \frac{1}{\sigma \gamma}$ for $i > d$, for sufficiently large constant $C$.

    Now let $\switch$ be the ReLU network described in Lemma~\ref{lem:switch_relu} for $T = \frac{C}{\sigma} \log \frac{1}{\sigma \gamma}$. 

    Consider the network $ h : \R^{d+d'} \to \R^{d+d'}$ given by
    \begin{align*}
        h(x)_i =
        \begin{cases}
            \wt h_i(x_i - r(x)_i \cdot R) & \text{for $i \le d$}\\
            \switch(\wt h_{i, 1}(x_i), f(r(x))_{i-d}) + \switch(\wt h_{i, -1}(x_i), -f(r(x))_{i-d}) & \text{for $i > d$}\\
        \end{cases}
    \end{align*}

    Note that $h$ can be represented with $\poly\left(\frac{d}{\sigma \gamma} \right)$ parameters with absolute value of each parameter bounded in $\poly\left( \frac{d}{\sigma \gamma}\right)$. We will show that $h$ approximates $s_\sigma$ well in multiple steps.

    For $r(x) \in \{\pm 1\}^d$, consider the score $s_{\sigma, r(x)}$ of $g_{\sigma, r(x)}$, the $\sigma$-smoothed version of the distribution $g_{r(x)}$ centered at $\wt {r(x)} \in \R^{d+d'}$, as described in Definition~\ref{def:unconditional_distribution}, where $\wt {r(x)}$ has the first $d$ coordinates given by $r(x)$, and the remaining coordinates set to $0$.
    \paragraph{Whenever $r(x) = j \in \{\pm 1\}^{d}$, $h$ approximates $s_{\sigma, j}$ well over $g_{\sigma, j}$.}
    
    We will show that for fixed $j \in \{\pm 1\}^d$
    \begin{align*}
        \E_{x \sim g_{\sigma, j} }\left[\|s_{\sigma, j}(x) - h(x)\|^2 \cdot 1_{r(x) = j} \right] \lesssim d \gamma^2
    \end{align*}
    
     First, note that for $i \le d$, by \eqref{eq:for_i<d_wt_h_approximates_gaussian} and our definition of $h$,
     \begin{align*}
        \E_{x \sim w_{\sqrt{\sigma^2 + 1}}}\left[(h(x)_i - \grad \log w_{\sqrt{\sigma^2 + 1}}(x - R \cdot j))^2 \cdot 1_{r(x) = j} \right] \lesssim \gamma^2
     \end{align*}
    On the other hand, for $i >d$, by \eqref{eq:for_i>d_wt_h_approximates_comb_score} and our definition of $h$,
    \begin{align*}
        \E_{x\sim \wt g_{\sigma, i, f(j)_{i-d}}}\left[ (h(x)_i - s_{\sigma, i, f(j)_{i-d}})^2 \cdot 1_{r(x) = j} \right] \lesssim \gamma^2
    \end{align*}
    Since by Definition~\ref{def:unconditional_distribution}, for $j \in \{\pm 1\}^d$, $g_{\sigma, j}(x) =\prod_{i=1}^d w_{\sqrt{\sigma^2+1}}(x) \cdot\prod_{i=d+1}^{d+d'} \wt g_{\sigma, i, f(j)_{i-d}}(x)$, we have by Lemma~\ref{lem:prod_approx},
    \begin{align*}
        \E_{x \sim g_{\sigma, j}}\left[\|h(x) - s_{\sigma, j}(x) \|^2 \cdot 1_{r(x) = j} \right] \lesssim d \gamma^2 
    \end{align*}

    \paragraph{$h$ approximates $s_{\sigma, j}$ exponentially accurately over $g_\sigma$.}
    By Lemma~\ref{lem:density_of_g_x(y)_close_to_g(y)_for_appropriate_x}, we have that for $x$ such that $r(x) = j$,
    \begin{align*}
        \left|\frac{1}{2^d} g_{\sigma, j}(x) - g_\sigma(x)\right| \lesssim \frac{1}{2^d} \cdot e^{-\frac{R^2}{4(1+\sigma^2)}} \lesssim \frac{1}{2^d}
    \end{align*}
    for our choice of $R, \sigma$.
    
    So, we have
    \begin{align*}
        \E_{x \sim g_\sigma}\left[\|h(x) - s_{\sigma, j}(x)\|^2 \cdot 1_{r(x) = j} \right] \lesssim \frac{d \gamma^2}{2^d}
    \end{align*}

    \paragraph{$h$ approximates $s_\sigma$ well over $g_\sigma$ whenever $r(x) \in \{\pm 1\}^d$.}
        Summing the above over $j \in \{\pm 1\}^d$ gives
        \begin{align*}
            \E_{x \sim g_\sigma}\left[\|h(x) - s_{\sigma, r(x)}(x)\|^2 \cdot 1_{r(x) \in \{\pm 1\}^d} \right]\lesssim d \gamma^2
        \end{align*}
        Moreover, by Lemma~\ref{lem:score_of_mixture_close_to_score_of_closest_conditionally}, for $\Bar{r}(x) = y$ where $y \in \{\pm 1\}^d$ represents the orthant that $x \in \{\pm 1\}^d$ belongs to,
        \begin{align*}
            \E_{x \sim g_\sigma}\left[\|s_{\sigma, \Bar{r}(x)} - s_\sigma(x)\|^2 \right] \lesssim e^{-\Omega\left(\frac{R^2}{1+\sigma^2} \right)} \lesssim \frac{1}{d^{C^2/10}}
        \end{align*}
        So, by the above, we have that 
        \begin{align*}
            \E_{x \sim g_\sigma}\left[\|h(x) - s_\sigma(x)\|^2 \cdot 1_{r(x) \in \{\pm 1\}^d} \right]  \lesssim d \gamma^2 + \frac{1}{d^{C^2/10}}
        \end{align*}
    \paragraph{Contribution of $x$ such that $r(x) \not\in \{\pm 1\}^d$ is small.}
    By the definition of $r$, 
    \begin{align*}
        \Pr_{x \sim g_\sigma}\left[r(x) \not \in \{\pm 1\}^d \right] \lesssim d e^{-\frac{(R - \alpha)^2}{2}} \lesssim e^{-\frac{R^2}{4}}
    \end{align*}
    So, by Cauchy-Schwarz,
    \begin{align*}
        \E_{x \sim g_\sigma}\left[\|s_\sigma(x)\|^2 \cdot 1_{r(x) \not\in \{\pm 1\}^d} \right] &\le \sqrt{\E_{x \sim g_\sigma}\left[\| s_\sigma(x)\|^4 \right] \cdot \Pr_{x \sim g_\sigma}\left[r(x) \not \in \{\pm 1\}^d \right]}\\
        &\lesssim \frac{1}{\sigma^2} e^{-\frac{R^2}{4}}\\
        &\lesssim e^{-\frac{R^2}{8}}
    \end{align*}
    Similarly, since $|h(x)_i| \le \frac{C}{\sigma} \log \frac{1}{\sigma \gamma}$, we have
    \begin{align*}
        \E_{x \sim g_\sigma}\left[\|h(x)\|^2 \cdot 1_{r(x) \not \in \{\pm 1\}^d} \right] \lesssim \frac{1}{\sigma^2} \log^2 \frac{1}{\sigma \gamma} \cdot e^{-\frac{R^2}{4}} \lesssim e^{-\frac{R^2}{8}}
    \end{align*}
    Thus, we have 
    \begin{align*}
        \E_{x \sim g_\sigma}\left[ \|h(x) - s_\sigma(x)\|^2 \cdot 1_{r(x) \not \in \{\pm 1\}^d} \right] \lesssim e^{-\frac{R^2}{8}} \lesssim \frac{1}{d^{C^2/40}}
    \end{align*}
    \paragraph{Putting it together.}
    By the above, we have,
    \begin{align*}
        \E_{x \sim g_\sigma}\left[\|h(x) - s_\sigma(x)\|^2 \right] \lesssim d\gamma^2 + \frac{1}{d^{C^2/40}}
    \end{align*}
    Reparameterizing $\gamma$ and noting that $\gamma > \frac{1}{d^{C/100}}$ gives the claim.
\end{proof}

\subsection{Smoothing a discretized Gaussian}

\begin{lemma}
    \label{lem:smoothed_discrete_univariate}
    For any $\phi$, let $g$ be the univariate discrete Gaussian with pdf
    \begin{align*}
        g(x) \propto w_1(x) \cdot \text{comb}_\eps(x - \phi)
    \end{align*}
    Consider the $\rho$-smoothed version of $g$, given by $g_\rho = g * w_\rho$. We have that
    \begin{align*}
        \left|g_\rho(x) - w_{\sqrt{\rho^2 + 1}}(x)\right| \lesssim e^{-\frac{\rho^2}{2 \eps^2(1+\rho^2)}} \cdot w_{\sqrt{\rho^2 + 1}}(x)
    \end{align*}
    and
    \begin{align*}
        \left| g_{\rho}'(x) - w'_{\sqrt{\rho^2 + 1}}(x)  \right| \lesssim  e^{-\frac{\rho^2}{2 \eps^2 (1 + \rho^2)}} \cdot |w'_{\sqrt{\rho^2 + 1}}(x)| + \sum_{j > 0} e^{-\frac{j^2 \rho^2}{2 \eps^2(1+\rho)^2}} \cdot \frac{j}{\eps(1+\rho^2)} \cdot w_{\sqrt{\rho^2+1}}(x)
    \end{align*}
\end{lemma}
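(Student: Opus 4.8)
The plan is to pass to Fourier space, where the Dirac comb becomes another comb and the whole statement reduces to Poisson summation together with a completing‑the‑square computation. Fix the Fourier convention $\widehat{w_\sigma}(\xi)=e^{-2\pi^2\sigma^2\xi^2}$, so that $\widehat{\comb_\eps(\cdot-\phi)}(\xi)=\tfrac1\eps\sum_{n\in\Z}e^{-2\pi i\phi n/\eps}\,\delta(\xi-n/\eps)$. Writing $g=\tfrac1Z\,w_1\cdot\comb_\eps(\cdot-\phi)$ with $Z=\sum_k w_1(\phi+k\eps)$ and using $\widehat{fg}=\widehat f\ast\widehat g$, I would get
\[
\widehat{g_\rho}(\xi)=\frac{1}{Z\eps}\sum_{n\in\Z}e^{-2\pi i\phi n/\eps}\,e^{-2\pi^2(\xi-n/\eps)^2-2\pi^2\rho^2\xi^2}.
\]
The identity $(\xi-n/\eps)^2+\rho^2\xi^2=(1+\rho^2)\bigl(\xi-\tfrac{n}{\eps(1+\rho^2)}\bigr)^2+\tfrac{n^2\rho^2}{\eps^2(1+\rho^2)}$ then shows the $n$-th summand equals $\tfrac1{Z\eps}e^{-2\pi i\phi n/\eps}e^{-\frac{2\pi^2n^2\rho^2}{\eps^2(1+\rho^2)}}\widehat{w_{\sqrt{1+\rho^2}}}(\xi-\mu_n)$ with $\mu_n=\tfrac{n}{\eps(1+\rho^2)}$, and the $n=0$ term is exactly $\tfrac1{Z\eps}\widehat{w_{\sqrt{1+\rho^2}}}(\xi)$. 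So in Fourier space $g_\rho$ is $w_{\sqrt{1+\rho^2}}$ plus a sum of shifted Gaussians, the $n$-th one damped by $e^{-\frac{2\pi^2n^2\rho^2}{\eps^2(1+\rho^2)}}$.

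Next I would pin down the normalizer. Since $g_\rho$ is a probability density, $\widehat{g_\rho}(0)=1$, which reads $Z\eps=\sum_n e^{-2\pi i\phi n/\eps}e^{-2\pi^2n^2/\eps^2}$ (equivalently, Poisson summation applied to $\sum_k w_1(\phi+k\eps)$). Hence $|Z\eps-1|\le\sum_{n\ne0}e^{-2\pi^2n^2/\eps^2}\lesssim e^{-2\pi^2/\eps^2}$, and since $\tfrac1{\eps^2}\ge\tfrac{\rho^2}{\eps^2(1+\rho^2)}$ always, this tail is already below the target error; in particular $Z\eps$ is bounded away from $0$ in the regime $\eps\lesssim1$ used throughout the paper, so $\bigl|\tfrac1{Z\eps}-1\bigr|\lesssim e^{-2\pi^2/\eps^2}$.

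Then I would invert the transform term by term (legitimate since the Gaussian sums converge absolutely). Using $w_s(x)=\int e^{-2\pi^2s^2\xi^2}e^{2\pi i x\xi}\,d\xi$, the inverse transform of $\widehat{w_{\sqrt{1+\rho^2}}}(\cdot-\mu_n)$ is $e^{2\pi i\mu_n x}w_{\sqrt{1+\rho^2}}(x)$, giving
\[
g_\rho(x)-w_{\sqrt{1+\rho^2}}(x)=\Bigl(\tfrac1{Z\eps}-1\Bigr)w_{\sqrt{1+\rho^2}}(x)+\frac{w_{\sqrt{1+\rho^2}}(x)}{Z\eps}\sum_{n\ne0}e^{-2\pi i\phi n/\eps}\,e^{2\pi i\mu_n x}\,e^{-\frac{2\pi^2n^2\rho^2}{\eps^2(1+\rho^2)}}.
\]
Taking absolute values and using $\sum_{n\ne0}e^{-\frac{2\pi^2n^2\rho^2}{\eps^2(1+\rho^2)}}\lesssim e^{-\frac{2\pi^2\rho^2}{\eps^2(1+\rho^2)}}$ together with $2\pi^2\ge\tfrac12$ yields the first claim. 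For the derivative I multiply the Fourier transform by $2\pi i\xi$ and split $2\pi i\xi\,\widehat{w_{\sqrt{1+\rho^2}}}(\xi-\mu_n)=2\pi i(\xi-\mu_n)\widehat{w_{\sqrt{1+\rho^2}}}(\xi-\mu_n)+2\pi i\mu_n\widehat{w_{\sqrt{1+\rho^2}}}(\xi-\mu_n)$, whose inverse transform is $e^{2\pi i\mu_n x}w'_{\sqrt{1+\rho^2}}(x)+2\pi i\mu_n e^{2\pi i\mu_n x}w_{\sqrt{1+\rho^2}}(x)$. Collecting terms: the $n=0$ piece and the $(\xi-\mu_n)$ parts give an $\bigl|w'_{\sqrt{1+\rho^2}}(x)\bigr|$ contribution with prefactor $\lesssim e^{-\frac{\rho^2}{2\eps^2(1+\rho^2)}}$ (the $n$-sum collapsing to its $n=\pm1$ term), while the $\mu_n$ parts, with $|\mu_n|=\tfrac{|n|}{\eps(1+\rho^2)}$, give $\sum_{j>0}e^{-\frac{2\pi^2j^2\rho^2}{\eps^2(1+\rho^2)}}\tfrac{j}{\eps(1+\rho^2)}w_{\sqrt{1+\rho^2}}(x)$, which is bounded by the second term in the statement since $2\pi^2\ge\tfrac12$.

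There is no genuinely hard step; the part to execute carefully is the Poisson‑summation bookkeeping — the Fourier convention, the factor $\tfrac1\eps$ and the phases $e^{-2\pi i\phi n/\eps}$ from $\widehat{\comb_\eps(\cdot-\phi)}$, and the completing‑the‑square identity, which simultaneously isolates the main Gaussian $w_{\sqrt{1+\rho^2}}$ and extracts the gain $e^{-n^2\rho^2/\eps^2(1+\rho^2)}$ — after which everything is comparing exponents via $\tfrac{\rho^2}{1+\rho^2}<1$ and summing Gaussian tails. I would also note explicitly that the bounds are only informative when $\rho\gtrsim\eps$ (the regime in which the lemma is actually applied, where the geometric‑type Gaussian sums are dominated by their $n=\pm1$ terms); outside it the right‑hand sides are $\Omega$ of the natural normalizing quantities.
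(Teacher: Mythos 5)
Your proposal is correct and follows essentially the same Fourier--Poisson route as the paper: take the Fourier transform of the discretized Gaussian, multiply by the transform of the smoothing kernel, complete the square so that the $n$-th mode contributes a phase, a damping factor $e^{-\frac{n^2\rho^2}{2\eps^2(1+\rho^2)}}$, and a shifted copy of $\widehat{w_{\sqrt{1+\rho^2}}}$, then invert and bound the $n\neq 0$ modes by their $n=\pm1$ terms. The one place where you are more careful than the paper is the normalizing constant: you explicitly identify $Z\eps$ via $\widehat{g_\rho}(0)=1$ and show $|1/(Z\eps)-1|\lesssim e^{-2\pi^2/\eps^2}$, which is dominated by the target error since $\tfrac{1}{\eps^2}\ge\tfrac{\rho^2}{\eps^2(1+\rho^2)}$; the paper silently absorbs this into the leading Gaussian. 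That is a genuine (if small) tightening of the bookkeeping, not a different proof.
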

\begin{proof}
    The Fourier transform of the $\comb_\eps$ distribution is given by $\frac{1}{\eps} \comb_{1/\eps}$. So, for the discrete Gaussian $g$, we have that its Fourier Transform is given by
    \begin{align*}
        \wh g(\xi) &= \left(\wh w_{1} * \left(\frac{e^{-i \xi \phi}}{\eps}\comb_{1/\eps}\right)\right)(\xi)\\
        &= \frac{1}{\eps} \cdot \sum_{j \in \Z} e^{-i \frac{j}{\eps} \phi} \cdot e^{-\frac{(\xi - \frac{j}{\eps})^2}{2}}
    \end{align*}
    Then, for $g_\rho$, the $\rho$-smoothed version of $g$, we have that its Fourier Transform is
    \begin{align*}
    \wh g_\rho(\xi) &= (\wh g \cdot \wh w_{1/\rho})(\xi) \\
    &=\frac{1}{\eps}  \sum_{j = -k}^{k} e^{-\frac{i j}{\eps} \phi} e^{-\frac{\xi^2 \rho^2}{2}} \cdot e^{-\frac{\left( \xi - \frac{j}{\eps}\right)^2}{2}}
\end{align*}
So, we have that, by the inverse Fourier transform,
\begin{align*}
    g_\rho(x) &= w_{\sqrt{\rho^2 + 1}}(x) + \frac{1}{2\pi} \int_{-\infty}^\infty e^{i \xi x} \sum_{j \neq 0} e^{-\frac{ij}{\eps} \phi} e^{-\frac{\xi^2 \rho^2}{2} - \frac{(\xi - \frac{j}{\eps})^2}{2}} d\xi\\
    &= w_{\sqrt{\rho^2 + 1}}(x) + \frac{1}{2 \pi} \int_{-\infty}^\infty e^{i \xi x} \cdot e^{-\frac{j^2}{2 \eps^2} + \frac{j^2}{2 \eps^2 (\rho^2 + 1)}}\sum_{j \neq 0} e^{-\frac{ij}{\eps} \phi} \cdot e^{-\frac{\rho^2 + 1}{2} \left(\xi - \frac{j}{\eps(\rho^2 + 1)} \right)^2}\\
    &= w_{\sqrt{\rho^2 +1}}(x) + \sum_{j \neq 0} e^{-\frac{ij}{\eps} \phi} \cdot e^{-\frac{j^2 \rho^2}{2 \eps^2(\rho^2 + 1)}} e^{i x \frac{j}{\eps (\rho^2 + 1)}} \cdot w_{\sqrt{\rho^2 + 1}}(x)
\end{align*}
so that 
\begin{align*}
    \left|g_\rho(x) - w_{\sqrt{\rho^2 +1}}(x)\right| \le \sum_{j \neq 0} e^{-\frac{j^2 \rho^2}{2 \eps^2 (\rho^2 + 1)}} w_{\sqrt{\rho^2 +1}}(x) 
\end{align*}
giving the first claim. For the second claim, note that the above gives
\begin{align*}
    g_\rho'(x) = w'_{\sqrt{\rho^2+1}}(x) + \sum_{j \neq 0} e^{-\frac{i j \phi}{\eps}} \cdot e^{-\frac{j^2 \rho^2}{2 \eps^2(\rho^2+1)}}  e^{\frac{ix j}{\eps(\rho^2+1)}}\cdot \left(\frac{i j}{\eps(\rho^2+1)} w_{\sqrt{\rho^2+1}}(x) + w'_{\sqrt{\rho^2+1}}(x) \right)
\end{align*}
So,
\begin{align*}
    \left|g_\rho'(x) - w'_{\sqrt{\rho^2+1}}(x) \right| \lesssim e^{-\frac{\rho^2}{2 \eps^2(1+\rho^2)}} \cdot \left|w'_{\sqrt{\rho^2+1}}(x)\right| + \sum_{j > 0} e^{-\frac{j^2 \rho^2}{2 \eps^2(1+\rho^2)}} \cdot \frac{j}{\eps(1+\rho^2)} w_{\sqrt{\rho^2+1}}(x)
\end{align*}


\end{proof}

\subsection{Large Noise Level - Distribution is close to a mixture of Gaussians}

\begin{lemma}
    Let $C$ be a sufficiently large constant. Let $g^j(x) \propto \prod_{i=1}^d w_1(x_i) \cdot \prod_{i=d+1}^{d'} w_1(x_i) \cdot \text{comb}_\eps(x_i-\phi_{i,j})$ be the pdf of a distribution on $\R^{d+d'}$ with shifts $\phi_{i, j}$. Consider a mixture of discrete $d$-dimensional Gaussians, given by the pdf
    \begin{align*}
        h(x) = \sum_{j=1}^k \beta_j g^j(x - \mu_j)
    \end{align*}
    Let $h_{\rho}(x) = (h * w_{\rho})(x)$ be the smoothed version of $h$.
    Then, for the mixture of standard $(d+d')$-dimensional Gaussians given by
    \begin{align*}
        f_\rho(x) = \sum_{j=1}^k \beta_j w_{\sqrt{\rho^2 + 1}}(x - \mu_j)
    \end{align*}
    for $\frac{\rho^2}{\eps^2(1+\rho^2)} > C \log d$, we have that
    \begin{align*}
        \E_{x \sim h_\rho}\left[\left\|\frac{\grad h_\rho(x)}{h_\rho(x)} - \frac{\grad f_\rho(x)}{f_\rho(x)} \right\|^2 \right] \lesssim e^{-\frac{ \rho^2}{2\eps^2(1+\rho^2)}} \cdot \left(1  + m_2^2 + \sup_j \|\mu_j\|^2\right)+ \sum_{j > 0} e^{-\frac{j^2 \rho^2}{2 \eps^2(1+\rho^2)}}\frac{j}{\eps(1+\rho^2)}
    \end{align*}
    where $m_2^2 = \E_{x \sim h_\rho}\left[\|x\|^2 \right]$.
\end{lemma}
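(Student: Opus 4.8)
The plan is to reduce the whole statement to the univariate estimate of Lemma~\ref{lem:smoothed_discrete_univariate} and then propagate the error through three operations: a product over the $d'=O(d)$ discretized coordinates, a mixture over the $k$ components, and a division (to pass from gradients of densities to scores). Write $W:=w_{\sqrt{\rho^2+1}}$, $\delta_0:=e^{-\rho^2/(2\eps^2(1+\rho^2))}$ and $E:=\sum_{n\ge1}e^{-n^2\rho^2/(2\eps^2(1+\rho^2))}\,\tfrac{n}{\eps(1+\rho^2)}$, so that the target is $\lesssim\delta_0(1+m_2^2+\sup_j\|\mu_j\|^2)+E$; the hypothesis $\rho^2/(\eps^2(1+\rho^2))>C\log d$ forces $\delta_0<d^{-C/2}$, hence (under the standing convention that $\eps,\rho,1/\rho=\poly(d)$) both $\delta_0$ and $E$ are superpolynomially small in $d$. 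First I would record the factorization $g^j_\rho(x)=\prod_{i\le d}W(x_i)\cdot\prod_{i>d}g_{\rho,\phi_{i,j}}(x_i)$, where $g_{\rho,\phi}$ is the $\rho$-smoothed univariate discrete Gaussian with phase $\phi$; this holds because $g^j$ is a product distribution, standard Gaussian in its first $d$ coordinates and a $\phi_{i,j}$-shifted discrete Gaussian afterwards. Lemma~\ref{lem:smoothed_discrete_univariate} then supplies, uniformly in $t$, $g_{\rho,\phi}(t)=(1\pm O(\delta_0))W(t)$ and $g_{\rho,\phi}'(t)=W'(t)+\zeta_\phi(t)$ with $|\zeta_\phi(t)|\lesssim\delta_0|W'(t)|+E\,W(t)$.

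Next I would extract two consequences, each uniform in $x$. Multiplying the density estimate over the $d'$ discretized coordinates gives $g^j_\rho(x-\mu_j)=(1\pm O(d'\delta_0))W^{\otimes(d+d')}(x-\mu_j)$ (where $W^{\otimes(d+d')}$ is the product of $d+d'$ copies of $W$), and summing over the mixture, $h_\rho(x)=(1\pm O(d'\delta_0))f_\rho(x)$; in particular $\tfrac12 f_\rho\le h_\rho\le 2f_\rho$ pointwise once $C$ is large. For the per-component scores: the first $d$ coordinates of $\grad\log g^j_\rho$ and of the $j$-th Gaussian's score agree (both equal $-t_i/(\rho^2+1)$), while for $i>d$, putting $g_{\rho,\phi}'/g_{\rho,\phi}-W'/W$ over a common denominator and inserting the univariate estimates yields a per-coordinate error $\lesssim\delta_0|t_i|/(\rho^2+1)+E$, hence $\|\grad\log g^j_\rho(t)-\grad\log W^{\otimes(d+d')}(t)\|\lesssim\delta_0\|t\|/(\rho^2+1)+\sqrt{d'}E$.

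Then I would assemble the mixture scores. Writing $\grad g^j_\rho=g^j_\rho\cdot\grad\log g^j_\rho$ and combining the density and score estimates gives $\|\grad g^j_\rho(t)-\grad W^{\otimes(d+d')}(t)\|\lesssim W^{\otimes(d+d')}(t)\bigl(d'\delta_0\,\|t\|/(\rho^2+1)+\sqrt{d'}E\bigr)$; summing $\grad h_\rho-\grad f_\rho=\sum_j\beta_j(\grad g^j_\rho(\cdot-\mu_j)-\grad W^{\otimes(d+d')}(\cdot-\mu_j))$ and using $\|x-\mu_j\|\le\|x\|+\sup_j\|\mu_j\|$ gives $\|\grad h_\rho(x)-\grad f_\rho(x)\|\lesssim\bigl(d'\delta_0\,\tfrac{\|x\|+\sup_j\|\mu_j\|}{\rho^2+1}+\sqrt{d'}E\bigr)f_\rho(x)$. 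I would pass to scores through the identity
\[
\frac{\grad h_\rho}{h_\rho}-\frac{\grad f_\rho}{f_\rho}=\frac{\grad h_\rho-\grad f_\rho}{h_\rho}+\frac{\grad f_\rho}{f_\rho}\cdot\frac{f_\rho-h_\rho}{h_\rho},
\]
using $h_\rho=(1\pm O(d'\delta_0))f_\rho$ to control both terms, which gives the pointwise bound $\|\grad\log h_\rho(x)-\grad\log f_\rho(x)\|\lesssim d'\delta_0\,\tfrac{\|x\|+\sup_j\|\mu_j\|}{\rho^2+1}+\sqrt{d'}E+d'\delta_0\,\|\grad\log f_\rho(x)\|$. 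Squaring, taking $\E_{x\sim h_\rho}$, substituting $\E_{x\sim h_\rho}[\|x\|^2]=m_2^2$, and bounding $\E_{x\sim h_\rho}[\|\grad\log f_\rho\|^2]\le 2\E_{x\sim f_\rho}[\|\grad\log f_\rho\|^2]\le 2(d+d')/(\rho^2+1)\lesssim d$ (via $h_\rho\le 2f_\rho$, and, for the mixture of $(\rho^2+1)I$-covariance Gaussians, bounding its score by the posterior-weighted average of the component scores and applying Jensen) leaves $\lesssim(d'\delta_0)^2\tfrac{m_2^2+\sup_j\|\mu_j\|^2}{(\rho^2+1)^2}+(d'\delta_0)^2(d+d')+d'E^2$, and absorbing the polynomial prefactors with $\delta_0<d^{-C/2}$ and $E$ superpolynomially small yields exactly the claimed bound.

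The main obstacle is the bookkeeping: the univariate estimate produces two kinds of error ($\delta_0 W$ and $E\,W$, the latter coming from the derivative term), and these must be tracked separately — the product over $d'$ coordinates turns $\delta_0$ into $d'\delta_0$ multiplicatively but only adds up the $E$ terms, while the mixture, through the posterior-average form of the mixture score, is what introduces the $\|x-\mu_j\|$ factors responsible for the $m_2^2$ and $\sup_j\|\mu_j\|^2$ terms in the final bound. The only step that is not pure algebra is the uniform sandwich $\tfrac12 f_\rho\le h_\rho\le 2f_\rho$, which lets us replace the awkward expectation under $h_\rho$ by one under $f_\rho$ whenever a clean Gaussian-moment computation is needed, in particular for $\E[\|\grad\log f_\rho\|^2]$.
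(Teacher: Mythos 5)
Your proposal is correct and follows essentially the same route as the paper: the key input is the univariate Fourier estimate of Lemma~\ref{lem:smoothed_discrete_univariate}, which both you and the paper propagate through the product structure, the mixture, and the division by the density, using the multiplicative sandwich $h_\rho = (1 \pm O(\poly(d)\,\delta_0)) f_\rho$ to keep denominators under control and to transfer expectations between $h_\rho$ and $f_\rho$. The one genuine organizational difference is the order in which you handle product and mixture: you first bound the per-component score difference $\grad\log g^j_\rho - \grad\log W^{\otimes}$ (which is additive over coordinates, so the $d'$ factor enters as $\sqrt{d'}E$), then re-multiply by the density to get $\grad g^j_\rho - \grad W^{\otimes}$ and sum over $j$; the paper instead bounds the $i$-th coordinate of $\grad h_\rho - \grad f_\rho$ directly from the per-coordinate derivative estimate, divides, and sums over $i$ at the end. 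Both are fine. A small bookkeeping point worth flagging: the two routes produce differently-shaped intermediate expressions in the second additive term of the claim — you end up with $d'E^2$ while the paper's coordinate-wise argument produces $d^2 \sum_{j>0} e^{-j^2\rho^2/(\eps^2(1+\rho^2))}\tfrac{j}{\eps(1+\rho^2)}$ (the extra $\tfrac12$ in the exponent is the room used to absorb the $d^2$). Either way, absorbing the polynomial prefactor into the exponential and concluding with the stated $\sum_{j>0} e^{-j^2\rho^2/(2\eps^2(1+\rho^2))}\tfrac{j}{\eps(1+\rho^2)}$ term uses that these error quantities are superpolynomially small in $d$ — something you note under your ``standing convention that $\eps,\rho,1/\rho=\poly(d)$,'' which matches the regime in which the lemma is actually invoked. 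Your auxiliary bound $\E_{x\sim f_\rho}[\|\grad\log f_\rho\|^2] \le (d+d')/(\rho^2+1)$ via the posterior-average/Jensen argument is correct, and parallels what the paper does implicitly through $\E[(\grad f_\rho/f_\rho)_i^2]$.
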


\begin{proof}
    We have that
    \begin{align*}
        h_\rho(x) = \sum_{i=1}^k \beta_j g^j_\rho (x - \mu_j)
    \end{align*}
    where $g_\rho^j(x) = g^j(x) * w_{\rho}(x)$. 
    By Lemma~\ref{lem:smoothed_discrete_univariate}, we have that for every $i, j$,
    \begin{align*}
        &\left|\left(\grad g_\rho^j(x) \right)_i - \left(\grad w_{\sqrt{\rho^2+1}}(x) \right)_i \right|\\
        &\lesssim d e^{-\frac{\rho^2}{2 \eps^2(1+\rho^2)}} \left|\left(\grad w_{\sqrt{\rho^2+1}}(x) \right)_i \right| + d  \cdot \sum_{j > 0} e^{-\frac{j^2 \rho^2}{2 \eps^2(1+\rho^2)}} \cdot \frac{j}{\eps(1+\rho^2)} \cdot w_{\sqrt{\rho^2+1}}(x)
    \end{align*}
    So,
    \begin{align*}
        &\left|\left(\grad h_\rho(x) \right)_i - \left(\grad f_\rho(x) \right)_i \right|\\
        &= \left|\sum_{j=1}^k \beta_j \cdot \left(\grad g_\rho^j(x-\mu_j)  - \grad w_{\sqrt{\rho^2+1}}(x-\mu_j) \right)_i \right|\\
        &\lesssim d  \sum_{j=1}^k \beta_j \cdot \left(e^{-\frac{ \rho^2}{2 \eps^2(1+\rho^2)}}\left|\grad \left(w_{\sqrt{\rho^2+1}}(x - \mu_j) \right)_i \right| + \sum_{j > 0} e^{-\frac{j^2\rho^2}{2\eps^2(1+\rho^2)}} \frac{j}{\eps(1+\rho^2)} \cdot w_{\sqrt{\rho^2+1}}(x)\right)\\
    \end{align*}
Similarly, for the density, by Lemma~\ref{lem:smoothed_discrete_univariate}
\begin{align*}
    \left| g_\rho^j(x)  - w_{\sqrt{\rho^2 + 1}}(x)  \right| \lesssim d e^{-\frac{ \rho^2}{2 \eps^2(1+\rho^2)}} w_{\sqrt{\rho^2+1}}(x)
\end{align*}
So,
\begin{align*}
    \left|h_\rho(x) - f_\rho(x) \right| &= \left|\sum_{j=1}^k \beta_j \cdot \left(g_\rho^j(x-\mu_j) - w_{\sqrt{\rho^2+1}}(x-\mu_j) \right) \right|\\
    &\lesssim d e^{-\frac{\rho^2}{2 \eps^2(1+\rho^2)}} \sum_{j=1}^k \beta_j \cdot w_{\sqrt{\rho^2+1}}(x - \mu_j)\\
    &\lesssim d e^{-\frac{\rho^2}{2 \eps^2(1+\rho^2)}} f_\rho(x)
\end{align*}
    Thus, we have
    \begin{align*}
        &\E_{x \sim h_\rho}\left[\left(\frac{(\grad h_\rho(x))_i}{h_\rho(x)} - \frac{(\grad f_\rho(x))_i}{f_\rho(x)} \right)^2\right]\\
        &\le \E_{x \sim h_\rho}\left[\left(\frac{\grad h_\rho(x))_i }{f_\rho(x) \cdot \left(1 + O\left(de^{- \frac{ \rho^2}{2 \eps^2(1+\rho^2)}} \right) \right)} - \frac{\left(\grad f_\rho(x) \right)_i}{f_\rho(x)} \right)^2 \right]\\
        &\lesssim d e^{-\frac{\rho^2}{\eps^2(1+\rho^2)}} \cdot \E\left[\left(\frac{\left(\grad f_\rho(x)\ \right)_i}{f_\rho(x)}\right)^2 \right]\\ 
        &+d\cdot \E\left[\left(\frac{\sum_{j=1}^k \beta_j \cdot \left( e^{-\frac{\rho^2}{2\eps^2(1+\rho^2)}}\left|\grad w_{\sqrt{\rho^2+1}}(x-\mu_j)\right|_i + \sum_{j > 0} e^{-\frac{j^2 \rho^2}{2 \eps^2(1+\rho^2)}} \frac{j}{\eps(1+\rho^2)} \cdot w_{\sqrt{\rho^2+1}}(x) \right)}{f_\rho(x) } \right)^2 \right]\\
        &\lesssim \frac{d}{\rho^2} e^{-\frac{\rho^2}{\eps^2(1+\rho^2)}} + d \sum_{j > 0} e^{-\frac{j^2 \rho^2}{ \eps^2 (\rho^2+1)}}\frac{j}{\eps(1+\rho^2)}  + d e^{-\frac{\rho^2}{\eps^2(1+\rho^2)}} \cdot \E\left[\left(\frac{\sum_{j=1}^k \beta_j \cdot |x - \mu_j|_i \cdot w_{\sqrt{\rho^2+1}}(x-\mu_j)}{f_\rho(x)} \right)^2 \right]\\
        &\lesssim d e^{-\frac{\rho^2}{\eps^2(1+\rho^2)}}+d  \sum_{j > 0} e^{-\frac{j^2 \rho^2}{\eps^2(1+\rho^2)}} \frac{j}{\eps(1+\rho^2)}  + d e^{-\frac{\rho^2}{\eps^2(1+\rho^2)}} \cdot \E\left[\sup_j |x - \mu_j|_i^2 \right]\\
        &\lesssim d e^{-\frac{\rho^2}{\eps^2(1+\rho^2)}}\left( 1 + \E\left[x_i^2 \right] + \sup_j |\mu_j|_i^2 \right) + d \sum_{j>0} e^{-\frac{j^2 \rho^2}{\eps^2(1+\rho^2)}} \frac{j}{\eps(1+\rho^2)}  
    \end{align*}
    Thus, we have
    \begin{align*}
        \E_{x \sim h_\rho}\left[\left\|\frac{\grad h_\rho(x)}{h_\rho(x)} - \frac{\grad f_\rho(x)}{f_\rho(x)} \right\|^2 \right] &\lesssim d^2 e^{-\frac{ \rho^2}{\eps^2(1+\rho^2)}} \cdot \left(1  + \E\left[\|x\|^2 \right] + \sup_j \|\mu_j\|^2\right)+ d^2 \sum_{j > 0} e^{-\frac{j^2 \rho^2}{2\eps^2(1+\rho^2)}}\frac{j}{\eps(1+\rho^2)}\\
        &\lesssim e^{-\frac{ \rho^2}{2\eps^2(1+\rho^2)}} \cdot \left(1  + m_2^2 + \sup_j \|\mu_j\|^2\right)+  \sum_{j > 0} e^{-\frac{j^2 \rho^2}{2 \eps^2(1+\rho^2)}}\frac{j}{\eps(1+\rho^2)}
    \end{align*}
    since $\frac{\rho^2}{\eps^2(1+\rho^2)} > C \log d$
\end{proof}
\begin{corollary}
\label{cor:smoothing_unconditional_gives_mixture_of_gaussians}
    Let $d' = O(d)$, and let $g$ be as defined in Definition~\ref{def:unconditional_distribution}, and let $g_\rho = g * \mathcal N(0, \rho^2 I_{d+d'})$ be the $\rho$-smoothed version of $g$. Let $f_\rho$ be the mixture of $(d+d')$-dimensional standard Gaussians, given by
    \begin{align*}
        f_\rho(y) = \frac{1}{2^d} \sum_{x \in \{\pm 1\}^d} w_{\sqrt{\rho^2+1}}(y - R \cdot \wt x)
    \end{align*}
    where $\wt x \in \R^{d+d'}$ has the first $d$ coordinates given by $x$, and the last $d'$ coordinates $0$.
    Then, for $\frac{\rho^2}{\eps^2(1+\rho^2)} > C \log d$ for sufficiently large constant $C$, we have
    \begin{align*}
        \E_{x \sim g_\rho}\left[\left\| \grad \log g_\rho(x) - \grad \log f_\rho(x) \right\|^2 \right] \lesssim e^{-\frac{\rho^2}{2 \eps^2(1+\rho^2)}} \cdot \left(1  + R^2 + \rho^2 \right) + \sum_{j > 0} e^{-\frac{j^2 \rho^2}{2\eps^2(1+\rho^2)}} \frac{j}{\eps(1+\rho^2)}
    \end{align*}
\end{corollary}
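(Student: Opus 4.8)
The plan is to derive this directly from the preceding lemma by recognizing the unconditional distribution $g$ of Definition~\ref{def:unconditional_distribution} as a special case of the mixture $h$ treated there, and then controlling the two moment quantities that appear in that lemma's bound.

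First I would set up the identification. For $s \in \{\pm 1\}^d$ let $\wt s \in \R^{d+d'}$ denote the vector whose first $d$ coordinates are $s$ and whose last $d'$ coordinates are $0$, and put $\mu_s = R\,\wt s$. Each component $g_s$ is a product distribution with $i$-th marginal $w_1(\,\cdot\, - Rs_i)$ for $i \le d$ and $\psi_{f(s)_{i-d}}$ for $i > d$; since $\psi_b(x) \propto w_1(x)\cdot \comb_\eps(x - \eps/2\cdot\tfrac{1-b}{2})$ has shift $0$ when $b=1$ and shift $\eps/2$ when $b=-1$, we have $g_s(x) = g^s(x-\mu_s)$ where $g^s(x) \propto \prod_{i=1}^{d} w_1(x_i)\cdot\prod_{i=d+1}^{d+d'} w_1(x_i)\comb_\eps\bigl(x_i - \phi_{i,s}\bigr)$ is normalized, with each $\phi_{i,s}\in\{0,\eps/2\}$ determined by $f(s)_{i-d}$; this is exactly the form required of $g^j$ in the lemma. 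Hence $g = 2^{-d}\sum_{s}g_s$ is the mixture $h$ with $k = 2^d$ components, weights $\beta_s = 2^{-d}$, and means $\mu_s$, and the mixture of standard Gaussians $y\mapsto\sum_s \beta_s\, w_{\sqrt{\rho^2+1}}(y - \mu_s)$ produced by the lemma is precisely the $f_\rho$ in the statement. Applying the lemma yields
\begin{align*}
\E_{x\sim g_\rho}\!\left[\bigl\|\grad\log g_\rho(x) - \grad\log f_\rho(x)\bigr\|^2\right]
&\lesssim e^{-\frac{\rho^2}{2\eps^2(1+\rho^2)}}\bigl(1 + m_2^2 + \sup_s\|\mu_s\|^2\bigr)\\
&\quad + \sum_{j>0} e^{-\frac{j^2\rho^2}{2\eps^2(1+\rho^2)}}\,\frac{j}{\eps(1+\rho^2)},
\end{align*}
with $m_2^2 = \E_{x\sim g_\rho}[\|x\|^2]$.

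It then remains to bound these moments. Clearly $\sup_s\|\mu_s\|^2 = R^2 d$. For $m_2^2$, since $g_\rho = g * \mathcal N(0,\rho^2 I_{d+d'})$ we have $m_2^2 = \E_{x\sim g}[\|x\|^2] + (d+d')\rho^2$, and $\E_{x\sim g}[\|x\|^2] = \sum_{i\le d}(R^2+1) + \sum_{i>d}\E_{\psi}[x_i^2]$. The hypothesis $\frac{\rho^2}{\eps^2(1+\rho^2)} > C\log d$ forces $\eps < \tfrac{1}{\sqrt{C\log d}} < 1$, so each $\psi_b$ is a standard Gaussian discretized at spacing below $1$, whose second moment is $1 + O(\eps^{-2}e^{-2\pi^2/\eps^2}) = O(1)$ by a short Poisson-summation estimate. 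Using $d' = O(d)$ this gives $m_2^2 = O\!\bigl(d(1+R^2+\rho^2)\bigr)$, hence $1 + m_2^2 + \sup_s\|\mu_s\|^2 = O\!\bigl(d(1+R^2+\rho^2)\bigr)$. Finally, since $\frac{\rho^2}{\eps^2(1+\rho^2)} > C\log d$ with $C$ sufficiently large, the polynomial-in-$d$ factor (together with the polynomial losses already absorbed inside the lemma) is dominated by the exponential prefactor, leaving the claimed bound.

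This corollary is essentially bookkeeping on top of the lemma, so there is no deep obstacle; the only point needing a little care is the moment estimate — verifying that $\eps < 1$ under the hypothesis so that the discretized marginals $\psi_b$ genuinely have $O(1)$ second moment, and checking that both $\sup_s\|\mu_s\|^2$ and $\E_{g_\rho}[\|x\|^2]$ are only $\poly(d)\cdot(1+R^2+\rho^2)$ so that the $d$-dependence can be swallowed by the exponential decay coming from $\frac{\rho^2}{\eps^2(1+\rho^2)} > C\log d$.
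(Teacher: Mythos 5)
Your proposal is correct and matches the paper's proof exactly: the paper's argument is precisely to view $g$ as the mixture $h$ of the preceding lemma with $k=2^d$, weights $\beta_s = 2^{-d}$, and means $\mu_s = R\wt s$, then plug in $m_2^2 \lesssim d(R^2+\rho^2)$ and $\sup_s \|\mu_s\|^2 \lesssim dR^2$. You are in fact slightly more explicit than the paper on how the residual $\poly(d)$ factor is to be absorbed using $\frac{\rho^2}{\eps^2(1+\rho^2)} > C\log d$ and on the $O(1)$ second-moment estimate for the discretized Gaussian marginals (the paper simply states the two moment bounds and declares the corollary to follow).
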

\begin{proof}
    Follows from the facts that $m_2^2 \lesssim d \left(R^2 + \rho^2\right)$ and $\mu_j^2 \lesssim d R^2$ for all $j$.
\end{proof}


\subsection{ReLU network for Score at Large smoothing Level}
This section shows how to represent the score of the $\sigma$-smoothed unconditional distribution defined in Definition~\ref{def:unconditional_distribution} for large $\sigma$ using a ReLU network with a polynomial number of parameters bounded by a polynomial in the relevant quantities. We proceed in two stages -- first, we show how to represent the score of a mixture of Gaussians placed on the vertices of a scaled hypercube. Then, we show that for large $\sigma$, this network is close to the score of the $\sigma$-smoothed unconditional distribution.

\begin{lemma}[ReLU network representing score of mixture of Gaussians on hypercube]
    \label{lem:gaussian_mixture_relu}
     For any $\sigma > 0$ and $R > 1$ consider the distribution on $\R^d$ with pdf
     \begin{align*}
         f_\sigma(x) = \frac{1}{2^d} \sum_{\mu \in \{\pm 1\}^d} w_\sigma(x - R \mu)
     \end{align*}
     where $w_\sigma$ is the pdf of $\mathcal N(0, \sigma^2 I_d)$.
     
    There is a constant depth ReLU network $h: \R^d \to \R^d$ with $O\left(\frac{d R}{\gamma^3 \sigma^4} \right)$ parameters, with absolute values bounded by $O\left(\frac{d R}{\sigma^3 \gamma^2} \right)$ such that
    \begin{align*}
        \E_{x \sim f_\sigma}\left[\|\grad \log f_\sigma(x) - h(x)\|^2 \right] \lesssim \gamma^2
    \end{align*}
\end{lemma}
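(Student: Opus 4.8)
The plan is to exploit the fact that $f_\sigma$ is, despite appearances, a \emph{product} distribution, so that the product-score machinery already developed applies almost verbatim. Since the density $w_\sigma$ of $\mathcal N(0,\sigma^2 I_d)$ factors over coordinates and the uniform measure on $\{\pm 1\}^d$ is itself a product measure, we have
\[
  f_\sigma(x) \;=\; \frac{1}{2^d}\sum_{\mu \in \{\pm 1\}^d} w_\sigma(x - R\mu) \;=\; \prod_{i=1}^{d} q(x_i),
\]
where $q$ is the density of the equal mixture $\tfrac12\mathcal N(R,\sigma^2) + \tfrac12\mathcal N(-R,\sigma^2)$. Equivalently, $f_\sigma = p_0 * \mathcal N(0,\sigma^2 I_d)$ where $p_0 = \mathrm{Unif}(\{\pm R\}^d)$ is a product distribution, so $\grad\log f_\sigma$ is precisely the $\sigma$-smoothed score of a product distribution and Lemma~\ref{lem:product_score_relu} applies directly with $p = p_0$.

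First I would record the two moments that Lemma~\ref{lem:product_score_relu} depends on. By symmetry each coordinate marginal $q$ has mean $0$, so $f_\sigma$ has mean $\mu = 0$ and in particular $\norm{\mu}_1 = 0$; and $q$ has variance $R^2 + \sigma^2$, so $m_2^2 := \E_{x\sim f_\sigma}[\norm{x}_2^2] = d(R^2+\sigma^2)$. Feeding $p = p_0$ into Lemma~\ref{lem:product_score_relu} then produces a constant-depth ReLU network $h:\R^d\to\R^d$ with $\E_{x\sim f_\sigma}[\norm{\grad\log f_\sigma(x) - h(x)}^2] \lesssim \gamma^2$, and substituting $\norm{\mu}_1 = 0$ and $m_2 = \Theta(\sqrt d\,R)$ (more generally $\Theta(\sqrt{d(R^2+\sigma^2)})$, still polynomial) into its parameter and weight bounds yields the $O(dR/(\gamma^3\sigma^4))$ parameter count and $O(dR/(\sigma^3\gamma^2))$ weight bound in the statement. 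Unpacking Lemma~\ref{lem:product_score_relu} itself, this amounts to running the one-dimensional construction of Lemma~\ref{lem:one_dimensional_score_relu} on the single distribution $q$ to accuracy $\gamma^2/d$, concatenating the $d$ resulting networks via Lemma~\ref{lem:prod_relu_param}, and combining their per-coordinate error guarantees through Lemma~\ref{lem:prod_approx}.

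There is essentially no obstacle once the product factorization is spotted --- the statement is a direct specialization of results already proved. The only steps requiring a line of care are (i) verifying the coordinatewise decoupling of the hypercube mixture, which holds because both $w_\sigma$ and the uniform measure on $\{\pm1\}^d$ are product measures, and (ii) the bookkeeping of $\mu$ and $m_2$ so that the polynomial bounds come out in the stated form; here the useful simplifications are that $\norm{\mu}_1 = 0$ removes the additive $|\mu|$ term in Lemma~\ref{lem:product_score_relu} and that $m_2 = \Theta(\sqrt d\,R)$ collapses the bounds to the claimed $\poly(d, R, 1/\sigma, 1/\gamma)$ form used downstream.
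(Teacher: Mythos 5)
Your proposal is correct and takes essentially the same approach as the paper: the paper's entire proof is the two-sentence observation that $f_\sigma$ is a product distribution to which Lemma~\ref{lem:product_score_relu} applies, which is exactly your argument, with you filling in the coordinatewise factorization and moment bookkeeping that the paper leaves implicit.
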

\begin{proof}
    Note that $g_\sigma$ is a product distribution. So, the claim follows by Lemma~\ref{lem:product_score_relu}.
\end{proof}
\begin{lemma}
    \label{lem:relu_large_smoothing}
    Let $d' = O(d)$, and let $R \le \poly(d)$. Let $g$ be the pdf of the unconditional distribution on $\R^{d+d'}$, as defined in Definition~\ref{def:unconditional_distribution}, and let $g_\sigma$ be its $\sigma$-smoothed version with score $s_\sigma$. For $\eps < \frac{1}{C \sqrt{\log d}}$,  and $\sigma > C \eps \left(\sqrt{\log d} + \sqrt{\log \frac{1}{\eps}} \right)$ for sufficiently large constant $C$, there is a constant depth ReLU network $h$ with $O\left( \frac{d R}{\gamma^3 \sigma^4} \right)$ parameters with absolute values bounded by $O\left(\frac{d R}{\sigma^3 \gamma^2} \right)$ such that
    \begin{align*}
        \E_{x \sim g_\sigma}\left[\|s_\sigma(x) - h(x) \|^2\right] \lesssim \gamma^2 + \frac{1}{d^{C^2/20}}
    \end{align*}
\end{lemma}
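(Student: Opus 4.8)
The plan is to dispatch the large-smoothing regime by composing two facts already established for it. The first is Corollary~\ref{cor:smoothing_unconditional_gives_mixture_of_gaussians}, which says that when $\sigma$ is large the smoothed score $s_\sigma$ of $g$ agrees in $L^2(g_\sigma)$, up to exponentially small error, with the score of the plain Gaussian mixture $f_\sigma(y) = \frac{1}{2^d}\sum_{x \in \{\pm 1\}^d} w_{\sqrt{\sigma^2+1}}(y - R\wt x)$. The second is Lemma~\ref{lem:gaussian_mixture_relu}, which represents that score by a small constant-depth ReLU network. A triangle inequality in $L^2(g_\sigma)$ then gives the statement, once we also transfer the ReLU approximation error from the measure $f_\sigma$ to $g_\sigma$.

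\textbf{Step 1 (the smoothing hypothesis feeds the corollary).} I would first check that $\eps < \frac{1}{C\sqrt{\log d}}$ together with $\sigma > C\eps\bigl(\sqrt{\log d} + \sqrt{\log(1/\eps)}\bigr)$ forces $\frac{\sigma^2}{\eps^2(1+\sigma^2)} \ge \frac{C^2}{2}\log d$: if $\sigma \ge 1$ then $\frac{\sigma^2}{1+\sigma^2} \ge \frac12$ while $\frac{1}{\eps^2} > C^2 \log d$; if $\sigma < 1$ then $1+\sigma^2 < 2$ while $\sigma^2 > C^2\eps^2\log d$. Thus the hypothesis of Corollary~\ref{cor:smoothing_unconditional_gives_mixture_of_gaussians} holds (recall $d' = O(d)$), and moreover $e^{-\sigma^2/(2\eps^2(1+\sigma^2))} < d^{-C^2/4}$. \textbf{Step 2 (score close to mixture score).} Corollary~\ref{cor:smoothing_unconditional_gives_mixture_of_gaussians} then yields
\[
\E_{x\sim g_\sigma}\bigl[\|s_\sigma(x) - \grad\log f_\sigma(x)\|^2\bigr] \lesssim e^{-\frac{\sigma^2}{2\eps^2(1+\sigma^2)}}\bigl(1 + R^2 + \sigma^2\bigr) + \sum_{j>0} e^{-\frac{j^2\sigma^2}{2\eps^2(1+\sigma^2)}}\frac{j}{\eps(1+\sigma^2)}.
\]
Since $R, \sigma \le \poly(d)$ and $\eps > 1/\poly(d)$, and the geometric sum is within a constant factor of its $j=1$ term, the right-hand side is at most $\poly(d)\cdot d^{-C^2/4} \le d^{-C^2/20}$ for $C$ a large enough absolute constant.

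\textbf{Step 3 (ReLU net for $\grad\log f_\sigma$).} Next I observe that $f_\sigma$ is a product distribution on $\R^{d+d'}$: its first $d$ coordinates are independent symmetric two-component mixtures of $\mathcal N(\pm R, 1+\sigma^2)$ and its last $d'$ coordinates are independent $\mathcal N(0, 1+\sigma^2)$; it has mean $0$ and second moment $m_2^2 = dR^2 + (d+d')(1+\sigma^2) = \poly(d)$. Applying Lemma~\ref{lem:gaussian_mixture_relu} with smoothing level $\sqrt{1+\sigma^2}$ (extended, exactly as in that lemma's proof via Lemma~\ref{lem:product_score_relu}, to the $\R^{d+d'}$ product form with the extra $d'$ coordinates centered at the origin) produces a constant-depth ReLU net $h : \R^{d+d'} \to \R^{d+d'}$ with $O\bigl(\frac{dR}{\gamma^3(1+\sigma^2)^2}\bigr) = O\bigl(\frac{dR}{\gamma^3\sigma^4}\bigr)$ parameters, each of absolute value $O\bigl(\frac{dR}{(1+\sigma^2)^{3/2}\gamma^2}\bigr) = O\bigl(\frac{dR}{\sigma^3\gamma^2}\bigr)$, satisfying $\E_{x\sim f_\sigma}\bigl[\|\grad\log f_\sigma(x) - h(x)\|^2\bigr] \lesssim \gamma^2$; here I use $\sqrt{1+\sigma^2} \gtrsim \sigma$ and $d+d' = O(d)$ to match the advertised bounds. \textbf{Step 4 (transfer to $g_\sigma$ and conclude).} To move this from $f_\sigma$ to $g_\sigma$, I would reuse the pointwise density comparison from inside the proof of Corollary~\ref{cor:smoothing_unconditional_gives_mixture_of_gaussians}: applying Lemma~\ref{lem:smoothed_discrete_univariate} coordinate-wise to the last $d'$ coordinates (the first $d$ coordinates of $g_\sigma$ and $f_\sigma$ coincide) gives $g_\sigma(x) \le \bigl(1 + O(\poly(d)\,d^{-C^2/4})\bigr) f_\sigma(x) \le 2 f_\sigma(x)$ for all $x$, so $\E_{x\sim g_\sigma}[\|\grad\log f_\sigma - h\|^2] \le 2\,\E_{x\sim f_\sigma}[\|\grad\log f_\sigma - h\|^2] \lesssim \gamma^2$. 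Combining with Step 2 via the triangle inequality in $L^2(g_\sigma)$ gives $\E_{x\sim g_\sigma}[\|s_\sigma(x) - h(x)\|^2] \lesssim \gamma^2 + d^{-C^2/20}$, as claimed.

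The main obstacle — really the only nontrivial work — is the bookkeeping in Steps 2 and 4: checking that every $\poly(d)$ prefactor (the $1+R^2+\sigma^2$ and $1/\eps$ from the corollary, and the $g_\sigma/f_\sigma$ density ratio) is swallowed by the $d^{-C^2/4}$ decay that the hypothesis $\sigma > C\eps(\sqrt{\log d}+\sqrt{\log(1/\eps)})$ guarantees, so that the residual really is of size $\gamma^2 + d^{-C^2/20}$. Everything else is a direct composition of Corollary~\ref{cor:smoothing_unconditional_gives_mixture_of_gaussians} and Lemma~\ref{lem:gaussian_mixture_relu}.
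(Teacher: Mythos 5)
Your proof takes essentially the same approach as the paper's: apply Corollary~\ref{cor:smoothing_unconditional_gives_mixture_of_gaussians} to replace $s_\sigma$ by $\grad\log f_\sigma$ up to $d^{-\Omega(C^2)}$ error, invoke Lemma~\ref{lem:gaussian_mixture_relu} (via Lemma~\ref{lem:product_score_relu}) to represent $\grad\log f_\sigma$ by a small ReLU network, and transfer the resulting $L^2(f_\sigma)$ error to $L^2(g_\sigma)$ via the pointwise density comparison furnished by Lemma~\ref{lem:smoothed_discrete_univariate}. Your Step~3 is in fact a bit more careful than the paper's one-line invocation ``for smoothing $\sigma$'': you make explicit that the target mixture lives on $\R^{d+d'}$ with per-component variance $1+\sigma^2$, and the substitution $\sigma\mapsto\sqrt{1+\sigma^2}$ together with $\sqrt{1+\sigma^2}\ge\sigma$ is exactly what makes the stated parameter bounds go through.
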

\begin{proof}
    Let $h$ be the ReLU network from Lemma~\ref{lem:gaussian_mixture_relu} for smoothing $\sigma$. It satisfies our bounds on the number of parameters and the absolute values. 

    Note that for our setting of $\eps$ and $\sigma$, we have that
    \begin{align*}
        \frac{\sigma^2}{\eps^2 (1 + \sigma^2)} &= \frac{1}{\eps^2\left(1 + \frac{1}{\sigma^2} \right)} > \frac{1}{\eps^2 + \frac{1}{C^2 \log d}} > \frac{1}{\frac{2}{C^2 \log d}} > \frac{C^2 \log d}{2}
    \end{align*}
    and
    \begin{align*}
        \frac{\sigma^2}{\eps^2(1+\sigma^2)} > \frac{C^2 (\log d + \log \frac{1}{\eps})}{2} > \log \frac{1}{\eps(1+\sigma^2)}
    \end{align*}
    
    So, by Lemma~\ref{cor:smoothing_unconditional_gives_mixture_of_gaussians}, for the mixture of Gaussians $f_\sigma$ as described in Lemma~\ref{lem:gaussian_mixture_relu}, for $R \le \poly(d)$,
    \begin{align*}
        \E_{x \sim g_\sigma}\left[\|s_\sigma(x) - \grad \log f_\sigma(x) \|^2 \right] \lesssim e^{-\frac{\sigma^2}{10 \eps^2(1+\sigma^2)}} \lesssim \frac{1}{d^{C^2/20}}
    \end{align*}
    Also, by Lemma~\ref{lem:smoothed_discrete_univariate},
    \begin{align*}
        |g_\sigma(x) - f_\sigma(x)| \lesssim \frac{f_\sigma(x) }{d^{C^2/20}}
    \end{align*}
    So, by Lemma~\ref{lem:gaussian_mixture_relu},
    \begin{align*}
        \E_{x \sim g_\sigma}\left[\|\grad \log f_\sigma(x) - h(x)\|^2 \right] \lesssim \E_{x \sim f_\sigma}\left[\|\grad \log f_\sigma(x) - h(x)\|^2 \right] \lesssim \gamma^2
    \end{align*}
    So we have
    \begin{align*}
        \E_{x \sim g_\sigma}\left[\|s_\sigma(x) - h(x)\|^2 \right] \lesssim \gamma^2 + \frac{1}{d^{C^2/20}}
    \end{align*}
\end{proof}

\subsection{ReLU Network Approximating score of Unconditional Distribution}
\begin{restatable}[ReLU Score Approximation for Lower bound Distribution]{theorem}{relu_approx_final}
    \label{thm:unconditional_relu}
    Let $C$ be a sufficiently large constant, and let $d' = O(d)$. Fix any $\sigma \ge \tau$ for $\tau = \frac{1}{d^C}$. Given a constant-depth ReLU network representing a function $f : \{-1, 1\}^{d} \to \{-1, 1\}^{d'}$ with $\poly(d)$ parameters, there is a constant-depth ReLU network $h : \R^{d+d'} \to \R^{d+d'}$ with $\poly\left(d\right)$ parameters with each parameter bounded in absolute value by $\poly\left(d\right)$ such that for the unconditional distribution $g$ defined in Definition~\ref{def:unconditional_distribution} with $\sigma$-smoothed version $g_\sigma$ and corresponding score $s_\sigma$, for $R > C \log d$, $\frac{1}{\poly(d)} < \eps < \frac{1}{C \sqrt{\log d}}$,
    \begin{align*}
        \E_{x \sim g_\sigma} \left[\|s_\sigma(x) - h(x)\|^2 \right] \lesssim \frac{1}{d^{C/200}}
    \end{align*}     
\end{restatable}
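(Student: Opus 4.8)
The plan is to split on the smoothing level $\sigma$ and glue together the two constructions already established: the small‑smoothing network of Lemma~\ref{lem:relu_small_smoothing} and the large‑smoothing network of Lemma~\ref{lem:relu_large_smoothing}. Concretely, I would fix the target accuracy parameter $\gamma := d^{-C/200}$ and split at the threshold $\sigma_0 := \sqrt{\log d}$, handling $\tau \le \sigma \le \sigma_0$ with the first lemma and $\sigma \ge \sigma_0$ with the second, then read off that the two validity windows overlap so that every $\sigma \ge \tau$ is covered.

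In the small regime $\tau \le \sigma \le \sigma_0$, I first check the hypotheses of Lemma~\ref{lem:relu_small_smoothing}: since $R > C\log d$ we have $\frac{R}{C\sqrt{\log d}} > \sqrt{\log d} = \sigma_0 \ge \sigma$, so $\tau \le \sigma < \frac{R}{C\sqrt{\log d}}$ holds, and the remaining requirements ($R > C\log d$, $\eps > \frac{1}{\poly(d)}$, and $\gamma = d^{-C/200} > d^{-C/100}$) are given or immediate. The lemma then produces a constant‑depth ReLU network $h$ with $\poly\!\big(\tfrac{d}{\sigma\gamma}\big)$ parameters of magnitude $\poly\!\big(\tfrac{d}{\sigma\gamma}\big)$; since $\sigma \ge \tau = d^{-C}$ and $\gamma = d^{-C/200}$ this is $\poly(d)$, and $\E_{x\sim g_\sigma}[\|s_\sigma(x)-h(x)\|^2] \lesssim \gamma^2 = d^{-C/100} \le d^{-C/200}$.

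In the large regime $\sigma \ge \sigma_0$, the only thing to verify is that $\sigma_0$ clears the cutoff $C\eps\big(\sqrt{\log d}+\sqrt{\log(1/\eps)}\big)$ required by Lemma~\ref{lem:relu_large_smoothing}. From $\eps < \frac{1}{C\sqrt{\log d}}$ we get $C\eps\sqrt{\log d} < 1$, and from $\eps > \frac{1}{\poly(d)}$ we get $\log\frac1\eps = O(\log d)$, hence $C\eps\sqrt{\log\frac1\eps} = O(1)$; so the cutoff is $O(1) < \sqrt{\log d} = \sigma_0$ once $d$ is large. Lemma~\ref{lem:relu_large_smoothing} then applies (using that $R \le \poly(d)$, as in our setting $R = \Theta(\log d)$), giving a constant‑depth ReLU network with $O\!\big(\tfrac{dR}{\gamma^3\sigma^4}\big) = \poly(d)$ parameters bounded by $O\!\big(\tfrac{dR}{\sigma^3\gamma^2}\big) = \poly(d)$ and error $\E_{x\sim g_\sigma}[\|s_\sigma(x)-h(x)\|^2] \lesssim \gamma^2 + d^{-C^2/20} \lesssim d^{-C/200}$ for $C$ sufficiently large.

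Combining the two cases finishes the proof, since the window $[\tau,\frac{R}{C\sqrt{\log d}})$ of the first construction contains $[\tau,\sigma_0]$ and the window $[O(1),\infty)$ of the second contains $[\sigma_0,\infty)$. I expect the main — indeed essentially the only — obstacle in this final step to be this overlap bookkeeping: it is exactly what forces the hypotheses $R > C\log d$ (which pushes the small‑$\sigma$ cutoff above $\sqrt{\log d}$) and $\frac{1}{\poly(d)} < \eps < \frac{1}{C\sqrt{\log d}}$ (which pushes the large‑$\sigma$ cutoff down to $O(1)$), so that no band of smoothing levels is left uncovered. Were Lemmas~\ref{lem:relu_small_smoothing} and~\ref{lem:relu_large_smoothing} not already in hand, the genuinely hard work would instead be the small‑$\sigma$ estimate that the mixture score $s_\sigma$ agrees, up to $e^{-\Omega(R^2/(1+\sigma^2))}$ error in $L^2(g_\sigma)$, with the score of the single product component $g_s$ whose vertex $R\wt s$ occupies the orthant of $x_{1,\dots,d}$, together with the ReLU encoding of the orthant selection and the $f$‑dependent discretization switch.
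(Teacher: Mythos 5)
Your proposal is correct and matches the paper's proof, which is simply the one-line statement that the result ``Follows by Lemmas~\ref{lem:relu_small_smoothing}~and~\ref{lem:relu_large_smoothing}''; you have supplied exactly the bookkeeping that one-liner elides (choosing $\gamma = d^{-C/200}$, splitting at $\sigma_0 = \sqrt{\log d}$, and verifying that the two validity windows overlap). Your parenthetical observation that one implicitly needs $R \le \poly(d)$ for the large-$\sigma$ lemma is a real, if minor, imprecision in the theorem statement as written --- it is harmless here because the theorem is only invoked via Corollary~\ref{cor:g_is_well_modeled} with $R = C\log d$.
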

\begin{proof}
    Follows by Lemmas~\ref{lem:relu_small_smoothing}~and~\ref{lem:relu_large_smoothing}.
\end{proof}
\lowerboundwellmodeled*

\begin{proof}
    Follows via reparameterization from the Theorem, and rescaling.
\end{proof}

%
%
%
%
%
%
%

\section{Lower Bound -- Putting it all Together}
\lowerbound*
\begin{proof}
    First, by Lemma \ref{cor:one_way_relu}, there exists a ReLU network that represents a one-way function $f:\{\pm 1\}^m \to \{\pm 1\}^m$, with constant weights, polynomial size, and parameters bounded in magnitude by $\text{poly}(d)$.
        
    Therefore, by Corollary \ref{cor:g_is_well_modeled}, the distribution $\wt g$ over $\R^{d}$ is a $C$-well-modeled distribution, if we take $R = C \log d$, $\eps = \frac{1}{C\sqrt{\log d}}$. Further, if we take a linear measurement model with $\beta = \frac{1}{C^2\log^{2}(d)}$, then by Lemma \ref{lem:lower_bound_plugging_params_generic}, any $(1/10, 1/10)$-posterior sampler for this distribution takes at least $2^{\Omega(m)}$ time to run.
\end{proof}

\lowerboundfinegrained*
\begin{proof}
    First, by Lemma \ref{cor:one_way_relu}, there exists a ReLU network that represents a one-way function $f:\{\pm 1\}^m \to \{\pm 1\}^m$, with constant weights, polynomial size, and parameters bounded in magnitude by $\text{poly}(d)$.
        
    Therefore, by Corollary \ref{cor:g_is_well_modeled}, the distribution $\wt g$ over $\R^{d}$ is a $C$-well-modeled distribution, if we take $R = C \log d$, $\eps = \frac{1}{C\sqrt{\log d}}$. Further, if we take a linear measurement model with $\beta = \frac{1}{C^2\log^{2}(d)}$, then by Lemma \ref{lem:lower_bound_plugging_params}, any $(1/10, 1/10)$-posterior sampler for this distribution takes at least $2^{\Omega(m)}$ time to run.
\end{proof}

\section{Upper Bound}
\label{sec:upper_bound_proof}



\begin{lemma}
    \label{lem:rejection_sampling_success_probability}
    Let $q$ be a distribution over $\R^m$ such that $\E_{w \sim q}[{\|w\|_2^2}] = O(m)$.
    Let $w \sim q$ and $y = w + \beta \mathcal{N}(0, I_m)$. Then, there exists a constant $c > 0$ such that \[
        \Prb[y]{\Prb[w]{\norm{y - w} \le 10\gamma\sqrt{m+\log(1 / \delta)} \ \middle |\  y} \ge (c\gamma)^m \cdot \delta^{m / 2 + 1}} \ge 1 - \delta.
    \]
\end{lemma}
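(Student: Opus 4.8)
The plan is to peel the Gaussian noise off of $y$ and reduce the claim to a purely geometric statement about $q$: that for most draws $w_0\sim q$, the ball of radius $\Theta(\gamma\sqrt m)$ around $w_0$ already carries $q$-mass at least $\theta:=(c\gamma)^m\delta^{m/2+1}$ for a small constant $c$. This in turn should follow from the second-moment bound $\E_{w\sim q}[\norm w^2]=O(m)$, which keeps $q$ from being too spread out. Concretely, write $y=w_0+\eta_0$ with $w_0\sim q$ and $\eta_0\sim\beta\,\mathcal N(0,I_m)$, set $r:=10\gamma\sqrt{m+\log(1/\delta)}$, and for a fixed point $y$ let $S(y):=\Pr_{w\sim q}[\norm{y-w}\le r]$ (the quantity in the statement). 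Since $B(w_0,r/2)\subseteq B(y,r)$ whenever $\norm{\eta_0}\le r/2$, we have $q(B(w_0,r/2))\le S(y)$ in that case, so $\{S(y)<\theta\}\subseteq\{\norm{\eta_0}>r/2\}\cup\{q(B(w_0,r/2))<\theta\}$, and it suffices to bound each of these two events by $\delta/2$.

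The noise event is routine Gaussian concentration: in the regime $\gamma\gtrsim\beta$ relevant here, $r/(2\beta)\gtrsim\sqrt{m+\log(1/\delta)}$, so the $\chi^2$ tail bound gives $\Pr[\norm{\eta_0}>r/2]\le\delta/2$. The main step is $\Pr_{w_0\sim q}[q(B(w_0,r/2))<\theta]\le\delta/2$. First apply Markov to $\norm{w_0}^2$ to restrict to $w_0\in B(0,R_0)$ with $R_0=\Theta(\sqrt{m/\delta})$, losing at most $\delta/4$. On the remaining bad set $\mathrm{Bad}:=\{w\in B(0,R_0):q(B(w,r/2))<\theta\}$, take a maximal $(r/2)$-separated subset $w_1,\dots,w_N\in\mathrm{Bad}$; by maximality the balls $B(w_i,r/2)$ cover $\mathrm{Bad}$, while the disjoint half-size balls $B(w_i,r/4)$ all lie in $B(0,R_0+r/4)$, so a volume comparison gives $N\le(1+4R_0/r)^m$. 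The crucial calibration is that $R_0\asymp\sqrt{m/\delta}$ exceeds $r\asymp\gamma\sqrt m$ only by a factor $\asymp 1/(\gamma\sqrt\delta)$, so $N\le(C/(\gamma\sqrt\delta))^m$ for a constant $C$ depending only on the second-moment constant; this is exactly what produces the $\delta^{m/2}$ in the target. Since each $w_i\in\mathrm{Bad}$ has $q(B(w_i,r/2))<\theta$,
\[
q(\mathrm{Bad})\;\le\;\sum_{i=1}^N q(B(w_i,r/2))\;<\;N\theta\;\le\;\left(\frac{C}{\gamma\sqrt\delta}\right)^{m}(c\gamma)^m\,\delta^{m/2+1}\;=\;(Cc)^m\,\delta\;\le\;\frac{\delta}{4}
\]
once $c\le 1/(4C)$. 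Adding the three estimates yields $\Pr[S(y)<\theta]\le\delta/2+\delta/4+\delta/4=\delta$, which is the lemma with $c:=1/(4C)$.

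I expect the volume/packing bound in the main step to be the real obstacle — not the covering argument itself, but the calibration of the two scales. One must choose the query radius ($\approx r$) and the Markov cutoff $R_0$ so that the covering number comes out as $(O(1)/(\gamma\sqrt\delta))^m$ and not, say, with a spurious $m^{m/2}$ factor (which would be fatal once $m=\Theta(d)$) or the wrong power of $\delta$, and then track constants carefully so that a single $c$, independent of $m$, closes the argument. A secondary point is the $\gamma$ versus $\beta$ comparison needed for the noise tail: the statement is really for $\gamma\gtrsim\beta$, since for $\gamma\ll\beta$ and $q$ concentrated near a proper affine subspace one has $S(y)=0$ for typical $y$.
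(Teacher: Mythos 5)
Your proposal is correct and follows essentially the same route as the paper: both arguments combine (i) Markov/Chebyshev on the second moment to restrict $q$ to a ball of radius $\Theta(\sqrt{m/\delta})$, (ii) a covering/packing of that ball by $(O(1)/(\gamma\sqrt\delta))^m$ balls of radius $\Theta(\gamma\sqrt{m+\log(1/\delta)})$, (iii) a union bound showing the collection of low-mass cells carries total mass $\le O(\delta)$, and (iv) Gaussian concentration ($\chi^2$ tail) to show the noise $\eta_0$ keeps $y$ in the same cell as $w_0$. The paper packages (ii)--(iii) by defining the set $S'$ of heavy covering cells and the good set $S^+$ of observations $y$ landing near some cell of $S'$; you package it dually by centering the ball at $w_0$ and bounding $q(\mathrm{Bad})$ via a maximal $(r/2)$-separated subset, but the calibration $R_0/r \asymp 1/(\gamma\sqrt\delta)$ and the resulting $\delta^{m/2}$ factor are identical. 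Your closing remark that the statement really requires $\gamma\gtrsim\beta$ is a fair observation: the paper's proof is in fact carried out with $\gamma=\beta$ (and the lemma is invoked only in that regime), so the free parameter $\gamma$ in the statement should implicitly be read as $\gamma\ge\beta$, exactly as you note.
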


\begin{proof}
    Since $\E_{w \sim q}[{\|w\|_2^2}] \lesssim m$, there exists a constant $C$ such that \[
        \Prb[w \sim q]{\|w\|_2^2 > \frac{Cm}{\delta}} < \frac{\delta}{3}. 
    \]

    Lemma~\ref{lem:covering_number} shows that there exists a covering over  $\{x \in \R^m \mid \norm{x}_2 \le \sqrt{Cm / \delta}\}$ with $N = O(\frac{1}{\sqrt{\delta}\beta})^m$ balls of radius $\beta \sqrt{m + \log(1 / \delta)}$. Let $S$ be the set of all the covering balls. This means that \[
        \Pr[\exists \theta \in S : w \in \theta] \ge 1 - \frac{\delta}{3}.
    \] Define \[
        S' := \{\theta \in S \mid \Pr_w[w \in \theta] > \frac{\delta}{3N}\}.
    \]
    Then we have that with high probability, $w$ will land in one of the cells in $S'$:
    \begin{align*}
        \Prb[w]{\forall \theta \in S' : w \notin\theta} \le \Pr[\forall \theta \in S : w \notin \theta] + \Pr[{
\bigvee_{\theta \in S \setminus S'} w \in \theta}] \le \frac{\delta}{3} + N \cdot \frac{\delta}{3N} \le \frac{2\delta}{3}.
    \end{align*}
    Moreover, we define \[
        S^+ := \{y \in \R^m \mid \exists \theta \in S', \forall w \in \theta : \norm{w - y} \le 10\beta\sqrt{m + \log \frac{1}{\delta}}\}.
    \]
    By the sampling process of $y$, we have that \begin{align*}
        \Prb[y]{y \in S^+} &= \Prb[w\sim q, z \sim \mathcal{N}(0,  I_m)]{w + \beta z \in S^+}  \\
        &\ge \Prb[w\sim q, z \sim \mathcal{N}(0, I_m)]{(\exists \theta \in S': w \in \theta )\wedge (\norm{z} \le 8\sqrt{m})}  \\
        &\ge 1 - \Prb[w]{\forall \theta \in S' : w \notin\theta} -   \Prb[z \sim \mathcal{N}(0, I_m)]{\norm{z}^2 > 64 (m + \log \frac{1}{\delta})}  
    \end{align*}

    By Lemma~\ref{lem:chi_squared_concentration}, we have \[
        \Prb[z \sim \mathcal{N}(0, I_m)]{\norm{z}^2 > 64 (m + \log \frac{1}{\delta})} < \frac{\delta}{3}.
    \]
    Therefore, \[
        \Prb[y]{y \in S^+} \ge 1 - \delta.
    \]
    This implies that with $1 - \delta$ probability over $y$, there exists a cell $\theta \in S$ such that $\norm{{y} - \tilde{t}} \le 10\beta$ and $\Pr_w[w \in \theta] \ge \frac{\delta}{3N} \ge
    \delta \cdot \Theta(\sqrt{\delta}\beta)^m$. 
    

\end{proof}

\begin{lemma}
    \label{lem:unconditonal_implies_conditional}
    Consider a well-modeled distribution and a linear measurement model.
    Suppose we have a $(\tau, \delta)$-unconditional sampler for the distribution, where $\tau < \frac{c\delta\beta}{\sqrt{m + \log (1 / \delta)}}$ for a sufficiently small constant $c > 0$.
    Then rejection sampling (Algorithm~\ref{alg:rej_sampling}) gives a $(\tau, 2\delta)$-posterior sampler using at most $\frac{\log (1 / \delta)}{\delta^2}(\frac{O(1)}{\beta \sqrt{\delta}})^m$ samples .
\end{lemma}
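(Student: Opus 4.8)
The plan is to run Algorithm~\ref{alg:rej_sampling} with the given $(\tau,\delta)$-unconditional sampler and compare it, round by round, with an idealized run that uses a true sample $x\sim\D_x$ coupled to the algorithm's draw $\wh x$. First recall why the idealized process is exact: with a perfect sampler for $\D_x$ and acceptance weight $q(x)=e^{-\norm{Ax-y}^2/2\beta^2}\propto p(y\mid x)$, conditioned on accepting each round returns a point with density proportional to $p(y\mid x)p(x)\propto p(x\mid y)$, and the rounds are i.i.d.; so the only errors to control are (a) replacing $x$ by $\wh x$ and (b) stopping after finitely many rounds.

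For (b), I would invoke Lemma~\ref{lem:rejection_sampling_success_probability} applied to $q$ = the law of $Ax$, $x\sim\D_x$: since $\norm{\Sigma}\lesssim1$, $\norm{A}\le1$ and $\operatorname{rank}A\le m$, $\E_{x\sim\D_x}\norm{Ax}^2\le\norm{\Sigma}\,\Tr(A^\top A)\lesssim m$, so the hypothesis holds. The lemma gives: with probability $\ge1-\delta$ over $y$ (call such $y$ \emph{good}), the idealized per-round acceptance probability $Z(y):=\E_{x\sim\D_x}e^{-\norm{Ax-y}^2/2\beta^2}$ satisfies $Z(y)\ge e^{-O(m+\log(1/\delta))}(c\beta)^m\delta^{m/2+1}\ge\bigl(\Omega(\beta\sqrt\delta)\bigr)^m\delta^{O(1)}$. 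For good $y$, running $N=\tfrac{\log(1/\delta)}{\delta^2}\bigl(\tfrac{O(1)}{\beta\sqrt\delta}\bigr)^m$ rounds (which is $\gtrsim\tfrac{\log(1/\delta)}{Z(y)}$ for a suitable constant in the base) the idealized sampler --- and, by the next step, also the actual one --- fails to accept within $N$ rounds with probability at most $\delta$; this yields the stated sample bound.

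For (a), fix the coupling $\pi$ witnessing $(\tau,\delta)$-closeness of $\wh\D_x$ to $\D_x$. Attach to each algorithm draw $\wh x_i$ a ``shadow'' $x_i\sim\pi(\cdot\mid\wh x_i)$ and reuse the same uniform $r_i$ for both; the shadow process (accept $x_i$ iff $r_i<p(y\mid x_i)$, return $x_i$) is an honest posterior sampler. The two runs output $\tau$-close points unless neither accepts in $N$ rounds ($\le\delta$), or the accepting round $T$ has $\norm{x_T-\wh x_T}>\tau$, or the two runs first accept at different rounds. For the latter two, the crucial estimate is that when $\norm{x-\wh x}\le\tau$ and $\norm{Ax-y}=O(\beta\sqrt{m+\log(1/\delta)})$ --- which for an accepting round fails with probability $\le\delta$, since the acceptance weight of the complement is an $e^{-\Omega(m+\log(1/\delta))}$ fraction of $Z(y)$ --- we have, using $\norm{A}\le1$,
\begin{align*}
\Bigl|\log\tfrac{p(y\mid\wh x)}{p(y\mid x)}\Bigr|
&=\tfrac{1}{2\beta^{2}}\bigl|\,\norm{A\wh x-y}^{2}-\norm{Ax-y}^{2}\,\bigr|
\le\tfrac{\tau(2\norm{Ax-y}+\tau)}{2\beta^{2}}\\
&=O\!\Bigl(\tfrac{\tau\sqrt{m+\log(1/\delta)}}{\beta}\Bigr)=O(c\delta),
\end{align*}
by the hypothesis $\tau<\tfrac{c\delta\beta}{\sqrt{m+\log(1/\delta)}}$. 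Hence $p(y\mid\wh x)=(1\pm O(c\delta))p(y\mid x)$ on this event, so with the shared $r_i$ the two runs disagree on ``accept'' with probability $O(c\delta)\max\{p(y\mid x_i),p(y\mid\wh x_i)\}$ per round; a stopping-time computation at the first round that either run accepts then bounds $\Pr[T_{\mathrm{ideal}}\ne T_{\mathrm{alg}}]$ by $O(\delta)$. A union bound shows that for good $y$ the algorithm's output is $\tau$-close to $p(\cdot\mid y)$ with probability $\ge1-2\delta$; together with $\Pr[y\text{ good}]\ge1-\delta$ this is a $(\tau,2\delta)$-posterior sampler.

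\emph{Main obstacle.} The delicate part is the bookkeeping for the coupling-failure event $\{\norm{x-\wh x}>\tau\}$: because $N$ is exponentially large, paying $\delta$ per round is fatal, so one must charge it only at the accepting round. Concretely, the needed fact is that for a good $y$ the ``far'' likelihood mass $\E_\pi\bigl[p(y\mid\wh x)\,\Ind(\norm{x-\wh x}>\tau)\bigr]$ is $O(\delta)\cdot Z(y)$, not merely $O(\delta)$ --- equivalently, that a bad draw is not accepted disproportionately often. Establishing this with the stated $2\delta$ failure probability (which requires care since $Z(y)$ is exponentially small while $\delta$ need only be polynomially small) is, I expect, the technical heart of the argument and the step I would verify most carefully.
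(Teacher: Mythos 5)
Your proposal has a genuine gap, and it is exactly the one you flag. The fact you need --- that for a \emph{fixed} good $y$ the far likelihood mass $\E_\pi\bigl[p(y\mid\wh x)\,\Ind(\norm{x-\wh x}>\tau)\bigr]$ is $O(\delta)\cdot Z(y)$ --- is false as stated and does not follow from the coupling assumption. All the hypothesis gives you is $\Pr_\pi[\norm{x-\wh x}>\tau]\le\delta$; since $p(y\mid\wh x)\le 1$, this bounds the far mass by $\delta$, while $Z(y)$ is exponentially small in $m$ (of order $(\beta\sqrt\delta)^m$ by Lemma~\ref{lem:rejection_sampling_success_probability}). Nothing stops an adversarial coupling from placing most of its bad mass on draws $\wh x$ with $A\wh x\approx y$ for your chosen $y$, in which case accepted samples are dominated by bad draws and your shadow coupling falls apart. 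The far mass \emph{can} be controlled, but only on average over $y$ (e.g.\ $\int\text{bad-mass}(y)\,dy\le\delta$ together with $\int\wh Z(y)\,dy=1$ and a Markov argument, plus a TV bound between the sampler's $y$-marginal and the true one, which does follow from $\tau\lesssim\delta\beta$), and threading that through a per-round stopping-time argument is substantially more delicate than you acknowledge.

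The paper avoids the round-by-round bookkeeping altogether. It works at the level of the one-round joint $\wt{\mathcal P}$ of $(x,\wh x,y)$, conditions it once on the event $\{\norm{x-\wh x}<\tau,\ \norm{Ax-y}^2\le 4\beta^2(m+\log\tfrac1\delta)\}$ to get $\wt{\mathcal P}'$ with $\TV(\wt{\mathcal P},\wt{\mathcal P}')\lesssim\delta$, and on $\wt{\mathcal P}'$ the ratio $r(\wh x)=\E[e^{-\norm{Ax-y}^2/2\beta^2}\mid\wh x]/e^{-\norm{A\wh x-y}^2/2\beta^2}$ is $1\pm O(\delta)$ \emph{deterministically}, not just with high probability. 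That makes the accepted $\wh x$-distribution multiplicatively close to $p^{\wt{\mathcal P}'}(\wh x\mid y)$, and the $O(\delta)$ TV cost of the conditioning is paid once at the level of the joint and then pushed to the conditionals by integrating over $y$. Your round-by-round shadow coupling is a natural reformulation of the same idea, but it forces you to control the bad event's accept rate for a fixed $y$, which is precisely the thing the global conditioning is designed to sidestep. Your log-ratio estimate on the good event and your invocation of Lemma~\ref{lem:rejection_sampling_success_probability} for the running time match the paper, so the issue is localized to this one step --- but it is the step on which the lemma rests, and as written the proposal does not prove it.
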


\begin{proof}


     Let $\mathcal{P}$ be the distribution that couples true distribution $\mathcal{D}$ over $(x, y)$ and the output distribution of the posterior sampler $\wh{p}_{|y}$. Rigorously, we define $\mathcal{P}$ over $(x, \wh x, y) \in X \times X\times Y$ with density $p^\mathcal{P}$ such that $p^\mathcal{P}(x,  y) = p^\mathcal{D}(x, y)$, $p^\mathcal{P}(\wh x \mid y) = \wh p_{\mid y}(\wh x)$. Similarly,
     we let $\wt{\mathcal{P}}$ over $(x, \wh x, y) \in \R^d \times \R^d \times \R^\alpha$ be the joint distribution between the unconditional sampler over $(x, \wh x)$ and the measurement process $\mathcal{D}$ over $(x, y)$. Then by the definition of unconditional samplers, we have \[
        \Prb[x, \hat x \sim \wt{\mathcal{P}}]{\norm{x - \wh
        x} \ge \tau} \le \delta.
     \]
     Therefore, to prove the correctness of the algorithm, we only need to show that there exists a $\wh{\mathcal{P}}$ over $(x, \wh x, y)$ such that $\wt{\mathcal{P}}(\wh x \mid y) = \wh{p}_{\mid y}(\wh x)$ and $\TV(\wh{\mathcal{P}}, \wt{\mathcal{P}}) \le \delta$.
     By Lemma~\ref{lem:chi_squared_concentration}, \[
        \Prb[\wt{\mathcal{P}}']{\norm{Ax - y}^2 \ge 4\beta^2 (m + \log \frac{1}{\delta})} \le \frac{\delta}{4}.
    \]
    Therefore, we define $\wt{\mathcal{P}}'$ as $\wt{\mathcal{P}}$ conditioned on $\norm{x - \wh x} < \tau$ and $\frac{\norm{Ax - y}^2}{2\beta^2} \le 2(m + \log \frac{1}{\delta})$.
    Then we have \[
        \TV(\wt{\mathcal{P}}, \wt{\mathcal{P}}') \le \frac{3\delta}{2}.
    \] 
    \paragraph{Algorithm correctness.}

     We have
     \begin{align*}
         \wh p_{\mid y}(\wh x) = \frac{p^{\wt{\mathcal{P}}'}(\wh x) \cdot e^\frac{-\|A{\wh x} - y\|^2}{2\beta^2}}{\int p^{\wt{\mathcal{P}}'}(\wh x) \cdot e^\frac{-\|A{\wh x} - y\|^2}{2\beta^2} \d {\wh x}} 
         = \frac{\int p^{\wt{\mathcal{P}}'}(x, \wh x) \cdot e^\frac{-\|A{\wh x} - y\|^2}{2\beta^2} \d x}{\int p^{\wt{\mathcal{P}}'}(\wh x) \cdot e^\frac{-\|A{\wh x} - y\|^2}{2\beta^2} \d {\wh x}},
 \end{align*}
     Then we define \[
        r(\wh x) := \frac{\int p^{\wt{\mathcal{P}}'}(x, \wh x) \cdot e^\frac{-\|A{x} - y\|^2}{2\beta^2} \d x }{\int p^{\wt{\mathcal{P}}'}(x, \wh x) \cdot e^\frac{-\|A{\wh x} - y\|^2}{2\beta^2} \d x }.
     \]
     Conditioned on $\norm{x - \wh x} \le \tau$ and $\frac{\norm{Ax - y}^2}{2\beta^2} \le 2(m + \log \frac{1}{\delta})$, we have \begin{align*}
         \abs{\log r(\wh x)} &\le \sup_x \frac{|{\norm{Ax - y}^2 - \norm{A\wh x - y}^2}|}{2\beta^2} \\
         &\le \frac{\tau^2\norm{A}_2^2 + 2\tau\norm{A}_2\norm{Ax - y}}{2\beta^2} \\
         &\lesssim \frac{\tau^2}{\beta^2} + \frac{\tau  \sqrt{m + \log (1 / \delta)}}{\beta}. 
     \end{align*}
    By our setting of $\tau$, we have $1 - \delta/8 < r(\wh x) < 1 + \delta / 8$.

     So we have 
     \begin{align*}
         \int p^{\wt{\mathcal{P}}'}(\wh x) \cdot e^\frac{-\|A{\wh x} - y\|^2}{2\beta^2} \d {\wh x} = \int p^{\wt{\mathcal{P}}'}(x, \wh x) \cdot e^\frac{-\|A{\wh x} - y\|^2}{2\beta^2} \d x \d {\wh x} = \left(1 \pm \frac{\delta}{8}\right) \int p^{\wt{\mathcal{P}}'}(x, \wh x) \cdot e^\frac{-\|A{x} - y\|^2}{2\beta^2} \d x \d {\wh x}.
     \end{align*}

     Hence,
     \begin{align*}
        \wh{ p}_{\mid y}(\wh x) &= \frac{r(\wh x) \cdot \int p^{\wt{\mathcal{P}}'}(x, \wh x)  e^\frac{-\|A{ x} - y\|^2}{2\beta^2} \d x }{(1 \pm \frac{\delta}{8})  \int p^{\wt{\mathcal{P}}'}(x, \wh x)  e^\frac{-\|A{x} - y\|^2}{2\beta^2} \d x \d {\wh x}} \\
            &= \frac{r(\wh x) \cdot \int p^{\wt{\mathcal{P}}'}(x, \wh x) p^{\wt{\mathcal{P}}'}(y \mid x)  \d x }{(1 \pm \frac{\delta}{8}) p^{\wt{\mathcal{P}}'}(y)} \\&= \left(1 \pm \frac{\delta}{4}\right) r(\wh x) \int p^{\wt{\mathcal{P}}'}(x, \wh x \mid y) \d x \\
            &= \left(1 \pm \frac{\delta}{2}\right) p^{\wt{\mathcal{P}}'} (\wh x \mid y).
     \end{align*}
     Finally, we have \begin{align*}     
        \int \abs{p^{\wt{\mathcal{P}}'}(\wh x \mid y) - \wh{ p}_{\mid y}(\wh x)} \d {\wh x}  \d p^{\wt{\mathcal{P}}'}(y) &= \int \abs{\left(1 \pm \frac{\delta}{2}\right) p^{\wt{\mathcal{P}}'} (\wh x \mid y) - p^{\wt{\mathcal{P}}'} (\wh x \mid y)}\d {\wh x} \d p^{\wt{\mathcal{P}}'}(y)
        \le \frac{\delta}{2}.
     \end{align*}
    This implies that \[
        \TV(\wh{\mathcal{P}}_{\wh x}, \wt{\mathcal{P}}_{\wh x}) = \TV(\wh{\mathcal{P}}_{\wh x}, \wt{\mathcal{P}}'_{\wh x}) + \TV(\wt{\mathcal{P}}'_{\wh x}, \wt{\mathcal{P}}_{\wh x}) \le \frac{\delta}{4} + \frac{\delta}{2}  \le \frac{3\delta}{4}.
    \] 
    Hence, \[
        \Prb[x, \wh{x} \sim \wh{ \mathcal{P}}]{\norm{x - \wh x} \ge \tau} \le \frac{3\delta}{4} + \delta \le \frac{7\delta}{4}.
    \]

    \paragraph{Running time.}  
    Now we prove that for most $y$ 
    For $y \in Y$,
    for each round, the acceptance probability $q(y)$ each round is that
    \begin{align*}
    q(y) &=
        \int p^{\wt{\mathcal{P}}'}(\wh x) e^{-\frac{\norm{y - A\wh{x}}^2}{2 \gamma^2}} \d {\wh x} \\
        &=  (1 \pm \frac{\delta}{8}) \int p^{\wt{\mathcal{P}}'}(x, \wh x) \cdot e^\frac{-\|A{x} - y\|^2}{2\beta^2} \d x \d {\wh x} \\
        &\ge \frac{1}{2} \int p^{\mathcal{X}}(x) \cdot e^\frac{-\|A{x} - y\|^2}{2\beta^2} \d x \\
         &= \frac{1}{2} \Ex[ x \sim \mathcal{X}]{e^{-\frac{\norm{Ax - y}^2}{2 \beta^2}}} \\
         &\ge \frac{1}{2} \Prb[x \sim \mathcal{X}]{\norm{Ax - y} \le 10\sqrt{m + \log (1 / \delta)}\beta} \cdot e^{-\frac{100(m + \log (1 / \delta))\beta^2}{2\beta^2}} \\
         &= \frac{1}{2} \Prb[x \sim \mathcal{X}]{\norm{Ax - y} \le 10\sqrt{m + \log (1 / \delta)}\beta} \cdot \delta e^{-50 m}
    \end{align*}

    By Lemma~\ref{lem:bounded_norm_after_projection},  $\E_{x \sim \mathcal{X}}[{\norm{Ax}_2^2}] = O(m)$.
    By Lemma~\ref{lem:rejection_sampling_success_probability}, we have that for $1 - \delta/8$ probability over $y$, for some $c > 0$, \[
        \Prb[x \sim \mathcal{X}]{\norm{Ax - y} \le 10\sqrt{m + \log (1 / \delta)}\beta} \ge (c\beta)^m \cdot \delta^{m / 2 + 1}.
    \]
    Therefore, for some $c > 0$, \[
        \Prb[y \sim \mathcal{Y}]{q(y) \ge (c\beta)^m \cdot \delta^{m / 2 + 2}} \ge 1 - \frac{\delta}{8}.
    \]
    Hence, for some $C > 0$, \[
        \Prb{\text{Rejection sampling terminates in } \frac{\log (1 / \delta)}{\delta^2}\left(\frac{C}{\beta \sqrt{\delta}}\right)^m \text{ rounds}} \ge 1 - \frac{\delta}{4}.
    \]
\end{proof}

\upperbound*

\begin{proof}
    Theorem~\ref{thm:unconditional_sampler} suggests that for an $O(C)$-well-modeled distribution, a $\poly(d)$ time $(\frac{1}{d^{3C}}, \frac{1}{2d^{C}})$-unconditional sampler exists. Since \[
        \frac{1}{d^{3C}} < o\left(\frac{\frac{1}{2d^C} \cdot \frac{1}{d^{C}}}{\sqrt{d}}\right) < o\left(\frac{\delta \beta^2}{\sqrt{m + \log (1 / \delta)}}\right).
    \] By lemma~\ref{lem:unconditonal_implies_conditional}, a  $(\frac{1}{d^{3C}}, \frac{1}{d^C})$-posterior sampler exists using $\frac{\log (1 / \delta)}{\delta^2}(\frac{O(1)}{\beta \sqrt{\delta}})^m \le \poly(d)(\frac{O(1)}{\beta \sqrt{\delta}})^m$ samples. Since generating each sample costs $\poly(d)$ time. The total time is $\poly(d)(\frac{O(1)}{\beta \sqrt{\delta}})^m$. 
\end{proof}

\section{Well-Modeled Distributions Have Accurate Unconditional Samplers}
\label{appendix:unconditional_sampling}

\paragraph{Notation.} For the purposes of this section, we let $\wt s_t = s_{\sigma^2}$ denote the score at time $t$.

\begin{definition}[Forward and Reverse SDE]
    For distribution $q_0$ over $\R^d$, consider the Variance Exploding (VE) Forward SDE, given by
    \begin{align*}
        d x_t =  d B_t, \quad x_0 \sim q_0
    \end{align*}
    where $B_t$ is Brownian motion, so that $x_t \sim x_0 + \mathcal N(0, t I_d)$. Let $q_t$ be the distribution of $x_t$. 

    There is a VE Reverse SDE associated with the above Forward SDE given by
    \begin{align}
        \label{eq:reverse_SDE}
        d x_{T-t} = \wt s_{T-t}(x_{T-t}) + d B_t
    \end{align}
    for $x_T \sim q_T$.
\end{definition}
\begin{theorem}[Unconditional Sampling Theorem, Implied by \cite{benton2024nearly}, adapted from \cite{gupta2023sampleefficient}]
\label{thm:unconditional_sampling_generic}
    Let $q$ be a distribution over $\R^d$ with second moment $m_2^2 = \E_{x \sim q}\left[ \|x\|^2\right]$ between $\frac{1}{\poly(d)}$ and $\poly(d)$. Let $q_t = q * \mathcal N(0, t I_d)$ be the $\sqrt{t}$-smoothed version of $q$, with corresponding score $\wt s_t$. Suppose $T = d^C$. For any $\gamma > 0$, there exist $N = \wt O\left( \frac{d}{\eps^2} \log^2 \frac{1}{\gamma} \right)$ discretization times $0= t_0 < \dots < t_N \le T-\gamma$ such that, given score approximations $h_{T-t_k}$ of $\wt s_{T-t_k}$ that satisfy
    \[
        \E_{x \sim q_{T - t_k}}\left[\|\wt s_{T - t_k} - h_{T - t_k}\|^2 \right] \lesssim \frac{ \eps^2}{C \cdot (T-t_k) \cdot \log \frac{d}{\gamma}}
    \]
    for sufficiently large constant $C$, then, the discretization of the VE Reverse SDE defined in \eqref{eq:reverse_SDE} using the score approximations can sample from a distribution $\eps + \frac{1}{d^{C/2}}$ close in $TV$ to a distribution $\gamma m_2$-close in $2$-Wasserstein to $q$ in $N$ steps.
\end{theorem}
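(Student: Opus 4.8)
The plan is to derive the statement from the now-standard convergence analysis of discretized reverse diffusions --- the near-$d$-linear step-count bound of \cite{benton2024nearly}, packaged in the form of \cite{gupta2023sampleefficient} and \cite{chen2023sampling} --- so that almost all the work is bookkeeping: matching our variance-exploding (VE) parametrization and our $L^2$ score-error hypothesis to the hypotheses those results use, and tracking how the discretization schedule, the early-stopping time $\gamma$, and the large terminal time $T = d^C$ enter the final bounds.

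First I would reduce to the setting of \cite{benton2024nearly}, whose analysis is phrased for the Ornstein--Uhlenbeck (variance-preserving) SDE, via the standard time-change and rescaling $x_s = e^{\theta}\,y_{\theta}$ with $e^{2\theta}-1 = s$: under this map the $\sqrt t$-smoothed marginals $q_t$ of the VE process correspond to OU marginals, and the VE score $\wt s_t(x) = \grad\log q_t(x)$ maps to the OU score up to an explicit scalar Jacobian factor. The point to check carefully is that our hypothesis $\E_{q_{T-t_k}}\big[\norm{\wt s_{T-t_k} - h_{T-t_k}}^2\big] \lesssim \eps^2/\big(C\,(T-t_k)\log\tfrac{d}{\gamma}\big)$ is exactly the right strength: the Girsanov-type KL bound used in all of these works contributes a term $\sum_k \eta_k\,\E\norm{\text{score}-\text{approx}}^2$, where $\eta_k = t_{k+1}-t_k$ is the $k$-th step size, so plugging in the hypothesis gives $\tfrac{\eps^2}{C\log(d/\gamma)}\sum_k \tfrac{t_{k+1}-t_k}{T-t_k}$, and for the schedule we use (exponentially decaying step sizes on $[\gamma,T]$, following \cite{benton2024nearly}) one has $\sum_k \tfrac{t_{k+1}-t_k}{T-t_k} \le \log\tfrac{T}{\gamma} \le C\log\tfrac{d}{\gamma}$; hence this term is $\lesssim \eps^2$ once $C$ is a sufficiently large constant. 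The $1/(T-t_k)$ weighting in the hypothesis is there precisely so that this cancellation occurs.

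Next I would assemble the three error sources in the KL decomposition between the law of the true (continuous, exactly-initialized) reverse process stopped at time $\gamma$ and the law of the algorithm's output. The \emph{initialization} error --- starting the reverse SDE at $\mathcal N(0,TI_d)$ rather than at $q_T = q * \mathcal N(0,TI_d)$ --- is at most $m_2^2/(2T) = \poly(d)/d^{C} = d^{-\Omega(C)}$ by convexity of KL, which after Pinsker is subsumed in the $d^{-C/2}$ of the statement. The \emph{discretization} error is exactly what \cite{benton2024nearly} controls: their schedule makes the per-step error sum to $\wt O(d/N)$ up to polylogarithmic factors, so $N = \wt O\big((d/\eps^2)\log^2(1/\gamma)\big)$ steps drive it below $\eps^2$; I would adopt their times as the $t_0<\cdots<t_N$ in the statement. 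The \emph{score} error was bounded above. Summing and applying Pinsker shows the output is within $O(\eps)+d^{-\Omega(C)}$ in TV of the law of the reverse process at stopping time $\gamma$, which is $q_\gamma = q * \mathcal N(0,\gamma I_d)$; rescaling the absorbed constant into $\eps$ gives the claimed $\eps + d^{-C/2}$. Finally, the trivial coupling $x \mapsto x + \mathcal N(0,\gamma I_d)$ gives $W_2(q_\gamma, q) \le \sqrt{\gamma d} \lesssim \gamma m_2$ after the harmless reparametrization of $\gamma$ in the statement (and using $m_2 \gtrsim 1/\poly(d)$ so that $\sqrt{\gamma d}$ and $\gamma m_2$ agree up to the permitted polynomial slack), which closes the chain: output $\approx_{\TV} q_\gamma \approx_{W_2} q$.

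The genuinely delicate part is the bookkeeping in the second paragraph: confirming that the VE$\leftrightarrow$VP change of variables carries the $L^2$ score-error hypothesis to the weighted form required by \cite{benton2024nearly} with the correct Jacobian, and that the $1/(T-t_k)$ weighting cancels the $\sum_k (t_{k+1}-t_k)/(T-t_k) \asymp \log(T/\gamma)$ factor from the Girsanov bound with the right dependence on $C$. Everything else --- the initialization bound, Pinsker, and the $W_2$ early-stopping estimate --- is routine, and the heavy discretization analysis is imported wholesale from \cite{benton2024nearly}.
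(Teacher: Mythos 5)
The paper does not actually prove Theorem~\ref{thm:unconditional_sampling_generic}: it is stated with no accompanying proof and attributed to \cite{benton2024nearly}, adapted via \cite{gupta2023sampleefficient}. The only proof in Appendix~F is of Theorem~\ref{thm:unconditional_sampler}, which is a one-paragraph corollary of this imported result. So there is no internal proof to compare against; what you have done is reconstruct a plausible derivation from the cited sources, which is a different (and more ambitious) task than what the paper itself undertakes.

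With that caveat, your sketch is consistent with the structure of the cited works, and the pieces that you flag as ``genuinely delicate'' are indeed the right ones to worry about. The Girsanov decomposition into initialization, discretization, and score-approximation terms is the standard one; the observation that the hypothesis' $1/(T-t_k)$ weighting is there to cancel $\sum_k \frac{t_{k+1}-t_k}{T-t_k} \asymp \log(T/\gamma)$ under an exponentially-decaying schedule is correct, and the arithmetic $\log(T/\gamma) = C\log d + \log(1/\gamma) \lesssim C\log(d/\gamma)$ does give the claimed $\eps^2$ total once the constant is absorbed. The initialization bound $\KL(q_T \,\|\, \mathcal N(0, T I_d)) \le m_2^2/(2T)$ by convexity of KL is correct, and with $T = d^C$ and $m_2 \le \poly(d)$ this is $d^{-\Omega(C)}$ as required (the statement's $d^{-C/2}$ requires taking $C$ large enough relative to the degree of $m_2$, which the hypothesis allows). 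The early-stopping $W_2$ bound $W_2(q_{t},q)\le \sqrt{t d}$ versus the claimed $\gamma m_2$ does force a reparametrization of the stopping time to $\gamma' = \gamma^2 m_2^2/d$, and you are right that the $\log^2(1/\gamma')$ factor this injects is absorbed by the $\wt O$ since $m_2$ is polynomially bounded above and below; it is worth being explicit that this reparametrization is what makes the statement as written literally true.

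One thing you gloss over slightly: you describe the discretization error contribution as ``per-step error summing to $\wt O(d/N)$'' and import it wholesale. The actual step-count bound in \cite{benton2024nearly} is phrased for the variance-preserving (OU) process under an $\eps^2$-KL target, so you should say explicitly that after your VE$\to$VP time change the discretization schedule you put down as $t_0 < \cdots < t_N$ is the image of their schedule under the inverse time change, and that the Jacobian of the time change is what converts their per-step KL contribution into your VE-time sum. Your sketch says this is ``the genuinely delicate part'' but does not carry it out; since the theorem is stated with the VE weighting $\eps^2 / (C (T-t_k) \log(d/\gamma))$ rather than the more common flat $L^2$ assumption, that conversion is exactly where a careless treatment would drop or double-count a factor of $(T-t_k)$. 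As a proof \emph{plan} this is fine and honest about what is left to check; as a complete proof it would need that bookkeeping done.
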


\unconditionalsampler*
\begin{proof}
    The definition of a well-modeled distribution gives that, for every $\frac{1}{d^C} < \sigma < d^C$ there is an approximate score $\wh s_\sigma$ such that
    \begin{align*}
        \E_{x\sim p_\sigma}\left[\|\wh s_\sigma(x) - s_\sigma(x)\|^2 \right] < \frac{1}{d^C \sigma^2}
    \end{align*}
    and $\wh s_\sigma$ can be computed by a $\poly(d)$-parameter neural network with $\poly(d)$ bounded weights. Here $p_\sigma$ is the $\sigma$-smoothed version of $p$ with score $s_\sigma$.

    Then, by Theorem~\ref{thm:unconditional_sampling_generic}, this means that the discretized reverse diffusion process can use the $\wh s_\sigma$ to produce a sample $\wh x$ from a distribution $\wh p$ that is $\frac{1}{d^{C/3}}$ close in TV to a distribution $\frac{1}{d^{C/3}}$ close in 2-Wasserstein. This means there exists a coupling between $\wh x \sim \wh p$ and $x \sim p$ such that
    \begin{align*}
        \Pr\left[\|\wh x - x\| > \frac{1}{d^{C/6}} \right] < \frac{1}{d^{C/6}}
    \end{align*}
    The claim follows via reparameterization.
\end{proof}
\section{Cryptographic Hardness} 
\label{sec:crypto}

Recall that a one-way function $f$ is a function such that every polynomial-time algorithm fails to find a pre-image of a random output of $f$ with high 
probability. 


\begin{lemma}
\label{lem:one_way_function_stretching}
    If a one-way function $f : \{\pm 1\}^n \to \{\pm 1\}^{m(n)}$ exists, then for any $\frac{1}{\poly(n)} \le l(n) \le \poly(n)$, there exists a one-way function $g : \{\pm 1\}^n \to \{\pm 1\}^{l(n)}$.
\end{lemma}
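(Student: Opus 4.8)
The plan is to handle the two regimes — stretching ($l(n)$ possibly much larger than $m(n)$) and shrinking ($l(n)$ possibly much smaller than $m(n)$) — separately, using standard padding and truncation tricks, and then compose them. First, for the stretching direction, suppose $l(n) \ge m(n)$. I would define $g(x) = (f(x), x_1, x_2, \dots, x_{l(n) - m(n)})$, i.e.\ append the first $l(n) - m(n)$ input bits of $x$ to the output of $f$; since $l(n) \le \poly(n)$ we have $l(n) - m(n) \le n$ eventually (or we can first reduce to the case $m(n) \le n$, which holds after a preliminary truncation step below), so this is well-defined. Inverting $g$ on a random output in particular requires inverting $f$ on its first $m(n)$ coordinates, so if $\mathcal A$ inverts $g$ with non-negligible probability then the algorithm ``run $\mathcal A$, keep the first $n$ coordinates of the preimage'' inverts $f$ with the same probability (note the extra coordinates of $g$'s output are just a uniform substring of the input, so the induced distribution on the $f$-part is exactly $f(\mathcal U_n)$). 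This contradicts one-wayness of $f$.

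For the shrinking direction, suppose $l(n) < m(n)$. The subtle point is that we cannot simply truncate $f$ to its first $l(n)$ output bits — those bits need not be hard to invert. The standard fix is to use a pairwise-independent (or 2-universal) hash family: let $h : \{\pm 1\}^{m(n)} \to \{\pm 1\}^{l(n)}$ be drawn from such a family, described by $\poly(n)$ bits, and define $g(x, h) = (h(f(x)), h)$ on input $(x, h)$. A Goldreich–Levin / hardcore-type argument, or more directly the fact that a random pairwise-independent hash of $f(x)$ together with the hash description still makes $x$ hard to recover (this is essentially the statement that $(h, h(z))$ hides enough about $z$ when $z$ is hard to produce), shows $g$ is one-way. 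Actually the cleanest route here is: any inverter for $g$ yields, given $f(x)$, a way to sample a preimage of $h(f(x))$ under $h \circ f$ for random $h$; by a counting argument over the pairwise-independent family, with good probability this preimage $x'$ satisfies $f(x') = f(x)$, contradicting one-wayness of $f$. I would also need the input length of $g$ to be exactly $n$; since the hash description and $x$ together have length $\poly(n)$, I first pad/truncate the input length using the stretching and shrinking constructions recursively on the input side, or simply absorb the discrepancy by noting one-way functions on input length $n' = \poly(n)$ yield one-way functions on input length $n$ via the same padding idea applied to inputs.

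The main obstacle — and the part requiring the most care — is the shrinking case when $l(n)$ is a small polynomial like $n^{0.01}$: one must verify that compressing the output of $f$ via a pairwise-independent hash down to $l(n)$ bits genuinely preserves one-wayness, and that the bookkeeping on input lengths (getting $g$ to map exactly $\{\pm 1\}^n \to \{\pm 1\}^{l(n)}$ rather than some $\{\pm 1\}^{n'}$) closes up. I would present the hash-based construction with the counting argument in full, cite the textbook fact (e.g.\ from Goldreich's \emph{Foundations of Cryptography}) that such length manipulations preserve one-wayness, and note that the stretching direction is elementary. Composition then gives, for any $\frac{1}{\poly(n)} \le l(n) \le \poly(n)$, a one-way function $g : \{\pm 1\}^n \to \{\pm 1\}^{l(n)}$, as claimed.
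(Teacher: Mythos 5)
Both halves of your plan have genuine gaps, and in both cases the paper uses a simpler construction that sidesteps the problem.

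For the stretching direction, appending \emph{input} bits to the output does not in general preserve one-wayness, and your reduction sketch quietly assumes it does. The issue is that the extra coordinates $x_1,\dots,x_{l-m}$ are correlated with $f(x)$, so an $f$-inverter given only $y = f(x)$ has no way to produce the joint distribution $(f(x), x_{1:k})$ that the $g$-inverter expects; your observation that the marginal on the $f$-part is $f(\mathcal U_n)$ is true but irrelevant. More concretely, this construction can outright break: take any one-way $h$ on $\{\pm 1\}^{n/2}$ and set $f(x_1,x_2) = h(x_1)$ with $x_1,x_2 \in \{\pm 1\}^{n/2}$. Then $f$ is one-way, but $g(x) = (h(x_1), x_1)$ is trivially invertible from its own output (the second block \emph{is} a valid first half of a preimage). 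The paper avoids this by padding with \emph{constant} bits, $g(x) = (f(x), 1^{l-m})$, for which the reduction is immediate: an $f$-inverter feeds $(y, 1^{l-m})$ to the $g$-inverter, and this is exactly the distribution $g(\mathcal U_n)$. (Constant padding also sidesteps your secondary worry about $l(n) - m(n) > n$, which your ``reduce to $m(n) \le n$ first'' does not actually fix, since $l(n)$ can still be a larger polynomial.)

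For the shrinking direction, your counting argument over a pairwise-independent hash family does not close when $l(n)$ is polynomially smaller than $n$, which is exactly the regime the paper needs (e.g.\ $l(n) \approx n^{0.01}$). Pairwise independence only bounds the \emph{expected} number of image points of $f$ colliding with $h(f(x))$ by roughly $2^{n - l(n)}$; when $l(n) \ll n$ this is enormous, so a $g$-inverter may happily return some $x'$ with $h(f(x')) = h(f(x))$ but $f(x') \neq f(x)$, and you get no contradiction with the one-wayness of $f$. There is also the bookkeeping you flag but do not resolve: with the hash description in both the input and the output, $g$ does not have input length $n$ or output length $l(n)$. The paper's approach is much more elementary and avoids hashing entirely: to shrink the output, it shrinks the \emph{effective input}, setting $g(x) = f(x_{1:n^c})$ with $c<1$ chosen so that $m(n^c) = l(n)$. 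This is just ``a one-way function with extra unused input bits is still one-way,'' which is immediate. So while the general direction of your plan (split into stretch and shrink, compose) matches the paper, both of your specific constructions need to be replaced by the paper's.
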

\begin{proof}
    For $l(n) > m(n)$, we just need to pad $l(n) - m(n)$ 1's at the end of the output, i.e., \[
        g(x) := (f(x), 1^{l(n) - m(n)}).
    \] 
    For $\frac{1}{\poly(n)} \le l(n) < m(n)$, for each $n$, there exists a constant $c < 1$ such that $l(n) = m(n^c)$. Then we can satisfies the requirement by defining \[
        g(x) := f(\text{first $n^c$ bits of }x).
    \]
\end{proof}


\begin{lemma}
    Every circuit $f: \{\pm1\}^n \to \{\pm1\}^{m(n)}$ of $\poly(n)$ size can be simulated by a ReLU network with $\poly(n)$ parameters and constant weights.
\end{lemma}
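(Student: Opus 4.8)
The plan is to simulate the circuit gate by gate, using small ReLU gadgets and laying the gates out in topological order.

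First I would reduce to a standard basis: any size-$s$ Boolean circuit can be rewritten, with only a constant-factor blowup in size, using only gates from $\{\text{NOT},\text{AND},\text{OR}\}$ of fan-in at most $2$. Call the new size $s=\poly(n)$ and its depth $D\le s$. I encode false/true as $-1/+1\in\pmo$, which matches both the input alphabet and the output alphabet, and then I would exhibit \emph{exact} ReLU implementations of each gate on $\pmo$-valued inputs. Writing $\text{clip}(z):=\ReLU(z+1)-\ReLU(z-1)-1$, which equals $z$ on $[-1,1]$ and saturates to $\pm1$ outside, one checks directly on the (at most four) Boolean inputs that
\[
\text{NOT}(x)=-x,\qquad \text{AND}(x,y)=\text{clip}(x+y-1),\qquad \text{OR}(x,y)=\text{clip}(x+y+1),
\]
and that the pass-through map $\text{id}(x)=\ReLU(x)-\ReLU(-x)=x$ copies a value through one layer. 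Each of these gadgets uses $O(1)$ ReLU units, and all of its weights and biases lie in the fixed finite set $\{-1,0,1,2\}$; in particular they are constants independent of $n$, and since the gadgets are exact on $\pmo$ inputs, all intermediate values remain in $\pmo$.

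Next I would assemble the full network: sort the gates into levels $0,1,\dots,D$ by longest path from the inputs, and let layer $\ell$ of the ReLU network carry the outputs of all gates at level $\ell$ together with a copy (via $\text{id}$) of every earlier value still needed downstream. At most $s$ values are maintained at each level, so the network has width $O(s)$ and depth $O(D)\le O(s)$, hence $\poly(n)$ parameters, all of constant magnitude; a final linear layer with weights in $\{0,1\}$ selects the $m(n)$ designated output wires. Because every gadget reproduces its gate exactly on $\pmo$ inputs, the resulting network computes $f$ exactly.

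There is no genuine obstacle here: the only points requiring care are verifying the three gadget identities on all Boolean inputs and checking that the fan-out–handling copy layers keep the width polynomial in $s$ (hence in $n$), which is the routine but slightly tedious bookkeeping part of the argument.
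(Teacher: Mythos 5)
Your proof is correct and follows essentially the same strategy as the paper's: gate-by-gate simulation of the circuit by $O(1)$-size ReLU gadgets with constant weights. The only cosmetic difference is that the paper first maps $\pmo$ to $\zo$ with one extra layer and realizes AND/OR/NOT as $\ReLU(\sum(y_i-1)+1)$, $\ReLU(1-\ReLU(1-\sum y_i))$, and $\ReLU(1-y_i)$ in the $\zo$ domain before converting back, whereas you stay in the $\pmo$ alphabet throughout via the $\mathrm{clip}$ gadget; you are also more explicit than the paper about topological layering and carrying values forward with identity gadgets, which is a legitimate bookkeeping point the paper leaves implicit.
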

\begin{proof}
    In the realm of $\{+1, -1\}$, $-1$ corresponds to True and $+1$ corresponds to False. We can use a layer of neurons to translate it to $\zo$ first, where $1$ corresponds to True and $-1$ corresponds to False. We will translate $\zo$ back to $\{+1, -1\}$ when output.
    
    Now we only need to show that the logic operation ($\neg$, $\wedge$, $\vee$) in each gate of the circuit can be simulated by a constant number of neurons with constant weights in ReLU network when the input is in $\zo^n$:
    \begin{itemize}
        \item 
    For each AND ($\wedge$) gate, we use $\ReLU(\sum (y_i - 1) + 1)$ to calculate $\bigwedge y_i$.

    \item For each
    OR ($\vee$) gate, we use  
    $\ReLU(1 - \ReLU(1 - \sum y_i))$ to calculate $\bigvee y_i$. 

    \item For each NOT ($\neg$) gate , we use $\ReLU(1 - y_i)$ to calculate $\neg y_i$.
    \end{itemize}

    It is easy to verify that for $\zo$ input, the output of each neuron-simulated gate will remain in $\zo^n$ and equal to the result of the logical operation.
\end{proof}

Then the next corollary directly follows.
\begin{corollary}
\label{cor:one_way_relu}
    Every one-way function can be computed by a ReLU network with $\poly(n)$ parameters, and constant weights. 
\end{corollary}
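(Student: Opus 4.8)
The plan is to chain two standard facts with the lemma just established. First, I would invoke the convention built into the standard definition of a one-way function: such an $f:\{\pm 1\}^n \to \{\pm 1\}^{m(n)}$ is computable in time $\poly(n)$ (Definition~\ref{def:one_way_functions} as written records only the hardness-of-inversion requirement, but poly-time computability is the usual additional condition, and in any event it is all the lower-bound constructions of this paper ever need). Second, I would apply the classical simulation of polynomial-time Turing machines by polynomial-size Boolean circuits --- the $\mathsf{P}\subseteq\mathsf{P/poly}$ tableau construction --- to conclude that, for each input length $n$, the function $f$ is computed by a circuit of size $\poly(n)$ over the basis $\{\neg,\wedge,\vee\}$.

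Having reduced to a $\poly(n)$-size Boolean circuit, I would then simply invoke the preceding lemma, which replaces each $\neg$, $\wedge$, $\vee$ gate by $O(1)$ ReLU neurons with $O(1)$-bounded weights (plus $O(n+m)$ neurons for the $\pm 1 \leftrightarrow \zo$ encoding and decoding layers). Composing the circuit construction with these gadget substitutions produces a ReLU network of $\poly(n)$ size with constant weights that computes $f$, which is exactly the corollary.

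There is no substantive obstacle: the argument is a two-line composition. The only point worth flagging is the one above --- that the proof relies on the convention that a one-way function is poly-time computable, which Definition~\ref{def:one_way_functions} does not literally state; if one wanted to sidestep this, one could instead read the corollary as asserting that any poly-time-computable one-way function (which is what the lower-bound instances use) admits such a ReLU representation.
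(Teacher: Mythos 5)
Your proposal follows exactly the paper's (implicit) argument: the paper states the corollary ``directly follows'' from the immediately preceding lemma on simulating $\poly(n)$-size Boolean circuits by ReLU networks, which is precisely the circuit-to-ReLU step you invoke after noting that a one-way function is poly-time computable and hence has $\poly(n)$-size circuits. Your side remark that Definition~\ref{def:one_way_functions} as written omits the efficient-computability clause is a fair observation about the paper's exposition, but it does not change the substance; the approach is the same.
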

\section{Utility Results}
\begin{lemma}
\label{lem:score_derivative_second_moment}
Let $p_\sigma$ be some $\sigma$-smoothed distribution with score $s_\sigma$. For any $\varepsilon \le \sigma$, 
\begin{align*}
        \E_{x \sim p_\sigma} \sup_{\abs{c} \le \eps} s'_\sigma(x+c)^2 \lesssim \frac{1}{\sigma^4}
    \end{align*}
\end{lemma}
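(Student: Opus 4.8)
The plan is to set everything up through the Gaussian‑smoothing (Tweedie/Hermite) calculus, reduce the $\sup$ over a window to ordinary second moments via a local maximal inequality, and then handle the resulting shift in the base point by a pointwise Gaussian density comparison. First I would record the basic identities. Writing $p_\sigma = p * \mathcal N(0,\sigma^2)$ and $x = x_0 + \sigma z$ with $x_0\sim p$, $z\sim\mathcal N(0,1)$ independent, the Hermite formula $\phi_\sigma^{(k)}(w)=\sigma^{-k}(-1)^k H_k(w/\sigma)\phi_\sigma(w)$ gives $R_k(x):=p_\sigma^{(k)}(x)/p_\sigma(x) = (-1)^k\sigma^{-k}\,\E[H_k(z)\mid x]$, so that $s_\sigma'=(\log p_\sigma)''=R_2-R_1^2=\sigma^{-4}\Var(x_0\mid x)-\sigma^{-2}$ and $s_\sigma''=(\log p_\sigma)'''=R_3-3R_1R_2+2R_1^3$.

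Next I would prove the needed moment bounds \emph{at the base point}. Since $z\perp x_0$ under the joint law, Jensen applied to the conditional expectation yields $\E_{x\sim p_\sigma}[R_k(x)^2]\le \sigma^{-2k}\,\E[H_k(z)^2]=k!\,\sigma^{-2k}$ and $\E_{x\sim p_\sigma}[R_1(x)^{2m}]\le \sigma^{-2m}\E[z^{2m}]$; combining these (with Cauchy–Schwarz on the mixed term $R_1^2R_2^2$) gives $\E_{x\sim p_\sigma}[s_\sigma'(x)^2]\lesssim \sigma^{-4}$ and $\E_{x\sim p_\sigma}[s_\sigma''(x)^2]\lesssim \sigma^{-6}$. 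I would then reduce the $\sup$ to these: for nonnegative $F=(s_\sigma')^2$ on $I=[x-\varepsilon,x+\varepsilon]$ we have $\sup_I F\le \tfrac{1}{2\varepsilon}\int_I F+\int_I |F'|$ with $|F'|=2|s_\sigma'||s_\sigma''|\le \varepsilon^{-1}(s_\sigma')^2+\varepsilon(s_\sigma'')^2$, hence $\sup_{|c|\le\varepsilon}s_\sigma'(x+c)^2\lesssim \varepsilon^{-1}\int_{-\varepsilon}^{\varepsilon}s_\sigma'(x+t)^2\,dt+\varepsilon\int_{-\varepsilon}^{\varepsilon}s_\sigma''(x+t)^2\,dt$. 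Taking $\E_{x\sim p_\sigma}$ and using Fubini, it remains to control $\E_{x\sim p_\sigma}[s_\sigma'(x+t)^2]$ and $\E_{x\sim p_\sigma}[s_\sigma''(x+t)^2]$ for $|t|\le\varepsilon\le\sigma$; granting $\lesssim \sigma^{-4}$ and $\lesssim \sigma^{-6}$ respectively, the two contributions are $\lesssim \sigma^{-4}$ and $\lesssim \varepsilon^2\sigma^{-6}\le \sigma^{-4}$, which is the claim.

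To kill the shift, I would use that for $|t|\le\sigma$ and every $x_0$, an elementary exponent computation gives $\phi_\sigma(x-x_0)\le 3\,\phi_{\sqrt 2\sigma}(x+t-x_0)$, hence $p_\sigma(x)\le 3\,p_{\sqrt 2\sigma}(x+t)$ and therefore $\E_{x\sim p_\sigma}[G(x+t)]\le 3\,\E_{u\sim p_{\sqrt 2\sigma}}[G(u)]$ for any $G\ge 0$. So it suffices to bound $\E_{u\sim p_{\sqrt 2\sigma}}[s_\sigma'(u)^2]\lesssim \sigma^{-4}$ and $\E_{u\sim p_{\sqrt 2\sigma}}[s_\sigma''(u)^2]\lesssim \sigma^{-6}$, i.e.\ to bound $\E_{u\sim p_{\sqrt 2\sigma}}[R_k(u)^2]\lesssim_k \sigma^{-2k}$ for $k\le 3$. \textbf{This is the main obstacle}: the derivatives $R_k=p_\sigma^{(k)}/p_\sigma$ are taken with respect to $p_\sigma$ while the average is over the \emph{more} smoothed $p_{\sqrt 2\sigma}$, and the density ratio $p_{\sqrt 2\sigma}/p_\sigma$ is not bounded pointwise (it grows like $e^{\Theta(u^2/\sigma^2)}$ already for $p=\delta_0$), so one cannot simply transfer the base‑point bound. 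The fix is to not separate out that ratio: write $R_k(u)^2\le \sigma^{-2k}\,\E_{x_0\sim q_u}[H_k(\tfrac{u-x_0}{\sigma})^2]$ with $q_u\propto p\cdot \phi_\sigma(u-\cdot)$, plug in, and keep the Hermite$\times\phi_\sigma$ weight together so that after substituting $w=(u-x_0)/\sigma$ the Gaussian tails make each $x_0$–slice integral $O_k(1)$; averaging over $x_0\sim p$ then gives the bound. Every other step is a routine Gaussian/Hermite computation.
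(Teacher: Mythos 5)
Your overall architecture is genuinely different from the paper's and is a sensible one: a one‑dimensional Sobolev/maximal bound $\sup_I F \le \tfrac{1}{2\eps}\int_I F + \int_I |F'|$ plus AM--GM to reduce $\sup_{|c|\le\eps}(s_\sigma')^2$ to $\int (s_\sigma')^2 + \eps^2\int(s_\sigma'')^2$ over the window; Hermite/Jensen to get $\E_{p_\sigma}[R_k^2]\lesssim \sigma^{-2k}$ at the base point; and the exponent computation $\phi_\sigma(w)\le 3\phi_{\sqrt 2\sigma}(w+t)$ for $|t|\le\sigma$ to push the shifted average onto $p_{\sqrt 2\sigma}$. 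Each of those pieces is correct as stated. The paper instead works directly with the tilted‑posterior representation $s_\sigma(x+c) = \E_{z\mid x}[e^{cz/\sigma^2}(z-c)/\sigma^2] / \E_{z\mid x}[e^{cz/\sigma^2}]$ (Lemma~\ref{lem:a1gupta} and Corollary~\ref{cor:usable_score_derivative}), differentiates in $c$, and bounds the resulting ratio of conditional MGFs by Cauchy--Schwarz and the unconditional Gaussian law of $z$. That route never leaves the base measure $p_\sigma$, and the shift is absorbed into an exponential tilt with an explicitly controlled MGF.

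The gap in your proposal is exactly the step you flag as the ``main obstacle,'' and the fix you sketch does not close it. After Jensen and Fubini the quantity you must bound is
\begin{align*}
\int p(dx_0)\int \frac{p_{\sqrt 2\sigma}(u)}{p_\sigma(u)}\,\phi_\sigma(u-x_0)\,H_k\!\Big(\tfrac{u-x_0}{\sigma}\Big)^{2}\,du .
\end{align*}
You assert that ``keeping the Hermite $\times\phi_\sigma$ weight together'' makes each $x_0$‑slice $O_k(1)$, but this is not a calculation, and it is not obviously true: the ratio $p_{\sqrt 2\sigma}(u)/p_\sigma(u)=\tfrac{1}{\sqrt 2}\,\E_{z\mid u}\!\big[e^{z^2/4\sigma^2}\big]$ can grow like $e^{\Theta\left((u-\text{nearest mass})^2/\sigma^2\right)}$, while the weight you keep decays only like $e^{-(u-x_0)^2/2\sigma^2}$ (times a polynomial), and $x_0$ need not be the nearest mass to $u$. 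Equivalently, after a Fubini swap the slice bound is $\int \frac{p_{\sqrt 2\sigma}\,p_{a\sigma}}{p_\sigma}\lesssim 1$ for some $a\in(1,\sqrt 2)$, which is not a standard inequality, fails if one crudely Cauchy--Schwarzes ($\int p_{\sqrt 2\sigma}^2/p_\sigma=\infty$ already for $p=\delta_0$), and is left unproved. This is precisely the difficulty the paper's tilted‑posterior computation is designed to avoid: there the shift never produces a $p_{\sqrt 2\sigma}/p_\sigma$ ratio, only a conditional MGF $\E_{z\mid x}[e^{cz/\sigma^2}]$ that is handled by Jensen plus the fact that $z\sim N(0,\sigma^2)$ unconditionally. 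If you want to keep your Sobolev‑plus‑density‑comparison route, you need to actually prove the slice bound (or replace the comparison to $p_{\sqrt 2\sigma}$ by the exact change‑of‑measure identity $p_\sigma(x+t)/p_\sigma(x)=\E_{z\mid x}[e^{(2tz-t^2)/2\sigma^2}]$, which puts you back on the paper's track).
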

\begin{proof}
Draw $x \sim p_\sigma$, and let $z \sim N(0, \sigma^2)$ be independent of $x$. By Lemma \ref{lem:a1gupta},
  \[
    s_\sigma(x) = \E_{z \mid x}\left[\frac{z}{\sigma^2}\right].
  \]
  Moreover, by Corollary \ref{cor:usable_score_derivative},
  \[
    s_\sigma(x+c) = \frac{\mathbb{E}_{z \mid x}\left[e^{\frac{2c z - c^2}{2\sigma^2}} \left(\frac{z-c}{\sigma^2}\right)\right]}{\mathbb{E}_{z \mid x}\left[e^{\frac{2c z - c^2}{2\sigma^2}}\right]} = \frac{\E_{z \mid x}\left[e^{c z/\sigma^2} \left(\frac{z-c}{\sigma^2}\right)\right]}{\E_{z \mid x}[e^{c z/\sigma^2}]}
  \]
  Taking the derivative with respect to $c$, since $(a/b)' = (a'b - ab')/b^2$,
  \begin{align}
    s_\sigma'(x+c) 
        &= \frac{\E_{z \mid x}\left[e^{c z/\sigma^2 } \left(\frac{z^2 - zc - \sigma^2}{\sigma^4}\right)\right]\E_{z \mid x}[e^{c z/\sigma^2 }] - \E_{z \mid x}\left[e^{c z /\sigma^2} \left(\frac{z-c}{\sigma^2}\right)\right]\E_{z \mid x}\left[\frac{z}{\sigma^2} e^{c z/\sigma^2 }\right] }{\E_{z \mid x}[e^{c z/\sigma^2 }]^2}\nonumber\\
        &= \frac{\E_{z \mid x}\left[e^{c z/\sigma^2 } \left(\frac{z^2 - \sigma^2}{\sigma^4}\right)\right]\E_{z \mid x}[e^{c z/\sigma^2 }] - \E_{z \mid x}\left[e^{c z/\sigma^2 } \frac{z}{\sigma^2}\right]^2}{\E_{z \mid x}[e^{c z /\sigma^2}]^2}\nonumber
        \\ &\leq \frac{\E_{z \mid x}[e^{\eps z/\sigma^2} \frac{z^2}{\sigma^4}]}{\E_{z \mid x}[e^{\eps z/\sigma^2}]}\label{eq:sprimebound}
  \end{align}
    Now we take the supremum over all $\abs{c} \le \eps$, and take the expectation of this quantity over $x$ to get the desired moment: 
  \begin{align}
    \E_x \left[\sup_{\abs{c} \le \eps} s'_\sigma(x+c)^2\right] 
        &\le \E_x \left[\sup_{\abs{c} \le \eps} \frac{\E_{z \mid x}[e^{c z/\sigma^2} \frac{z^2}{\sigma^4}]^2}{\E_{z \mid x}[e^{c z/\sigma^2}]^2}\right]\nonumber
        \\&\le \E_x \left[\left(\sup_{\abs{c} \le \eps} \E_{z \mid x}\left[e^{c z/\sigma^2} \frac{z^2}{\sigma^4}\right]^2\right)\left(\sup_{\abs{c} \le \eps}\E_{z \mid x}\left[e^{c z/\sigma^2}\right]^{-2}\right)\right]\nonumber
        \\&\le \sqrt{\E_x \left[\sup_{\abs{c} \le \eps} \E_{z \mid x}\left[e^{c z/\sigma^2} \frac{z^2}{\sigma^4}\right]^4\right]\E_x \left[\sup_{\abs{c} \le \eps}\E_{z \mid x}\left[e^{c z/\sigma^2}\right]^{-4}\right]}\label{eq:score_moment_intermediate_bound}
  \end{align}
  The last inequality here follows from Cauchy-Schwarz. For the first term of equation \ref{eq:score_moment_intermediate_bound}, we have
  \[\E_x \left[\sup_{\abs{c} \le \eps} \E_{z \mid x}\left[e^{c z/\sigma^2} \frac{z^2}{\sigma^4}\right]^4\right]\leq \E_x \E_{z \mid x}\left[(e^{\eps z/\sigma^2} + e^{-\eps z/\sigma^2}) \frac{z^2}{\sigma^4}\right] \eqqcolon g(x)\]
  We compute the 4th moment of this term directly:
  \begin{align}
    \E_x[g(x)^4] &= \E_x\left[\E_{z \mid x}\left[(e^{\eps z/\sigma^2} + e^{\eps z/\sigma^2}) \frac{z^2}{\sigma^4}\right]^4\right]\nonumber\\
                 &\leq \E_z\left[(e^{\eps z/\sigma^2} + e^{-\eps z/\sigma^2})^4 \frac{z^{8}}{\sigma^{16}}\right]\nonumber\\
                 &\leq \sqrt{\E_z[(e^{\eps z/\sigma^2} + e^{-\eps z/\sigma^2})^{8}] \E_z\left[\frac{z^{16}}{\sigma^{32}}\right]}\nonumber\\
                 &\leq \sqrt{\E_z[2^{8}(e^{8 \eps z/\sigma^2} + e^{-8\eps z/\sigma^2})] \E_z\left[\frac{z^{16}}{\sigma^{32}}\right]}\nonumber\\
                 &\lesssim \sqrt{e^{32 \eps^2/\sigma^2} \cdot \frac{1}{\sigma^{16}}} =  \frac{e^{16 \eps^2/\sigma^2}}{\sigma^{8}}\label{eq:numerator_bound}
  \end{align}
  For the second term of equation \ref{eq:score_moment_intermediate_bound},
  \begin{align}
      \E_x \left[\sup_{\abs{c} \le \eps}\E_{z \mid x}\left[e^{c z/\sigma^2}\right]^{-4}\right] &\le \E_x \left[\sup_{\abs{c} \le \eps}\E_{z \mid x}\left[e^{-4c z/\sigma^2}\right]\right] && \text{ by Jensen's}\nonumber
      \\&\le \E_x \left[\E_{z \mid x}\left[e^{4\varepsilon \abs{z}/
      \sigma^2}\right]\right]\nonumber
      \\&\le \E_z \left[e^{4\varepsilon z/\sigma^2} + e^{-4\varepsilon z/\sigma^2}\right]\nonumber
      \\&= 2e^{\frac{1}{2}\sigma^2\cdot (4\eps/\sigma^2)^2} =2e^{8  \eps^2/\sigma^2}\label{eq:denom_bound}
  \end{align}
  So, putting equations \ref{eq:denom_bound} and \ref{eq:numerator_bound} into equation \ref{eq:score_moment_intermediate_bound}, we get
  \begin{align*}
    \E_x \left[\sup_{\abs{c} \le \eps} s'_\sigma(x+c)^2\right] 
        &\le \sqrt{2e^{8  \eps^2/\sigma^2} \cdot \frac{e^{16 \eps^2/\sigma^2}}{\sigma^{8}}}
        \end{align*}
    Now, by assumption, $\eps \le \sigma$. So, we finally get that  
    \begin{align*}
    \E_x \left[\sup_{\abs{c} \le \eps} s'_\sigma(x+c)^2\right] 
        &\lesssim \sqrt{\frac{1}{\sigma^{8}}} = \frac{1}{\sigma^4}
        \end{align*}
\end{proof}
\begin{lemma}
    \label{lem:max_score_interval_bounded}
    Let $p$ be a distribution over $\R$, and let $p_\sigma = p*N(0, \sigma^2)$ have score $s_\sigma$. If $\gamma \le \sigma/4$, then, 
    \[\Pr\left[\sup_{y \in [x-\gamma, x+\gamma]} s(y) \ge t\right] \le e^{-\sigma t}\]
\end{lemma}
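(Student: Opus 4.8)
The plan is to control the interval supremum by a single score value using a structural monotonicity property of smoothed scores, and then bound the tail of that single value by a Chernoff argument.

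The structural fact I would use is that $s_\sigma' \ge -1/\sigma^2$ pointwise, equivalently that $x \mapsto s_\sigma(x)+x/\sigma^2$ is nondecreasing (it is a denoiser, $s_\sigma(x)\sigma^2+x=\E[\text{clean signal}\mid x]$). This is immediate from the derivative formula in the proof of Lemma~\ref{lem:score_derivative_second_moment} specialized to $c=0$: writing $z\mid x$ for the posterior law of the Gaussian noise given the observation $x$, one has $s_\sigma'(x)=\Var_{z\mid x}(z)/\sigma^4-1/\sigma^2\ge-1/\sigma^2$. Consequently $s_\sigma$ can never decrease faster than at rate $1/\sigma^2$, so for $c\in[-\gamma,\gamma]$,
\[
s_\sigma(x+c)=\big(s_\sigma(x+c)+\tfrac{x+c}{\sigma^2}\big)-\tfrac{x+c}{\sigma^2}\ \le\ \big(s_\sigma(x+\gamma)+\tfrac{x+\gamma}{\sigma^2}\big)-\tfrac{x-\gamma}{\sigma^2}\ =\ s_\sigma(x+\gamma)+\tfrac{2\gamma}{\sigma^2},
\]
so that $\sup_{|c|\le\gamma}s_\sigma(x+c)\le s_\sigma(x+\gamma)+\tfrac{1}{2\sigma}$ using $\gamma\le\sigma/4$. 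It therefore suffices to bound $\Pr_{x\sim p_\sigma}[\,s_\sigma(x+\gamma)\ge t-\tfrac1{2\sigma}\,]$.

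For the tail, the basic estimate is $\E_{x\sim p_\sigma}[e^{a s_\sigma(x)}]\le e^{a^2/2\sigma^2}$ for all $a>0$: by Jensen applied to the posterior representation, $e^{a s_\sigma(x)}=e^{a\E_{z\mid x}[z/\sigma^2]}\le\E_{z\mid x}[e^{az/\sigma^2}]$, and averaging over $x\sim p_\sigma$ the inner expectation becomes $\E[e^{az/\sigma^2}]=e^{a^2/2\sigma^2}$ since $z$ is marginally $\mathcal N(0,\sigma^2)$; Markov then gives $\Pr_{x\sim p_\sigma}[s_\sigma(x)\ge s]\le e^{-\sigma^2 s^2/2}$, which is at most $e^{-\sigma s}$ once $\sigma s\ge 2$. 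The remaining, and main, difficulty is that the relevant random variable is $s_\sigma$ at $x+\gamma$ rather than at a $p_\sigma$-typical point, i.e. we must bound $\int_{\{s_\sigma\ge s\}}p_\sigma(y-\gamma)\,dy$. I would handle this by comparing $p_\sigma(y-\gamma)$ with $p_\sigma(y)$ via $p_\sigma(y-\gamma)/p_\sigma(y)=\exp\!\big(-\!\int_{y-\gamma}^y s_\sigma\big)$, splitting $\{s_\sigma\ge s\}$ into the part where the whole window $[y-\gamma,y]$ lies in $\{s_\sigma\ge s\}$ (where the ratio is $\le e^{-\gamma s}$, contributing at most $e^{-\gamma s}$) and the "edge" part of points within $\gamma$ of a left endpoint of a component of $\{s_\sigma\ge s\}$ (where one instead uses the rightward-extension property coming from $s_\sigma'\ge-1/\sigma^2$: each such component carries controlled $p_\sigma$-mass just to the right of its left endpoint, so the total edge contribution is bounded using $\int p_\sigma=1$). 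Combining with the tail bound at $s=t-\tfrac1{2\sigma}$, and using $\gamma\le\sigma/4$ to absorb the $O(\gamma^2/\sigma^2)$ and $O(1/\sigma)$ slacks, yields $\Pr[\sup_{|c|\le\gamma}s_\sigma(x+c)\ge t]\le e^{-\sigma t}$.

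I expect the edge-term bookkeeping in the last step, and pinning down the sharp constant $1$ (rather than $C\,e^{-\sigma t}$) in the regime where $\sigma t$ is a constant, to be the main obstacle; this is a matter of care rather than a new idea, and can alternatively be dispatched by writing $p_\sigma$ as the $p$-mixture of Gaussians, reducing to bounding $\Pr_{\xi\sim\mathcal N(0,\sigma^2)}[\,x_0+\gamma+\xi\in\{s_\sigma\ge s\}\,]$ for each $x_0$, and exploiting the exact Gaussian tail together with the super-level-set structure of the monotone denoiser.
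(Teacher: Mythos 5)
Your approach is genuinely different from the paper's. The paper's proof is short because it imports the heavy machinery: Corollary~\ref{cor:score_interval_moment_bound_simplified} (ultimately Lemma~A.6 of \cite{gupta2022finitesample}) already provides $\E[\,|s_\sigma(x+\gamma)|^k\,]\lesssim (k/\sigma)^k$, and the lemma follows by Markov on the $k$-th moment with $k\approx\log(1/\delta)$. You instead use two clean structural facts: the denoiser monotonicity $s_\sigma'\ge -1/\sigma^2$ (which follows from the $c=0$ case of \eqref{eq:sprimebound} since $s_\sigma'(x)=\Var_{z\mid x}(z)/\sigma^4-1/\sigma^2$) to reduce $\sup_{|c|\le\gamma}s_\sigma(x+c)$ to $s_\sigma(x+\gamma)+O(1/\sigma)$, and the Jensen argument $e^{a s_\sigma(x)}\le \E_{z\mid x}[e^{az/\sigma^2}]$ to get the sub-Gaussian MGF bound $\E_{x\sim p_\sigma}[e^{a s_\sigma(x)}]\le e^{a^2/2\sigma^2}$. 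Both of these steps are correct, and the monotonicity reduction is a nice observation not in the paper.

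The genuine gap is exactly where you flag it: bounding $\Pr_{x\sim p_\sigma}[\,s_\sigma(x+\gamma)\ge s\,]$ rather than $\Pr_{x\sim p_\sigma}[\,s_\sigma(x)\ge s\,]$. The ``edge''/super-level-set sketch you give is not rigorous, and the super-level set $\{s_\sigma\ge s\}$ need not be an interval, so the alternative sketch also does not close cleanly. This is precisely the content that the paper's cited moment lemma already handles (it controls moments of $s_\sigma(x+\gamma)$ for shifted $\gamma$). That said, your MGF route can in fact be completed directly: by Corollary~\ref{cor:usable_score_derivative}, $s_\sigma(x+\gamma)$ is a tilted posterior mean, so Jensen gives $e^{a s_\sigma(x+\gamma)}\le \E_{z\mid x}[e^{(a+\gamma)z/\sigma^2}]/\E_{z\mid x}[e^{\gamma z/\sigma^2}]\cdot e^{-a\gamma/\sigma^2}$, and then Cauchy--Schwarz over $x$ together with Jensen on both factors yields $\E_{x\sim p_\sigma}[e^{a s_\sigma(x+\gamma)}]\le e^{(a^2+a\gamma+2\gamma^2)/\sigma^2}$, after which Chernoff and $\gamma\le\sigma/4$ give a bound of the claimed form (up to the same constant-factor slack in the exponent that the paper's own proof also incurs). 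So the route is viable, but as written the key shift step is missing.
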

\begin{proof}
    From Corollary \ref{cor:score_interval_moment_bound_simplified}, we have
    \[\E\left[\sup_{y \in [x - \gamma, x + \gamma]}s(y)^k\right] \le \frac{k^k15^k}{\sigma^k}\]
    So, we have
    \[\Pr\left[\sup_{y \in [x - \gamma, x + \gamma]} s(x)^k \ge t^k\right] \le \frac{\E\left[\sup_{y \in [x - \gamma, x + \gamma]} s(x)^k\right]}{t^k} \le \left(\frac{15k}{t\sigma }\right)^k\]
    Setting $k = \log \frac{1}{\delta}$, we get 
    \[\Pr\left[\sup_{y \in [x - \gamma, x + \gamma]} |s(x)| \ge \frac{15}{e\sigma}\log\frac{1}{\delta}\right] \le \delta\]
\end{proof}
For the following Lemmas, If $p$ is a distribution over $\R$ and has score $s$, define the Fisher information $\mathcal{I}$ as
\[\mathcal{I}\coloneqq \E_{x \sim p} [s^2(x)]\]

\begin{lemma}[Lemma A.1 from \cite{gupta2022finitesample}]
\label{lem:a1gupta}
    Let $p$ be a distribution over $\R$, and let $p_\sigma = p * N(0, \sigma^2)$ have score $s_\sigma$. Let $(x, y, z)$ be the joint distribution such that $y \sim p$, $z \sim N(0, \sigma^2)$ are independent, and $x = y + z$. For all $\eps > 0$, 
    \[\frac{p(x + \eps)}{p(x)} = \E_{z \mid x}\left[e^{\frac{2\eps z - \eps^2}{2\sigma^2}} \right]\text{ and } s_\sigma(x) = \E_{z \mid x} \left[\frac{z}{\sigma^2}\right]\]
\end{lemma}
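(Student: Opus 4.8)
The plan is to prove both identities by a direct computation from the definition of the smoothed density; this is the standard ``empirical Bayes''/Tweedie denoising identity, so one could alternatively just invoke \cite{gupta2022finitesample}. Write $w_\sigma(t) = \frac{1}{\sqrt{2\pi}\,\sigma} e^{-t^2/(2\sigma^2)}$ for the $N(0,\sigma^2)$ density, so that $p_\sigma(x) = \E_{y\sim p}[w_\sigma(x-y)]$ (interpreting the statement's ratio as being of $p_\sigma$). Under the stated joint law of $(y,z,x)$ with $y\sim p$, $z\sim N(0,\sigma^2)$ independent and $x=y+z$, Bayes' rule says the conditional law of $y$ given $x$ has density $w_\sigma(x-y)/p_\sigma(x)$ with respect to $p$, and correspondingly the conditional law of $z=x-y$ given $x$; I would record these facts first.

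For the ratio identity, I would write
\[
  \frac{p_\sigma(x+\eps)}{p_\sigma(x)}
  = \frac{\E_{y\sim p}[w_\sigma(x+\eps-y)]}{p_\sigma(x)}
  = \E_{y\sim p}\!\left[ \frac{w_\sigma(x-y)}{p_\sigma(x)}\cdot\frac{w_\sigma(x+\eps-y)}{w_\sigma(x-y)} \right]
  = \E_{z\mid x}\!\left[ \frac{w_\sigma(z+\eps)}{w_\sigma(z)} \right],
\]
using in the last step the change of measure from $p$ to $p(\cdot\mid x)$ described above together with $z=x-y$. Expanding the Gaussian kernel, $w_\sigma(z+\eps)/w_\sigma(z) = \exp\!\big(-\tfrac{(z+\eps)^2-z^2}{2\sigma^2}\big)$, which is the claimed exponential factor $e^{(2\eps z-\eps^2)/(2\sigma^2)}$ once the orientation of the conditional displacement $z$ is matched to the paper's sign convention; this step is pure algebra.

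For the score identity, since $p_\sigma = p * w_\sigma$ is $C^\infty$ with $p_\sigma'(x) = \E_{y\sim p}[w_\sigma'(x-y)]$ (differentiation under the expectation being justified by uniform boundedness of $w_\sigma'$ and its difference quotients), and $w_\sigma'(t) = -\tfrac{t}{\sigma^2}w_\sigma(t)$, I would conclude
\[
  s_\sigma(x) = \frac{p_\sigma'(x)}{p_\sigma(x)}
  = \E_{y\sim p}\!\left[\frac{w_\sigma(x-y)}{p_\sigma(x)}\cdot\frac{y-x}{\sigma^2}\right]
  = \E_{z\mid x}\!\left[\frac{z}{\sigma^2}\right],
\]
with $z$ the conditional displacement. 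Equivalently one can differentiate the ratio identity in $\eps$ at $\eps=0$, using $s_\sigma(x) = \lim_{\eps\to0}\tfrac1\eps\big(p_\sigma(x+\eps)/p_\sigma(x)-1\big)$ and dominated convergence to move the derivative inside the expectation.

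I expect essentially no obstacle here: each identity is a one-line calculation. The only points that need a little care are keeping the sign/orientation of the conditional displacement $z$ consistent so that the exponent comes out exactly as written, and observing that the interchanges of limit/derivative and expectation are legitimate because convolution with a Gaussian is smooth with well-behaved derivatives. I would close by noting that this is precisely Lemma A.1 of \cite{gupta2022finitesample}, so citing it is also an option.
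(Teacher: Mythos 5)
Your derivation is correct in method; since the paper simply cites this statement from \cite{gupta2022finitesample} without reproducing a proof, a self-contained computation like yours is a reasonable alternative, and it is the standard one (Bayes' rule for the conditional on $x$, then a one-line change of measure for the ratio, and differentiation under the integral with $w_\sigma'(t) = -\tfrac{t}{\sigma^2}w_\sigma(t)$ for the score).

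One point you should make fully explicit rather than gesture at: with the convention written in the lemma, $x = y+z$, so $z = x-y$, and your own algebra then gives
\[
  \frac{p_\sigma(x+\eps)}{p_\sigma(x)} = \E_{z\mid x}\!\left[e^{-\frac{2\eps z + \eps^2}{2\sigma^2}}\right],
  \qquad
  s_\sigma(x) = \E_{z\mid x}\!\left[-\frac{z}{\sigma^2}\right],
\]
which is the Tweedie/Robbins identity $\E[y\mid x] = x + \sigma^2 s_\sigma(x)$. The exponents and signs in the lemma as stated only match if one instead sets $z = y - x$ (equivalently $y = x+z$). In your ratio computation you take $z=x-y$ and then ``match the orientation,'' while in your score computation you silently switch to $z = y-x$ (by identifying $\tfrac{y-x}{\sigma^2}$ with $\tfrac{z}{\sigma^2}$); as written the two halves of your proof use opposite conventions. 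The fix is trivial and cosmetic — just declare once that $z$ denotes the displacement $y - x$ (noting that its marginal is still $N(0,\sigma^2)$ by symmetry) and use it consistently — but a reader will otherwise flag the inconsistency. It is also worth a sentence observing that this sign ambiguity is harmless for the paper's downstream uses (Lemma~\ref{lem:score_derivative_second_moment} and Corollary~\ref{cor:usable_score_derivative}), since those arguments bound squares, fourth moments, and symmetric exponentials $e^{\pm cz/\sigma^2}$, all of which are invariant under $z \mapsto -z$.
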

\begin{corollary}\label{cor:usable_score_derivative} Let $p$ be a distribution over $\R$, and let $p_\sigma = p * N(0, \sigma^2)$ have score $s_\sigma$.
\[
    s_\sigma(x+\eps)= \frac{\E_{z \mid x}\left[e^{\eps z/\sigma^2} \left(\frac{z-\eps}{\sigma^2}\right)\right]}{\E_{z \mid x}[e^{\eps z/\sigma^2}]}
  \]    
\end{corollary}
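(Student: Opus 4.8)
The plan is to read off the claimed formula directly from the first identity of Lemma~\ref{lem:a1gupta} by differentiating it in the shift parameter $\eps$. Fix $x$ and recall that the first identity gives $p_\sigma(x+\eps) = p_\sigma(x)\,\E_{z\mid x}\!\big[e^{\frac{2\eps z-\eps^2}{2\sigma^2}}\big]$ for all $\eps$, with $p_\sigma(x)$ constant in $\eps$. Taking $\frac{d}{d\eps}\log(\cdot)$ of both sides, the left-hand side becomes $\frac{d}{d\eps}\log p_\sigma(x+\eps)=s_\sigma(x+\eps)$, while the right-hand side becomes
\begin{align*}
\frac{d}{d\eps}\log \E_{z\mid x}\!\big[e^{\frac{2\eps z-\eps^2}{2\sigma^2}}\big]
= \frac{\E_{z\mid x}\!\big[\frac{z-\eps}{\sigma^2}\,e^{\frac{2\eps z-\eps^2}{2\sigma^2}}\big]}{\E_{z\mid x}\!\big[e^{\frac{2\eps z-\eps^2}{2\sigma^2}}\big]}.
\end{align*}

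Then I would simplify the exponent: since $e^{\frac{2\eps z-\eps^2}{2\sigma^2}} = e^{\eps z/\sigma^2}\,e^{-\eps^2/(2\sigma^2)}$ and the factor $e^{-\eps^2/(2\sigma^2)}$ does not depend on $z$, it pulls out of both the numerator and denominator expectations and cancels, leaving $s_\sigma(x+\eps) = \E_{z\mid x}[e^{\eps z/\sigma^2}(z-\eps)/\sigma^2]\big/\E_{z\mid x}[e^{\eps z/\sigma^2}]$, which is exactly the statement. Equivalently, without logarithms: differentiating $p_\sigma(x+\eps)=p_\sigma(x)\,\E_{z\mid x}[e^{(2\eps z-\eps^2)/(2\sigma^2)}]$ in $\eps$ gives $p_\sigma'(x+\eps)=p_\sigma(x)\,\E_{z\mid x}[\frac{z-\eps}{\sigma^2}e^{(2\eps z-\eps^2)/(2\sigma^2)}]$; dividing this by the undifferentiated identity cancels the $p_\sigma(x)$ factor and expresses $s_\sigma(x+\eps)=p_\sigma'(x+\eps)/p_\sigma(x+\eps)$ in the desired form.

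The only thing that needs justification is the interchange of $\frac{d}{d\eps}$ with $\E_{z\mid x}$, and this is routine. For $\eps$ ranging over any bounded interval $[-M,M]$, the $\eps$-derivative of the integrand, $\frac{z-\eps}{\sigma^2}e^{(2\eps z-\eps^2)/(2\sigma^2)}$, is bounded in absolute value by $\frac{|z|+M}{\sigma^2}e^{M|z|/\sigma^2}$, which is integrable against the conditional law of $z$ given $x$ (its density is proportional to $p(x-z)\,w_\sigma(z)$ and hence has Gaussian tails), so dominated convergence applies; alternatively one simply uses that $p_\sigma = p*\mathcal N(0,\sigma^2)$ is $C^\infty$, so that $p_\sigma'(x+\eps)$ exists and the differentiation is legitimate. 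I do not anticipate any real obstacle here: the corollary is essentially a one-line consequence of Lemma~\ref{lem:a1gupta} once one differentiates with respect to the shift.
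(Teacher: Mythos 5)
Your proof is correct and takes essentially the same approach as the paper's: both differentiate the first identity of Lemma~\ref{lem:a1gupta} with respect to $\eps$ and divide by the undifferentiated identity to express $s_\sigma(x+\eps)=p_\sigma'(x+\eps)/p_\sigma(x+\eps)$ in the desired form (your ``log-derivative'' phrasing is just a repackaging of this, and your second paragraph matches the paper's computation line for line). The only difference is that you spell out the justification for interchanging $\frac{d}{d\eps}$ with $\E_{z\mid x}$, which the paper leaves implicit.
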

\begin{proof}
This proof is given in Lemma A.2 of \cite{gupta2022finitesample}, and is reproduced here for convenience and completeness, since a statement in the middle of their proof is what we use.

By Lemma \ref{lem:a1gupta}, we have 
\[\frac{p_\sigma(x + \eps)}{p_\sigma(x)} = \E_{z \mid x}\left[e^{\frac{2\eps z - \eps^2}{2\sigma^2}} \right]\]
Taking the derivative with respect to $\eps$, we have 
\[\frac{p'_\sigma(x + \eps)}{p_\sigma(x)} = \mathbb{E}_{z \mid x}\left[e^{\frac{2\eps z - \eps^2}{2\sigma^2}} \left(\frac{z-\eps}{\sigma^2}\right)\right]\]
So,
\begin{align*}
    s_\sigma(x+\eps) 
    &= \frac{p_\sigma'(x+\eps)}{p_\sigma(x+\eps)} = \frac{p_\sigma'(x+\eps)}{p_\sigma(x)}\frac{p_\sigma(x)}{p_\sigma(x+\eps)}
    \\&=\frac{\mathbb{E}_{z \mid x}\left[e^{\frac{2\eps z - \eps^2}{2\sigma^2}} \left(\frac{z-\eps}{\sigma^2}\right)\right]}{\mathbb{E}_{z \mid x}\left[e^{\frac{2\eps z - \eps^2}{2\sigma^2}}\right]} = \frac{\E_{z \mid x}\left[e^{\eps z/\sigma^2} \left(\frac{z-\eps}{\sigma^2}\right)\right]}{\E_{z \mid x}[e^{\eps z/\sigma^2}]}
\end{align*}
\end{proof}
\begin{lemma}[Lemma 3.1 from \cite{gupta2022finitesample}]
\label{lem:fisher_bound}
    Let $p$ be a distribution over $\R$ and let $p_\sigma = p *N(0,\sigma^2)$ have Fisher information $\mathcal{I}_\sigma$. Then, $\mathcal{I}_\sigma \le \frac{1}{\sigma^2}$.
\end{lemma}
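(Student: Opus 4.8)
The plan is to express the smoothed score as a conditional expectation and then control its second moment by Jensen's inequality, leaning directly on Lemma~\ref{lem:a1gupta}. Concretely, I would introduce the coupling $(x,y,z)$ in which $y \sim p$ and $z \sim N(0,\sigma^2)$ are independent and $x = y+z$, so that $x$ is marginally distributed as $p_\sigma$. Lemma~\ref{lem:a1gupta} then gives $s_\sigma(x) = \E_{z\mid x}[z/\sigma^2] = \frac{1}{\sigma^2}\,\E[z\mid x]$.

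With this identity in hand, the bound on $\mathcal{I}_\sigma$ is immediate:
\[
\mathcal{I}_\sigma = \E_{x\sim p_\sigma}\!\left[s_\sigma(x)^2\right] = \frac{1}{\sigma^4}\,\E_x\!\left[\left(\E[z\mid x]\right)^2\right] \le \frac{1}{\sigma^4}\,\E_x\!\left[\E[z^2\mid x]\right] = \frac{1}{\sigma^4}\,\E[z^2] = \frac{1}{\sigma^2},
\]
where the inequality is Jensen applied to the convex map $t\mapsto t^2$ conditionally on $x$, and the last two steps use the tower property together with $\E[z^2]=\sigma^2$ since $z\sim N(0,\sigma^2)$.

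There is essentially no real obstacle here: the only mild care needed is to note that $x$ indeed has the density $p_\sigma = p*N(0,\sigma^2)$, which is smooth and strictly positive, so that the score and the conditional expectations are well defined; this is exactly the setup of Lemma~\ref{lem:a1gupta}. If one preferred to avoid that lemma, an alternative would be to invoke the convolution (data-processing) inequality for Fisher information, $\mathcal{I}(Y+Z)\le \mathcal{I}(Z)$ for independent $Y,Z$, combined with the fact that $N(0,\sigma^2)$ has Fisher information exactly $1/\sigma^2$; but the conditional-expectation argument above is shorter given what has already been established in the paper.
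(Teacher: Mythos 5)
Your proof is correct, and the argument is clean. Note that the paper itself does not prove this lemma; it simply imports it as ``Lemma 3.1 from \cite{gupta2022finitesample},'' so there is no in-paper proof to compare against. Your derivation is a natural self-contained route using only tools that do appear in this paper: Lemma~\ref{lem:a1gupta} gives $s_\sigma(x) = \frac{1}{\sigma^2}\E[z\mid x]$, conditional Jensen gives $(\E[z\mid x])^2 \le \E[z^2\mid x]$, and the tower property collapses the remaining expectation to $\E[z^2]=\sigma^2$. This is, in fact, essentially the argument in the cited source as well (it is the standard proof of the bound $\mathcal{I}_\sigma \le 1/\sigma^2$ for Gaussian-smoothed distributions). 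Your alternative suggestion---the Fisher-information data-processing/convolution inequality $\mathcal{I}(Y+Z)\le \min(\mathcal{I}(Y),\mathcal{I}(Z))$ applied with $Z \sim N(0,\sigma^2)$---is also valid and slightly more ``black-box,'' but since the paper already sets up the conditional-expectation machinery of Lemma~\ref{lem:a1gupta} for other results (e.g.\ Lemma~\ref{lem:score_derivative_second_moment}), your first argument is the more economical choice in context.
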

\begin{lemma}[Lemma B.3 from \cite{gupta2022finitesample}]
\label{lem:score_second_moment_bound}
    Let $p$ be a distribution over $\R$ and let $p_\sigma = p *N(0,\sigma^2)$ have score $s_\sigma$ and Fisher information $\mathcal{I}_\sigma$. If $\abs{\gamma} \le \sigma/2$, then 
    \[\E[s^2(x + \gamma)] \le \mathcal{I}_\sigma + O\left(\frac{\gamma}{\sigma}\mathcal{I}_\sigma \sqrt{\log \frac{1}{\sigma^2\mathcal{I}_\sigma}}\right)\]
\end{lemma}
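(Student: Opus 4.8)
The plan is to control the derivative of $t\mapsto\E_{x\sim p_\sigma}[s_\sigma(x+t)^2]$ and integrate. Write $q=p_\sigma$, $s=s_\sigma$, and $\phi(t)=\E_{x\sim q}[s(x+t)^2]$, so $\phi(0)=\mathcal I_\sigma$. Differentiating under the integral and integrating by parts in $x$ (using $q'=qs$ and that $q|s|^3\to 0$ at infinity, since $q$ is $\sigma$-smoothed),
\[
\phi'(t)=2\,\E_{x\sim q}[s(x+t)\,s'(x+t)]=-\,\E_{x\sim q}[s(x)\,s(x+t)^2].
\]
Hence $\phi(\gamma)-\mathcal I_\sigma=-\int_0^\gamma\E_{x\sim q}[s(x)s(x+t)^2]\,dt$, and since $\gamma\le\sigma/2$ the lemma reduces to showing $\big|\E_{x\sim q}[s(x)s(x+t)^2]\big|\lesssim\tfrac1\sigma\mathcal I_\sigma\sqrt{\log(1/(\sigma^2\mathcal I_\sigma))}$ for each $0\le t\le\gamma$.

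Two quantitative inputs drive the bound. (i) A moment recursion: integrating by parts, $\E_q[s^{2p}]=-(2p-1)\E_q[s^{2p-2}s']$; and from the representation $x=y+z$ with $z\sim N(0,\sigma^2)$ (Lemma~\ref{lem:a1gupta}) one computes $s'(x)=\sigma^{-4}\mathrm{Var}(z\mid x)-\sigma^{-2}\ge-\sigma^{-2}$, so $\E_q[s^{2p}]=(2p-1)\E_q[s^{2p-2}(-s')]\le(2p-1)\sigma^{-2}\E_q[s^{2p-2}]$, whence by induction from $\E_q[s^2]=\mathcal I_\sigma$ we get $\E_q[s_\sigma^{2p}]\le(2p-1)!!\,\mathcal I_\sigma\,\sigma^{-(2p-2)}$; in particular $\E_q[s^4]\le 3\mathcal I_\sigma/\sigma^2$. (ii) A pointwise sub-Gaussian tail: since $s_\sigma(x)=\sigma^{-2}\E[z\mid x]$ (Lemma~\ref{lem:a1gupta}), Jensen gives $\E_q[s_\sigma^{2p}]\le\sigma^{-4p}\E[z^{2p}]=(2p-1)!!\,\sigma^{-2p}$, so $\Pr_{x\sim q}[|s_\sigma(x)|>u]\le e^{-\Omega(\sigma^2u^2)}$ --- this is sharper than the sub-exponential estimate of Lemma~\ref{lem:max_score_interval_bounded} and is what produces $\sqrt{\log(\cdot)}$ rather than $\log(\cdot)$. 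One also uses the sup-moment bounds for $s$ and $s'$ (Corollary~\ref{cor:score_interval_moment_bound_simplified}, Lemma~\ref{lem:score_derivative_second_moment}), together with the $\chi^2$ estimate $\big|\phi(t)-\mathcal I_\sigma\big|\le\|q(\cdot-t)/q-1\|_{L^2(q)}\sqrt{\E_q[s^4]}$ where $\chi^2(q(\cdot-t)\,\|\,q)\le e^{t^2/\sigma^2}-1$ (again from Lemma~\ref{lem:a1gupta} and Jensen), to keep $\phi$ controlled along the way.

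To conclude, I would fix $t$ and split $\E_q[|s(x)|s(x+t)^2]$ at a threshold $T\asymp\tfrac1\sigma\sqrt{1+\log(1/(\sigma^2\mathcal I_\sigma))}$: on the event $\{|s(x)|\le T\}$ one bounds $|s(x)|\le T$ and uses the control on $\E_q[s(x+t)^2]$ from (i)/(ii) to get a contribution $\lesssim T\mathcal I_\sigma\asymp\tfrac1\sigma\mathcal I_\sigma\sqrt{\log(1/(\sigma^2\mathcal I_\sigma))}$; on $\{|s(x)|>T\}$ one expands $s(x+t)^2\le 2s(x)^2+2t^2\big(\sup_{|c|\le t}|s'(x+c)|\big)^2$ and bounds the resulting moments by Cauchy--Schwarz against the sub-Gaussian probability $\Pr_q[|s(x)|>T]=(\sigma^2\mathcal I_\sigma)^{\Omega(1)}$, which for a large enough constant in $T$ makes this piece $\ll\tfrac1\sigma\mathcal I_\sigma$. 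Integrating over $t\in[0,\gamma]$ and using $\gamma\le\sigma/2$ gives the lemma. The main obstacle is exactly this bookkeeping: the ``bulk'' term on its face re-introduces $\E_q[s(x+t)^2]$, so one must carry the auxiliary bound on that quantity simultaneously, and one must calibrate $T$ precisely against the sub-Gaussian score tail so the error lands at $\tfrac1\sigma\mathcal I_\sigma\sqrt{\log(1/(\sigma^2\mathcal I_\sigma))}$ and no worse. This is the content of Lemma~B.3 of \cite{gupta2022finitesample}, which we invoke here.
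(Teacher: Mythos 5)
The paper does not prove this lemma at all: it is imported verbatim as ``Lemma B.3 from \cite{gupta2022finitesample}'' and used as a black box. So there is no in-paper proof to compare against --- the paper's ``proof'' is the citation, and you end your sketch by invoking the same citation, so in that sense you match the paper.

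That said, a few remarks on the reconstruction you offer, since you went to the trouble of sketching one. The opening identity $\phi'(t)=-\,\E_q[s(x)\,s(x+t)^2]$ is correct, as are the moment recursion $\E_q[s^{2p}]\le(2p-1)!!\,\mathcal I_\sigma\,\sigma^{-(2p-2)}$ (via $-s'\le\sigma^{-2}$) and the sub-Gaussian tail $\E_q[s^{2p}]\le(2p-1)!!\,\sigma^{-2p}$ (via Jensen on $s=\sigma^{-2}\E[z\mid x]$); these are exactly the right quantitative inputs and the sub-Gaussian one is indeed what produces $\sqrt{\log}$ rather than $\log$. But the closing step is not actually closed: after truncating at $T\asymp\sigma^{-1}\sqrt{\log(1/(\sigma^2\mathcal I_\sigma))}$, the ``bulk'' term gives $|\phi'(t)|\le T\,\phi(t)+(\text{tail})$, and a Gr\"onwall-type bootstrap only controls $\phi$ when $\gamma T\lesssim 1$, i.e.\ $\gamma\lesssim\sigma/\sqrt{\log(1/(\sigma^2\mathcal I_\sigma))}$ --- which is strictly stronger than the hypothesis $\gamma\le\sigma/2$. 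The $\chi^2$ bound $\chi^2(q(\cdot-t)\,\|\,q)\le e^{t^2/\sigma^2}-1$ you mention as a crutch gives only $\phi(t)\lesssim\sqrt{\mathcal I_\sigma}/\sigma$, which is far weaker than $\mathcal I_\sigma\sqrt{\log(\cdot)}$ when $\sigma^2\mathcal I_\sigma$ is small, so it does not plug the hole. You explicitly flag this calibration as ``the main obstacle,'' and that is an honest assessment: the sketch as written does not derive the lemma, it outlines the ingredients and then, correctly, defers to the cited source. Since the paper itself also defers to that source, this is fine for the role the lemma plays here, but the sketch should not be read as a standalone proof.
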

\begin{lemma}[Lemma A.6 from \cite{gupta2022finitesample}]
\label{lem:score_interval_moment_bound}
    Let $p$ be a distribution over $\R$ and let $p_\sigma = p *N(0,\sigma^2)$ have score $s_\sigma$ and Fisher information $\mathcal{I}_\sigma$. Then, for $k \ge 3$ and $\abs{\gamma} \le \sigma/2$, 
    \[\E[\abs{s_\sigma(x + \gamma)}^k] \le \frac{k!}{2}(15/\sigma)^{k-2}\max\left(\E[s^2_\sigma(x +\gamma)], \mathcal{I}_\sigma \right)\]
\end{lemma}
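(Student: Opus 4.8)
\noindent The plan rests on one clean fact: the smoothed score is sub-Gaussian under $p_\sigma$. Couple $x=y+z$ with $y\sim p$ and $z\sim N(0,\sigma^2)$ independent, so that $x\sim p_\sigma$, and recall Tweedie's identity (Lemma~\ref{lem:a1gupta}) $s_\sigma(x)=\E_{z\mid x}[z]/\sigma^2$ (the sign is irrelevant for absolute moments). Then for every real $\lambda$, applying Jensen to the convex map $u\mapsto e^{\lambda u/\sigma^2}$ inside the conditional expectation and using that $z$ is \emph{marginally} $N(0,\sigma^2)$,
\[
\E_{x\sim p_\sigma}\big[e^{\lambda s_\sigma(x)}\big]\;\le\;\E_{x\sim p_\sigma}\big[\E_{z\mid x}[e^{\lambda z/\sigma^2}]\big]\;=\;\E_{z\sim N(0,\sigma^2)}\big[e^{\lambda z/\sigma^2}\big]\;=\;e^{\lambda^2/2\sigma^2}.
\]
The usual MGF-to-moments conversion already yields $\E_{x\sim p_\sigma}[|s_\sigma(x)|^k]\lesssim(\sqrt{k}/\sigma)^k$, i.e.\ the lemma with $\max(\E[s_\sigma^2(x+\gamma)],\mathcal I_\sigma)$ replaced by its worst-case value $1/\sigma^2$ (legitimate since $\mathcal I_\sigma\le1/\sigma^2$ by Lemma~\ref{lem:fisher_bound}); this crude form is in fact all that is invoked downstream, e.g.\ in the proofs of Lemmas~\ref{lem:large_x_score} and~\ref{lem:large_score_interval}.

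For the \emph{sharp} constant at $\gamma=0$ I would run a one-step integration-by-parts recursion. Using $s_\sigma p_\sigma=p_\sigma'$ together with the second Tweedie identity $s_\sigma'(x)=\mathrm{Var}(z\mid x)/\sigma^4-1/\sigma^2$, for even $k$ and with $M_k:=\E_{x\sim p_\sigma}[s_\sigma(x)^k]$,
\[
M_k=-(k-1)\,\E_{x\sim p_\sigma}[s_\sigma^{k-2}s_\sigma']=-\tfrac{k-1}{\sigma^4}\,\E[s_\sigma^{k-2}\,\mathrm{Var}(z\mid x)]+\tfrac{k-1}{\sigma^2}\,M_{k-2}\;\le\;\tfrac{k-1}{\sigma^2}\,M_{k-2},
\]
since $s_\sigma^{k-2}\,\mathrm{Var}(z\mid x)\ge0$. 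Iterating from the base case $M_2=\mathcal I_\sigma$ gives $M_k\le(k-1)!!\,\mathcal I_\sigma/\sigma^{k-2}\le\tfrac{k!}{2}(1/\sigma)^{k-2}\mathcal I_\sigma$, and odd and absolute moments follow from $\E[|s_\sigma|^k]\le(M_{k-1}M_{k+1})^{1/2}$ by Cauchy--Schwarz, still within $\tfrac{k!}{2}(1/\sigma)^{k-2}\mathcal I_\sigma$ after a short double-factorial estimate.

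To reinstate the $\gamma$-shift, write $s_\sigma(x+\gamma)=s_\sigma(x)+\int_0^\gamma s_\sigma'(x+t)\,dt$, so that $|s_\sigma(x+\gamma)|^k\le 2^{k-1}|s_\sigma(x)|^k+2^{k-1}|\gamma|^{k-1}\!\int_0^{|\gamma|}|s_\sigma'(x+t)|^k\,dt$. The first piece is the $\gamma=0$ bound above (the factor $2^{k-1}$ being absorbed into the slack between base $1$ and base $15$). For the second, moments of the score \emph{derivative} $s_\sigma'$ are controlled by the same circle of ideas --- the representation of Corollary~\ref{cor:usable_score_derivative} together with a core/tail split, as in the proof of Lemma~\ref{lem:score_derivative_second_moment} but carried to the $k$-th moment --- giving $\E_{x\sim p_\sigma}[|s_\sigma'(x+t)|^k]\lesssim k!\,(O(1)/\sigma^2)^{k-2}\,\E_{x\sim p_\sigma}[s_\sigma'(x+t)^2]$ for $|t|\le\sigma/2$, where $\E_{x\sim p_\sigma}[s_\sigma'(x+t)^2]\lesssim\mathcal I_\sigma/\sigma^2$ is the derivative analogue of Lemma~\ref{lem:score_second_moment_bound}; multiplying by $|\gamma|^k\le(\sigma/2)^k$ turns this into $\lesssim k!\,(O(1)/\sigma)^{k-2}\mathcal I_\sigma$, and collecting constants gives the stated bound. (That $\E[s_\sigma^2(x+\gamma)]$ and $\mathcal I_\sigma$ agree up to a $1+o(1)$ factor when $|\gamma|\le\sigma/2$ is itself Lemma~\ref{lem:score_second_moment_bound}; the $\max$ in the statement just records that the shift adds an $\mathcal I_\sigma$-scaled term on top of the local second moment.)

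The main obstacle I expect is the sharp second-moment control of the score \emph{derivative}: $\E_{x\sim p_\sigma}[s_\sigma'(x+t)^2]\lesssim\mathcal I_\sigma/\sigma^2$ for $|t|\le\sigma/2$, the derivative counterpart of Lemma~\ref{lem:score_second_moment_bound} and genuinely sharper than the $\lesssim1/\sigma^4$ of Lemma~\ref{lem:score_derivative_second_moment}. With only the crude $1/\sigma^4$, the increment term $\E[|s_\sigma(x+\gamma)-s_\sigma(x)|^k]$ comes out at the $\sigma^{-k}$ scale with no Fisher-information factor, which fails whenever $p$ is spread out so that $\mathcal I_\sigma\ll1/\sigma^2$; establishing the $\mathcal I_\sigma/\sigma^2$ bound (equivalently, controlling $\mathrm{Var}_{x\sim p_\sigma}(\mathrm{Var}(z\mid x))$) is the crux. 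A secondary technicality is checking that the boundary terms in the integrations by parts vanish, which holds for every smoothed density.
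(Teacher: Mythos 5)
This lemma is imported verbatim from \cite{gupta2022finitesample} (cited as Lemma~A.6 there) and is \emph{not proved} anywhere in the present paper, so there is no in-paper proof to compare your reconstruction against; I can only evaluate the proposal on its own terms.

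Your integration-by-parts recursion for the unshifted moments is clean and, modulo the boundary-term technicality you flag, correct: from $s_\sigma p_\sigma = p_\sigma'$ you get $M_k = -(k-1)\E[s_\sigma^{k-2}s_\sigma']$, and since $-s_\sigma'(x) = 1/\sigma^2 - \mathrm{Var}(z\mid x)/\sigma^4 \le 1/\sigma^2$ with $s_\sigma^{k-2}\ge 0$ for even $k$, you get $M_k \le \tfrac{k-1}{\sigma^2}M_{k-2}$, hence $M_k \le (k-1)!!\,\mathcal I_\sigma/\sigma^{k-2}$, and the odd/absolute moments follow by Cauchy--Schwarz. That part is sound and gives even the sharper base constant $1$ in place of $15$. (One side remark: your parenthetical claim that $\E[s_\sigma^2(x+\gamma)]$ and $\mathcal I_\sigma$ ``agree up to a $1+o(1)$ factor'' misreads Lemma~\ref{lem:score_second_moment_bound}; the correction there is $O\!\left(\tfrac{\gamma}{\sigma}\mathcal I_\sigma\sqrt{\log\tfrac{1}{\sigma^2\mathcal I_\sigma}}\right)$, and the $\sqrt{\log}$ factor need not be $o(1)$ when $\mathcal I_\sigma\ll 1/\sigma^2$, which is presumably why the original statement carries the $\max$.)

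The genuine gap is exactly where you point: the shifted case. Your decomposition $s_\sigma(x+\gamma)=s_\sigma(x)+\int_0^\gamma s_\sigma'(x+t)\,dt$ reduces everything to a $k$-th moment bound of the form $\E_{x\sim p_\sigma}[|s_\sigma'(x+t)|^k]\lesssim k!\,(O(1)/\sigma^2)^{k-2}\,\mathcal I_\sigma/\sigma^2$, and in particular to $\E_{x\sim p_\sigma}[s_\sigma'(x+t)^2]\lesssim \mathcal I_\sigma/\sigma^2$. Neither of these is proved, and neither follows from the tools available in this paper: Lemma~\ref{lem:score_derivative_second_moment} only gives the Fisher-information-free bound $\lesssim 1/\sigma^4$, which your own accounting shows is insufficient whenever $\mathcal I_\sigma\ll 1/\sigma^2$. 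The moments of $s_\sigma'$ are harder to control than those of $s_\sigma$ --- $s_\sigma'(x) = \mathrm{Var}(z\mid x)/\sigma^4 - 1/\sigma^2$ can be extremely large pointwise (e.g.\ for a two-point prior, $s_\sigma'(0)\approx M^2/\sigma^4$) and one must argue that the exponentially small density there kills the contribution at \emph{every} moment order $k$ uniformly in $k$, not just $k=2$. Until that derivative-moment lemma is established, the proposal does not close; it trades one unproved Bernstein-type moment bound (the statement) for another (on $s_\sigma'$). You would likely have an easier time following the original route via the tilting identity $\E_{x\sim p_\sigma}[|s_\sigma(x+\gamma)|^k]=\E_{u\sim p_\sigma}\!\left[|s_\sigma(u)|^k\,\tfrac{p_\sigma(u-\gamma)}{p_\sigma(u)}\right]$ together with Lemma~\ref{lem:a1gupta} and Corollary~\ref{cor:usable_score_derivative}, which keeps everything in terms of conditional expectations of $z$ and avoids controlling $s_\sigma'$ at all moment orders.
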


\begin{corollary}
\label{cor:score_interval_moment_bound_simplified}
    Let $p$ be a distribution over $\R$ and let $p_\sigma = p *N(0,\sigma^2)$ have score $s_\sigma$. Then, for $k \ge 3$ and $\abs{\gamma} \le \sigma/2$, 
    \[\E[\abs{s_\sigma(x + \gamma)}^k] \le \left(\frac{15k}{\sigma}\right)^k\]
\end{corollary}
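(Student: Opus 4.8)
The plan is to obtain Corollary~\ref{cor:score_interval_moment_bound_simplified} by combining the three imported facts about smoothed scores---the Fisher information bound (Lemma~\ref{lem:fisher_bound}), the perturbed second moment bound (Lemma~\ref{lem:score_second_moment_bound}), and the perturbed $k$-th moment bound (Lemma~\ref{lem:score_interval_moment_bound})---and collapsing the $\max$ and the lower-order error term appearing there into the clean estimate $(15k/\sigma)^k$.

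First I would control the two quantities inside the $\max$ in Lemma~\ref{lem:score_interval_moment_bound}. Lemma~\ref{lem:fisher_bound} gives $\mathcal{I}_\sigma \le 1/\sigma^2$ outright. For the perturbed second moment, since $\abs{\gamma} \le \sigma/2$, Lemma~\ref{lem:score_second_moment_bound} yields
\[
\E[s_\sigma^2(x+\gamma)] \le \mathcal{I}_\sigma + O\!\left(\frac{\abs{\gamma}}{\sigma}\,\mathcal{I}_\sigma \sqrt{\log\frac{1}{\sigma^2\mathcal{I}_\sigma}}\right).
\]
Writing $u = \sigma^2\mathcal{I}_\sigma \in (0,1]$, the error term equals $O\!\big(\tfrac{1}{\sigma^2}\, u\sqrt{\log(1/u)}\big)$; since the function $u \mapsto u\sqrt{\log(1/u)}$ is bounded by an absolute constant on $(0,1]$ and $\abs{\gamma}/\sigma \le 1/2$, we conclude $\E[s_\sigma^2(x+\gamma)] \le c_0/\sigma^2$ for an absolute constant $c_0 \ge 1$. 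Hence $\max\!\big(\E[s_\sigma^2(x+\gamma)],\, \mathcal{I}_\sigma\big) \le c_0/\sigma^2$.

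Then I would substitute this into Lemma~\ref{lem:score_interval_moment_bound}: for $k \ge 3$ and $\abs{\gamma} \le \sigma/2$,
\[
\E[\abs{s_\sigma(x+\gamma)}^k] \le \frac{k!}{2}\left(\frac{15}{\sigma}\right)^{k-2}\cdot\frac{c_0}{\sigma^2} = \frac{c_0\, k!}{2\cdot 15^2}\left(\frac{15}{\sigma}\right)^{k}.
\]
Using $k! \le k^k$, the right-hand side is at most $\tfrac{c_0}{450}\big(\tfrac{15k}{\sigma}\big)^k \le \big(\tfrac{15k}{\sigma}\big)^k$, since the constant $c_0$ extracted from Lemma~\ref{lem:score_second_moment_bound} is a small absolute constant, certainly at most $450$; the large factor $15^2$ in the denominator leaves ample slack regardless of its precise value.

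The only step requiring genuine care is the second: one must check that the lower-order term in Lemma~\ref{lem:score_second_moment_bound} does not blow up when $\mathcal{I}_\sigma$ is much smaller than $1/\sigma^2$, which is exactly where $\log(1/(\sigma^2\mathcal{I}_\sigma))$ becomes large. This is resolved by the elementary observation that $u\sqrt{\log(1/u)}$ stays bounded as $u \to 0^+$. Beyond that the argument is pure bookkeeping: Corollary~\ref{cor:score_interval_moment_bound_simplified} is essentially a cleaned-up restatement of Lemma~\ref{lem:score_interval_moment_bound}, so I expect no real obstacle.
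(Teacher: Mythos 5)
Your proof is correct and follows essentially the same route as the paper's: bound $\max\!\big(\E[s_\sigma^2(x+\gamma)],\, \mathcal{I}_\sigma\big)$ by a constant multiple of $1/\sigma^2$ using the boundedness of $u \mapsto u\sqrt{\log(1/u)}$ on $(0,1]$ (the paper verifies this by explicit calculus on the equivalent function $x \mapsto x\sqrt{\log(1/(\sigma^2 x))}$), then substitute into Lemma~\ref{lem:score_interval_moment_bound} and absorb $k!$ into $k^k$. Your constant-tracking discussion is, if anything, slightly more explicit than the paper's, which uses $\lesssim$ in the intermediate step and asserts $\le$ at the end.
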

\begin{proof}
    Consider the continuous function $f(x) = x \sqrt{\log \frac{1}{\sigma^2x}}$. This function is only defined on $0 < x \le 1/\sigma^2$. We have
    \[f'(x) = \frac{2\log \frac{1}{\sigma^2 x}-1}{2\sqrt{\log \frac{1}{\sigma^2 x}}}.\]
    Setting this equal to zero gives $x = \frac{1}{\sigma^2\sqrt{e}}$. $f(\frac{1}{\sigma^2\sqrt{e}}) = \frac{1}{\sigma^2\sqrt{2e}}$. Since $f(1/\sigma^2) = 0$ and $\lim_{x \to 0^+} f(x) = 0$, we have this is the maximum value of the function. Further, we know by Lemma \ref{lem:fisher_bound} that $I_\sigma \le 1/\sigma^2$. So, along with the fact that $\abs{\gamma} \le \sigma/2$, we have
    \[\frac{\gamma}{\sigma}I_\sigma \sqrt{\log \frac{1}{\sigma^2I_\sigma}} \lesssim \frac{1}{\sigma^2} \]
    Therefore, from Lemma \ref{lem:score_second_moment_bound}, and using Lemma \ref{lem:fisher_bound} again, we get
    \[\E[s^2(x + \gamma)] \le \mathcal{I}_\sigma + O\left(\frac{\gamma}{\sigma}I_\sigma \sqrt{\log \frac{1}{\sigma^2I_\sigma}}\right) \lesssim \frac{1}{\sigma^2}\]
    Finally, we can plug this into Lemma \ref{lem:score_interval_moment_bound} to get
    \begin{align*}
    \E[\abs{s_\sigma(x + \gamma)}^k] &\le \frac{k!}{2}(15/\sigma)^{k-2}\max\left(\E[s^2_\sigma(x +\gamma)], I_\sigma \right) 
    \\&\lesssim k^k\frac{15^{k-2}}{\sigma^{k-2}}\cdot \frac{1}{\sigma^2} \le k^k\frac{15^k}{\sigma^k}
    \end{align*}
\end{proof}

\begin{lemma}[Laurent-Massart Bounds\cite{LaurentMassart}]
    \label{lem:chi_squared_concentration}
    Let $v \sim \mathcal{N}(0, I_n)$. For any $t > 0$, 
        \[\Pr[\norm{v}^2 - n \geq 2\sqrt{nt} + 2t] \leq e^{-t}.\]
\end{lemma}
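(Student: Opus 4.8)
The plan is to prove this by the standard Chernoff (exponential-moment) method applied to the $\chi^2_n$ variable $\|v\|^2 = \sum_{i=1}^n v_i^2$, where the $v_i \sim \mathcal N(0,1)$ are independent. Writing $X := \|v\|^2 - n$, I would first record the log moment generating function: since $\E[e^{s v_i^2}] = (1-2s)^{-1/2}$ for $0 \le s < \tfrac12$, independence gives $\E[e^{sX}] = e^{-sn}(1-2s)^{-n/2}$, so $\log \E[e^{sX}] = -sn - \tfrac n2 \log(1-2s)$ on $[0,\tfrac12)$.

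Second, I would put this in a clean ``sub-gamma'' form. Using the Taylor expansion $-\log(1-u) = \sum_{k\ge 1} u^k/k$ on $[0,1)$ with $u = 2s$, the linear term of the series cancels the $-sn$, leaving $\log\E[e^{sX}] = \tfrac n2\sum_{k\ge 2} (2s)^k/k \le \tfrac n2\cdot\tfrac12\sum_{k\ge 2}(2s)^k = \tfrac{n s^2}{1-2s}$ for $0\le s<\tfrac12$; equivalently $\log\E[e^{sX}] \le \tfrac{\nu s^2}{2(1-cs)}$ with variance factor $\nu = 2n$ and scale $c = 2$.

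Third, I would run the Chernoff bound and optimize: for any $z>0$, $\Pr[X\ge z]\le \exp\!\big(\inf_{0\le s<1/2}\big[\tfrac{ns^2}{1-2s} - sz\big]\big)$. One checks the minimizing $s$ lies in $(0,\tfrac12)$ for every $z>0$ (so the MGF is finite), and that choosing $z = 2\sqrt{nt}+2t$ makes the exponent at most $-t$; equivalently, invoking the Cramér transform of the sub-gamma bound with $\nu=2n,\ c=2$ gives exactly the threshold $\sqrt{2\nu t} + ct = 2\sqrt{nt}+2t$. Hence $\Pr[X \ge 2\sqrt{nt}+2t]\le e^{-t}$, which is the claim.

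The main obstacle I expect is the last step: squeezing out the \emph{exact} constants $2\sqrt{nt}$ and $2t$, rather than merely an $O(\sqrt{nt}+t)$ bound, needs both the tight geometric-series estimate $\sum_{k\ge2}u^k/k \le \tfrac12\cdot\tfrac{u^2}{1-u}$ in step two and the precise solution of the Chernoff minimization in step three -- this is exactly the content of Lemma~1 of Laurent--Massart, which is why we cite it rather than reproving it. Intuitively the two-term threshold reflects two regimes: when $t \ll n$ the term $2\sqrt{nt}$ dominates (sub-Gaussian scaling with variance $2n$), and when $t\gg n$ the linear $2t$ term dominates (the sub-exponential tail of a single $\chi^2_1$).
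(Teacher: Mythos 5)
The paper does not prove this lemma; it is stated as a cited black-box result from Laurent--Massart. Your Chernoff/Cram\'er reconstruction is the standard (and correct) proof of that cited bound: the MGF computation, the series manipulation yielding $\log\E[e^{sX}] \le \frac{ns^2}{1-2s}$, and the sub-gamma optimization giving the threshold $\sqrt{2\nu t} + ct = 2\sqrt{nt}+2t$ with $\nu=2n$, $c=2$ are all accurate, and you correctly identify that the exact constants come from the tight Cram\'er-transform step. Since the paper offers no proof to compare against, there is nothing to flag; your sketch faithfully reproduces the argument behind the citation.
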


\begin{lemma}[See \citet{mohri2018foundations}, Lemma~6.27]
    \label{lem:covering_number}
     There exist $\Theta(R/\eps)^d$ $d$-dimensional balls of radius $\eps$ that cover $\{x \in \R^d \mid \norm{x}_2 \le R\}$.
\end{lemma}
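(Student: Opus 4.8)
The plan is to prove this by the standard volumetric packing/covering argument, so it suffices to recall that construction. Write $B(c,\rho) := \{x \in \R^d : \norm{x-c}_2 \le \rho\}$, and first dispose of the degenerate case $\eps > R$: then the single ball $B(0,\eps)$ already contains $B(0,R)$, and $\Theta(R/\eps)^d = \Theta(1)$, so the claim is immediate. Assume henceforth $\eps \le R$.

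To produce the covering, I would take $N \subseteq B(0,R)$ to be a maximal $\eps$-separated set, i.e.\ a maximal subset with $\norm{u-v}_2 > \eps$ for all distinct $u,v \in N$; such a set exists by a greedy/Zorn argument since $B(0,R)$ is bounded. The key observation is that maximality forces $N$ to be an $\eps$-net: if some $x \in B(0,R)$ had $\norm{x-v}_2 > \eps$ for every $v \in N$, then $N \cup \{x\}$ would still be $\eps$-separated, contradicting maximality. Hence the balls $\{B(v,\eps) : v \in N\}$ cover $B(0,R)$, and it only remains to bound $|N|$. For that, note the balls $\{B(v,\eps/2) : v \in N\}$ are pairwise disjoint (points of $N$ are more than $\eps$ apart) and all contained in $B(0, R+\eps/2)$; comparing $d$-dimensional Lebesgue volumes, and using $\mathrm{vol}(B(c,\rho)) = \rho^d\,\mathrm{vol}(B(0,1))$, gives
\[
|N|\cdot(\eps/2)^d \le (R+\eps/2)^d,
\]
so $|N| \le (2R/\eps+1)^d \le (3R/\eps)^d = O(R/\eps)^d$ (the last step using $\eps \le R$). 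This yields a covering of $B(0,R)$ by $O(R/\eps)^d$ balls of radius $\eps$.

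For the matching lower bound—needed to get $\Theta(\cdot)$ rather than just $O(\cdot)$—any family of radius-$\eps$ balls whose union contains $B(0,R)$ has total volume at least $\mathrm{vol}(B(0,R)) = R^d\,\mathrm{vol}(B(0,1))$, so it contains at least $R^d/\eps^d = (R/\eps)^d$ balls. Combining the two directions shows $\Theta(R/\eps)^d$ balls of radius $\eps$ both suffice and are required to cover $\{x : \norm{x}_2 \le R\}$.

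I do not expect any genuine obstacle: the argument is entirely classical, and the only points needing care are the degenerate regime $\eps > R$ (handled above) and the dimensional constant $\mathrm{vol}(B(0,1))$, which cancels in the volume comparisons and therefore never enters the final count. The ``main'' step, to the extent there is one, is the standard equivalence between a maximal $\eps$-separated set and an $\eps$-net, which is what converts a packing bound into a covering bound.
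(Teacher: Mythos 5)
Your proof is correct and is the standard volume-comparison (packing vs.\ covering) argument; the paper itself does not prove this lemma but simply cites it (\cite[Lemma~6.27]{mohri2018foundations}), so there is no in-paper proof to diverge from. Both the upper bound via a maximal $\eps$-separated set and the matching volumetric lower bound are handled cleanly, including the degenerate case $\eps > R$.
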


\begin{lemma}
    \label{lem:bounded_norm_after_projection}
    Let $p$ be a distribution over $\R^d$ with covariance $\Sigma$ such that $\norm{\Sigma} \lesssim 1$, and let $A \in \R^{m \times d}$ be a matrix with $\norm{A} \le 1$. Then \[
        \E_{x \sim p}[\norm{Ax}^2] \lesssim m.     
    \]
\end{lemma}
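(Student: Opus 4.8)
The plan is to reduce the claim to a one-line trace inequality. First I would expand
\[
\E_{x\sim p}\left[\norm{Ax}^2\right] = \E_{x\sim p}\left[\Tr\!\left(Axx^\top A^\top\right)\right] = \Tr\!\left(A\,\E_{x\sim p}[xx^\top]\,A^\top\right)
\]
using linearity of the trace and expectation, and then write $\E_{x\sim p}[xx^\top] = \Sigma + \mu\mu^\top$ with $\mu = \E_{x\sim p}[x]$, so the target quantity is $\Tr(A\Sigma A^\top) + \norm{A\mu}^2$.

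For the covariance term, I would use the cyclic invariance of the trace to get $\Tr(A\Sigma A^\top) = \Tr(\Sigma\, A^\top A)$, and then invoke the standard fact that $\Tr(\Sigma B) \le \norm{\Sigma}\,\Tr(B)$ for positive semidefinite $\Sigma$ and $B$ (expand $B$ in its eigenbasis $B=\sum_i \lambda_i v_i v_i^\top$ with $\lambda_i\ge 0$, and bound each quadratic form $v_i^\top \Sigma v_i \le \norm{\Sigma}$). Applying this with $B = A^\top A \succeq 0$: since $\norm{A} \le 1$, every singular value of $A$ is at most $1$, and since $A$ has rank at most $m$, we get $\Tr(A^\top A) = \sum_i \sigma_i(A)^2 \le m$. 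Hence $\Tr(A\Sigma A^\top) \le \norm{\Sigma}\cdot m \lesssim m$ by hypothesis.

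For the mean term, $\norm{A\mu}^2 \le \norm{A}^2\norm{\mu}^2 \le \norm{\mu}^2 \lesssim 1$ — the ``$\norm{\Sigma}\lesssim 1$'' hypothesis, read in this context as a bound on the full second-moment matrix $\E[xx^\top]$ (equivalently, together with a bound on $\norm{\mu}$), controls this; alternatively one recenters $p$ first. Summing the two contributions gives $\E_{x\sim p}\left[\norm{Ax}^2\right] \lesssim m$.

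I do not expect a genuine obstacle here: the statement is exactly the intuition that a contraction ($\norm{A}\le 1$) cannot inflate squared norms, and that an $m$-dimensional ``slice'' of a distribution with $O(1)$-scale covariance has expected squared norm $O(m)$. The only mildly delicate point is keeping track of the non-centered mean; everything else is the two-line trace computation above, relying on $\rank(A)\le m$ together with $\norm{A}\le 1$ to bound $\Tr(A^\top A)$.
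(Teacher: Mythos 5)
Your proof takes essentially the same route as the paper's: expand $\E[\|Ax\|^2]$ as a trace, use cyclicity plus $\Tr(\Sigma B) \le \|\Sigma\|\Tr(B)$ for PSD $B$, and bound $\Tr(A^\top A) \le m$ using $\|A\|\le 1$ and $\mathrm{rank}(A)\le m$. The one thing you do differently — and it is a genuine improvement — is to keep track of the mean. The paper's proof writes $\E[\|Ax\|^2] = \Tr(A^\top A\,\Sigma)$ directly, which is only valid when $\E[x]=0$, since with $\Sigma$ the covariance one actually has $\E[\|Ax\|^2] = \Tr(A^\top A\,\Sigma) + \|A\mu\|^2$. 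You correctly flag that the lemma as literally stated is not closed (a point mass far from the origin has $\Sigma=0$ but $\E[\|Ax\|^2]$ arbitrarily large), and that one needs either a bound on $\|\mu\|$ or to read the hypothesis as a second-moment bound. In the paper's actual application the distribution is symmetric so $\mu=0$ and the gap is harmless, but your version is the careful one. One small caveat on your closing remark: "recentering $p$ first'' fixes $\Tr(A^\top A\,\Sigma)$ but does not recover $\E[\|Ax\|^2]$ for the original $p$, so the only real fixes are the two you list before it.
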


\begin{proof}
    Note the expectation of the squared norm $\|Ax\|^2$ can be expressed as:
    \[
    \mathbb{E}_{x \sim p}[\|Ax\|^2] = \text{trace}(A^T A \Sigma).
    \]
    
    Given that $\|A\| \leq 1$, the singular values of $A$ are at most 1. Hence, the matrix $A^T A$, which represents the sum of squares of these singular values, will have its trace (sum of eigenvalues) bounded by $m$:
    \[
    \text{trace}(A^T A) \leq m.
    \]
    
    Hence, given that $\|\Sigma\| \lesssim 1$, we have :
    \[
    \mathbb{E}_{x \sim p}[\|Ax\|^2] = \text{trace}(A^T A \Sigma) \le \norm{\Sigma} \cdot \text{trace}(A^T A) \lesssim m.
    \]
\end{proof}

\end{document}